\definecolor{darkblue}{rgb}{0.0,0.0,0.65}
\definecolor{darkred}{rgb}{0.65,0.0,0.0}
\newcommand{\high}[1]{\textcolor{darkblue}{\bf #1}}
\definecolor{codegreen}{rgb}{0,0.6,0}
\definecolor{codegray}{rgb}{0.5,0.5,0.5}
\definecolor{codepurple}{rgb}{0.58,0,0.82}
\definecolor{backcolour}{rgb}{0.95,0.95,0.92}
\tiny\color{codegray},
\newtheorem{assumption}{Assumption}
\newcommand{\RR}{\mathbb{R}}
\newcommand{\norm}[1]{\left\lVert #1 \right\rVert}
\DeclareMathOperator*{\argmin}{argmin}
\newcommand{\sgn}{\mathrm{sign}}
\newcommand{\1}{{\rm 1}\kern-0.24em{\rm I}}
\DeclareMathOperator*{\argmax}{argmax}
\newcommand{\breg}[3]{D_{#1}\left(#2,#3\right)}
\newcommand{\brg}[2]{D_{\psi}\left(#1,#2\right)}
 \newcommand{\brgl}[2]{D_{L}\left(#1,#2\right)}
\newcommand{\st}{\mathrm{s.t.}}
\newcommand{\R}{\mathbb{R}}
\newcommand{\inp}[2]{\left\langle#1,~ #2 \right\rangle}
\newcommand{\mir}{\psi}
\newcommand{\reg}[1]{u^{\sf \tiny r}_{#1}}
\newcommand{\mmd}[1]{u^{\sf \tiny m}_{#1}}
\newcommand{\mar}[1]{\hat{\gamma}_{#1}}
\newcommand{\algname}{$p$-{\small \sf GD}\xspace}
\newcommand{\pmval}[2]{#1 \textcolor{gray}{$\pm$  #2}}
\newcommand{\bpmval}[2]{\textbf{#1} \textcolor{gray}{$\pm$  #2}} 
\begin{document}

\title{A Unified Approach to Controlling Implicit Regularization via Mirror Descent}

\author{\name Haoyuan Sun\email haoyuans@mit.edu \\
        \name Khashayar Gatmiry\email gatmiry@mit.edu \\
    \name Kwangjun Ahn\email kjahn@mit.edu \\
    \name Navid Azizan\email azizan@mit.edu \\
       \addr Massachusetts Institute of Technology\\
       Cambridge, MA 02139, USA
}

\editor{Mahdi Soltanolkotabi}

\maketitle

\begin{abstract}Inspired by the remarkable success of large neural networks, there has been significant interest in understanding the generalization performance of over-parameterized models. Substantial efforts have been invested in characterizing how optimization algorithms impact generalization through their ``preferred'' solutions, a phenomenon commonly referred to as \emph{implicit regularization}.
    In particular, it has been argued that gradient descent (GD) induces an implicit $\ell_2$-norm regularization in regression and classification problems.
    However, the implicit regularization of different algorithms are confined to either a specific geometry or a particular class of learning problems, indicating a gap in a general approach for controlling the implicit regularization.
    To address this, we present a unified approach using mirror descent (MD), a notable generalization of GD, to control implicit regularization in both regression and classification settings.
    More specifically, we show that MD with the general class of homogeneous potential functions converges in direction to a \textit{generalized maximum-margin} solution for linear classification problems, thereby answering a long-standing question in the classification setting.
    Further, we show that MD can be implemented efficiently and enjoys fast convergence under suitable conditions. Through comprehensive experiments, we demonstrate that MD is a versatile method to produce learned models with different regularizers, which in turn have different generalization performances.
\end{abstract}

\begin{keywords}
  implicit regularization, mirror descent, gradient descent, maximum-margin classification, over-parameterization
\end{keywords}

\section{Introduction}

In recent years, deep neural networks have enjoyed a tremendous amount of success in a wide range of applications~\citep{schrittwieser2020mastering,ramesh2021zero, brown2020language, dosovitskiy2020image}.
Notably, many of these modern machine learning problems operate in the so-called \textit{over-parameterized} regime, where the number of model parameters is sufficiently large to allow for perfectly fitting the training data~\citep{allen2019convergence,belkin2021fit}.
However, such highly expressive models have the capacity to have multiple solutions that interpolate training data, and these solutions can often perform very differently on test data.
Without knowing which of these interpolating solutions the optimizer finds, it would be difficult to identify whether the learned model \textit{overfits}, where it performs well on the training data but generalizes poorly on the test data.
Therefore, a characterization of the optimizers' solutions is essential to the understanding of the generalization performance of modern over-parameterized models, which is one of the most fundamental questions in machine learning.

Notably, it has been observed that even in the absence of any explicit regularization, the interpolating solutions obtained by many gradient-based optimization algorithms, such as (stochastic) gradient descent, tend to generalize well. Recent research has highlighted that these algorithms converge to solutions with certain properties, i.e., they \emph{implicitly regularize} the learned models.
Importantly, it has been argued that such implicit biases play a significant role in determining generalization performance
\citep{neyshabur2014search, zhang2021understanding, wilson2017marginal,liang2020just,donhauser2022fast}.

In the literature, the implicit bias of first-order methods is first studied in linear settings since the analysis is more tractable.
Nevertheless, there is significant theoretical and empirical evidence suggesting that certain insights from linear models translate to the case of deep models, e.g., \cite{jacot2018neural,allen2019convergence,belkin2019reconciling,bartlett2017spectrally,nakkiran2021deep,du2018gradient}. 
In the linear setting, implicit bias has been extensively analyzed in the contexts of different learning problems, of which two have received the most attention.
The first case is least-squares regression, where the loss function has a global minimum attainable at a finite value.
And the second case is classification with logistic or exponential loss, where the loss function does not have an attainable global minimum.
While they are colloquially referred to as regression or classification problems, respectively, we note that there are other examples of the loss function, such as hinge loss, that do not fall into these two categories.
In Section~\ref{sec:priliminaries}, we will define these notions more formally.

For the analysis of implicit regularization, the most well-studied optimization algorithm is gradient descent (GD).
The implicit bias of gradient descent (GD) for the square loss goes back to \cite{engl1996regularization}, and possibly earlier, where it was shown that GD converges to the global minimum that is closest to the initialization in the Euclidean distance.
For the logistic loss, it has been shown that the gradient descent iterates converge to the $\ell_2$-maximum-margin SVM solution in direction~\citep{soudry2018implicit, ji2019implicit}.
Beyond GD, it has been shown that mirror descent (MD), which is an important generalization of GD, converges to the interpolating solution that is closest to the initialization in terms of a Bregman divergence~\citep{gunasekar2018characterizing,azizan2018stochastic}. 
On the classification side, it has been established that the implicit biases of various algorithms such as AdaBoost~\citep{rosset2004boosting,telgarsky2013margins} and steepest descent~\citep{gunasekar2018characterizing} maximize the margin with respect to certain norms.

There are also many counter-examples where an optimization algorithm does not exhibit an implicit bias independent of hyper-parameters such as the step size.
For instance, it has been shown that the AdaGrad algorithm does not have such an implicit bias in the classification setting~\citep{gunasekar2018characterizing,wang2021implicit}. 
Further, it is possible for optimization algorithms to exhibit step-size-invariant implicit bias in regression but not in classification and vice versa (e.g., steepest descent).
To our best knowledge, gradient descent is the only first-order algorithm known to induce a step-size-invariant implicit bias in both settings of regression with square loss and classification with logistic loss.
Therefore, there is still a significant gap in the understanding of implicit regularization for different classes of losses and some believe that the regression and classification settings are ``fundamentally different'' \citep{gunasekar2018characterizing}.
In this paper, we show that mirror descent can extend the implicit bias of gradient descent to more general geometries in the classification setting.
So, we conclude that mirror descent is the first known algorithm exhibiting implicit regularization for both general geometry and different classes of loss functions.
Furthermore, we show that under many circumstances, mirror descent can both be efficiently implemented and quickly converge to its implicitly regularized solution.
Hence, mirror descent is a versatile way of implicitly enforcing desirable properties on the learned model in a variety of tasks.
See Table~\ref{table:main} for a summary.

\begin{table}[t]
\centering
\renewcommand{\arraystretch}{1.25}
\setlength\tabcolsep{8pt}

\begin{tabular}{ |c |c|c| }
\hline  
& Regression (e.g. square loss) & Classification (e.g. logistic loss) \\
& with $w_0=\argmin \psi(w)$ & with any initialization \\
\hline\hline  
\multirow{4}{*}{\begin{tabular}{c}Gradient Descent\\(i.e. $\mir(\cdot) = \frac{1}{2}\norm{\cdot}_2^2$)\end{tabular}}  & $\argmin_w \norm{w}_2$ & $\argmin_w \norm{w}_2$  \\
 & $\st~~w \text{ fits all data} $ & $\st~~w \text{ classifies all data} $    \\
 & 
\multirow{2}{*}{\citep[Thm 6.1]{engl1996regularization}} & \cite{soudry2018implicit}   \\
 & & \cite{ji2019implicit} \\
   \hline
 \multirow{4}{*}{\begin{tabular}{c}Mirror Descent\\ \end{tabular}}  & $\argmin_w \psi(w)$ & $\argmin_w \psi(w)$  \\
 & $\st~~w \text{ fits all data} $ & $\st~~w \text{ classifies all data} $    \\
 & 
\cite{gunasekar2018characterizing} & \multirow{2}{*}{ \large \high{This work}}  \\
& \cite{azizan2018stochastic} & \\
  \hline 
\end{tabular}

\caption{{\bf Conceptual summary of our results.} For both well-specified linear regression and separable linear classification, gradient descent converges to the ``smallest'' global minimum in $\ell_2$-norm.
In the case of linear regression, it is known that mirror descent generalizes the implicit bias of gradient descent to any strictly convex potential.
However, a similar characterization is missing for mirror descent under the classification setting.
In this paper, we prove the implicit regularization of mirror descent with the class of homogeneous potentials and extend the result of gradient descent beyond $\ell_2$-norm.
In Appendix~\ref{sec:table-full}, we present a more complete summary where we consider the implicit bias in the regression setting with any initialization.
}
\label{table:main}
\end{table}

\subsection{Our contributions}
In this paper, our theoretical and empirical contributions are as follows:
\vspace{-0.5em}
\begin{list}{{\tiny $\blacksquare$}}{\leftmargin=1.5em}
\setlength{\itemsep}{-1pt}
    \item We study mirror descent (MD) with the general class of homogeneous potential functions. 
    In Section~\ref{sec:primal-bias-result}, we show that for separable linear classification with logistic loss, MD with homogeneous potential exhibits implicit regularization by converging in direction to a generalized maximum-margin solution with respect to the potential function.
    More generally, we show that, for any strictly decreasing loss function, MD follows the so-called regularization path. The precise terminologies are defined in Sections~\ref{sec:priliminaries} and~\ref{sec:main-result}.
    \item We study the rate at which MD with homogeneous potential converges in direction to the generalized maximum-margin solution. In Section~\ref{sec:asymp-result}, we show that with fixed step sizes, MD converges in direction to the maximum-margin solution at a poly-logarithmic rate in the number of iterates $T$. And in Section~\ref{sec:normalized-asymp}, with additional assumptions on the potential function, we prove the convergence of a variant of normalized MD and show the rate of convergence can be accelerated to be polynomial in the number of iterates $T$.
    \item In Section~\ref{sec:experiments}, we investigate the implications of our theoretical findings by applying a subclass of MD that is both efficient and scalable.
    Our experiments involving linear models corroborate our theoretical results in Section~\ref{sec:main-result}, and real-world experiments with deep neural networks and popular datasets suggest that our findings carry over to such nonlinear settings.  Our deep learning experiments further show that mirror descent with respect to different potential functions can lead to different solutions with significantly different generalization performance.
\end{list}

\subsection{Related work}
\paragraph{Implicit regularization in regression.}
Regression problems are typically concerned with the case of square loss.
For gradient descent (GD) with square loss on a linear model, it is known that GD converges to the minimizer that is closest to the initialization in the Euclidean sense~\citep{engl1996regularization}. 
We can induce implicit bias with respect to other geometries with a family of algorithms called mirror descent (MD), which is an extension of GD. 
In particular, it has been shown that mirror descent converges to the interpolating solution that is closest to the initialization in terms of a Bregman divergence with respect to MD's potential function~\citep{gunasekar2018characterizing,azizan2018stochastic}.\footnote{MD has also been used for explicit regularization, e.g., \cite{azizan2021explicit}, which is not the focus of this work.}
Additionally, \cite{gunasekar2018characterizing} showed that a variant of MD with momentum also converges to the same implicit bias, and \cite{azizan2018stochastic} analyzed stochastic MD.
So, we consider the study of implicit bias in linear regression to be relatively well-understood by now.

\paragraph{Implicit regularization in classification.}
Classification problems are typically concerned with the case of logistic or exponential loss.
Another common loss function for classification problems is the hinge loss, but as we will note in Section~\ref{sec:priliminaries}, this loss is uninteresting in terms of analyzing implicit bias.
A key differentiating factor in the classification setting is that the loss function does not attain its minimum at a finite value, and the weights have to grow to infinity.
For the logistic loss and other strictly decreasing losses, it has been shown that gradient descent iterates converge to the $\ell_2$-maximum-margin solution in direction~\citep{soudry2018implicit, ji2019implicit, ji2020gradient}.
Further applying various schemes of adaptive step size to gradient descent can accelerate its convergence to the solution induced by its implicit bias~\citep{nacson2019convergence,ji2021characterizing,ji2021fast}.
Beyond implicit bias with respect to the $\ell_2$-norm, we also know that AdaBoost converges to the $\ell_1$-maximum margin direction~\citep{rosset2004boosting,telgarsky2013margins}.
Also, \cite{gunasekar2018characterizing} showed that the steepest descent algorithm converges to the maximum-margin direction with respect to a general norm; however, this algorithm cannot be implemented efficiently in practice.
The analysis of mirror descent for this setting had been more limited; the only prior work we are aware of showed a special case where mirror descent's potential is the Mahalanobis distance\footnote{which is equivalent to the Euclidean distance up to a linear transformation in the coordinates.}~\citep{li2021implicit}.

\paragraph{Connections between regression and classification.}
For over-parameterized models, empirical evidence shows that using either least-square loss or cross-entropy loss can lead to comparable performance on classification tasks~\citep{hui2020evaluation}.
This suggests that the distinction between regression and classification problems is blurred when we enter the over-parameterized regime.
There has been some progress in theoretically explaining this phenomenon, for example, \cite{muthukumar2021classification} showed that for over-parameterized linear models and when the data are drawn from a Gaussian distribution, the minimum-$\ell_2$-norm interpolating solution and the $\ell_2$-maximum-margin solutions are equivalent with high probability.
However, it is unknown whether such connections exist for more general geometries extending beyond the $\ell_2$-norm.

\paragraph{Extension to nonlinear models.}
In addition to linear models, several works have analyzed the implicit bias when we optimize over a nonlinear model such as neural networks.
There is now a good understanding of the case of simpler networks such as homogeneous networks without nonlinearity~\citep{lyu2019gradient,vardi2022margin,wang2021implicit}, ReLU networks~\citep{ji2019polylogarithmic,zou2020gradient}, or networks with dropout~\citep{mianjy2018implicit,wei2020implicit}.
However, the development of implicit regularization for more general deep neural networks remains an ongoing research direction.

 \section{Background}
\label{sec:priliminaries}
\subsection{Problem setting}
\label{sec:setting}
In this paper, we are interested in the standard \textit{empirical risk minimization} problem.
Consider a collection of input-output pairs $\{(x_i, y_i)\}_{i=1}^n \subset \RR^d \times \RR$ and a model $f_w(x) : \RR^d \to \RR$ with parameter $w \in \mathcal{W}$.
For some convex \textit{loss function} $\ell : \RR \times \RR \to \RR$, our goal is to minimize the empirical loss:
\[ L(w) = \frac{1}{n}\sum_{i=1}^n \ell(f_w(x_i), y_i).\]
We can categorize the loss functions by the properties of their minimizer.
For simplicity, we assume without loss of generality that $\inf \ell(\cdot) = 0$.
The first type is concerned with regression problems, whereas the next two types of losses are concerned with classification problems, where the output variable is a binary label $y \in \{\pm 1\}$.
\begin{list}{{\tiny $\blacksquare$}}{\leftmargin=1.5em}
\setlength{\itemsep}{-1pt}
    \item \textbf{Losses with a unique minimizer.} In this case, the minima of the loss function is attained if and only if $f_w(x) = y$. Example: square loss $\ell(f_w(x), y) = (f_w(x) - y)^2$.
    \item \textbf{Losses with non-unique minimizer.} In this case, there are other minimizers in addition to the ones at $f_w(x) = y$. Example: hinge loss $\ell(f_w(x), y) = \max(0, 1 - f_w(x)y)$.
    \item \textbf{Strictly monotone losses.} A finite minimizer is not attainable in this case, but for any fixed value of $y$, the loss $\ell(f_w(x), y)$ strictly decreases with respect to $f_w(x)y$. Example: exponential loss $\ell(f_w(x), y) = \exp(- f_w(x)y)$, or logistics loss.
\end{list}

We note that gradient descent does not exhibit an implicit bias for losses with a non-unique minimizer because the final solution will depend on the step size (see the example below).
Therefore, we are mainly interested in only the first and third types of loss functions.
For simplicity, unless otherwise stated, we informally refer to regression as problems with square loss and classification as problems with exponential loss.

\begin{example}
We give a simple example where the solution found by gradient descent with the hinge loss is dependent on the step size.
Consider a classification dataset with three points in $\RR^3$: $((1, 0, 0), +1), ((0, 1, 0), +1), ((0, 0, 1), +1)$, a linear model $f_w(x) = w^\top x$ and initial weight $w_0 = (-1, -2, -3)$.
When the step size is $\eta = 2$, the iterates are $w_1 = (1, 0, -1), w_2 = (1, 2, 1)$; and when the step size is $\eta = 3$, the iterates are $w_1 = (2, 1, 0), w_2 = (2, 1, 3)$.
The final solutions are dependent on the step size, so gradient descent does not exhibit implicit regularization with the hinge loss.
\end{example}

For our theoretical analysis, we focus on a linear model, where the models can be expressed by $f_w(x) = w^\top x$ and $w \in \RR^d$, in classification problems with strictly monotone losses.
We also make the following assumptions about the data. 
First, since we are mainly interested in the over-parameterized setting where $d \gg n$, we assume that the data is linearly separable, i.e., 
there exists  $w^* \in \RR^d$ s.t. $\sgn(\inp{w^*}{x_i}) = y_i$ for all $i\in[n]$.
We also assume that the inputs $x_i$'s are bounded, where $\max_i \norm{x_i} < C$, for some relevant norm $\norm{\cdot}$ which we will specify in Section~\ref{sec:main-result}. 

\subsection{Preliminaries on mirror descent}
The key component of mirror descent is a \textit{potential function}. 
In this work, we will focus on differentiable and strictly convex potentials $\psi$ defined on the entire domain $\R^n$.\footnote{In general, the mirror map is a convex function of Legendre type~(see, e.g., \citep[Sec. 26]{rockafellar1970convex}).} 
We call $\nabla \psi$ the corresponding \textit{mirror map}.
Given a potential, the natural notion of ``distance'' associated with the potential $\psi$ is given by the Bregman divergence.

\vspace{-0.5em}

\begin{definition}[Bregman divergence~\citep{bregman1967relaxation}]
For a strictly convex potential function  $\mir$, the Bregman divergence $\breg{\mir}{\cdot}{\cdot}$ associated to $\mir$ is defined as
\begin{align*}
    \breg{\mir}{x}{y}:= \mir(x)-\mir(y) -\inp{\nabla \mir(y)}{x-y},\qquad \forall x,y\in \R^n\,.
\end{align*}
\end{definition} 

\vspace{-0.25em}

An important case is the potential $\psi = \frac{\norm{\cdot}_2^2}{2}$, where $\norm{\cdot}_2$ denotes the Euclidean norm.
Then, the Bregman divergence becomes $D_\psi(x,y) = \frac{1}{2}\norm{x-y}_2^2$.
As an example for more complicated geometry, when the potential is $\psi(x) = \frac{1}{2} x^\top P x$ with positive definite $P \succ 0$, then the Bregman divergence becomes the Mahalanobis distance.
For more background on Bregman divergence and its properties, see, e.g., \citep[Section 2.2]{bauschke2017descent} and \citep[Section  II.A]{azizan2019stochastic}.

The mirror descent (MD) algorithm~\citep{nemirovskij1983problem} is a generalization of gradient descent over geometries beyond the Euclidean distance.
In mirror descent with potential $\psi$, we use Bregman divergence as a measure of distance:
\begin{align} \tag{\sf MD}\label{equ:md}
    w_{t+1} = \argmin_w \left\{\frac{1}{\eta}D_\psi(w, w_t) + \inp{\nabla L(w_t)}{w}\right\}
\end{align}

Equivalently, \ref{equ:md} can be written as  $\nabla\psi(w_{t+1}) = \nabla\psi(w_t) - \eta \nabla L(w_t)$. We refer readers to \cite[Figure 4.1]{bubeck2015convex} for a nice illustration of mirror descent.
Also, see \citep[Section 5.7]{juditsky2011first} for various examples of potentials depending on applications.

One property we will repeatedly use is the following~\citep{azizan2018stochastic}:

\begin{lemma}[\ref{equ:md} identity]
\label{thm:key-iden}
For any $w \in \RR^n$, the following identities hold for \eqref{equ:md}\footnote{For convenience, for a function $f$, we write $D_f(x, y) := f(x) - f(y) - \inp{\nabla f(y)}{x-y}$. Note that when $f$ is convex, $D_f(\cdot, \cdot) \ge 0$, and when $f$ is strictly convex, $D_f(\cdot, \cdot)$ is the Bregman divergence.}:
\begin{subequations}
\begin{align}
    &D_\psi(w, w_t) = D_\psi(w, w_{t+1}) + D_{\psi - \eta L}(w_{t+1}, w_t) + \eta D_{L}(w, w_t) - \eta L(w) + \eta L(w_{t+1})\,, \label{equ:key-iden-1}\\
     &\quad = D_\psi(w, w_{t+1}) + D_{\psi - \eta L}(w_{t+1}, w_{t}) - \eta \inp{\nabla L(w_t)}{w - w_t} - \eta L(w_{t}) + \eta L(w_{t+1})\, \label{equ:key-iden-2}.
\end{align}
\end{subequations}
\end{lemma}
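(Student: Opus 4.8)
The plan is to derive both identities from a single algebraic fact, namely that the generalized divergence $D_f(x,y) = f(x) - f(y) - \inp{\nabla f(y)}{x-y}$ is \emph{linear} in its function argument $f$, together with the update rule \eqref{equ:md} written in mirror form $\nabla\psi(w_{t+1}) = \nabla\psi(w_t) - \eta\nabla L(w_t)$. Linearity is immediate, since both $f \mapsto f$ and $f \mapsto \nabla f$ are linear maps, so $D_{af+bg}(x,y) = a\,D_f(x,y) + b\,D_g(x,y)$ for any scalars $a,b$ and functions $f,g$. I emphasize that I will apply this with $f = \psi - \eta L$, which need not be convex, so $D_{\psi - \eta L}$ must be read in the generalized (footnote) sense rather than as a genuine Bregman divergence.

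First I would record the standard three-point identity, obtained by expanding the three divergences directly from the definition and cancelling the $\psi$-values:
\begin{equation*}
  D_\psi(w, w_t) = D_\psi(w, w_{t+1}) + D_\psi(w_{t+1}, w_t) + \inp{\nabla\psi(w_{t+1}) - \nabla\psi(w_t)}{w - w_{t+1}}\,.
\end{equation*}
Substituting the mirror-form update $\nabla\psi(w_{t+1}) - \nabla\psi(w_t) = -\eta\nabla L(w_t)$ turns the last term into $-\eta\inp{\nabla L(w_t)}{w - w_{t+1}}$.

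Next I would decompose the middle term by linearity, writing $D_\psi(w_{t+1}, w_t) = D_{\psi - \eta L}(w_{t+1}, w_t) + \eta\, D_L(w_{t+1}, w_t)$, and expand $D_L(w_{t+1}, w_t) = L(w_{t+1}) - L(w_t) - \inp{\nabla L(w_t)}{w_{t+1} - w_t}$ from its definition. Collecting the two inner-product terms and using $\inp{\nabla L(w_t)}{w_{t+1} - w_t} + \inp{\nabla L(w_t)}{w - w_{t+1}} = \inp{\nabla L(w_t)}{w - w_t}$ telescopes them into a single term, which yields exactly \eqref{equ:key-iden-2}. Finally, \eqref{equ:key-iden-1} follows from \eqref{equ:key-iden-2} by the one-line rewrite $-\eta\inp{\nabla L(w_t)}{w-w_t} - \eta L(w_t) = \eta D_L(w,w_t) - \eta L(w)$, which is just the definition of $D_L(w, w_t)$ rearranged.

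I do not expect a genuine obstacle here: the statement is a bookkeeping identity, and the real content is entirely in the mirror-form substitution. The only points demanding care are (i) interpreting $D_{\psi - \eta L}$ through the generalized definition, since $\psi - \eta L$ is not assumed convex, and (ii) tracking signs when telescoping the $\inp{\nabla L(w_t)}{\cdot}$ terms. Accordingly, I would present \eqref{equ:key-iden-2} as the primary computation and obtain \eqref{equ:key-iden-1} from it as an immediate corollary.
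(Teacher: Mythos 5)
Your proposal is correct and follows essentially the same route as the paper's proof: both expand the Bregman divergences via the three-point identity, substitute the mirror-form update $\nabla\psi(w_{t+1}) - \nabla\psi(w_t) = -\eta\nabla L(w_t)$, use additivity of $D_f$ in $f$ to absorb $\eta D_L(w_{t+1}, w_t)$ into $D_{\psi-\eta L}(w_{t+1}, w_t)$, and obtain \eqref{equ:key-iden-1} from \eqref{equ:key-iden-2} by rearranging the definition of $D_L(w, w_t)$. Your explicit caveat that $D_{\psi-\eta L}$ must be read in the generalized (footnote) sense is exactly the right point of care and matches the paper's convention.
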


Using Lemma~\ref{thm:key-iden}, we make several new observations and prove the following useful statements.

\begin{lemma}
\label{thm:decreasing-lose}
For sufficiently small step size $\eta$ such that $\psi - \eta L$ is convex, the loss is monotonically decreasing after each iteration of \eqref{equ:md}, i.e., $L(w_{t+1}) \le L(w_{t})$.
\end{lemma}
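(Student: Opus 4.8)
The plan is to invoke the \ref{equ:md} identity from Lemma~\ref{thm:key-iden} with the particular choice $w = w_t$, which collapses most of the terms and isolates the quantity $L(w_t) - L(w_{t+1})$. Concretely, I would substitute $w = w_t$ into the first identity~\eqref{equ:key-iden-1}. On the left-hand side, $D_\psi(w_t, w_t) = 0$ by definition of the Bregman divergence. On the right-hand side, the term $\eta D_{L}(w_t, w_t)$ likewise vanishes, and the trailing terms $-\eta L(w) + \eta L(w_{t+1})$ become $-\eta L(w_t) + \eta L(w_{t+1})$. After this substitution the identity reads
\begin{equation*}
0 = D_\psi(w_t, w_{t+1}) + D_{\psi - \eta L}(w_{t+1}, w_t) - \eta L(w_t) + \eta L(w_{t+1})\,.
\end{equation*}

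Rearranging gives $\eta\bigl(L(w_t) - L(w_{t+1})\bigr) = D_\psi(w_t, w_{t+1}) + D_{\psi - \eta L}(w_{t+1}, w_t)$, so the claim reduces to showing the right-hand side is nonnegative. The first summand $D_\psi(w_t, w_{t+1}) \ge 0$ since $\psi$ is (strictly) convex. The second summand $D_{\psi - \eta L}(w_{t+1}, w_t) \ge 0$ precisely because the hypothesis assumes $\eta$ is small enough that $\psi - \eta L$ is convex, which makes its associated $D_{\psi-\eta L}(\cdot,\cdot)$ nonnegative (this is exactly the generalized-divergence convention noted in the footnote to Lemma~\ref{thm:key-iden}). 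Dividing through by $\eta > 0$ then yields $L(w_{t+1}) \le L(w_t)$.

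There is no genuinely hard step here; the proof is essentially a one-line consequence of the identity once the correct substitution is made. The only thing requiring care is recognizing that both nonnegative terms on the right are where the two structural assumptions enter: convexity of $\psi$ handles $D_\psi(w_t, w_{t+1})$, while the step-size smallness condition handles $D_{\psi - \eta L}(w_{t+1}, w_t)$. I would therefore present the substitution explicitly and flag that convexity of $\psi - \eta L$ is the sole place the step-size restriction is used, making transparent why a small enough $\eta$ is both necessary and sufficient for this monotonicity argument. (One could equally start from~\eqref{equ:key-iden-2}, since at $w = w_t$ the inner-product term $-\eta\inp{\nabla L(w_t)}{w - w_t}$ also vanishes, giving the identical conclusion.)
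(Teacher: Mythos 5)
Your proposal is correct and follows exactly the same route as the paper: both substitute $w = w_t$ into the identity of Lemma~\ref{thm:key-iden}, observe that $D_\psi(w_t,w_t)$ and $D_L(w_t,w_t)$ vanish, and conclude from the nonnegativity of $D_\psi(w_t,w_{t+1})$ and $D_{\psi-\eta L}(w_{t+1},w_t)$. Your added remark pinpointing where the step-size condition enters is a nice clarification but does not change the argument.
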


\begin{lemma}
\label{thm:to-infinity}
In a separable linear classification problem, if  $\eta$ is chosen sufficiently small s.t. $\psi - \eta L$ is convex, then we have $L(w_t) \to 0$ as $t \to \infty$. Hence, $\lim_{t\to \infty}\norm{w_t} = \infty$ for any norm $\norm{\cdot}$.
\end{lemma}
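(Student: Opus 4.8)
The plan is to combine the monotonicity of the loss with a comparator-based telescoping argument built on the \eqref{equ:md} identity. First, Lemma~\ref{thm:decreasing-lose} already guarantees that $L(w_t)$ is non-increasing; since $L \ge \inf \ell = 0$, the sequence converges to some limit $L_\infty \ge 0$, and the entire task reduces to showing $L_\infty = 0$. The idea is to exploit separability: scaling the separator $w^*$ drives the loss to zero, and the comparator trick transfers this to the iterates.

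For the core estimate, I would instantiate identity~\eqref{equ:key-iden-1} at an arbitrary comparator $w$. Since $\eta$ is small enough that $\psi - \eta L$ is convex, $D_{\psi - \eta L}(w_{t+1}, w_t) \ge 0$, and since $L$ is convex, $\brgl{w}{w_t} \ge 0$. Dropping these two nonnegative terms yields
\[
D_\psi(w, w_t) - D_\psi(w, w_{t+1}) \ge \eta\big(L(w_{t+1}) - L(w)\big).
\]
Telescoping over $t = 0, \dots, T-1$ and discarding $D_\psi(w, w_T) \ge 0$ gives
\[
\frac{1}{T}\sum_{t=1}^{T} L(w_t) \le L(w) + \frac{D_\psi(w, w_0)}{\eta T}.
\]
Because $L(w_t)$ is non-increasing, the average on the left is at least $L(w_T) \ge L_\infty$, so letting $T \to \infty$ yields $L_\infty \le L(w)$ for \emph{every} comparator $w$.

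Now I would specialize $w = \alpha w^*$. By separability, $\inp{w^*}{x_i} y_i > 0$ for all $i$, so $\inp{\alpha w^*}{x_i} y_i \to +\infty$ as $\alpha \to \infty$; strict monotonicity of $\ell$ (with $\inf \ell = 0$) then forces $L(\alpha w^*) \to 0$. Hence $L_\infty \le \inf_{\alpha > 0} L(\alpha w^*) = 0$, i.e. $L_\infty = 0$. For the final claim, $L(w_t)\to 0$ together with nonnegativity of each summand forces $\ell(\inp{w_t}{x_i}, y_i)\to 0$, and strict monotonicity forces the margins $\inp{w_t}{x_i} y_i \to +\infty$. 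Since $|\inp{w_t}{x_i}| \le \norm{x_i} \norm{w_t}$ (up to a norm-dependent constant) with the data bounded, this is possible only if $\norm{w_t}\to\infty$; equivalence of norms in $\R^d$ extends the conclusion to any norm.

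The main obstacle is conceptual rather than computational: the right choice of comparator, and the realization that the nonnegative Bregman terms $D_{\psi - \eta L}$ and $\brgl{\cdot}{\cdot}$ can simply be discarded, so that no coercivity or growth assumption on $\psi$ is needed — separability alone supplies a comparator whose loss is arbitrarily small. A minor care point is the passage from $L \to 0$ to $\norm{w_t}\to\infty$, which relies on boundedness of the $x_i$ and on $\ell$ attaining its infimum only in the limit $f_w(x)\,y \to +\infty$.
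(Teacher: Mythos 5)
Your proof is correct and follows essentially the same route as the paper's: both instantiate identity~\eqref{equ:key-iden-1} at a comparator, drop the two nonnegative Bregman terms $D_{\psi-\eta L}(w_{t+1},w_t)$ and $D_L(w,w_t)$, and use separability to supply comparators of arbitrarily small loss. The paper phrases this as a contradiction (fix $w$ with $L(w)\le \varepsilon/2$ and show $D_\psi(w,w_t)$ decreases by $\eta\varepsilon/2$ per step), whereas you telescope directly to obtain $L_\infty \le L(w)$ for every $w$ and then take an infimum over $\alpha w^*$ --- a cosmetic difference --- and your explicit argument for the final step $\norm{w_t}\to\infty$ fills in a detail the paper leaves implicit.
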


The formal proofs of these lemmas can be found in Appendix~\ref{sec:proof-basic-lemmas}.

\begin{remark}
\label{rem:step-size}
One can relax the condition in Lemma \ref{thm:decreasing-lose} and \ref{thm:to-infinity} such that for a sufficiently small step size $\eta$, $\psi - \eta L$ only has to be locally convex at the iterates $\{w_t\}_{t=0}^\infty$.
The relaxed condition allows us to analyze losses such as the exponential loss (see, e.g. footnote 2 of \cite{soudry2018implicit}).
This condition can be considered as the mirror descent counterpart to the standard smoothness assumption in the analysis of gradient descent (see \cite{lu2018relatively}).
In Appendix~\ref{sec:step-size}, we discuss in detail the existence of such a step size for the exponential loss.
\end{remark}

\subsection{Preliminaries on implicit regularization}
As we discussed above,  the weights vector $w_t$ diverges for mirror descent.
Here the main theoretical question is:

\begin{center}
What direction does \ref{equ:md} converge to? In other words, \\under some norm $\norm{\cdot}$, can we characterize $w_t / \norm{w_t}$ as $t\to \infty$?
\end{center}

To define a notion of ``norm'' respecting the geometry induced by potential $\psi$, we let $\norm{\cdot}_\psi$ be the Minkowski functional of $\psi$'s unit sub-level set:
\begin{equation}
    \label{equ:potential-norm}
    \norm{w}_\psi := \inf \{c > 0 : \psi(w / c) \le 1\}
\end{equation}
For a wide class of convex $\psi$, this definition indeed gives us a norm.
In Section~\ref{sec:main-result}, we shall give a sufficient condition for $\norm{\cdot}_\psi$ to be a norm.
With the definition of $\norm{\cdot}_\psi$ in mind, we introduce two special directions whose importance will be illustrated later.

\begin{definition}
The \textbf{regularization path} with respect to $\norm{\cdot}_\psi$ is defined as
\begin{equation}
    \bar{w}_\psi(B) = \argmin_{\norm{w}_\psi \le B} L(w)
\end{equation}
And if the limit $\lim_{B\to\infty} \bar{w}_\psi(B) / B$ exists, we call it the \textbf{generalized regularized direction} and denote it by $\reg{\psi}$.
\end{definition}

\begin{definition}
The \textbf{margin} $\gamma$ of the a linear classifier $w$ is defined as
$\gamma(w) = \min_{i \in [n]} y_i \inp{x_i}{w}. $
The \textbf{generalized max-margin direction} with respect to $\psi$ is defined as:
\begin{equation}
    \mmd{\psi} := \argmax_{\psi(w) \le 1} \left\{ \min_{i=1, \dots, n} y_i \inp{x_i}{w} \right\}
\end{equation}
And let $\mar{\psi}$ be the optimal value to the equation above.\footnote{Note that, when $\norm{\cdot}_\psi$ is a norm, the sets $\{\psi(\cdot) \le 1\}$ and $\{\norm{\cdot}_\psi \le 1\}$ are equal. In this paper, we will use these two formulations interchangeably.}
\end{definition}

\begin{wrapfigure}[16]{r}{0.47\textwidth}
\centering 
\includegraphics[width=0.9\textwidth]{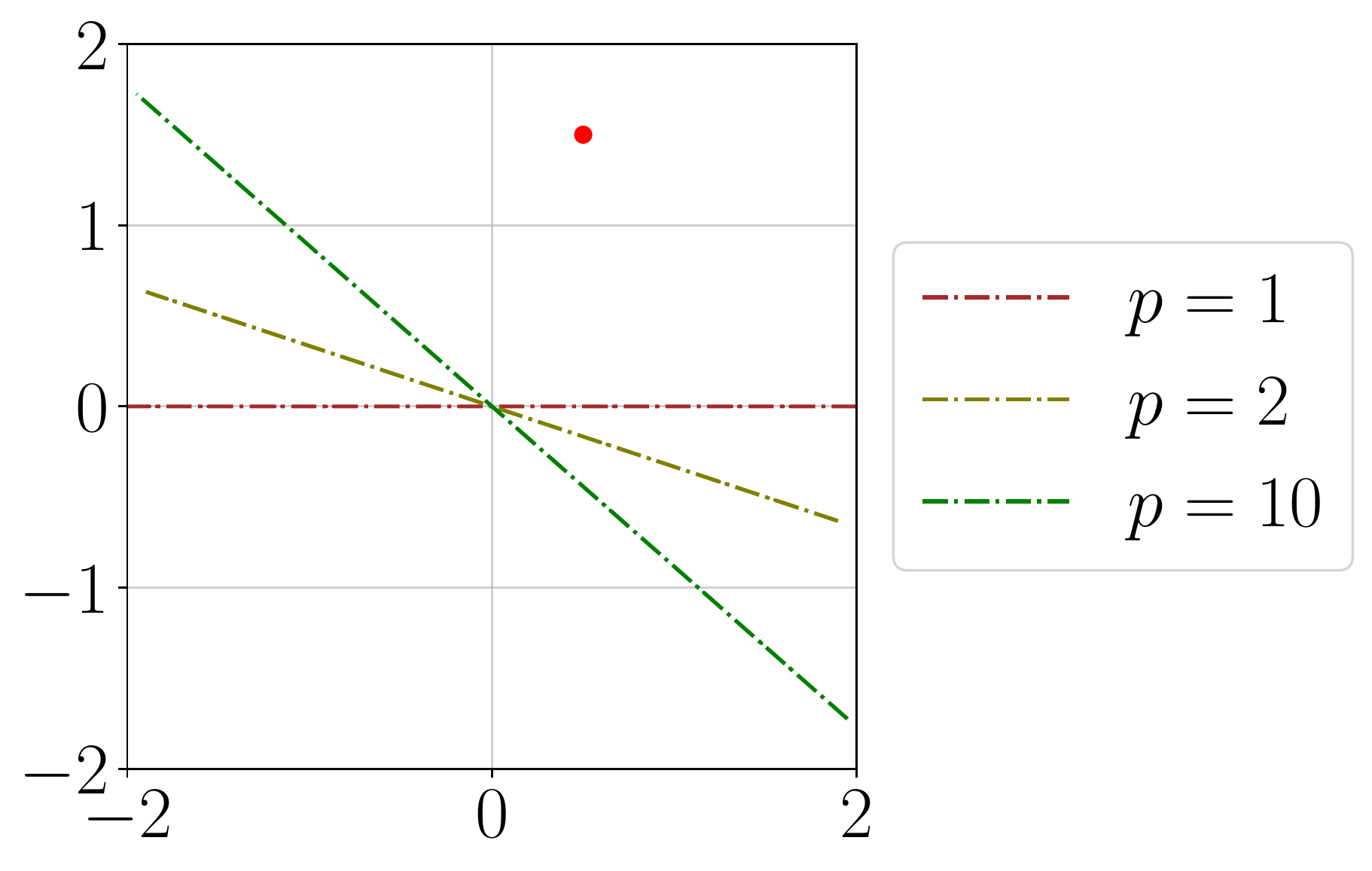}
\caption{The generalized maximum-margin solution to a single data point (denoted by \textcolor{red}{$\bullet$}) with respect to the $\ell_{1}, \ell_2$, and $\ell_{10}$ norms. For each generalized max-margin solution $u$, we plot the decision boundary $\{x \mid u^\top x = 0\}$.}
\label{tab:max-margin} 
\end{wrapfigure}

For simpler case where we have the $\ell_p$-norm (one possible corresponding potential is $\psi(\cdot) = \frac{1}{p} \norm{\cdot}_p^p$), we overload the notation with $\reg{p}$ and $\mmd{p}$.
Note that the superscripts in $\reg{\psi}$ and $\mmd{\psi}$ are not variables and we only use this notation to differentiate the two definitions.

To illustrate the effect of the potential $\psi$ on the maximum-margin solution, we consider a dataset consisting of a single point $((\frac{1}{2}, \frac{3}{2}), +1)$.
For $\mmd{2}$, we get the SVM solution whose decision boundary is orthogonal to the line connecting $(\frac{1}{2}, \frac{3}{2})$ to the origin.
And for $\mmd{1}$, we get a ``sparse'' max-margin solution that is zero in one coordinate.
Lastly, because $\norm{\cdot}_{10}$ is very close to $\norm{\cdot}_\infty$, the coordinates in the max-margin solution $\mmd{10}$ are very close to each other.

Prior results had shown that, in linear classification, gradient descent converges in direction to $\mmd{2}$, which is parallel to the hard-margin SVM solution w.r.t. $\ell_2$-norm: $\argmin_w \{\norm{w}_2 : \gamma(w) \ge 1\}$.

\begin{theorem}[\cite{soudry2018implicit}]
\label{thm:gd-maxmargin}
For separable linear classification problems with logistics/exponential loss, the gradient descent iterates with sufficiently small step size converge in direction to $\mmd{2}$, i.e., $\lim_{t\to\infty} \frac{w_t}{\norm{w_t}_2} = \mmd{2}$. 
\end{theorem}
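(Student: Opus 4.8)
The plan is to prove that gradient descent on separable data with logistic (or exponential) loss converges in direction to the $\ell_2$-maximum-margin solution $\mmd{2}$. I would work primarily with the exponential loss $\ell(z) = e^{-z}$, since its clean multiplicative structure makes the argument transparent, and then remark that the logistic loss is asymptotically equivalent (its tail behaves like $e^{-z}$, so the same conclusion follows with more careful bookkeeping). The backbone of the argument is to track the growth of $\norm{w_t}_2$ and to show that the residual component of $w_t$ orthogonal to $\mmd{2}$ stays bounded, so that $w_t / \norm{w_t}_2 \to \mmd{2}$.

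Concretely, I would proceed in the following steps. First, using Lemma~\ref{thm:to-infinity}, we know $L(w_t) \to 0$ and $\norm{w_t}_2 \to \infty$, so the iterates diverge and the question is genuinely about direction. Second, I would establish the correct growth rate: since the loss decreases like $e^{-\gamma \norm{w_t}_2}$ when $w_t$ aligns with the max-margin direction, I expect $\norm{w_t}_2$ to grow like $\frac{1}{\gamma}\log t$ (for $\ell_2$-normalized margin $\gamma = \mar{2}$). The key structural fact is that as $L(w_t)\to 0$, the gradient $-\nabla L(w_t) = \frac{1}{n}\sum_i y_i x_i e^{-y_i \inp{w_t}{x_i}}$ becomes dominated by the \emph{support vectors}, i.e., the data points achieving the minimum margin, so asymptotically $-\nabla L(w_t)$ points (up to scaling) in a direction that is a nonnegative combination of the support vectors $y_i x_i$. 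By the KKT conditions of the hard-margin SVM, $\mmd{2}$ is exactly such a combination. Third, I would decompose $w_t = g(t)\,\mmd{2} + \rho(t)$ where $g(t)$ is the growing scalar along the max-margin direction and $\rho(t)$ is a residual, and show $\norm{\rho(t)}_2$ stays bounded (or grows only like $\log\log t$, which is negligible against $g(t) \sim \log t$). This is where the quantitative control of the gradient's deviation from the support-vector cone enters, and one exploits the strong convexity of the $\ell_2$ geometry to convert angular alignment of the gradient into convergence of the normalized iterate.

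The main obstacle, and the technically delicate heart of the proof, is the third step: controlling the residual $\rho(t)$. The difficulty is that although $L(w_t)\to 0$ guarantees the iterate eventually classifies all points correctly with growing confidence, it does not by itself pin down the \emph{direction} — one must quantitatively show that the exponentially-weighted gradient contributions from non-support vectors decay fast enough relative to the support vectors that they cannot perturb the asymptotic direction. This requires a careful separation-of-scales argument: the margin gap between support and non-support vectors induces an exponential suppression $e^{-(\gamma_i - \gamma)\norm{w_t}_2}$ of the off-direction gradient mass, and one must integrate these perturbations over all iterations and show the accumulated residual remains bounded. Handling the logistic loss rather than the pure exponential loss adds a further layer, since its gradient factor $\sigma(-z)$ only behaves like $e^{-z}$ to leading order, so the suppression estimates must be made robust to lower-order corrections.

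Since this is precisely the theorem of \cite{soudry2018implicit}, I would ultimately defer to their detailed analysis for the full quantitative residual bounds, and in this excerpt treat the result as an established baseline against which our mirror-descent generalization (which replaces the $\ell_2$ geometry and $\mmd{2}$ by a general homogeneous potential and $\mmd{\psi}$) is to be compared.
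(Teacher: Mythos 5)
Your outline is a faithful sketch of the original argument of \cite{soudry2018implicit} --- decompose $w_t = g(t)\,\mmd{2} + \rho(t)$, use the KKT structure of the hard-margin SVM to argue that the exponentially-weighted gradient is asymptotically a conic combination of support vectors, and control the residual $\rho(t)$ by an exponential separation-of-scales estimate --- and, modulo the deferred quantitative residual bounds, it is a correct route to the statement. It is, however, a genuinely different route from the one this paper takes: the statement is cited here as prior work, and the way the paper's own machinery recovers it is as the special case $\psi = \tfrac{1}{2}\norm{\cdot}_2^2$ of Theorem~\ref{thm:primal-bias} combined with Proposition~\ref{thm:reg-max-dir} (which identifies $\reg{2}$ with $\mmd{2}$ for exponential-tail losses). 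That argument never isolates support vectors or decomposes $w_t$ orthogonally; instead it compares $w_t$ against a moving reference point $c_t\reg{2}$ on the regularization path via Lemma~\ref{thm:approx-reg-dir-loss} and Corollary~\ref{thm:cross-term}, and telescopes the Bregman-divergence identity of Lemma~\ref{thm:key-iden} to show $\brg{\reg{2}}{w_T/\norm{w_T}_2}\to 0$. Your approach buys a sharper structural picture in the Euclidean case --- the explicit $\hat{w}\log t$ growth, the bounded (or $\log\log t$) residual, and hence the $O(1/\log t)$ directional rate --- but it leans on orthogonal decomposition and strong convexity of the $\ell_2$ geometry, which is precisely what breaks for general homogeneous potentials; the paper's regularization-path argument is coarser but survives that generalization and extends to any strictly decreasing loss for which the regularized direction exists. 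One small caution: your claim that $\rho(t)$ ``stays bounded'' holds only under non-degeneracy of the SVM dual (otherwise $\log\log t$ corrections genuinely appear), so the hedge you include is necessary, not optional.
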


\begin{theorem}[\cite{ji2020gradient}]
\label{thm:gd-regdir}
If the regularized direction $\reg{2}$ with respect to the $\ell_2$-norm exists, then the gradient descent iterates with sufficiently small step size converge to the regularized direction $\reg{2}$, i.e., $\lim_{t\to\infty} \frac{w_t}{\norm{w_t}_2} = \reg{2}$. 
\end{theorem}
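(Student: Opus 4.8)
The plan is to use the $\ell_2$ regularization path $\bar{w}_2(B) := \argmin_{\norm{w}_2 \le B} L(w)$ as an intermediary and to sandwich the direction of the gradient-descent iterate $w_t$ against that of $\bar{w}_2(\norm{w_t}_2)$. Since gradient descent is exactly \eqref{equ:md} with $\psi = \tfrac12\norm{\cdot}_2^2$, Lemmas~\ref{thm:decreasing-lose} and~\ref{thm:to-infinity} already give that $L(w_t)$ is nonincreasing, $L(w_t)\to 0$, and $\norm{w_t}_2 \to \infty$. The target $w_t/\norm{w_t}_2 \to \reg{2}$ then reduces to two facts: that $w_t$ aligns with the path point $\bar{w}_2(\norm{w_t}_2)$, and that $\bar{w}_2(B)/B \to \reg{2}$, the latter holding by the standing hypothesis that $\reg{2}$ exists.

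First I would record the optimality structure of the path. Because $\bar{w}_2(B)$ minimizes $L$ over the ball of radius $B$, we automatically get $L(w_t) \ge L(\bar{w}_2(\norm{w_t}_2))$, and the KKT conditions give $\nabla L(\bar{w}_2(B)) = -\lambda(B)\,\bar{w}_2(B)$ for a multiplier $\lambda(B) \ge 0$. Feeding this into the convexity inequality $L(w_t) \ge L(\bar{w}_2(B)) + \inp{\nabla L(\bar{w}_2(B))}{w_t - \bar{w}_2(B)}$ yields the alignment bound
\begin{equation*}
L(w_t) - L(\bar{w}_2(B)) \ge \lambda(B)\big(\norm{\bar{w}_2(B)}_2^2 - \inp{\bar{w}_2(B)}{w_t}\big),
\end{equation*}
so that, at $B = \norm{w_t}_2$, a small suboptimality $L(w_t) - L(\bar{w}_2(B))$ forces $\inp{\bar{w}_2(B)}{w_t}$ to be close to $B^2$; i.e.\ $w_t$ points almost in the path direction.

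It therefore remains to certify that $w_t$ is near-optimal, and for this I would telescope the \eqref{equ:md} identity of Lemma~\ref{thm:key-iden} with $\psi = \tfrac12\norm{\cdot}_2^2$ against a slowly growing comparison point $w = g_t\,\reg{2}$. With $\brg{a}{b} = \tfrac12\norm{a-b}_2^2$ and the convexity term $\eta D_L(w,w_t)\ge 0$, summing \eqref{equ:key-iden-1} over $t$ telescopes the squared-distance terms and bounds the cumulative suboptimality $\sum_t \eta\big(L(w_t) - L(g_t\,\reg{2})\big)$ by $\tfrac12\norm{w - w_0}_2^2$ plus a controllable second-order contribution from $D_{\psi-\eta L}$. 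Choosing the scale $g_t$ so that $g_t\,\reg{2}$ tracks the path radius $\norm{w_t}_2$, and using that $\reg{2}$ is itself the limiting path direction so that $L(g_t\,\reg{2})$ is close to the optimal $L(\bar{w}_2(\norm{w_t}_2))$, upgrades this to $L(w_t) - L(\bar{w}_2(\norm{w_t}_2)) = o(1)$ for the relevant iterates, which activates the alignment bound above.

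The hard part will be the quantitative rate-matching between these two estimates. For the logistic loss the path loss decays like $\exp(-\gamma B)$ only up to polynomial-in-$B$ factors, and the path direction $\bar{w}_2(B)/B$ approaches $\reg{2}$ at its own a priori unknown rate; I must ensure the $o(1)$ alignment error harvested from the path estimates vanishes compatibly with the $o(1)$ coming from the gradient-descent side. This needs a local curvature estimate for the path — e.g.\ an implicit-function / strong-convexity bound controlling how $\lambda(B)$ and $\bar{w}_2(B)$ vary with $B$ — together with a summable-gradient bound for the iterates, available since $\psi - \eta L$ is locally convex along the trajectory for small $\eta$ (cf.\ Remark~\ref{rem:step-size}). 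Reconciling the two bounds at the consistent scale $B = \norm{w_t}_2$, with a slightly smaller comparison radius to absorb the gap between $L(w_t)$ and the optimum, is the delicate bookkeeping, and the exponential-tail structure of the loss is precisely what makes it close.
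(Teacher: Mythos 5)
Your high-level ingredients (the regularization path as intermediary, telescoping the \eqref{equ:md} identity against a growing multiple of $\reg{2}$, and a slack in the comparison radius) are the right ones, and they match the strategy of the paper's own generalization of this statement, Theorem~\ref{thm:primal-bias}, specialized to $\psi=\tfrac12\norm{\cdot}_2^2$. However, the step on which your argument actually pivots --- converting near-optimality of $w_t$ into directional alignment via the KKT multiplier $\lambda(B)$ --- has a genuine quantitative gap. For exponential-tailed losses the multiplier is $\lambda(B)=\norm{\nabla L(\bar{w}_2(B))}_2/B$, and since $\norm{\nabla L(w)}_2\asymp L(w)$ (cf.\ the two-sided bounds in Section~\ref{sec:normalized-asymp}) while $L(\bar{w}_2(B))\asymp e^{-\hat{\gamma}B}$, your alignment inequality only controls $\norm{w_t/B-\bar{w}_2(B)/B}_2^2$ by $\bigl(L(w_t)-L(\bar{w}_2(B))\bigr)/\bigl(\lambda(B)B^2\bigr)$ with $\lambda(B)B^2\asymp B\,e^{-\hat{\gamma}B}$. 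The telescoped identity of Lemma~\ref{thm:key-iden} delivers only an \emph{additive} cumulative suboptimality bound, of order $1/t$ up to polylogarithmic factors, which at $B=\norm{w_t}_2\asymp\log t$ (Lemma~\ref{thm:norm-rate}) is of the \emph{same} order as the denominator $\lambda(B)B^2$; the resulting ratio is polylogarithmic and does not obviously vanish. An additive $o(1)$ bound on $L(w_t)-L(\bar{w}_2(\norm{w_t}_2))$, as stated in your third paragraph, is vacuous here since both terms are individually $o(1)$. The tools you propose to close this (an implicit-function or strong-convexity estimate along the path) are not available: the Hessian of $L$ decays exponentially along the path, so there is no uniform curvature to invoke.

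The paper (following \cite{ji2020gradient}) avoids this step entirely. It never compares $L(w_t)$ to the exact constrained optimum at radius $\norm{w_t}_2$; instead, Lemma~\ref{thm:approx-reg-dir-loss} supplies the one-sided comparison $L\bigl((1+\alpha)\norm{w_t}_2\,\reg{2}\bigr)\le L(w_t)$, whose inflated radius absorbs the a priori unknown rate at which $\bar{w}_2(B)/B\to\reg{2}$, and Corollary~\ref{thm:cross-term} replaces your multiplier identity with the inequality $\inp{\nabla L(w_t)}{w_t}\ge(1+\alpha)\norm{w_t}_2\inp{\nabla L(w_t)}{\reg{2}}$. The entire argument then lives inside the Bregman telescoping with the moving target $c_t=(1+\alpha)\norm{w_t}_2$ (the non-telescoping cross terms $\inp{\nabla\psi(w_{t+1})}{(c_{t+1}-c_t)\reg{2}}$ are controlled by a second telescoping via Lemma~\ref{thm:cross-term-diff}), and $\alpha$ is sent to $0$ only at the very end. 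To salvage your route you would need a multiplicative suboptimality guarantee of the form $L(w_t)\le\bigl(1+o(\norm{w_t}_2)\bigr)L(\bar{w}_2(\norm{w_t}_2))$, which is essentially as hard as the theorem itself.
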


As for mirror descent, an earlier version of this paper \citep{sun2022mirror} studied when the potential function is $\psi(\cdot) = \frac{1}{p} \norm{\cdot}_p^p$ and showed that $w_t / \norm{w_t}_p$ converges in direction. 
\begin{theorem}[\cite{sun2022mirror}]
\label{thm:pgd-regdir}
Given a separable linear classification problem with strictly monotone loss.
If the regularized direction $\reg{p}$ with respect to the $\ell_p$-norm exists, then the iterates of mirror descent with $\psi(\cdot) = \frac{1}{p} \norm{\cdot}_p^p$ and sufficiently small step size converge to the generalized regularized direction $\reg{p}$, i.e., $\lim_{t\to\infty} \frac{w_t}{\norm{w_t}_p} = \reg{p}$. 
\end{theorem}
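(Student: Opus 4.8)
The plan is to show that the mirror-descent trajectory asymptotically aligns in direction with the regularization path $\bar{w}_\psi(B)$ for the potential $\psi(\cdot) = \frac{1}{p}\norm{\cdot}_p^p$, and then invoke the hypothesis that $\bar{w}_\psi(B)/B \to \reg{p}$. First I would pass to the dual (mirror) variable. Writing the \ref{equ:md} update in mirror form, $\nabla\psi(w_{t+1}) = \nabla\psi(w_t) - \eta\nabla L(w_t)$, and telescoping gives $\nabla\psi(w_t) = \nabla\psi(w_0) - \eta\sum_{s<t}\nabla L(w_s)$. Because the loss is strictly monotone, $-\nabla L(w_s) = \frac{1}{n}\sum_i \beta_{i,s}\,y_i x_i$ with every $\beta_{i,s} > 0$, so the mirror variable $z_t := \nabla\psi(w_t)$ lies, up to the fixed term $\nabla\psi(w_0)$ (which becomes negligible as $t\to\infty$ since $\norm{z_t}\to\infty$ by Lemma~\ref{thm:to-infinity}), in the conic hull of $\{y_i x_i\}$. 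Since $\nabla\psi$ for the $p$-norm potential is $(p-1)$-homogeneous, its inverse $\nabla\psi^{-1}=\nabla\psi^*$ is $(q-1)$-homogeneous with $1/p+1/q=1$, so the normalized primal iterate $w_t/\norm{w_t}_p$ is a continuous function of the dual direction $z_t/\norm{z_t}_q$ alone. Thus it suffices to establish convergence of $z_t/\norm{z_t}_q$.

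Second, I would use the \ref{equ:md} identity (Lemma~\ref{thm:key-iden}) to certify that MD does essentially as well as the regularization path. Applying \eqref{equ:key-iden-1} with comparator $w = \bar{w}_\psi(B)$ and telescoping over $t$, the nonnegativity of the terms $D_{\psi-\eta L}(w_{s+1},w_s)$ and $D_L(\cdot,\cdot)$ (valid once $\eta$ is small enough that $\psi-\eta L$ is locally convex along the iterates, cf.\ Remark~\ref{rem:step-size}) yields an upper bound of the form $\eta\sum_{s<t} L(w_{s+1}) \le D_\psi(\bar{w}_\psi(B), w_0) + \eta\, t\, L(\bar{w}_\psi(B))$, together with control of $D_\psi(\bar{w}_\psi(B), w_t)$. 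Combined with the monotonicity of $L(w_t)$ from Lemma~\ref{thm:decreasing-lose}, this shows that at a matched norm budget the loss of MD exceeds that of the optimally regularized solution only by a vanishing amount.

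Third, I would translate the loss comparison into a statement about directions. The regularization path is pinned down by first-order optimality: $-\nabla L(\bar{w}_\psi(B))$ is a nonnegative multiple of $\nabla\psi(\bar{w}_\psi(B))$, linking the path's dual variable to the regularized direction through the KKT conditions of $\argmin_{\norm{w}_\psi\le B} L(w)$. Since $z_t$ accumulates exactly the gradients $-\nabla L(w_s)$, and these concentrate on the margin-determining examples as $L(w_t)\to 0$, I would argue that every subsequential limit of $z_t/\norm{z_t}_q$ is a dual-optimal direction; strict convexity of $\psi$ forces this limit to be unique, upgrading subsequential convergence to genuine convergence and identifying the limit with the dual of $\reg{p}$. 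Pushing back through the homogeneous inverse map $\nabla\psi^*$ then gives $w_t/\norm{w_t}_p \to \reg{p}$.

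The main obstacle I expect is the third step: converting the small-loss-gap guarantee of the second step into directional closeness to the path. The loss $L$ is not strongly convex (it has no finite minimizer), so small suboptimality in loss does not naively translate into small parameter distance; one must instead exploit the homogeneous structure and the strict monotonicity of $\ell$ to show that the per-example weights $\beta_{i,s}$ become dominated by the examples that fix the regularized direction. Handling a \emph{general} strictly monotone loss, rather than the exponential loss where $\log(1/L)$ behaves like a clean margin surrogate, is the delicate part: the argument must work with $\bar{w}_\psi$ directly and cannot lean on a closed form. A secondary technical point is verifying that the fixed-step-size iterates stay in a region where $\psi-\eta L$ remains locally convex and that the contribution of $\nabla\psi(w_0)$ is genuinely negligible against the growing accumulation $\eta\sum_{s<t}\nabla L(w_s)$.
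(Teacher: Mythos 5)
Your first two steps are sound (the dual variable does accumulate a conic combination of the $y_i x_i$, and the telescoped MD identity with comparator $\bar{w}_\psi(B)$ does give the standard regret-type bound $L(w_T)\le L(\bar{w}_\psi(B)) + D_\psi(\bar{w}_\psi(B),w_0)/(\eta T)$), but the third step---which you correctly flag as the crux---is not actually an argument, and the mechanism you sketch for it does not match the generality of the statement. You propose to show that the per-example weights in $z_t=\nabla\psi(w_t)$ ``concentrate on the margin-determining examples as $L(w_t)\to 0$'' and hence that subsequential limits of $z_t/\norm{z_t}_q$ are dual-optimal. That concentration phenomenon is a property of exponential-tailed losses, where the regularized direction coincides with the max-margin direction (Proposition~\ref{thm:reg-max-dir}); the theorem, however, is stated for an arbitrary strictly monotone loss under the sole hypothesis that $\reg{p}$ exists, and in that generality there need be no ``margin-determining'' support set to concentrate on. Moreover, the KKT condition you invoke ties $-\nabla L(\bar{w}_\psi(B))$ to the path's \emph{own} direction, whereas $z_t$ accumulates gradients at the \emph{iterates}; identifying the two requires already knowing the iterates track the path, which is what you are trying to prove. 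Finally, the claim that $\nabla\psi(w_0)$ is negligible needs $\norm{z_t}\to\infty$ at a rate, and upgrading subsequential to full convergence via strict convexity of $\psi$ presumes you have already characterized the limit set---neither is supplied.

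The paper closes exactly this gap by a different, purely primal mechanism that uses the existence of $\reg{\psi}$ only once and in a quantitative form: Lemma~\ref{thm:approx-reg-dir-loss} shows $L((1+\alpha)\norm{w}_\psi\reg{\psi})\le L(w)$ once $\norm{w}_\psi$ is large, and convexity of $L$ (Corollary~\ref{thm:cross-term}) converts this into the gradient inequality $\inp{\nabla L(w_t)}{w_t}\ge(1+\alpha)\norm{w_t}_\psi\inp{\nabla L(w_t)}{\reg{\psi}}$ \emph{at the iterates themselves}, with no appeal to KKT conditions, margins, or loss tails. This inequality is then fed into a telescoping bound on the Bregman divergence to a moving target $c_t\reg{\psi}$ with $c_t=(1+\alpha)\norm{w_t}_\psi$ (together with Lemma~\ref{thm:cross-term-diff} and the homogeneity identities), yielding $\brg{c_T\reg{\psi}}{w_T}\le\norm{w_T}_\psi^\beta((1+\alpha)^\beta-1)+O(\norm{w_T}_\psi)$; dividing by $c_T^\beta$ and letting $\alpha\to 0$ gives the result. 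If you want to salvage your dual route, you would need to replace the margin-concentration heuristic with an analogue of Corollary~\ref{thm:cross-term} that controls $\inp{\nabla L(w_s)}{\cdot}$ against $\reg{p}$ for a general strictly monotone loss---at which point you are essentially reconstructing the paper's primal argument.
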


In Section~\ref{sec:primal-bias-result}, we will generalize Theorems~\ref{thm:gd-maxmargin}-\ref{thm:pgd-regdir} to a more general setting of mirror descent with the class of homogeneous potential function.

\section{Mirror Descent with Homogeneous Potential}
\label{sec:main-result}

In this section, we investigate the implicit regularization property of mirror descent for classification problems with strictly monotone losses.
For the notion of ``direction'' to be well-defined, we impose the following properties on the potential function.
\begin{assumption}
\label{thm:assump-potential}
We assume that the potential function $\psi$ has the following properties:
\begin{list}{{\tiny $\blacksquare$}}{\leftmargin=1.5em}
\setlength{\itemsep}{-1pt}
    \item $\psi$ is twice differentiable and strictly convex, i.e. $\nabla^2 \psi \succ 0$.
    \item $\psi$ is positive definite in the sense that $\psi(\cdot) \ge 0$ and $\psi(x) = 0$ if and only if $x = 0$.
    \item For some constant $\beta > 1$, $\psi$ is $\beta$-absolutely homogeneous in the sense that for any scalar $c$ and vector $x \in \RR^d$, we have $\psi(cx) = |c|^\beta \psi(x)$.
\end{list}
\end{assumption}
Under this assumption, we can verify that $\norm{\cdot}_\psi$ as defined in \eqref{equ:potential-norm} is indeed a norm. 
In particular, if $\psi(\cdot) = \norm{\cdot}^\beta$ for some norm $\norm{\cdot}$, then we have $\norm{\cdot}_\psi = \norm{\cdot}$.
Also, it is not difficult to show that $\psi$ satisfies the following properties, so that the Bregman divergence is also absolutely homogeneous.
This would allow us to easily normalize the weight vector $w$ in our computations.
\begin{subequations}
\label{equ:breg-homo}
\begin{align}
 \inp{\nabla \psi (w)}{w} &= \beta \cdot \psi(w)\\
  \brg{c w}{c w'} &= |c|^\beta \brg{w}{w'} \quad \forall c\in \R. \label{equ:homo}
\end{align}
\end{subequations}
For a detailed discussion of potentials under Assumption~\ref{thm:assump-potential}, we refer the readers to Appendix~\ref{sec:potential-norm}.

We note that this assumption is quite general and covers several previously studied cases of mirror descent:
\begin{list}{{\tiny $\blacksquare$}}{\leftmargin=1.5em}
\setlength{\itemsep}{-1pt}
    \item When $\psi = \frac{1}{2} \norm{\cdot}_2^2$, we recover gradient descent.
    \item When $\psi = \frac{1}{2} \norm{\cdot}_q^2$, we have the so-called \textit{$p$-norm algorithm} (where $1/p + 1/q = 1$)~\citep{grove2001general, gentile2003robustness}.
    \item The potential $\mir(\cdot) = \frac{1}{p} \norm{\cdot}_p^p$ for $p > 1$ is particularly of practical interest because the mirror map $\nabla \psi$ updates becomes \textit{separable} in coordinates and thus can be implemented \textit{coordinate-wise} independent of other coordinates:
    \begin{align} \tag{\algname}\label{mdpp}
    \forall j \in [d],\quad \begin{cases}
        w_{t+1}[j] \leftarrow \left| w_t^+[j] \right|^{\frac{1}{p-1}} \cdot \sgn\left( w_t^+[j]\right)\\
        w_t^+[j]:= |w_t[j]|^{p-1}\sgn(w_t[j]) - \eta \nabla L(w_t)[j]
        \end{cases}
    \end{align}
    In comparison, the $p$-norm algorithm is not coordinate-wise separable since it requires computing $\norm{w_t}_q$ at each step (see, e.g., \citep[eq. (1)]{gentile2003robustness}).
    This potential $\mir(\cdot) = \frac{1}{p} \norm{\cdot}_p^p$ was first considered by~\citep{azizan2021stochastic} in the case of regression in which the loss function has unique minimizer.
   In an earlier version of this work \citep{sun2022mirror}, we analyzed this potential in the case of classification with strictly monotone losses and named mirror descent with this potential as \textit{$p$-norm GD}, or \algname in short, because it naturally generalizes gradient descent to $\ell_p$-norms.
\end{list}

\begin{remark}
In this paper, we do not consider the case where the potential is the negative entropy function (which recovers the multiplicative weights or the hedge algorithm from the online learning literature) because this potential function requires all the weights to be positive, which would be too restrictive in our problem setting.
\end{remark}

\begin{remark} \label{rmk:sep}
It is worth noting that the steepest descent algorithm, which follows the update rule
\[w_{t+1} = \argmin_w \left\{\frac{1}{2\eta}\norm{w}^2 + \inp{\nabla L(w_t)}{w}\right\}\]
for a general norm $\norm{\cdot}$, is not an instance of mirror descent since $\frac{1}{2}\norm{\cdot}^2$ is in general not a Bregman divergence.
Further, this update rule does not have a closed-form solution and thus cannot be solved efficiently.
In this section, we will see that mirror descent with potential $\psi(\cdot) = \norm{\cdot}^\beta$ for any $\beta > 1$ will induce the same implicit bias as steepest descent for strictly monotone losses.
\end{remark}

Finally, recall that in Section~\ref{sec:setting}, we discussed our assumptions on the dataset.
With the definition of $\norm{\cdot}_\psi$, we can now state these assumptions more precisely.
\begin{assumption}
\label{thm:assump-boundedness}
    The dataset $\{(x_i, y_i)\}_{i=1}^n \subset \RR^d \times \{\pm 1\}$ is linearly separable so there exists  $w^* \in \RR^d$ s.t. $\sgn(\inp{w^*}{x_i}) = y_i$ for all $i\in[n]$.
    Let $\norm{\cdot}_{\psi, *}$ be the dual norm to $\norm{\cdot}_\psi$.
    The input variables $x_i$ are bounded that there exists constant $C > 0$ where $\max(\norm{x_i}_2, \norm{x_i}_{\psi, *}) \le C$ for all $i \in [n]$.
\end{assumption}

\subsection{Main theoretical results}
\label{sec:primal-bias-result}

We extend Theorems~\ref{thm:gd-maxmargin}-\ref{thm:pgd-regdir} to the setting of mirror descent with potential functions satisfying Assumption~\ref{thm:assump-potential}.
Building upon the analysis in our previous work~\citep{sun2022mirror}, we resolve several major obstacles in the analysis of implicit regularization of linear classification where we apply a general class of MD to strictly monotone loss functions:
\begin{list}{{\tiny $\blacksquare$}}{\leftmargin=1.5em}
  \setlength{\itemsep}{-1pt}
    \item   We approach the classification setting with strictly monotone loss by considering the limit of a sequence of constrained optimization problems. Then each constrained problem has a unique and finite minimizer, and these solutions trace out the regularization path. Our analysis builds upon the techniques employed by~\cite{ji2020gradient}. In addition to generalizing regularized direction to a more general geometry, we derive stronger justification for using regularized direction by connecting the analysis of implicit bias under different types of loss functions.
    \item Our argument addresses the concern from \cite{gunasekar2018characterizing} that the implicit bias of regression and classification problems are ``fundamentally different.'' In \cite{gunasekar2018characterizing}, it was noted that gradient descent's implicit bias is dependent on the initialization for regression problems, but not for classification problems.
    In this section, we argue that it is sufficient to reframe the classification setting as a sequence of carefully chosen regression problems.\footnote{Recall that, in Section~\ref{sec:setting}, we define ``regression problem'' as the case where the loss function has a unique and attainable minimizer.} Then, we find that the dependence on the initialization vanishes after taking the limit.
    \item On a more technical note, one challenge in analyzing mirror descent lies in the cross terms of the form $\inp{\nabla\mir(w)}{w'}$, which lack direct geometric interpretations.
    We demonstrate that under Assumption~\ref{thm:assump-potential}, these terms can be simplified nicely and still induce a variety of desired geometric properties on the implicit bias.
\end{list}

\noindent We start our discussion with the following main result.
\begin{theorem}
\label{thm:primal-bias}
For a separable linear classification problem, if the regularized direction $\reg{\psi}$ exists, then with sufficiently small step size, the iterates of \ref{equ:md} with a potential function $\psi$ satisfying Assumption~\ref{thm:assump-potential} converge to $\reg{\psi}$ in direction:
\begin{equation}
    \lim_{t\to\infty} \frac{w_t}{\norm{w_t}_\psi} = \reg{\psi}.
\end{equation}
\end{theorem}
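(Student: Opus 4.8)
The plan is to adapt the ``regularization-path tracking'' strategy of \cite{ji2020gradient} to the mirror-descent setting, using the absolute homogeneity of $\psi$ to normalize every quantity. Throughout I write $u := \reg{\psi}$, $B_t := \norm{w_t}_\psi$, and $\hat w_t := w_t / B_t$. By Lemma~\ref{thm:to-infinity} we have $L(w_t)\to 0$ and $B_t \to \infty$, so $\hat w_t$ lives on the compact set $\{\norm{\cdot}_\psi = 1\}$ and the target is exactly $\hat w_t \to u$. The entire argument reduces to showing that one scalar quantity vanishes: the normalized Bregman divergence between the iterate and the point of the regularization path sitting at a matched scale.

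First I would record the structural facts about the path $\bar w(B) := \bar w_\psi(B) = \argmin_{\norm{w}_\psi \le B} L(w)$. Because $L$ is strictly monotone and the data are separable, the loss strictly decreases along separating rays, so the constraint is active, giving $\norm{\bar w(B)}_\psi = B$, and $B \mapsto L(\bar w(B))$ is continuous, strictly decreasing, and tends to $0$; its first-order optimality condition reads $-\nabla L(\bar w(B)) = \lambda_B\, \nabla\psi(\bar w(B))$ for some $\lambda_B > 0$, which pins down the path direction. This lets me define a matched scale $B_t^\sharp$ by the loss-level equation $L(\bar w(B_t^\sharp)) = L(w_t)$, well defined with $B_t^\sharp \to \infty$. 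The engine is then the \ref{equ:md} identity: applying \eqref{equ:key-iden-2} with comparator $w = \bar w(B)$, discarding the nonnegative term $D_{\psi-\eta L}(w_{t+1},w_t)$ (nonnegative since $\psi - \eta L$ is convex for small $\eta$, as in Lemma~\ref{thm:decreasing-lose}), and using convexity of $L$ in the form $\inp{\nabla L(w_t)}{\bar w(B)-w_t}\le L(\bar w(B)) - L(w_t)$, I obtain the monotonicity estimate
\[
\breg{\psi}{\bar w(B)}{w_{t+1}} \;\le\; \breg{\psi}{\bar w(B)}{w_t} \;+\; \eta\bigl(L(\bar w(B)) - L(w_{t+1})\bigr).
\]
Thus $\breg{\psi}{\bar w(B)}{w_t}$ contracts so long as the iterate's loss exceeds the path's loss at level $B$.

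Choosing $B$ to grow with $t$ and summing this inequality over a window of steps, while feeding in the loss-decay and norm-growth relations from Lemmas~\ref{thm:decreasing-lose}--\ref{thm:to-infinity}, should force $\breg{\psi}{\bar w(B_t^\sharp)}{w_t}/(B_t^\sharp)^\beta \to 0$ and, along the way, $B_t / B_t^\sharp \to 1$ (the iterate is asymptotically as $\psi$-norm-efficient as the path at the same loss; the direction $B_t \ge B_t^\sharp$ is immediate from optimality of $\bar w$, since $w_t$ is feasible for the ball of radius $B_t$). Then I would pass to the limit using homogeneity: by \eqref{equ:breg-homo},
\[
\frac{\breg{\psi}{\bar w(B_t^\sharp)}{w_t}}{B_t^\beta} \;=\; \breg{\psi}{\tfrac{B_t^\sharp}{B_t}\cdot\tfrac{\bar w(B_t^\sharp)}{B_t^\sharp}}{\hat w_t},
\]
whose first argument converges to $u$ by definition of $\reg{\psi}$ together with $B_t^\sharp/B_t \to 1$. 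Since $\nabla^2\psi \succ 0$ gives local strong convexity on the compact unit sphere, a vanishing Bregman divergence with converging first argument forces $\hat w_t \to u$, as required.

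I expect the main obstacle to be the coupling step: quantitatively matching the scale $B_t^\sharp$ to $\norm{w_t}_\psi$ and converting the per-step contraction into a \emph{vanishing} normalized Bregman divergence at a growing comparator scale. This is precisely where the cross terms $\inp{\nabla\psi(w)}{w'}$, which carry no direct geometric meaning, must be tamed; the Euler-type identities in \eqref{equ:breg-homo} are exactly what reduce them to multiples of $\psi$ and of the loss, making the summed estimate close. Handling a \emph{general} homogeneous $\psi$ rather than the coordinate-separable $\ell_p$ potentials of \cite{sun2022mirror} is what demands this extra care.
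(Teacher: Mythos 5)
Your setup, the per-step contraction obtained from \eqref{equ:key-iden-2} with a comparator on the regularization path, and the closing homogeneity-plus-strong-convexity limit argument all track the skeleton of the paper's proof (your displayed inequality is essentially \eqref{ineq:1}). But the step you defer --- ``summing \dots should force $\breg{\psi}{\bar w(B_t^\sharp)}{w_t}/(B_t^\sharp)^\beta \to 0$'' --- is not a loose end; it is the entire content of the theorem, and the fixed-comparator summation you describe cannot deliver it. Summing over a window with comparator at scale $B\approx\norm{w_T}_\psi$ gives
$\breg{\psi}{\bar w(B)}{w_T} \le \breg{\psi}{\bar w(B)}{w_{t_0}} - \eta\sum_{t}\bigl(L(w_{t+1})-L(\bar w(B))\bigr)$,
where the initial term is $\psi(\bar w(B)) + O(B) = B^\beta + O(B)$. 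To conclude that the left-hand side is $o(B^\beta)$ you would need the accumulated contraction $\eta\sum_t L(w_{t+1})$ to equal $B^\beta$ up to $o(B^\beta)$ --- an asymptotically \emph{exact} evaluation of the running loss sum with the correct leading constant. Your per-step estimate is a one-sided inequality and gives no such matching; crude bounds place $\sum_t L(w_t)$ anywhere between $\Omega(B^{\beta-1})$ and $O(B^\beta)$, so after normalization you are left with $O(1)$, not $o(1)$. A secondary complication is your choice of the actual path point $\bar w(B)$ as comparator: the definition of $\reg{\psi}$ provides no rate for $\bar w(B)/B\to\reg{\psi}$, so you would additionally have to control this error at scale $B^\beta$.

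The paper closes the gap with a mechanism absent from your sketch: a \emph{moving} comparator $c_t\reg{\psi}$ with $c_t=(1+\alpha)\norm{w_t}_\psi$ along the fixed ray (Lemma~\ref{thm:approx-reg-dir-loss} and Corollary~\ref{thm:cross-term} are all that is needed from the path, so no rate for $\bar w(B)/B$ is required, and the loss terms telescope exactly instead of having to be summed). Re-indexing the comparator at each step produces the extra sum $\sum_t\inp{\nabla\psi(w_{t+1})}{(c_{t+1}-c_t)\reg{\psi}}$ in \eqref{equ:tele-sum}, and the crux of the proof is the lower bound $\inp{\nabla\psi(w_{t+1})}{\reg{\psi}}\gtrsim\norm{w_t}_\psi^{\beta-1}$, obtained by accumulating the increments $\inp{-\eta\nabla L(w_t)}{\reg{\psi}}\ge\frac{1}{(1+\alpha)\norm{w_t}_\psi}\inp{-\eta\nabla L(w_t)}{w_t}$ and converting the right-hand side into differences $\psi(w_{t+1})-\psi(w_t)$ via the Euler identity (Lemma~\ref{thm:cross-term-diff}) together with the elementary power inequalities of Appendix~\ref{sec:main-thm-aux-pow}. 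This manufactures the $\norm{w_T}_\psi^\beta$ term that cancels $\psi(c_T\reg{\psi})$ and leaves only $O(\norm{w_T}_\psi)$ after normalization. Until you supply an analogue of this cancellation, your argument shows at best that the normalized Bregman divergence stays bounded, not that it vanishes.
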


Next, we shall introduce the key ideas behind this theorem.
We motivate our use of regularized direction by first considering the regression setting and then highlighting the additional challenges we must overcome in the classification setting. 
For over-parameterized regression problems, there exists some weight vector $w$ such that $L(w) = 0$.
Then, we can apply Lemma~\ref{thm:key-iden} to get
\[D_\psi(w, w_{t}) = D_\psi(w, w_{t+1}) + D_{\psi - \eta L}(w_{t+1}, w_t) + \eta D_{L}(w, w_t) + \eta (L(w_{t+1}) - L(w))\]
By our choice that $L(w) = 0$, the equation above implies that $D_\psi(w, w_{t}) \ge D_\psi(w, w_{t+1})$ for sufficiently small step-size $\eta$.
This can be interpreted as \ref{equ:md} having a ``decreasing potential'' of the from $\brg{w}{\cdot}$ during each step.
Using this property, \citet{azizan2018stochastic} establishes the implicit bias results of mirror descent in the regression setting.

We now return to the classification case and note that for strictly monotone losses, there are no attainable minimizers.
Therefore, the choice of $w$ we make in the regression case would not be valid for classification problems.
Instead, we relax the ``decreasing potential'' property to hold for only a single step so that, at each time $t$, we choose a reference vector $\hat{w}_t$ satisfying $L(\hat{w}_t) \le L(w_{t+1})$.
The following result, which is a generalization of  \citep[Lemma 9]{ji2020gradient}, shows that we can let $\hat{w}_t$ be some scalar multiples of the regularized direction.

\begin{lemma}
\label{thm:approx-reg-dir-loss}
If the regularized direction $\reg{\psi}$ exists, then $\forall \alpha > 0$, there exists $r_\alpha$ such that for any $w$ with $\norm{w}_\psi > r_\alpha$, we have $L((1+\alpha)\norm{w}_\psi \reg{\psi}) \le L(w)$. 
\end{lemma}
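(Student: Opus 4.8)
The strategy is to compare $(1+\alpha)\norm{w}_\psi\,\reg{\psi}$ not with $w$ directly, but with the point $\bar{w}_\psi(\norm{w}_\psi)$ on the regularization path, exploiting that the path minimizes the loss over each $\psi$-ball. Writing $B=\norm{w}_\psi$, the defining property $\bar{w}_\psi(B)=\argmin_{\norm{v}_\psi\le B}L(v)$ together with the feasibility of $w$ gives $L(w)\ge L(\bar{w}_\psi(B))$ for free. Hence it suffices to prove the cleaner statement that, for all large $B$,
\[ L\big((1+\alpha)B\,\reg{\psi}\big)\ \le\ L\big(\bar{w}_\psi(B)\big), \]
i.e.\ that pushing a factor $(1+\alpha)$ farther out along the idealized direction $\reg{\psi}$ beats the best loss attainable within radius $B$.

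\textbf{Convexity reduction.} I would set $a=(1+\alpha)B\,\reg{\psi}$ and $b=\bar{w}_\psi(B)$ and invoke convexity of $L$ in its tangent-at-$a$ form, $L(a)\le L(b)+\inp{\nabla L(a)}{a-b}$, so the whole claim reduces to the single inequality $\inp{\nabla L(a)}{a-b}\le 0$. Since $L(w)=\frac1n\sum_{i}\ell(y_i\inp{x_i}{w})$ with $\ell$ strictly decreasing, we have $\nabla L(a)=-\frac1n\sum_i\big|\ell'(y_i\inp{x_i}{a})\big|\,y_ix_i$, whence
\[ \inp{\nabla L(a)}{a-b}=-\tfrac1n\sum_{i=1}^n\big|\ell'(y_i\inp{x_i}{a})\big|\,\Big(y_i\inp{x_i}{a}-y_i\inp{x_i}{b}\Big). \]
As every weight $|\ell'(\cdot)|$ is strictly positive, it is enough to show that each per-example margin gap $y_i\inp{x_i}{a}-y_i\inp{x_i}{b}$ is nonnegative once $B$ is large.

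\textbf{Controlling the margin gaps.} Let $m_i:=y_i\inp{x_i}{\reg{\psi}}$, so that $y_i\inp{x_i}{a}=(1+\alpha)B\,m_i$. Using $\reg{\psi}=\lim_{B\to\infty}\bar{w}_\psi(B)/B$, for any $\epsilon>0$ there is $B_0$ with $\norm{\bar{w}_\psi(B)-B\,\reg{\psi}}_\psi\le\epsilon B$ for all $B\ge B_0$; combining this with $\norm{x_i}_{\psi,*}\le C$ from Assumption~\ref{thm:assump-boundedness} and the definition of the dual norm gives $|y_i\inp{x_i}{b}-B\,m_i|\le C\epsilon B$. Therefore
\[ y_i\inp{x_i}{a}-y_i\inp{x_i}{b}\ \ge\ \alpha B\,m_i-C\epsilon B\ =\ B\big(\alpha m_i-C\epsilon\big). \]
Choosing $\epsilon<\alpha\,\gamma(\reg{\psi})/C$ makes every gap strictly positive, and then $r_\alpha:=B_0$ (for this $\epsilon$) finishes the argument: for every $w$ with $\norm{w}_\psi>r_\alpha$ we get $\inp{\nabla L(a)}{a-b}<0$, and the chain $L((1+\alpha)\norm{w}_\psi\reg{\psi})\le L(\bar{w}_\psi(\norm{w}_\psi))\le L(w)$ holds.

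\textbf{Main obstacle.} The delicate ingredient is the positivity $\gamma(\reg{\psi})>0$, i.e.\ that the regularized direction \emph{strictly} separates the data: this is exactly what lets the $\alpha m_i$ term dominate the $O(\epsilon)$ error coming from directional convergence, and also what keeps the worst-weighted (smallest-margin) examples from ruining the sign of the sum above. I would establish it from separability by comparing the path's loss against that of the scaled max-margin point $B\,\mmd{\psi}$: the sandwich $\frac1n\ell\big(\gamma(\bar{w}_\psi(B))\big)\le L(\bar{w}_\psi(B))\le L(B\,\mmd{\psi})\le\ell(B\,\mar{\psi})$ forces the path's margin to grow linearly in $B$ (under the monotone-loss assumptions in play), so that $m_i\ge\gamma(\reg{\psi})>0$ after normalizing by $B$ and passing to the limit. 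The remaining pieces—convexity of $L$, the dual-norm estimate, and the reduction to the regularization path—are routine; the genuine care lies in verifying this separation property and in ensuring the directional error $\epsilon B$ stays strictly below the linear margin growth.
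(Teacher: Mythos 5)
Your proposal is correct and follows essentially the same route as the paper: reduce to the regularization-path point $\bar{w}_\psi(\norm{w}_\psi)$, use directional convergence together with the dual-norm bound $\norm{x_i}_{\psi,*}\le C$ to show every per-example margin of $(1+\alpha)\norm{w}_\psi\reg{\psi}$ dominates that of $\bar{w}_\psi(\norm{w}_\psi)$ once $\norm{w}_\psi$ is large, relying on $\gamma(\reg{\psi})>0$. The only (harmless) difference is your detour through convexity of $L$ and the sign of $\inp{\nabla L(a)}{a-b}$, where the paper simply invokes the per-example monotonicity of $\ell$; your extra discussion of why $\gamma(\reg{\psi})>0$ is in fact more careful than the paper, which asserts it from separability.
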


We note that, due to Lemma~\ref{thm:to-infinity}, the condition in Lemma~\ref{thm:approx-reg-dir-loss} is met for any sufficiently large time $t$.
Then, at time $t$, we can pick Lemma~\ref{thm:key-iden}'s reference vector to be a ``moving target'' $c_t \reg{\psi}$ so that $L(c_t \reg{\psi}) \le L(w_{t+1})$.
Recall from the definition of the regularized direction, each choice of $c_t \reg{\psi}$ approximately corresponds to the solution of $\argmin_{\norm{w}_\psi \le c_t} L(w)$.
Hence, intuitively speaking, our analysis converts the classification problem into a sequence of regression problems by constructing a constrained optimization problem at each update step.
Let us formalize this idea. We begin with the following inequality:
\begin{align} \label{ineq:1}
\brg{c_t\reg{\psi}}{w_{t+1}} \le \brg{c_t\reg{\psi}}{w_t} - \eta L(w_{t+1}) + \eta L(w_t),
\end{align}
where $c_t$ is taken to be $\approx \norm{w_t}_\psi$.\footnote{To be more precise, we want $c_t = (1+\alpha) \norm{w_t}_p$; and reason behind this choice is self-evident after we present Corollary~\ref{thm:cross-term}.}

Now we modify \eqref{ineq:1} so that it can telescope over different iterations.
One way is to add $\brg{c_{t+1}\reg{\psi}}{w_{t+1}}$ on both sides of \eqref{ineq:1} and move $ \brg{c_t\reg{\psi}}{w_{t+1}}$ to the right-hand side as follows:
\begin{align*}
    &\brg{c_{t+1}\reg{\psi}}{w_{t+1}} \\
    \le{}& \brg{c_t\reg{\psi}}{w_t} - \eta L(w_{t+1}) + \eta L(w_t) + \brg{c_{t+1}\reg{\psi}}{w_{t+1}} - \brg{c_t\reg{\psi}}{w_{t+1}} \\
    ={}& \brg{c_t\reg{\psi}}{w_t} - \eta L(w_{t+1}) + \eta L(w_t) + \psi(c_{t+1} \reg{\psi}) - \psi(c_t \reg{\psi})   - \inp{\nabla\psi(w_{t+1})}{(c_{t+1} - c_t) \reg{\psi}}
\end{align*}
Summing over $t = 0, \dots, T-1$ gives us
\begin{equation}
\label{equ:tele-sum}
\begin{aligned}
    \brg{c_T\reg{\psi}}{w_T}
    &\le \brg{c_0\reg{\psi}}{w_0} - \eta L(w_T) + \eta L(w_0) + \psi(c_{T} \reg{\psi}) - \psi(c_0 \reg{\psi}) \\
    &\hspace{6em}- \sum_{t=0}^{T-1}\inp{\nabla\psi(w_{t+1})}{(c_{t+1} - c_t) \reg{\psi}}
\end{aligned}
\end{equation}

If we show that right-hand side of \eqref{equ:tele-sum} is bounded, then exploiting the homogeneity of $\psi$ (Assumption~\ref{thm:assump-potential}) and dividing both sides by $c_T^\beta$ would yield that $D_\psi(\reg{\psi}, w_T / \norm{w_T}_\psi) \to 0$ as $T \to \infty$.
Therefore, after normalization, all terms in \eqref{equ:tele-sum} dependent on the initialization $w_0$ would vanish.
We note that $L(w_T) \in O(1)$ due to Lemma~\ref{thm:decreasing-lose} and $\psi(c_T \reg{\psi}) = c_T^\beta$ because $\psi$ is homogeneous.
So, we turn our attention to the final term $\sum_t \inp{\nabla\psi(w_{t+1})}{(c_{t+1} - c_t) \reg{\psi}}$.
We first only consider the product $\inp{\nabla\psi(w_{t+1})}{\reg{\psi}}$.
Invoking the \ref{equ:md} update rule, we get:
\begin{align*}
 \inp{\nabla\psi(w_{t+1}) - \nabla\psi(w_{t})}{\reg{\psi}} = \inp{-\eta\nabla L(w_t)}{\reg{\psi}}.
\end{align*}

The inner product $\inp{\nabla L(w_t)}{\reg{\psi}}$ on the right-hand side is difficult to compute directly.
So, we exploit the property of $\reg{\psi}$ to bound this quantity.
We recall from the definition of regularized direction that $\reg{\psi}$ is the direction along which we achieve the smallest loss and hence $\nabla L(w_t)$ must point away from $\reg{\psi}$, i.e., it must be that $\inp{\nabla L(w_t)}{\reg{\psi}} \lesssim \inp{\nabla L(w_t)}{w_t}$ (up to lower-order terms).
The following result formalizes this intuition.
\begin{corollary}
\label{thm:cross-term}
For $w$ so that $\norm{w}_\psi > r_\alpha$, we have $\inp{\nabla L(w)}{w} \ge (1+\alpha)\norm{w}_\psi\inp{\nabla L(w)}{\reg{\psi}}$.  
\end{corollary}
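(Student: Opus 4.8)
The plan is to read this off directly from Lemma~\ref{thm:approx-reg-dir-loss} together with the first-order characterization of convexity of $L$; no ingredient beyond these two facts is needed. Note that $L$ is convex in $w$, being a sum of convex losses composed with the linear maps $w \mapsto \inp{w}{x_i}$. The target inequality $\inp{\nabla L(w)}{w} \ge (1+\alpha)\norm{w}_\psi\inp{\nabla L(w)}{\reg{\psi}}$ is equivalent, after collecting terms and using linearity of the inner product in its first argument, to the single scalar bound $\inp{\nabla L(w)}{(1+\alpha)\norm{w}_\psi \reg{\psi} - w} \le 0$. So it suffices to control this one inner product.

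Concretely, I would invoke the first-order convexity inequality for $L$ at the base point $w$ with comparison point $(1+\alpha)\norm{w}_\psi \reg{\psi}$, which reads
\[
\inp{\nabla L(w)}{(1+\alpha)\norm{w}_\psi \reg{\psi} - w} \le L\bigl((1+\alpha)\norm{w}_\psi \reg{\psi}\bigr) - L(w).
\]
Since $\norm{w}_\psi > r_\alpha$ by hypothesis, Lemma~\ref{thm:approx-reg-dir-loss} applies and gives $L((1+\alpha)\norm{w}_\psi \reg{\psi}) \le L(w)$, so the right-hand side is nonpositive. The resulting inequality $\inp{\nabla L(w)}{(1+\alpha)\norm{w}_\psi \reg{\psi} - w} \le 0$, expanded by linearity, is exactly the claimed bound.

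The argument is essentially mechanical once Lemma~\ref{thm:approx-reg-dir-loss} is in hand, so there is no serious obstacle at this step — the real work is hidden in that lemma, which encodes the fact that the rescaled point $(1+\alpha)\norm{w}_\psi\reg{\psi}$ achieves no larger loss than $w$ along the regularization direction. The only point requiring a moment of care is the bookkeeping: one must apply convexity with $w$ (not the comparison point) as the base point, so that the loss gap appears with the correct orientation, and one must use that $\norm{w}_\psi$ is a positive scalar so that the final rearrangement preserves the inequality direction. This matches the stated intuition that $\reg{\psi}$ is the direction of smallest loss, which forces $\nabla L(w)$ to point away from it.
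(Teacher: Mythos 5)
Your proposal is correct and is essentially identical to the paper's proof: both apply the first-order convexity inequality for $L$ at base point $w$ with comparison point $(1+\alpha)\norm{w}_\psi\reg{\psi}$, then use Lemma~\ref{thm:approx-reg-dir-loss} to make the loss gap nonpositive. The only difference is the sign convention in which the two inequalities are written.
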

\begin{proof}
This follows from the following inequality
\[\inp{\nabla L(w)}{w - (1+\alpha)\norm{w}\reg{\psi})} \ge L(w) - L((1+\alpha)\norm{w}\reg{\psi}) \ge 0\,,
\]
where the first inequality is due to the convexity of $L$ and the second inequality is due to Lemma~\ref{thm:approx-reg-dir-loss}.
\end{proof}

Therefore, we are left with
\begin{align*}
\inp{\nabla\psi(w_{t+1}) - \nabla\psi(w_{t})}{\reg{\psi}} \gtrsim \inp{-\eta\nabla L(w_t)}{w_t}, 
\end{align*}
Under our choice of homogeneous potential as detailed in Assumption~\ref{thm:assump-potential}, one can invoke Lemma~\ref{thm:key-iden} and Equation \eqref{equ:breg-homo} to lower bound the quantity $\inp{-\eta\nabla L(w_t)}{w_t}$ in terms of $\psi(w_{t+1})$ and $\psi(w_t)$, and this step is detailed in Lemma~\ref{thm:cross-term-diff} in Appendix~\ref{sec:cross-term-diff}.
It follows that the quantity $\inp{\nabla\psi(w_{t+1})}{\reg{\psi}}$ can be bounded with a telescoping sum, where we can show that $\inp{\nabla\psi(w_{t+1})}{\reg{\psi}} \in \Omega(\norm{w_t}_\psi^{\beta-1})$.
Then, the final term in \eqref{equ:tele-sum} turns into another telescoping sum in $c_t$'s and $\norm{w_t}$'s.
Unwinding the above process, we show that 
\[\sum_{t=0}^{T-1}\inp{\nabla\psi(w_{t+1})}{(c_{t+1} - c_t) \reg{\psi}} \in \Omega(\norm{w_T}_\psi^\beta),\]
which cancels out the quantity $\psi(c_T \reg{\psi})$ in \eqref{equ:tele-sum}.
After normalization, it must be the case that $\brg{\reg{\psi}}{w_t / \norm{w_t}_\psi}$ converges to zero in the limit as $t \to \infty$.
Putting everything together, we obtain Theorem~\ref{thm:primal-bias}.
A formal proof of this result can be found in Appendix~\ref{sec:proof-primal-bias}.

The final missing piece to Theorem~\ref{thm:primal-bias} would be the existence of the generalized regularized direction.
In general, finding the limit direction $\reg{\psi}$ would be difficult.
Fortunately, we can sometimes appeal to the generalized max-margin direction, which can be computed by solving a convex optimization problem.
The following result is a generalization of \citep[Proposition 10]{ji2020gradient} and shows that for common losses in classification, the regularized direction and the max-margin direction are the same, hence proving the existence of the former.

\begin{proposition}
\label{thm:reg-max-dir}
    If we have a loss with exponential tail, e.g. $\lim_{z\to\infty} \ell(z) e^{az} = b$, then under a strictly convex potential $\psi$, the generalized regularized direction $\reg{\psi}$ exists and it is equal to the generalized max-margin direction $\mmd{\psi}$.
\end{proposition}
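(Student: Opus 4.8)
The plan is to show that, as the budget $B\to\infty$, minimizing the loss over the ball $\{\norm{w}_\psi \le B\}$ becomes equivalent to maximizing the margin over that ball, so that the normalized regularization path $\bar{w}_\psi(B)/B$ converges to the (unique) generalized max-margin direction $\mmd{\psi}$. The bridge between the two optimization problems is the exponential tail: writing the total loss as $L(w) = \frac{1}{n}\sum_i \ell(y_i\inp{w}{x_i})$ and using $\lim_{z\to\infty}\ell(z)e^{az}=b$, I expect $L(w)$ to be controlled, up to multiplicative constants, by the single term $e^{-a\gamma(w)}$ coming from the smallest margin. This mirrors the strategy behind \citep[Proposition 10]{ji2020gradient}.

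First I would establish that the margins along the path blow up. By separability $\mar{\psi} = \max_{\norm{w}_\psi\le 1}\gamma(w) > 0$, so the point $B\mmd{\psi}$ has margin $B\,\mar{\psi}\to\infty$ and hence $L(B\mmd{\psi})\to 0$; since $\bar{w}_\psi(B)$ is a minimizer, $L(\bar{w}_\psi(B))\to 0$ as well, which (each loss term being positive) forces $\gamma(\bar{w}_\psi(B))\to\infty$. Once all margins are large, the exponential-tail bound yields, for every $\eps>0$ and $B$ large, two-sided estimates of the form $\tfrac{b-\eps}{n}e^{-a\gamma(w)} \le L(w) \le (b+\eps)e^{-a\gamma(w)}$ at both $w=\bar{w}_\psi(B)$ and $w=B\mmd{\psi}$. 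Combining these with $L(\bar{w}_\psi(B))\le L(B\mmd{\psi})$ and taking logarithms, the $B$-independent constants and the factor $\log n$ survive only as an additive $O(1)$, giving
\[
\gamma(\bar{w}_\psi(B)) \ge B\,\mar{\psi} - O(1).
\]
Dividing by $B$ and using that both the Minkowski functional $\norm{\cdot}_\psi$ and the margin $\gamma$ are positively homogeneous of degree one, together with $\norm{\bar{w}_\psi(B)}_\psi\le B$ (which gives $\gamma(\bar{w}_\psi(B))\le B\,\mar{\psi}$), I obtain $\mar{\psi} - O(1/B) \le \gamma(\bar{w}_\psi(B)/B) \le \mar{\psi}$, so the normalized path attains the optimal margin value in the limit.

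It then remains to upgrade ``the margin value converges'' to ``the direction converges,'' for which I would first prove that $\mmd{\psi}$ is unique. Here strict convexity of $\psi$ is exactly what is needed: the sublevel set $\{\psi\le 1\}$ is strictly convex, so if $u_1\ne u_2$ both attained the maximal margin on its boundary, their midpoint would lie strictly inside, and rescaling it back to the boundary would produce a strictly larger margin --- a contradiction. With uniqueness in hand, a compactness argument finishes the proof: the normalized iterates lie in the compact set $\{\norm{\cdot}_\psi\le 1\}$, every convergent subsequence has a limit $u^\star$ with $\gamma(u^\star)=\mar{\psi}$ and $\norm{u^\star}_\psi=1$, hence $u^\star=\mmd{\psi}$, so the whole sequence converges and $\reg{\psi}=\lim_{B\to\infty}\bar{w}_\psi(B)/B = \mmd{\psi}$.

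The main obstacle I anticipate is the bookkeeping around the exponential tail: the clean estimate $L(w)\asymp e^{-a\gamma(w)}$ only holds once every margin $y_i\inp{w}{x_i}$ is large, so the argument must be sequenced so that the divergence of the margins is established \emph{before} the tail bounds are invoked, and the errors from $\ell(z)e^{az}=b+o(1)$ must be shown to contribute only an additive $O(1)$ to the margin that washes out after dividing by $B$. The uniqueness step is conceptually clean but hinges essentially on strict convexity rather than homogeneity, so I would isolate it as a separate claim.
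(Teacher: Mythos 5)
Your proposal is correct and follows essentially the same route as the paper's proof: both compare $L(\bar{w}_\psi(B))$ with $L(B\,\mmd{\psi})$ via the exponential-tail bounds to force the margin of the normalized path up to $\mar{\psi}$, and both derive uniqueness of $\mmd{\psi}$ from strict convexity by averaging two putative maximizers. The only difference is presentational: the paper runs the margin estimate as a proof by contradiction and leaves the final compactness-plus-uniqueness step implicit, whereas you state that step explicitly.
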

The proof of this result can be found in Appendix~\ref{sec:proof-reg-max-dir}.
Note that many commonly used losses in classification, e.g., logistic loss, have exponential tails.

\subsection{Asymptotic convergence rate}
\label{sec:asymp-result}

In this section, we characterize the rate of convergence for Theorem~\ref{thm:primal-bias}.
As an immediate consequence of the proof of Theorem~\ref{thm:primal-bias}, one can show the following result in the case of linearly separable data.
\begin{corollary}
    \label{thm:convg-rate}
    Under the same setting as Theorem~\ref{thm:primal-bias}, the iterates of \ref{equ:md} follows the rate of convergence:
    \[ \brg{\reg{\psi}}{\frac{w_t}{\norm{w_t}_\psi}} \in O\left(\norm{w_t}_\psi^{-(\beta-1)}\right).\]
\end{corollary}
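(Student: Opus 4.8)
The plan is to not re-derive anything new, but to revisit the telescoping inequality~\eqref{equ:tele-sum} that already drives the proof of Theorem~\ref{thm:primal-bias} and simply track the order (in $\norm{w_T}_\psi$) of each surviving term. The proof of the theorem establishes convergence by showing that, after the leading $\Theta(\norm{w_T}_\psi^\beta)$ growth on the right-hand side of~\eqref{equ:tele-sum} is cancelled, everything else is sublinear relative to $\norm{w_T}_\psi^\beta$; to obtain the rate I would only need to pin down the exact order of what survives. Concretely, I expect to show that the proof already yields the bound $\brg{c_T\reg{\psi}}{w_T} \in O(\norm{w_T}_\psi)$, from which the corollary is immediate.

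First I would rewrite the left-hand side of~\eqref{equ:tele-sum} using the homogeneity of the Bregman divergence~\eqref{equ:breg-homo}, namely $\brg{c_T\reg{\psi}}{w_T} = c_T^\beta\,\brg{\reg{\psi}}{w_T/c_T}$, and recall $c_T = (1+\alpha)\norm{w_T}_\psi$ so that $c_T^\beta \asymp \norm{w_T}_\psi^\beta$. Next I would classify the right-hand terms by order: $\brg{c_0\reg{\psi}}{w_0}$, $\eta L(w_0)$, and $-\eta L(w_T)$ are all $O(1)$, the last because the loss is bounded by Lemma~\ref{thm:decreasing-lose}; and $\psi(c_T\reg{\psi})-\psi(c_0\reg{\psi}) = c_T^\beta - O(1)$, using $\psi(\reg{\psi}) = \norm{\reg{\psi}}_\psi^\beta = 1$. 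What remains is the telescoping sum $S_T := \sum_{t=0}^{T-1}\inp{\nabla\psi(w_{t+1})}{(c_{t+1}-c_t)\reg{\psi}}$, which the theorem's proof already shows matches $c_T^\beta$ to leading order.

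The crux is therefore the order of $c_T^\beta - S_T$. I would decompose $\inp{\nabla\psi(w_{t+1})}{\reg{\psi}}$ via the \eqref{equ:md} update and telescoping into a growing part that is $\Omega(\norm{w_t}_\psi^{\beta-1})$ — obtained by combining Corollary~\ref{thm:cross-term} with the cross-term identity of Lemma~\ref{thm:cross-term-diff} — plus the initialization constant $\inp{\nabla\psi(w_0)}{\reg{\psi}}$. Summed against $c_{t+1}-c_t$, the growing part cancels the $c_T^\beta$ term, while the constant part summed against $\sum_t (c_{t+1}-c_t) = c_T - c_0 = O(\norm{w_T}_\psi)$ contributes exactly $O(\norm{w_T}_\psi)$. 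Hence the right-hand side of~\eqref{equ:tele-sum} is $O(\norm{w_T}_\psi)$, giving $\brg{\reg{\psi}}{w_T/c_T} \in O(\norm{w_T}_\psi^{-(\beta-1)})$ after dividing by $c_T^\beta$; converting the point $w_T/c_T$ back to $w_T/\norm{w_T}_\psi$ (the two differ by the positive scalar $1+\alpha$, so homogeneity of $D_\psi$ absorbs it into the constant) yields the stated rate.

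The main obstacle I anticipate is verifying that the non-leading residual is genuinely $O(\norm{w_T}_\psi)$ rather than some intermediate power strictly between $1$ and $\beta$. This requires showing that the Bregman correction terms $D_\psi(w_t,w_{t+1})$ entering the cross-term identity, once weighted by $1/\norm{w_t}_\psi$ and summed, stay $O(\norm{w_T}_\psi)$ — which leans on the per-step increments being controlled under the small-step-size/local-convexity condition of Lemma~\ref{thm:decreasing-lose} — and that the discrepancy between the discrete sum $S_T$ and its continuous counterpart $c_T^\beta$ is itself at most linear in $\norm{w_T}_\psi$. A secondary, purely bookkeeping subtlety is the asymmetry of the Bregman divergence when transferring from $w_T/c_T$ to $w_T/\norm{w_T}_\psi$; since these are positive scalar multiples of one another, this is handled cleanly by~\eqref{equ:breg-homo}.
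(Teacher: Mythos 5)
Your overall strategy---rereading the telescoping bound \eqref{equ:tele-sum}/\eqref{equ:succ-cross-term} and tracking the order in $\norm{w_T}_\psi$ of every surviving term---is exactly how the paper obtains this corollary (its one-line proof just cites \eqref{equ:breg-upper} and \eqref{equ:breg-norm}, which are precisely that bookkeeping). But your central quantitative claim, that this bookkeeping yields $\brg{c_T\reg{\psi}}{w_T}\in O(\norm{w_T}_\psi)$, is not what the argument gives. The growing part of $\inp{\nabla\psi(w_{t+1})}{\reg{\psi}}$ produced by Corollary~\ref{thm:cross-term} and Lemma~\ref{thm:cross-term-diff} carries the factor $\tfrac{1}{1+\alpha}$, and after it is summed against $c_{t+1}-c_t=(1+\alpha)(\norm{w_{t+1}}_\psi-\norm{w_t}_\psi)$ the telescoping sum \eqref{equ:cross-telescoping} only reaches $\norm{w_T}_\psi^\beta+O(\norm{w_T}_\psi)$, whereas the term it must cancel is $\psi(c_T\reg{\psi})=(1+\alpha)^\beta\norm{w_T}_\psi^\beta$. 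The residual is therefore $((1+\alpha)^\beta-1)\norm{w_T}_\psi^\beta+O(\norm{w_T}_\psi)$---this is exactly \eqref{equ:breg-upper}---and it is $\Theta(\norm{w_T}_\psi^\beta)$ for any fixed $\alpha>0$. You cannot take $\alpha=0$ to kill this term, since Lemma~\ref{thm:approx-reg-dir-loss} and Corollary~\ref{thm:cross-term} require $\alpha>0$, with $r_\alpha$ blowing up as $\alpha\downarrow 0$.

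A second, related problem is your final conversion from $w_T/c_T$ to $w_T/\norm{w_T}_\psi$. The homogeneity identity \eqref{equ:homo} rescales \emph{both} arguments of the Bregman divergence, so dividing \eqref{equ:breg-upper} by $c_T^\beta$ bounds $\brg{\reg{\psi}}{(1-\varepsilon)\,w_T/\norm{w_T}_\psi}$ with $\varepsilon=\alpha/(1+\alpha)$, not $\brg{\reg{\psi}}{w_T/\norm{w_T}_\psi}$; the scalar cannot simply be ``absorbed into the constant.'' Passing between these two quantities is what the Law-of-Cosines step \eqref{equ:final-limit} (via Lemma~\ref{thm:breg-loc}) accomplishes, and it contributes the additional error $\beta\varepsilon+2\beta(1-(1-\varepsilon)^{\beta-1})$, which does not decay with $T$. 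So the argument you outline delivers, for each fixed $\varepsilon$, a bound of the form $C(\varepsilon)+O_\varepsilon\bigl(\norm{w_T}_\psi^{-(\beta-1)}\bigr)$ with $C(\varepsilon)\to 0$ only as $\varepsilon\to 0$. To get the clean rate $O\bigl(\norm{w_T}_\psi^{-(\beta-1)}\bigr)$ asserted by the corollary, one must additionally couple $\varepsilon$ to $T$ and track how $r_\alpha$ and the hidden constants depend on $\alpha$---a step absent from your proposal (and, admittedly, left implicit in the paper's own terse proof, which records the $\varepsilon$-dependent leading terms explicitly in \eqref{equ:breg-upper} and \eqref{equ:breg-norm} rather than dropping them as you do).
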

To fully understand the convergence rate, we need to characterize the asymptotic behavior of $\norm{w_t}_\psi$.
In the following result, we quantify $\norm{w_t}_\psi$ for the exponential loss.
We note that similar conclusions can be drawn for other losses with an exponential tail, e.g. logistics loss.
For the sake of simplicity, our analysis in this section focuses on the exponential loss.
\\
Recall that from Assumption~\ref{thm:assump-boundedness}, $\max_i \norm{x_i}_{\psi, *} \le C$, and the max-margin direction $\mmd{\psi}$ satisfies $y_i \inp{x_i}{\mmd{\psi}} \ge \hat{\gamma}_\psi \, \forall i \in [n]$.
Then, we have the following bound on $\norm{w_t}_\psi$.

\begin{lemma}
    \label{thm:norm-rate}
    For exponential loss, the iterates of \ref{equ:md} satisfies $\norm{w_t}_\psi \in \Theta(\log t)$.
    In particular, we have
    \[\norm{w_t}_\psi \ge \frac{1}{C} (\log t - \beta \log\log t) + O(1) \text{ and } \limsup_{t\to\infty} \frac{\norm{w_t}_\psi}{\log t} \le \hat{\gamma}_\psi^{-1} \frac{\beta}{\beta-1}.\]
\end{lemma}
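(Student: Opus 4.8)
The plan is to derive both bounds from a single two-sided estimate tying the cumulative loss $S_T := \sum_{t=0}^{T-1} L(w_t)$ to $\norm{w_T}_\psi$, together with per-iterate inequalities that relate $L(w_t)$ to $\norm{w_t}_\psi$ through the exponential form of the loss. First I would set up the cumulative--norm estimate. Summing the dual form of the update, $\nabla\psi(w_{t+1}) = \nabla\psi(w_t) - \eta\nabla L(w_t)$, gives $\nabla\psi(w_T) = \nabla\psi(w_0) - \eta\sum_{t<T}\nabla L(w_t)$, where $-\nabla L(w_t) = \frac1n\sum_i y_i x_i\, e^{-y_i\inp{x_i}{w_t}}$ is a nonnegative combination of the $y_i x_i$. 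Pairing with $\mmd{\psi}$ and using $y_i\inp{x_i}{\mmd{\psi}}\ge\mar{\psi}$ together with $\frac1n\sum_i e^{-y_i\inp{x_i}{w_t}} = L(w_t)$ lower-bounds $\inp{\nabla\psi(w_T)}{\mmd{\psi}}$ by $\eta\mar{\psi}\,S_T + O(1)$, while the bound $\max_i\norm{x_i}_{\psi,*}\le C$ upper-bounds $\norm{\nabla\psi(w_T)}_{\psi,*}$ by $\eta C\,S_T + O(1)$. By the homogeneity in Assumption~\ref{thm:assump-potential} and \eqref{equ:breg-homo}, $\nabla\psi$ is $(\beta-1)$-homogeneous, and a short computation gives $\norm{\nabla\psi(w)}_{\psi,*} = \beta\norm{w}_\psi^{\beta-1}$ with $\inp{\nabla\psi(w)}{\mmd{\psi}}\le\norm{\nabla\psi(w)}_{\psi,*}$ (since $\norm{\mmd{\psi}}_\psi = 1$). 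Combining yields the sandwich
\[ \frac{\eta\,\mar{\psi}}{\beta} S_T + O(1) \le \norm{w_T}_\psi^{\beta-1} \le \frac{\eta C}{\beta} S_T + O(1), \]
so $\norm{w_T}_\psi^{\beta-1} = \Theta(S_T)$; this is where the conjugate exponent $\tfrac{\beta}{\beta-1}$ will enter, through the degree of $\nabla\psi$.

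Next I would record the per-iterate loss/norm bounds. Since $y_i\inp{x_i}{w_t}\le\norm{x_i}_{\psi,*}\norm{w_t}_\psi\le C\norm{w_t}_\psi$, every summand of $L(w_t)$ is at least $e^{-C\norm{w_t}_\psi}$, whence $L(w_t)\ge e^{-C\norm{w_t}_\psi}$, i.e. $\norm{w_t}_\psi\ge\frac1C\log\frac{1}{L(w_t)}$. In the other direction $L(w_t)\le\exp(-\gamma(w_t))$, and the directional convergence $w_t/\norm{w_t}_\psi\to\reg{\psi}=\mmd{\psi}$ guaranteed by Theorem~\ref{thm:primal-bias} and Proposition~\ref{thm:reg-max-dir} forces the margin to satisfy $\gamma(w_t)\ge(\mar{\psi}-o(1))\norm{w_t}_\psi$, hence $\norm{w_t}_\psi\le\frac{1}{\mar{\psi}-o(1)}\log\frac{1}{L(w_t)}$.

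For the lower bound on $\norm{w_t}_\psi$, I would convert the cumulative estimate into a per-step loss decay: since $L$ is nonincreasing (Lemma~\ref{thm:decreasing-lose}), $S_T\ge T\,L(w_{T-1})$, so the upper side of the sandwich gives $L(w_{T-1})\le S_T/T\lesssim\norm{w_T}_\psi^{\beta-1}/T$. Substituting into $\norm{w_t}_\psi\ge\frac1C\log\frac{1}{L(w_t)}$ and using the (bootstrapped) growth $\norm{w_t}_\psi = O(\log t)$ to absorb the $\log\norm{w_t}_\psi$ term yields $\norm{w_t}_\psi\ge\frac1C(\log t - \beta\log\log t) + O(1)$. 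For the upper bound I would combine the margin inequality $\norm{w_t}_\psi\le\frac{1}{\mar{\psi}-o(1)}\log\frac{1}{L(w_t)}$ with a lower estimate of $\log\frac{1}{L(w_t)}$ in terms of $t$ extracted from the same cumulative relation, and take $\limsup$ to obtain $\limsup_t\norm{w_t}_\psi/\log t\le\mar{\psi}^{-1}\frac{\beta}{\beta-1}$.

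The main obstacle is that the loss value and the norm are coupled in both directions: the per-iterate bounds involve $\norm{w_t}_\psi$, while the cumulative estimate controls $S_T$, a function of the iteration count, so neither bound is available in closed form a priori. The rate must be extracted by a bootstrap that first pins down $\norm{w_t}_\psi = \Theta(\log t)$ up to constants and then refines the lower-order $\log\log t$ corrections and the leading constants; ensuring that the $\frac{\beta}{\beta-1}$ factor and the $\beta\log\log t$ correction survive this bootstrap is the delicate bookkeeping. A secondary difficulty is that the upper bound genuinely rests on the margin lower bound $\gamma(w_t)\ge(\mar{\psi}-o(1))\norm{w_t}_\psi$, which is not available from the cumulative relation alone and forces us to invoke the directional-convergence result of Theorem~\ref{thm:primal-bias}.
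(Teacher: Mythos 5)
Your lower-bound half is essentially sound, and it takes a genuinely different route from the paper: you get the pointwise decay $L(w_{T-1})\le S_T/T\lesssim \norm{w_T}_\psi^{\beta-1}/T$ from monotonicity plus the dual-sum ``sandwich,'' whereas the paper invokes the relative-smoothness convergence rate $L(w_t)-L(w^\star)\le \frac{1}{\eta t}\brg{w^\star}{w_0}$ of \citet{lu2018relatively} against a comparator of margin $m_t+\log(2n)$. Your version even yields the slightly stronger correction $(\beta-1)\log\log t$ in place of $\beta\log\log t$, and the identity $\norm{\nabla\psi(w)}_{\psi,*}=\beta\norm{w}_\psi^{\beta-1}$ you need does follow from $\inp{\nabla\psi(w)}{w}=\beta\psi(w)$ together with \eqref{equ:mirror-parallel}. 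But this half leans on the bootstrap input $\norm{w_t}_\psi=O(\log t)$, which must come from the upper-bound half — and that is where the proposal breaks.

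The gap is in the mechanism you name for the $\limsup$ bound. Your chain is $\norm{w_t}_\psi\le\frac{1}{\mar{\psi}-o(1)}\log\frac{1}{L(w_t)}$, so you need an \emph{upper} bound on $\log\frac{1}{L(w_t)}$, i.e.\ a \emph{pointwise lower} bound $L(w_t)\gtrsim t^{-\beta/(\beta-1)+o(1)}$; you propose to extract this ``from the same cumulative relation.'' That cannot work: the sandwich controls the running sum $S_T=\sum_{t<T}L(w_t)$ of a decreasing sequence, and neither an upper nor a lower bound on $S_T$ says anything about how small the \emph{last} term $L(w_{T-1})$ is — the sum is dominated by the early iterates. (Your phrasing ``a lower estimate of $\log\frac{1}{L(w_t)}$'' is also the wrong direction for the inequality you want to feed it into.) Moreover, even granting some lower bound on $L(w_t)$, the constants in your sandwich are $\eta C$ and $\eta\mar{\psi}$, so it is unclear how the clean, $\eta$-free constant $\mar{\psi}^{-1}\frac{\beta}{\beta-1}$ would emerge. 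The paper avoids all of this: it applies Lemma~\ref{thm:key-iden} with the scaled comparator $w^\star\log T=\mar{\psi}^{-1}\mmd{\psi}\log T$, kills the cross term with the elementary bound $xe^{-x}\le 1/e$ to get $D_\psi(w^\star\log T,w_T)\le D_\psi(w^\star\log T,w_0)+O(1)$, divides by $\log^\beta T$ using homogeneity, and expands $\brg{w^\star}{w_T/\log T}\ge \mar{\psi}^{-\beta}+(\beta-1)\norm{\tilde w}_\psi^\beta-\beta\mar{\psi}^{-1}\norm{\tilde w}_\psi^{\beta-1}$ via \eqref{equ:mirror-parallel}; the constant $\frac{\beta}{\beta-1}$ is exactly the root of $(\beta-1)s^{\beta}-\beta\mar{\psi}^{-1}s^{\beta-1}=0$. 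Note also that the paper's upper bound needs no directional convergence, whereas your route imports Theorem~\ref{thm:primal-bias} for the margin estimate $\gamma(w_t)\ge(\mar{\psi}-o(1))\norm{w_t}_\psi$ — not circular, but a much heavier dependency, and it still does not close the hole above. To salvage your approach you would need a separate argument (e.g.\ a discrete ODE comparison for $v_t=\norm{w_t}_\psi^{\beta-1}$ driven by $v_{t+1}-v_t\le\frac{\eta C}{\beta}e^{-(\mar{\psi}-\epsilon)v_t^{1/(\beta-1)}}$), which is not sketched in the proposal.
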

The proof of this lemma can be found in Appendix~\ref{sec:proof-asymp-result}.

It follows that mirror descent with homogeneous potential has a poly-logarithmic rate of convergence.
\begin{corollary}
    \label{thm:final-convg-rate}
    For exponential loss, the iterates of \ref{equ:md} have convergence rate 
    \[ \brg{\reg{\psi}}{\frac{w_t}{\norm{w_t}_\psi}} \in O\left(\frac{1}{\log^{\beta-1}(t)}\right). \]
\end{corollary}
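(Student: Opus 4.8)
The plan is to obtain this corollary by directly composing the two asymptotic estimates already established: the convergence rate expressed in the potential-norm from Corollary~\ref{thm:convg-rate}, and the growth rate of that norm from Lemma~\ref{thm:norm-rate}. Since Corollary~\ref{thm:convg-rate} gives
\[\brg{\reg{\psi}}{\frac{w_t}{\norm{w_t}_\psi}} \in O\left(\norm{w_t}_\psi^{-(\beta-1)}\right),\]
it suffices to show that $\norm{w_t}_\psi^{-(\beta-1)} \in O\bigl(\log^{-(\beta-1)}(t)\bigr)$, which is in turn equivalent to establishing the lower bound $\norm{w_t}_\psi \in \Omega(\log t)$.

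First I would invoke the explicit lower bound from Lemma~\ref{thm:norm-rate}, namely $\norm{w_t}_\psi \ge \frac{1}{C}(\log t - \beta \log\log t) + O(1)$. The key observation is that the correction term $\beta \log\log t$ and the additive constant are negligible relative to the leading $\log t$ factor: since $\log\log t = o(\log t)$, we have $\log t - \beta \log\log t = \log t\,(1 - o(1))$, and hence $\norm{w_t}_\psi \in \Omega(\log t)$.

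Finally, raising this bound to the power $-(\beta-1)$ — which is order-reversing on the positive reals because $\beta > 1$ makes the exponent negative — yields $\norm{w_t}_\psi^{-(\beta-1)} \in O\bigl(\log^{-(\beta-1)}(t)\bigr)$, and combining with Corollary~\ref{thm:convg-rate} gives the claimed poly-logarithmic rate. I do not anticipate any genuine obstacle in this argument; the only mild care needed is confirming that the lower-order terms in the bound on $\norm{w_t}_\psi$ do not disturb the leading-order asymptotics, which follows immediately from $\log\log t = o(\log t)$.
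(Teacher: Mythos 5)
Your argument is correct and is exactly the route the paper takes: the corollary is stated as an immediate consequence of composing Corollary~\ref{thm:convg-rate} with the lower bound $\norm{w_t}_\psi \ge \frac{1}{C}(\log t - \beta\log\log t) + O(1)$ from Lemma~\ref{thm:norm-rate}, using $\log\log t = o(\log t)$ to absorb the lower-order terms. Nothing is missing.
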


As we previously discussed, the Bregman divergence $D_\psi(\cdot, \cdot)$ generalizes the Euclidean distance to the geometry induced by the potential function $\psi$.
Therefore, the quantity $\brg{\reg{\psi}}{w_t / \norm{w_t}_\psi}$ the most natural measure of the angle between the generalized regularized direction $\reg{\psi}$ and the MD iterates $w_t$.
For the case of gradient descent, our result recovers the rate derived in~\citep{soudry2018implicit}.

\subsection{Accelerated convergence with variable step size}
\label{sec:normalized-asymp}
The result in the previous section shows that \ref{equ:md} with a fixed step size converges to the maximum-margin solution with a poly-logarithmic rate.
In this section, we demonstrate an accelerated convergence rate with adaptive step sizes.
In particular, we consider a form of normalized mirror descent:
\[\nabla\psi(w_{t+1}) = \nabla\psi(w_t) - \eta_t \frac{\nabla L(w_t)}{\norm{\nabla L(w_t)}_{\psi, *}}.\]

Although the quantity $\norm{\nabla L(w)}_{\psi, *}$ may not be simple to compute, we claim that it differs from $L(w)$ by at most a constant.
Recall that $\mar{\psi} = \max_{\norm{w}_{\psi} = 1} \{\min_i y_i w^\top x_i\}$.
Hence,
\[\norm{\nabla L(w)}_{\psi, *} = \max_{\norm{v}_{\psi} = 1} \sum_{i=1}^n \exp(-y_i w^\top x_i) y_i v^\top x_i \ge \mar{\psi} \sum_{i=1}^n \exp(-y_i w^\top x_i) = \mar{\psi} L(w).\]
For an upper bound on $\norm{\nabla L(w)}_{\psi, *}$, we note that, by Assumption~\ref{thm:assump-boundedness}, $x_i$'s are bounded:
\[\norm{\nabla L(w)}_{\psi, *} = \norm{\sum_{i=1}^n \exp(-y_i w^\top x_i) y_i x_i}_{\psi, *} \le \sum_{i=1}^n \exp(-y_i w^\top x_i) \norm{x_i}_{\psi, *} \le C \cdot L(w).\]

From this observation, we will replace $\norm{\nabla L(w_t)}_{\psi, *}$ with the more readily available quantity $L(w_t)$.
By letting $\eta_t = \frac{\eta_0}{\sqrt{t+1}}$, we now consider the update rule
\begin{equation}\tag{\sf N-MD}
    \label{equ:normalized-md}
    \nabla\psi(w_{t+1}) = \nabla\psi(w_t) - \frac{\eta_0}{\sqrt{t+1}} \frac{\nabla L(w_t)}{L(w_t)},
\end{equation}
which is a direct mirror descent extension of the normalized gradient descent algorithm studied in~\citep{nacson2019convergence}.
In Appendix~\ref{sec:step-size}, we will discuss the choice of step sizes when we use the exponential loss.

To prove the rate of convergence, we follow a similar strategy as we did in the case of fixed step size.
First, we show that the magnitude of the iterates $w_t$ can be upper bounded.
\begin{lemma}
\label{thm:normalized-norm-upper}
    For exponential loss, the iterates of the normalized mirror descent \eqref{equ:normalized-md} satisfies 
    \[ \limsup_{t\to\infty} \frac{\norm{w_t}_\psi}{\sqrt{t}} \le \mar{\psi}^{-1} \frac{\beta}{\beta-1}.\]
\end{lemma}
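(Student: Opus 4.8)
The strategy is to pass to the dual iterate $z_t := \nabla\psi(w_t)$, control how fast it grows, and translate the bound back to $\norm{w_t}_\psi$ through the homogeneity of $\psi$. First I would record the homogeneity dictionary implied by Assumption~\ref{thm:assump-potential} and \eqref{equ:breg-homo}: one has $\psi(w)=\norm{w}_\psi^\beta$ and $\inp{\nabla\psi(w)}{w}=\beta\norm{w}_\psi^\beta$, and since $\nabla\psi$ is $(\beta-1)$-homogeneous while $w/\norm{w}_\psi$ realizes the dual norm of $\nabla\psi(w)$, also the clean identity $\norm{\nabla\psi(w)}_{\psi,*}=\beta\norm{w}_\psi^{\beta-1}$ (equivalently $\psi^*(z_t)=(\beta-1)\norm{w_t}_\psi^\beta$ for the conjugate $\psi^*$). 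Consequently any upper bound on $\norm{z_t}_{\psi,*}$ converts directly into one on $\norm{w_t}_\psi$.

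Next I would unroll \eqref{equ:normalized-md} into $z_t=z_0-\eta_0\sum_{k=0}^{t-1}(k+1)^{-1/2}\,\nabla L(w_k)/L(w_k)$ and apply the triangle inequality in $\norm{\cdot}_{\psi,*}$. Using the estimate $\norm{\nabla L(w)}_{\psi,*}\le C\,L(w)$ established just before the lemma, each increment of $\norm{z_t}_{\psi,*}$ is at most $\eta_0 C (k+1)^{-1/2}$; summing with $\sum_{k=0}^{t-1}(k+1)^{-1/2}=2\sqrt{t}+O(1)$ gives $\norm{z_t}_{\psi,*}=O(\sqrt t)$, hence, through the dictionary above, $\norm{w_t}_\psi=O(\sqrt t)$. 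This already establishes the $\sqrt t$ rate, but with a constant governed by the data radius $C$, so the actual content of the lemma is sharpening $C$ to $\mar{\psi}^{-1}\frac{\beta}{\beta-1}$.

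For the sharp constant I would replace the crude radius $C$ by the margin $\mar{\psi}$ via the loss--margin correspondence. From $\min_i y_i w_t^\top x_i\le\mar{\psi}\norm{w_t}_\psi$ one gets $L(w_t)\ge\frac1n\exp(-\mar{\psi}\norm{w_t}_\psi)$, so $\norm{w_t}_\psi$ and $-\log L(w_t)$ are comparable up to the factor $\mar{\psi}^{-1}$, which is where the reciprocal margin enters the constant. The dual homogeneity exponent $\beta^*=\beta/(\beta-1)$ enters when converting the degree-$(\beta-1)$ quantity $\norm{z_t}_{\psi,*}$ back to $\norm{w_t}_\psi$ through $\psi^*$, producing the $\frac{\beta}{\beta-1}$ factor. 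Keeping track of the softmax weights $q_i^{(t)}\propto\exp(-y_i w_t^\top x_i)$, so that the effective per-step contribution is the direction $\nabla L(w_t)/L(w_t)=-\sum_i q_i^{(t)} y_i x_i$ rather than a worst-case bound, and taking $\limsup$ (so the $w_0$-dependent and $O(1)$ terms wash out) is what would upgrade the crude $O(\sqrt t)$ estimate to $\limsup_t \norm{w_t}_\psi/\sqrt t\le\mar{\psi}^{-1}\frac{\beta}{\beta-1}$.

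The main obstacle is twofold, and both parts must be handled without presupposing directional convergence, which this lemma is meant to help establish. First, expanding the conjugate potential as $\psi^*(z_{t+1})-\psi^*(z_t)=\inp{w_t}{z_{t+1}-z_t}+D_{\psi^*}(z_{t+1},z_t)$ produces a second-order Bregman remainder $D_{\psi^*}(z_{t+1},z_t)$ that must be shown to be lower order; this is exactly where the vanishing step size $\eta_0/\sqrt{t+1}\to 0$ and the curvature of $\psi^*$ (hence the homogeneity degree $\beta$ and the extra regularity imposed on $\psi$) are used, and it is the reason the clean $\sqrt t$ statement is the natural target. Second, and more delicate, is quantifying the concentration of the loss weights $q^{(t)}$ on the max-margin support so that the margin $\mar{\psi}$ genuinely governs the limiting rate; this is the crux of the estimate and mirrors the corresponding step in the fixed-step-size analysis behind Lemma~\ref{thm:norm-rate}.
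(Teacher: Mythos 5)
Your dual-side bookkeeping contains a conversion error that breaks the argument already at the ``crude rate'' stage. From the triangle inequality you correctly get $\norm{z_t}_{\psi,*}\le\norm{z_0}_{\psi,*}+2\eta_0C\sqrt t$, but your own dictionary $\norm{\nabla\psi(w)}_{\psi,*}=\beta\norm{w}_\psi^{\beta-1}$ then yields $\norm{w_t}_\psi\in O\bigl(t^{1/(2(\beta-1))}\bigr)$, not $O(\sqrt t)$. For $1<\beta<2$ this exponent exceeds $1/2$ (e.g.\ $\beta=1.5$ gives only $O(t)$), so the first step does not even establish the $\sqrt t$ rate, let alone the constant. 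The subsequent ``sharpening'' is only a plan, and its two load-bearing ingredients are misidentified: the factor $\beta/(\beta-1)$ in the lemma does not come from the conjugate exponent of $\psi^*$, and no concentration of the softmax weights $q^{(t)}$ on the max-margin support is needed anywhere (that is a far harder statement, closer to exact-asymptotics analyses, and you correctly flag it as unresolved). An upper bound on $\norm{w_t}_\psi$ obtained purely from $\norm{z_t}_{\psi,*}$ discards exactly the directional information through which $\mar{\psi}$ enters.

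The paper's proof stays entirely on the primal side. It fixes $w^\star=\mar{\psi}^{-1}\mmd{\psi}$ (unit margin, $\norm{w^\star}_\psi=\mar{\psi}^{-1}$), applies the MD identity (Lemma~\ref{thm:key-iden}) with the scaled reference $\sqrt T\,w^\star$, and bounds $\inp{\nabla L(w_t)}{\sqrt T\,w^\star-w_t}\le n e^{-\sqrt T}/e$ using $xe^{-x}\le 1/e$ together with the unit margin of $w^\star$. Telescoping over $t=0,\dots,T-1$ and dividing by $T^{\beta/2}$ (homogeneity of the Bregman divergence) gives $\brg{w^\star}{w_T/\sqrt T}\le\mar{\psi}^{-\beta}+o(1)$, while the expansion $\brg{w^\star}{\tilde w}\ge\mar{\psi}^{-\beta}+(\beta-1)\norm{\tilde w}_\psi^{\beta}-\beta\mar{\psi}^{-1}\norm{\tilde w}_\psi^{\beta-1}$ (via \eqref{equ:mirror-parallel}) shows the divergence would exceed $\mar{\psi}^{-\beta}$ whenever $\norm{\tilde w}_\psi>\mar{\psi}^{-1}\beta/(\beta-1)$, a contradiction. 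The constant $\beta/(\beta-1)$ is the root of $(\beta-1)s=\beta\mar{\psi}^{-1}$ in this expansion, not a dual-homogeneity exponent. To salvage your route you would have to track the pairing of $z_t$ against the fixed direction $\mmd{\psi}$ (or a Bregman divergence to a scaled reference, as the paper does) rather than the norm of $z_t$ alone.
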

And with this lemma, we can proceed to show a lower bound on the magnitude of $w_t$.
\begin{lemma}
\label{thm:normalized-norm-rate}
    For exponential loss, if the potential function satisfies Assumption~\ref{thm:assump-potential} with $\beta < 3$ and is continuously twice differentiable, and the initial loss is sufficiently small that $L(w_0) \le \frac{1}{2n}$, then the iterates of the normalized mirror descent \eqref{equ:normalized-md} satisfies $\norm{w_t}_\psi \in \Omega(t^{(3-\beta)/2 - \zeta})$ for any $\zeta > 0$.
\end{lemma}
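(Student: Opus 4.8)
The plan is to pass to the dual (mirror) space and measure how far the iterate advances along the generalized max-margin direction $\mmd{\psi}$. Set $\theta_t := \nabla\psi(w_t)$, so that \eqref{equ:normalized-md} becomes the additive recursion $\theta_{t+1} = \theta_t + \eta_t g_t$, where $g_t := -\nabla L(w_t)/L(w_t)$ and $\eta_t = \eta_0/\sqrt{t+1}$. Two consequences of homogeneity (Assumption~\ref{thm:assump-potential} and \eqref{equ:breg-homo}) drive the argument: since $\nabla\psi$ is $(\beta-1)$-homogeneous, $\norm{\theta_t}_{\psi,*} \le M\,\norm{w_t}_\psi^{\beta-1}$ with $M := \sup_{\norm{u}_\psi=1}\norm{\nabla\psi(u)}_{\psi,*} < \infty$ (finite by continuity of $\nabla\psi$ on the compact unit sphere); and $\norm{\mmd{\psi}}_\psi = 1$, so $\inp{\theta_t}{\mmd{\psi}} \le \norm{\theta_t}_{\psi,*}$ by H\"older. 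Thus any lower bound on $\inp{\theta_t}{\mmd{\psi}}$ immediately becomes a lower bound on $\norm{w_t}_\psi$.

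First I would lower-bound $\inp{\theta_t}{\mmd{\psi}}$ by telescoping. Pairing the recursion with $\mmd{\psi}$ gives $\inp{\theta_{t+1}-\theta_t}{\mmd{\psi}} = \eta_t\inp{g_t}{\mmd{\psi}}$. For the exponential loss, $g_t = \big(\sum_i e^{-y_i\inp{x_i}{w_t}}\big)^{-1}\sum_i e^{-y_i\inp{x_i}{w_t}} y_i x_i$, so $\inp{g_t}{\mmd{\psi}}$ is a convex combination of the scalars $y_i\inp{x_i}{\mmd{\psi}}$, each of which is at least $\mar{\psi}$ by the very definition of $\mmd{\psi}$; hence $\inp{g_t}{\mmd{\psi}} \ge \mar{\psi}$ for every $t$, with no need for the iterate itself to classify the data. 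Summing from $0$ to $t-1$ and using $\sum_{s=0}^{t-1}\eta_s = \Theta(\sqrt t)$ gives $\inp{\theta_t}{\mmd{\psi}} \ge \mar{\psi}\sum_{s<t}\eta_s - |\inp{\theta_0}{\mmd{\psi}}| \in \Omega(\sqrt t)$.

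Chaining this with the comparison inequalities of the first paragraph yields $M\,\norm{w_t}_\psi^{\beta-1} \ge \inp{\theta_t}{\mmd{\psi}} \in \Omega(\sqrt t)$, i.e. $\norm{w_t}_\psi \in \Omega(t^{1/(2(\beta-1))})$; since $(\beta-2)^2 \ge 0$ forces $\tfrac{1}{2(\beta-1)} \ge \tfrac{3-\beta}{2}$, this already implies the claimed $\Omega(t^{(3-\beta)/2-\zeta})$ for any $\zeta>0$. To instead obtain a rate that is sharp against the upper bound of Lemma~\ref{thm:normalized-norm-upper}, I would track the scalar potential $\psi^*(\theta_t) = (\beta-1)\norm{w_t}_\psi^\beta$ (Fenchel plus Euler's identity): convexity of $\psi^*$ gives $\psi^*(\theta_{t+1}) - \psi^*(\theta_t) \ge \eta_t\inp{w_t}{g_t} \ge \eta_t\,\gamma(w_t) = \eta_t\,\norm{w_t}_\psi\,\gamma\!\big(w_t/\norm{w_t}_\psi\big)$, using that the margin-weighted average $\inp{w_t}{g_t}$ dominates $\gamma(w_t)=\min_i y_i\inp{x_i}{w_t}$. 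The hypothesis $L(w_0)\le\frac1{2n}$ enters precisely here, forcing every $y_i\inp{x_i}{w_0}>0$ so that $\gamma(w_t)$ is positive from the start; feeding in the direction convergence $\gamma(w_t/\norm{w_t}_\psi)\to\mar{\psi}$ then turns the estimate into a differential inequality for $\norm{w_t}_\psi^\beta$.

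The hard part is the self-referential nature of this second route: the lower bound on $\norm{w_t}_\psi$ that I want to prove is also what controls the direction-convergence error $\mar{\psi}-\gamma(w_t/\norm{w_t}_\psi)$, so the estimate must be bootstrapped, and the second-order remainder dropped in the convexity step must be discarded rigorously. That remainder is governed by $\nabla^2\psi^*$, which is homogeneous of degree $\beta^*-2 = \tfrac{2-\beta}{\beta-1}$; continuous twice-differentiability and the restriction $\beta<3$ are exactly what make the curvature contribution --- evaluated at iterates whose size is pinned from above by Lemma~\ref{thm:normalized-norm-upper} --- summable against $\sum_t \eta_t^2 = \Theta(\log t)$, while the slack $\zeta>0$ absorbs the non-asymptotic gap in the bootstrap. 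By contrast, the clean dual argument of the first three paragraphs needs none of $C^2$, $\beta<3$, or the initial-loss bound and already establishes the stated lemma; the stronger hypotheses are what the constant-tight route requires.
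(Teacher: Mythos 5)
Your first, dual-space argument is correct and is a genuinely different (and considerably lighter) route than the paper's. You telescope $\inp{\nabla\psi(w_t)}{\mmd{\psi}}$ directly: since $-\nabla L(w_t)/L(w_t)$ is a convex combination of the $y_i x_i$, each step advances this inner product by at least $\eta_t\mar{\psi}$, giving $\inp{\nabla\psi(w_t)}{\mmd{\psi}} \in \Omega(\sqrt{t})$; H\"older's inequality together with the $(\beta-1)$-homogeneity of $\nabla\psi$ (and compactness of the unit sphere, which even yields the explicit constant $\sup_{\norm{u}_\psi=1}\norm{\nabla\psi(u)}_{\psi,*} \le \beta$ via \eqref{equ:mirror-parallel}) then force $\norm{w_t}_\psi^{\beta-1} \in \Omega(\sqrt{t})$, i.e.\ $\norm{w_t}_\psi \in \Omega(t^{1/(2(\beta-1))})$, and your observation that $(\beta-2)^2 \ge 0$ correctly shows this dominates the claimed $\Omega(t^{(3-\beta)/2-\zeta})$ for all $1<\beta<3$. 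The paper instead argues in the primal: it applies the relative-smoothness descent inequality of \cite{lu2018relatively} with a hand-picked perturbation dual to $\nabla L(w_t)$, controls the Bregman remainder via the Hessian-scaling Lemma~\ref{thm:hessian-scaling} and the upper bound of Lemma~\ref{thm:normalized-norm-upper}, deduces a multiplicative decrease of $L(w_t)$, and converts $-\log L(w_T)$ into a bound on $\norm{w_T}_\psi$; that is precisely where the hypotheses $L(w_0)\le\frac{1}{2n}$, $\beta<3$, and $C^2$ regularity are consumed and where the slack $\zeta$ originates. Your route dispenses with all of these and gives a sharper exponent. Two remarks: first, for $\beta<2$ your bound exceeds $\sqrt{t}$ and is therefore in tension with Lemma~\ref{thm:normalized-norm-upper}; a one-dimensional check with $\psi(w)=|w|^{3/2}$ and a single example, where $\norm{w_t}_\psi \sim t$, indicates the discrepancy lies with that upper-bound lemma rather than with your argument, and is worth flagging since Theorem~\ref{thm:normalized-md-rate} leans on both lemmas. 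Second, your ``sharp'' second route via $\psi^*$ is only a sketch --- the bootstrap between the growth rate and the directional convergence is the genuinely delicate step --- but since the first route already proves the statement, nothing in the lemma depends on it.
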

These two lemmas together represent a normalized mirror descent counterpart of Lemma~\ref{thm:norm-rate}.
Then, we can apply Lemmas~\ref{thm:normalized-norm-upper} and~\ref{thm:normalized-norm-rate} to the same proof technique as Theorem~\ref{thm:primal-bias}.
Compared to the fixed step size case, the main difficulty here lies in that we no longer obtain a telescoping sum like \eqref{equ:tele-sum} when the step size is not constant.
We overcome this difficulty by controlling the terms that do not telescope are of lower order than $D_\psi(c_T \reg{\psi}, w_T)$ using the asymptotic growth rate on $\norm{w_t}_\psi$ for a certain range of $\beta$.
In particular, we show that such terms vanish after normalization.
Hence, the normalized mirror descent algorithm converges to the same implicit bias as \ref{equ:md}, but at a much faster rate.
\begin{theorem}
\label{thm:normalized-md-rate}
    For a separable linear classification problem with exponential loss, if the potential function satisfies Assumption~\ref{thm:assump-potential} with $1 < \beta < \frac{1}{2}(3+\sqrt{5})$ and is continuously twice differentiable, and the initial loss is sufficiently small that $L(w_0) \le \frac{1}{2n}$, then normalized mirror descent~\eqref{equ:normalized-md} converges to the generalized maximum-margin direction at a rate
    \[D_\psi\left(\mmd{\psi}, \frac{w_t}{\norm{w_t}_\psi}\right) \in O\left(t^{-(3\beta - 1 - \beta^2)/2 + \beta\zeta}\right),\]
    for any $\zeta > 0$.
\end{theorem}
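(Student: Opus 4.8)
The plan is to retrace the proof of Theorem~\ref{thm:primal-bias}, but with the fixed step size replaced by $\eta_t = \eta_0/\sqrt{t+1}$ and with careful bookkeeping of the terms that now fail to telescope. Throughout I specialize to the exponential loss and invoke Proposition~\ref{thm:reg-max-dir} to identify $\reg{\psi} = \mmd{\psi}$, so that the final bound can be stated for the generalized max-margin direction. The starting point is the per-step inequality: applying the mirror-descent identity of Lemma~\ref{thm:key-iden} to the update \eqref{equ:normalized-md} (whose effective gradient is $\nabla L(w_t)/L(w_t)$) against the moving reference point $c_t\reg{\psi}$ with $c_t := (1+\alpha)\norm{w_t}_\psi$, I obtain for each $t$ an inequality of the shape of \eqref{ineq:1}. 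The essential input is Corollary~\ref{thm:cross-term}, which replaces the intractable cross term $\inp{\nabla L(w_t)}{\reg{\psi}}$ by the geometric quantity $\inp{\nabla L(w_t)}{w_t}$; together with the homogeneity relations \eqref{equ:breg-homo} and $\inp{\nabla\psi(w)}{w} = \beta\psi(w)$, this lower-bounds the per-step increment of $\inp{\nabla\psi(w_{t+1})}{\reg{\psi}}$ by a multiple of $\eta_t\,\mar{\psi}$, exactly as in the fixed-step case.

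Next I would sum over $t = 0,\dots,T-1$. Unlike in \eqref{equ:tele-sum}, the varying prefactor $\eta_t$ prevents the cross-term sum from collapsing cleanly against $\psi(c_T\reg{\psi}) = c_T^\beta$. I would therefore split the summed inequality into a principal part, which reproduces the near-cancellation between $\psi(c_T\reg{\psi})$ and the telescoped cross terms, and a residual assembled from the discrete differences $\eta_t - \eta_{t+1}$ and $c_{t+1}-c_t$ together with the mismatch $\brg{c_{t+1}\reg{\psi}}{w_{t+1}} - \brg{c_t\reg{\psi}}{w_{t+1}}$. Using the upper bound $\norm{w_t}_\psi = O(\sqrt t)$ from Lemma~\ref{thm:normalized-norm-upper} to size every factor of the form $\norm{w_t}_\psi^{\beta-1}$, together with $\sum_{t<T}\eta_t = O(\sqrt T)$, the principal part is seen to be $O(\sqrt T)$, giving $\brg{c_T\reg{\psi}}{w_T} \in O(\sqrt T)$ provided the residual is of the same or lower order.

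The main obstacle is precisely this last provision: controlling the non-telescoping residual. Each residual term scales like a power of $\norm{w_t}_\psi \lesssim \sqrt t$ multiplied by $\eta_t \sim t^{-1/2}$, and summing such contributions produces an aggregate whose growth exponent in $T$ is \emph{quadratic} in $\beta$. Demanding that this residual be of strictly lower order than the principal term $O(\sqrt T)$ — so that it disappears after we divide through by $c_T^\beta$ — is exactly what forces $\beta^2 - 3\beta + 1 < 0$, i.e.\ $\beta < \tfrac12(3+\sqrt5)$. This quadratic interplay between the norm-growth exponent $(3-\beta)/2$ and the homogeneity factor $\beta$ is also what produces the $\beta^2$ term in the final rate and the golden-ratio-type threshold on $\beta$.

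Finally I would normalize using absolute homogeneity \eqref{equ:homo}, which gives the exact identity $\brg{\reg{\psi}}{w_T/\norm{w_T}_\psi} = \norm{w_T}_\psi^{-\beta}\,\brg{\norm{w_T}_\psi\reg{\psi}}{w_T}$; letting $\alpha\to 0$ so that $c_T\to\norm{w_T}_\psi$ and substituting $\brg{c_T\reg{\psi}}{w_T}\in O(\sqrt T)$ in the numerator, while inserting the lower bound $\norm{w_T}_\psi \in \Omega(T^{(3-\beta)/2 - \zeta})$ from Lemma~\ref{thm:normalized-norm-rate} in the denominator, yields
\[
\brg{\mmd{\psi}}{\frac{w_T}{\norm{w_T}_\psi}} \in O\!\left(T^{1/2}\cdot T^{-\beta((3-\beta)/2 - \zeta)}\right) = O\!\left(T^{-(3\beta - 1 - \beta^2)/2 + \beta\zeta}\right),
\]
which is the claimed rate. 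Note the exponent $-(3\beta-1-\beta^2)/2$ is negative throughout $1 < \beta < \tfrac12(3+\sqrt5)$, so the bound is genuinely converging exactly on the stated range; at the endpoint it degenerates to $O(1)$, consistent with the threshold arising from the residual-control step above.
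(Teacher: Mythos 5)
Your proposal follows essentially the same route as the paper's proof: the moving reference $c_t\reg{\psi}$ fed into Lemma~\ref{thm:key-iden} with Corollary~\ref{thm:cross-term}, the $O(\sqrt{T})$ contribution from the now non-telescoping loss terms, control of the cross-term residual via the norm bounds of Lemmas~\ref{thm:normalized-norm-upper} and~\ref{thm:normalized-norm-rate}, the condition $\beta^2-3\beta+1<0$ emerging from requiring that residual to vanish after dividing by $\norm{w_T}_\psi^\beta$, and the final rate $\sqrt{T}/\norm{w_T}_\psi^{\beta}$ with $\norm{w_T}_\psi\in\Omega(T^{(3-\beta)/2-\zeta})$. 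The only place your sketch is looser than the paper is the endgame: you cannot literally send $\alpha\to 0$ at fixed $T$ (since $r_\alpha\to\infty$), so the leading term $\norm{w_T}_\psi^{\beta}((1+\alpha)^{\beta}-1)$ does not disappear and must be handled, as in the proof of Theorem~\ref{thm:primal-bias}, by the Law-of-Cosines normalization and a double limit in $T$ and $\alpha$ — which your plan to "retrace Theorem~\ref{thm:primal-bias}" implicitly covers.
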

In the case of gradient descent, the rate becomes $O(t^{-1/2 + 2\zeta})$.
We note that \cite{nacson2019convergence} presented their convergence rate for normalized gradient descent in terms of the difference in margin: $\mar{2} - \gamma(w_t / \norm{w_t}_2) \in O(\log t / \sqrt{t})$.
While their result is not directly comparable to our result here, it intuitively matches the rate we derived through Lemma~\ref{thm:normalized-norm-rate} and Theorem~\ref{thm:normalized-md-rate}.

We also note that our result requires a warm-up period to find a small initial loss $L(w_0) \le 1/(2n)$\footnote{This is not necessary when $\beta=2$, but our proof requires initial loss $L(w_0)$ be small for other values of $\beta$.}.
One way to achieve this is by warm-starting with  \ref{equ:md} and switching to \ref{equ:normalized-md} after the condition is met.
From what we had shown about the convergence rate of \ref{equ:md} with a fixed step size (see the proof of Lemma~\ref{thm:norm-rate}), the warm-up period is at most $O(n)$ steps.
In practice, we observe that such a warm-start scheme is not necessary.

The proofs for this section can be found in Appendix~\ref{sec:proof-normalized-asymp}.
For experiments illustrating faster convergence with normalized MD, please see Section~\ref{sec:normalized-exp}.
 
\section{Experiments}
\label{sec:experiments}

In this section, we perform various experiments to complement our theoretical results in Section~\ref{sec:main-result} and to illustrate the performance of MD in practical settings.
We will apply mirror descent (MD) with various homogeneous potentials of the form $\psi(\cdot) = \frac{1}{p} \norm{\cdot}_p^\beta$, so that, according to our results in Section~\ref{sec:main-result}, the iterates converge to the generalized maximum-margin solutions with respect to the $\ell_p$-norm.
As we previously discussed, when $\beta = p$, the MD update rule can efficiently computed because it is \textit{coordinate-wise separable} (see the update rule \eqref{mdpp}).
Hence, our experiments also highlight the efficacy of \algname, which is MD with potential function $\psi(\cdot) = \frac{1}{p} \norm{\cdot}_p^p$.
We naturally pick $p = 2$, which corresponds to gradient descent.
Because the mirror map $\nabla \psi$ would not be invertible at $p = 1$ and $\infty$, we choose $p = 1.1$ as a surrogate for $\ell_1$, and $p = 10$ as a surrogate for $\ell_\infty$.
We also consider $p = 1.5, 3, 6$ to interpolate these points.
In addition to applying \algname, we also present several experiments with more general potential functions where $\beta \neq p$.
This section will present a summary of our experimental results; the complete experimental setup and full results can be found in Appendices~\ref{sec:experiment-detail} and \ref{sec:add-experiments}.

\subsection{Linear classification}
\label{sec:linear-classifier}
\begin{figure}
    \centering
    \begin{subfigure}[b]{0.3\textwidth}
        \centering
        \includegraphics[width=\textwidth]{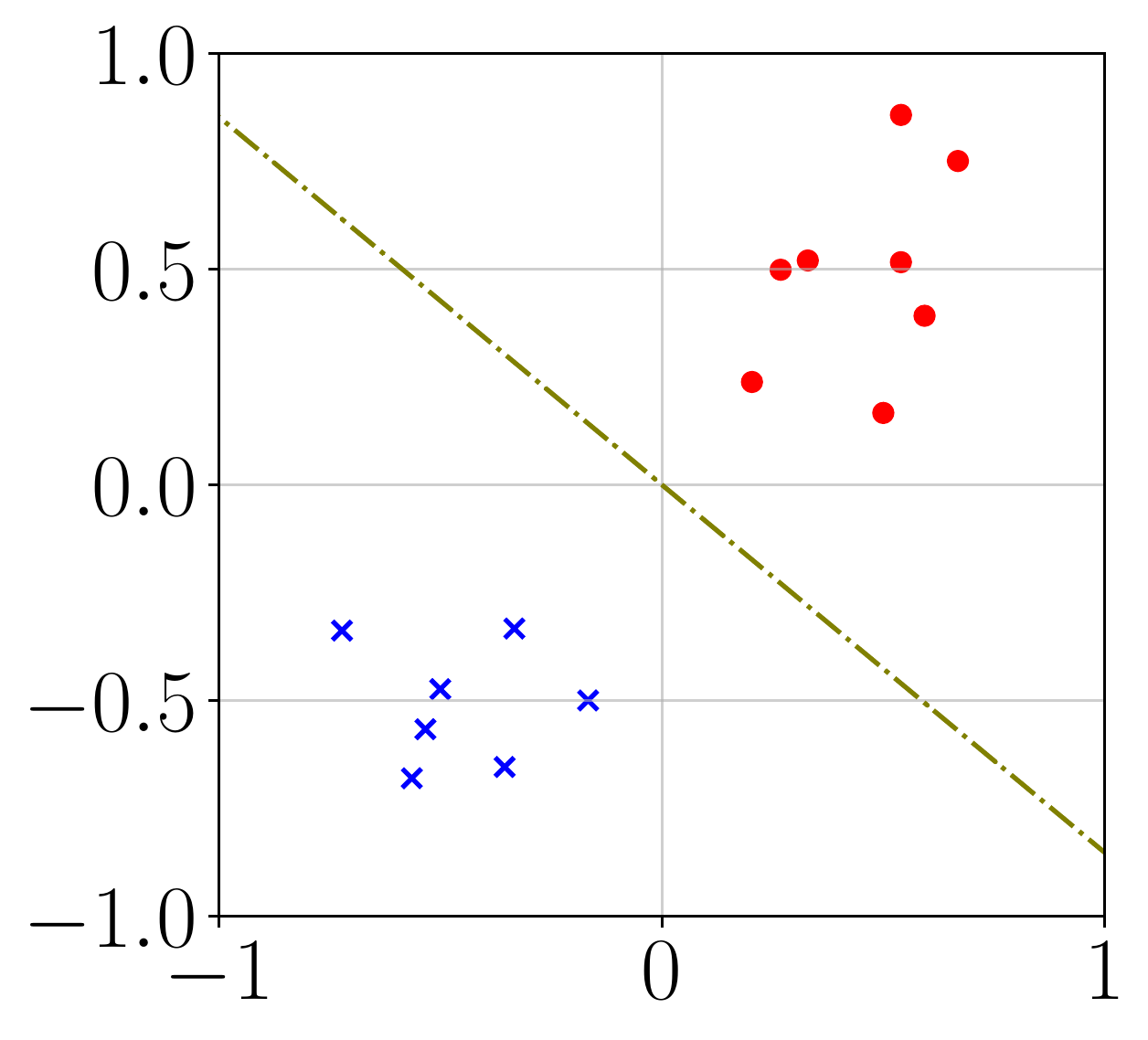}
    \end{subfigure}
    \begin{subfigure}[b]{0.32\textwidth}
        \includegraphics[width=\textwidth]{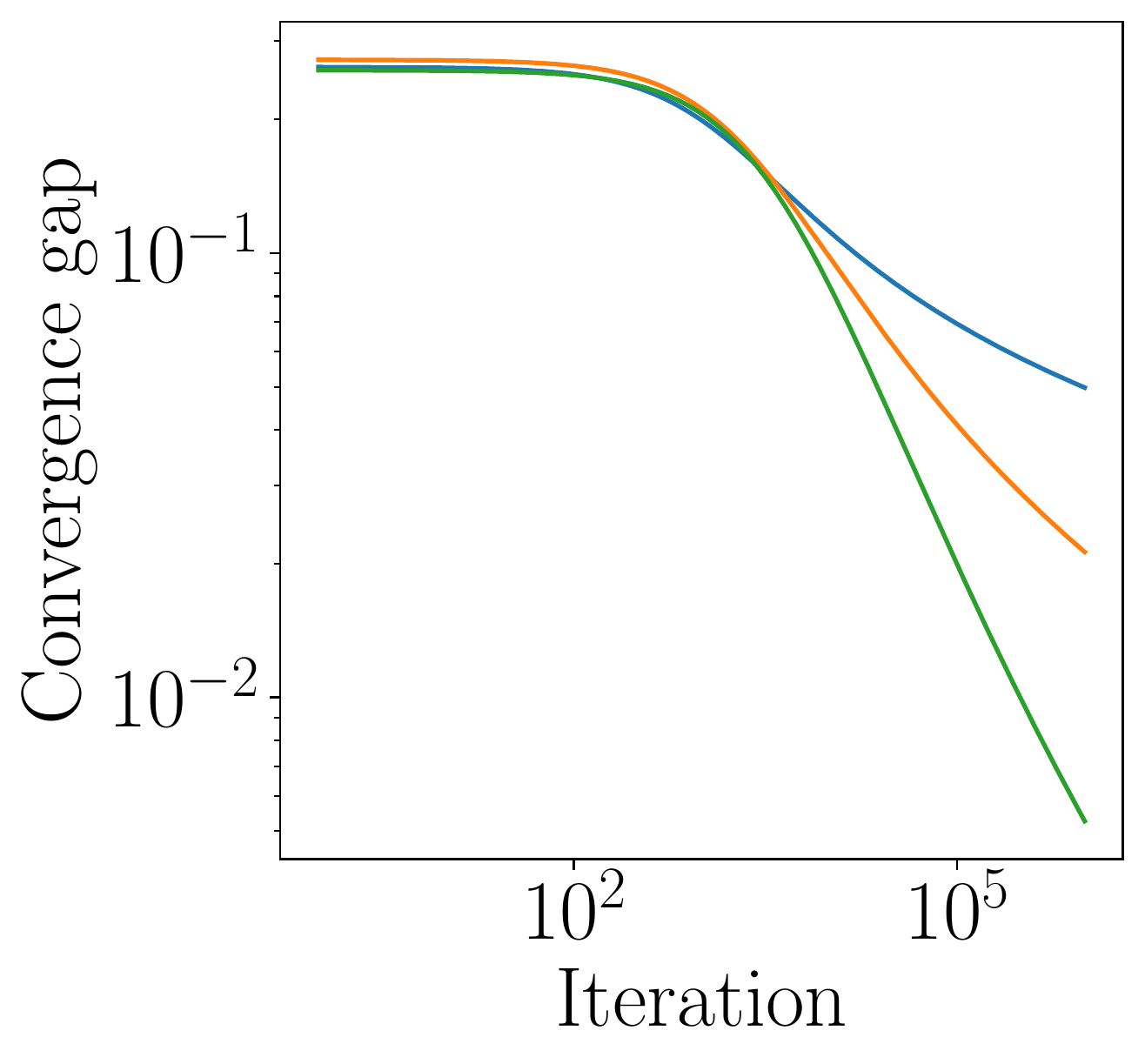}
    \end{subfigure}
    \begin{subfigure}[b]{0.3\textwidth}
        \includegraphics[width=\textwidth]{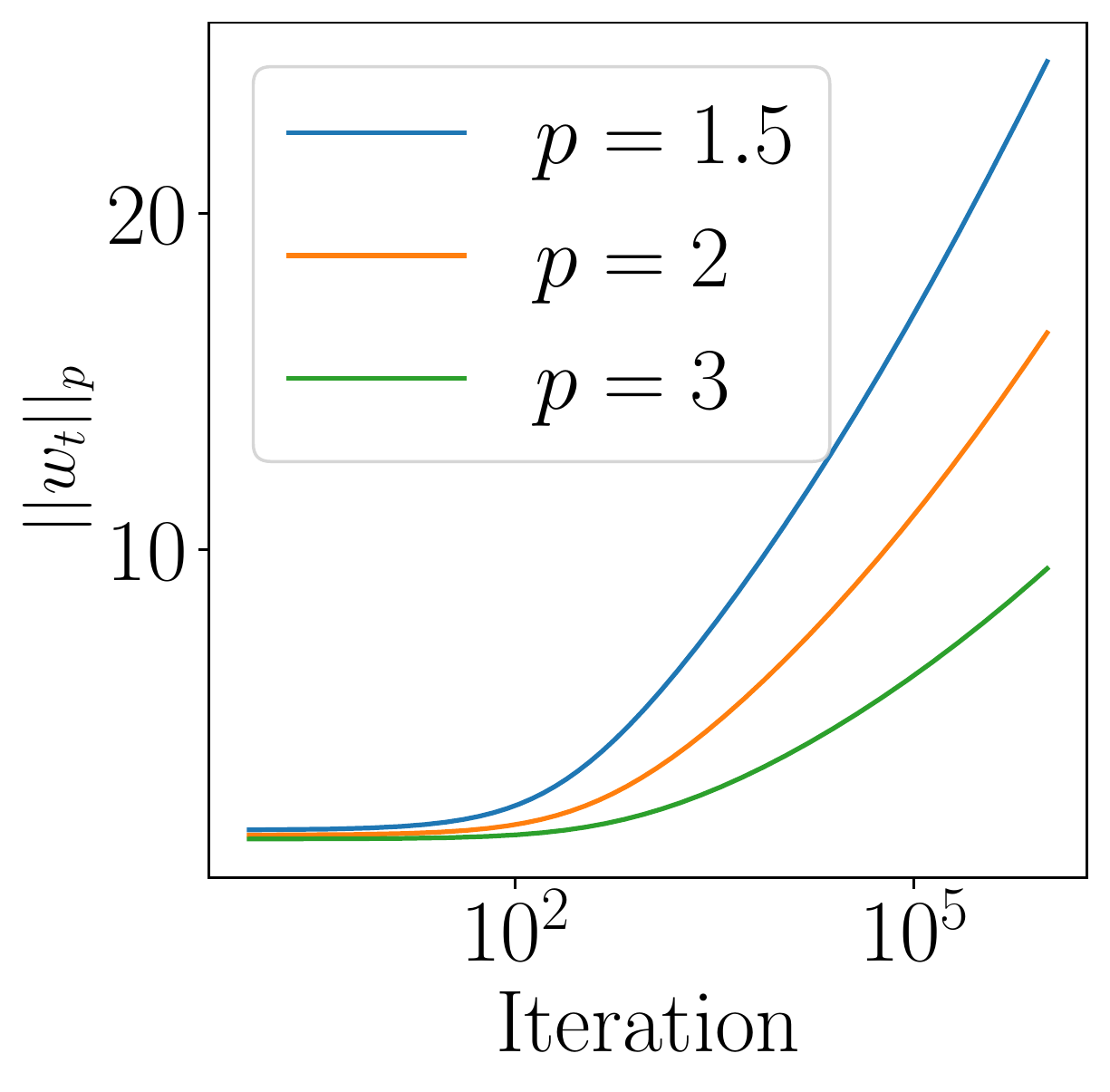}
    \end{subfigure}
    \caption{An example of \algname (MD with potential $\psi(\cdot) = \frac{1}{p} \norm{\cdot}_p^p$) on randomly generated data with exponential loss and $p = 1.5, 2, 3$. 
    \textbf{(1)} The left plot is a scatter plot of the data: \textcolor{blue}{$\times$}'s and \textcolor{red}{$\bullet$}'s denote the two different labels ($y_i = \pm 1$). The dotted line is the $\ell_2$ max-margin classifier. For clarity, other $\ell_p$ max-margin classifiers are omitted from the plot.
    \textbf{(2)} The middle plot shows the rate which the quantity $\brg{\reg{p}}{w_t / \norm{w_t}_t}$ converges to 0.
    \textbf{(3)} The right plot shows how fast the $p$-norm of $w_t$ grows.
    We can observe that the asymptotic behaviors of these plots are consistent with Corollary~\ref{thm:final-convg-rate}.
    }
    \label{fig:synthetic-data}
    \vspace{-0.5em}
\end{figure}

\begin{figure}
    \centering
    \begin{subfigure}[b]{0.45\textwidth}
        \centering
        \includegraphics[width=\textwidth]{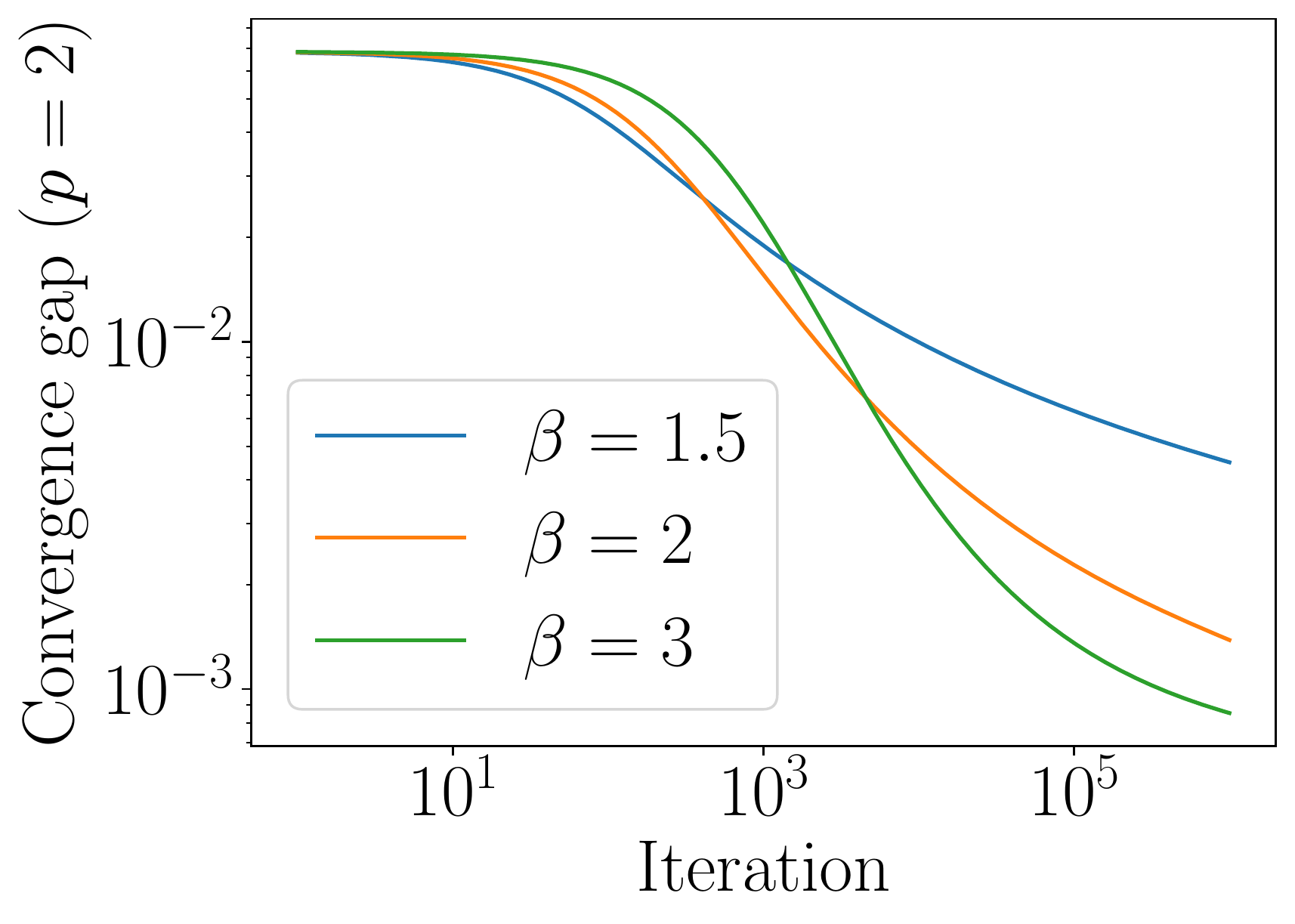}
    \end{subfigure}
    \begin{subfigure}[b]{0.45\textwidth}
        \includegraphics[width=\textwidth]{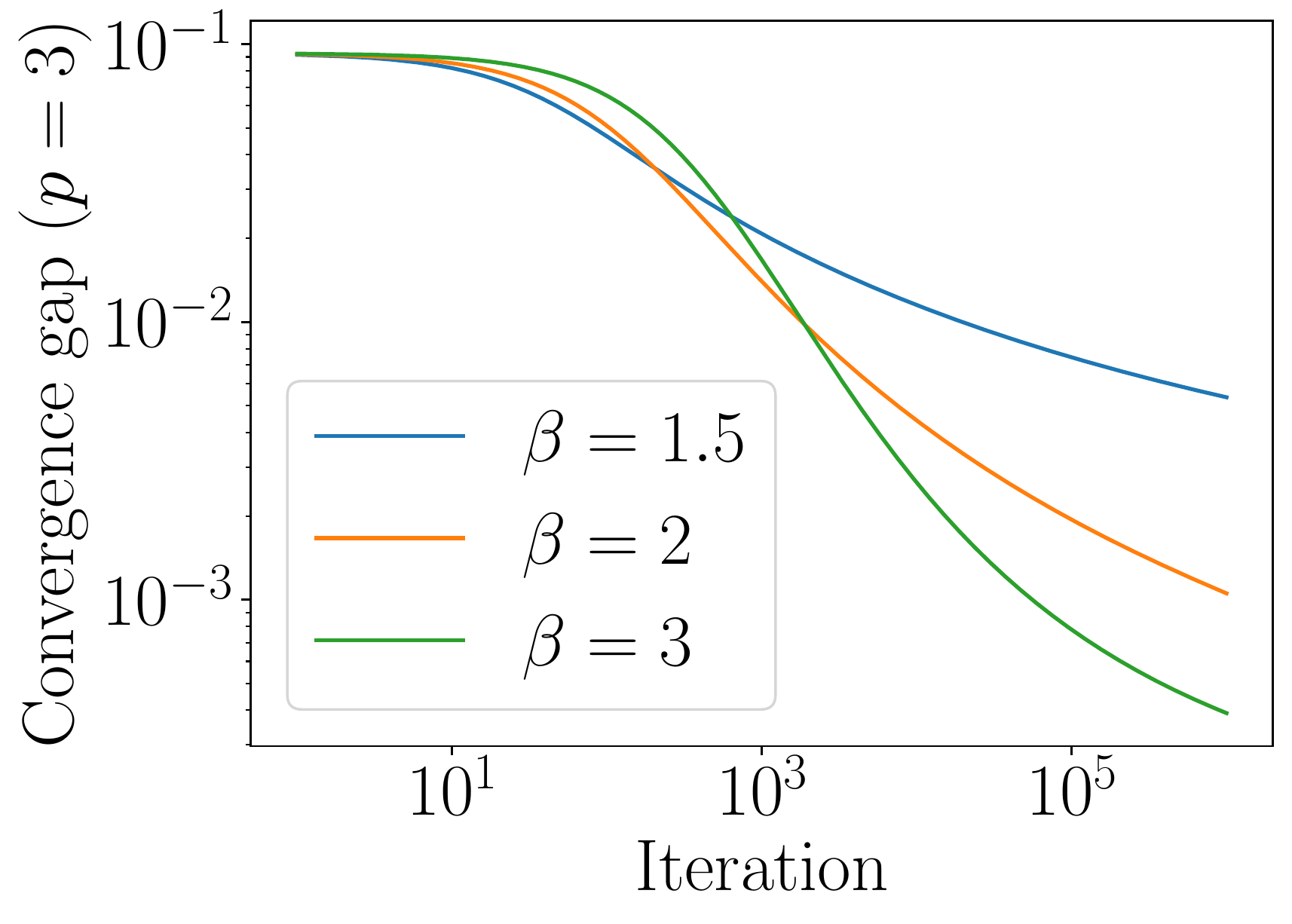}
    \end{subfigure}
    \caption{An example of MD with potential $\psi(\cdot) = \frac{1}{p} \norm{\cdot}_p^\beta$) on the same dataset as in Figure~\ref{fig:synthetic-data}.
    To verify the conclusion of Corollary~\ref{thm:final-convg-rate}, we plot the quantity $\brg{\reg{p}}{w_t / \norm{w_t}_t}$ for $p = 2$ (\textbf{left} figure) and $p = 3$ (\textbf{right} figure). We see that the rate of convergence is faster for higher values of the exponent $\beta$, which is consistent with Corollary~\ref{thm:final-convg-rate}.
    }
    \label{fig:synthetic-data-md}
    \vspace{-0.5em}
\end{figure}

\paragraph{Visualization of the convergence of MD.}
To visualize the results of Theorem~\ref{thm:primal-bias} and Corollary~\ref{thm:final-convg-rate}, we randomly generated a linearly separable set of 15 points in $\R^2$.
We then employ MD on this dataset with exponential loss $\ell(z) = \exp(-z)$ and one million iterations at a fixed step size $\eta = 10^{-3}$.
For the experiments involving \algname, we pick $p = 1.5, 2, 3$.
And for experiments on more general MD potential, we consider $p = 2, 3$ and $\beta = 1.5, 2, 3$.

In the illustrations of Figure~\ref{fig:synthetic-data}, the mirror descent iterates $w_t$ have unbounded norm and converge in direction to $\mmd{p}$. 
These results are consistent with Lemma~\ref{thm:to-infinity} and with Theorem~\ref{thm:primal-bias}.
Moreover, as predicted by Corollary~\ref{thm:final-convg-rate}, the exact rate of convergence for $\brg{\mmd{p}}{w_t / \norm{w_t}_t}$ is poly-logarithmic with respect to the number of iterations.
And in the third plot of Figure~\ref{fig:synthetic-data}, the norm of the iterates $w_t$ grows at a logarithmic rate, which is the same as the prediction by Lemma~\ref{thm:norm-rate}.

In Corollary~\ref{thm:final-convg-rate}, the convergence rate of MD is dependent on the homogeneity parameter $\beta$ of the potential function, where larger exponent $\beta$ leads to faster convergence.
We see that this is consistent with our observation in the second plot of Figure~\ref{fig:synthetic-data} and Figure~\ref{fig:synthetic-data-md}.
In the second plot of Figure~\ref{fig:synthetic-data}, we see that \algname enjoys faster convergence for larger $p$, and in Figure~\ref{fig:synthetic-data-md}, we see that for the same value of $p$, larger exponent $\beta$ led to faster convergence to the same generalized maximum-margin direction with respect to $\ell_p$.
 
\paragraph{Implicit bias of MD in linear classification.}
We now verify the conclusions of Theorem~\ref{thm:primal-bias}.
To this end, we recall that $\mmd{p}$ is parallel to the generalized SVM solution $\argmin_w \{\norm{w}_p : \gamma(w) \ge 1\}$.
Hence, we can exploit the linearity and rescale any classifier so that its margin is equal to $1$.
If the prediction of Theorem~\ref{thm:primal-bias} holds, then for each fixed value of $p$ and independent of the value of $\beta$, the classifier generated by MD with a potential $\psi(\cdot) = \frac{1}{p} \norm{\cdot}_p^\beta$ should have the smallest $\ell_p$-norm after rescaling.

To ensure that $\mmd{p}$ are sufficiently different for different values of $p$, we simulate an over-parameterized setting by randomly selecting 15 points in $\RR^{100}$.
We used a fixed step size of $\eta = 10^{-4}$ and ran one million iterations for different $p$'s.

\begin{table}
\begin{floatrow}[2]
\ttabbox{
{\small
    \begin{tabular}{l| c|c|c|c}
    \hline
    & $\ell_{1.1}$ & $\ell_{2}$ & $\ell_{3}$& $\ell_{10}$ \\
    \hline\hline
    $p=1.1$ & \textbf{5.477} & 1.637 & 1.093 & 0.696 \\
    $p=2$ & 6.161 & \textbf{1.224} & 0.684 & 0.382 \\
    $p=3$ & 7.299 & 1.296 & \textbf{0.667} & 0.309 \\
    $p=10$ & 9.032 & 1.515 & 0.740 & \textbf{0.280} \\
    \hline
    \end{tabular}
}
}
{
    \caption{Size of the linear classifiers generated by \algname (after rescaling) in $\ell_{1.1}, \ell_2, \ell_3$ and $\ell_{10}$ norms.}
    \label{tab:linear-bias}
}
\ttabbox{
{\small
    \begin{tabular}{l| c|c|c|c}
    \hline
    & $\ell_{1.1}$ & $\ell_{2}$ & $\ell_{3}$& $\ell_{10}$ \\
    \hline\hline
    $p=1.1$ & \textbf{5.136} & 1.864 & 1.276 & 0.795 \\
    $p=2$ & 6.161 & \textbf{1.224} & 0.684 & 0.382 \\
    $p=3$ & 7.305 & 1.275 & \textbf{0.652} & 0.301 \\
    $p=10$ & 9.132 & 1.477 & 0.712 & \textbf{0.266} \\
    \hline
    \end{tabular}
}
}
{
    \caption{Size of the linear classifiers generated by MD with potential $\psi(\cdot) = \frac{1}{p} \norm{\cdot}_p^2$ (after rescaling) in $\ell_{1.1}, \ell_2, \ell_3$ and $\ell_{10}$ norms.}
    \label{tab:linear-bias-md}
}
\end{floatrow}
\end{table}

Tables~\ref{tab:linear-bias} and~\ref{tab:linear-bias-md} shows the results for $p = 1.1, 2, 3$ and 10; under each norm, we highlight the smallest classifier in \textbf{boldface}.
In Table~\ref{tab:linear-bias}, we applied \algname. And in Table~\ref{tab:linear-bias-md}, we applied MD with a fixed exponent $\beta = 2$, which is known as the \textit{$p$-norm algorithm} in the literature~\citep{grove2001general, gentile2003robustness}.
Among the four classifiers we presented, \algname with $p = 1.1$ has the smallest $\ell_{1.1}$-norm.
And similar conclusions hold for $p = 2, 3, 10$.
Although MD converges to $\mmd{p}$ at a very slow rate, we can observe a very strong implicit bias of \algname classifiers toward their respective $\ell_p$ geometry in a highly over-parameterized setting.
This suggests we should be able to take advantage of the implicit regularization in practice and at a moderate computational cost.
For a more complete result with additional values of $p$, we refer the readers to Appendix~\ref{sec:add-experiment-synthetic}.

\subsection{Experiments with normalized MD}
\label{sec:normalized-exp}

\begin{figure}
    \centering
    \begin{subfigure}[b]{0.3\textwidth}
        \centering
        \includegraphics[width=\textwidth]{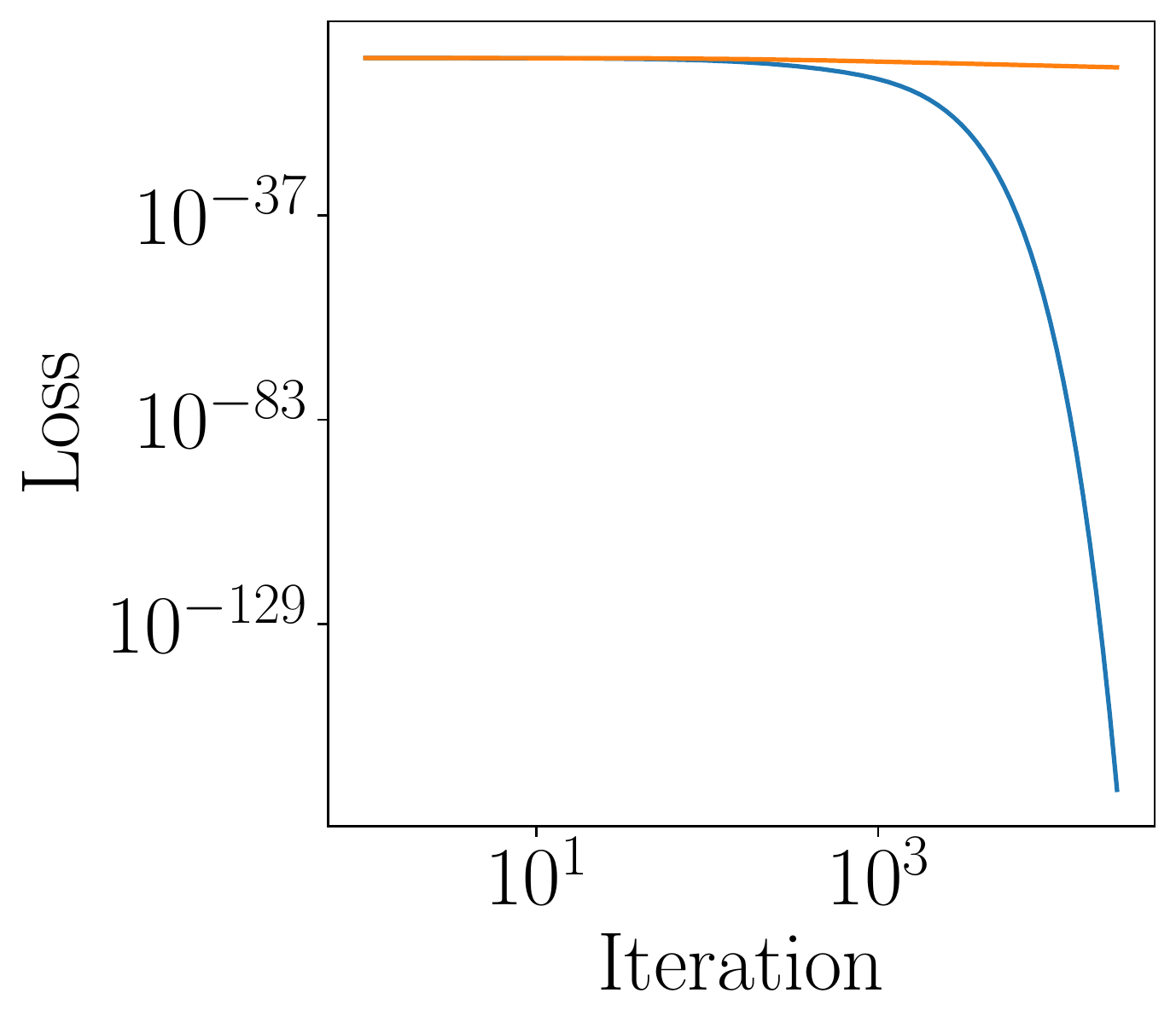}
    \end{subfigure}
    \begin{subfigure}[b]{0.3\textwidth}
        \includegraphics[width=\textwidth]{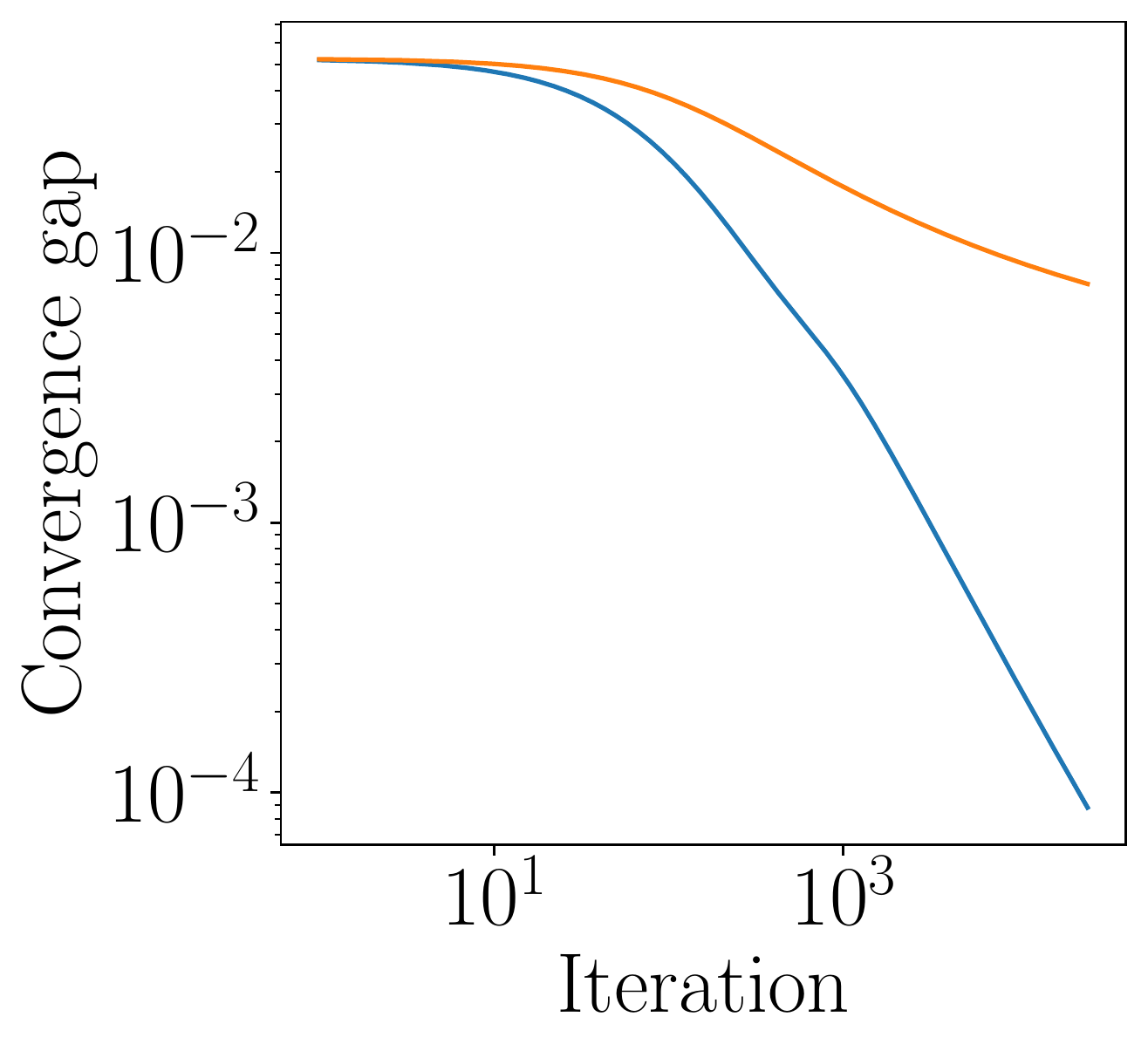}
    \end{subfigure}
    \begin{subfigure}[b]{0.3\textwidth}
        \includegraphics[width=\textwidth]{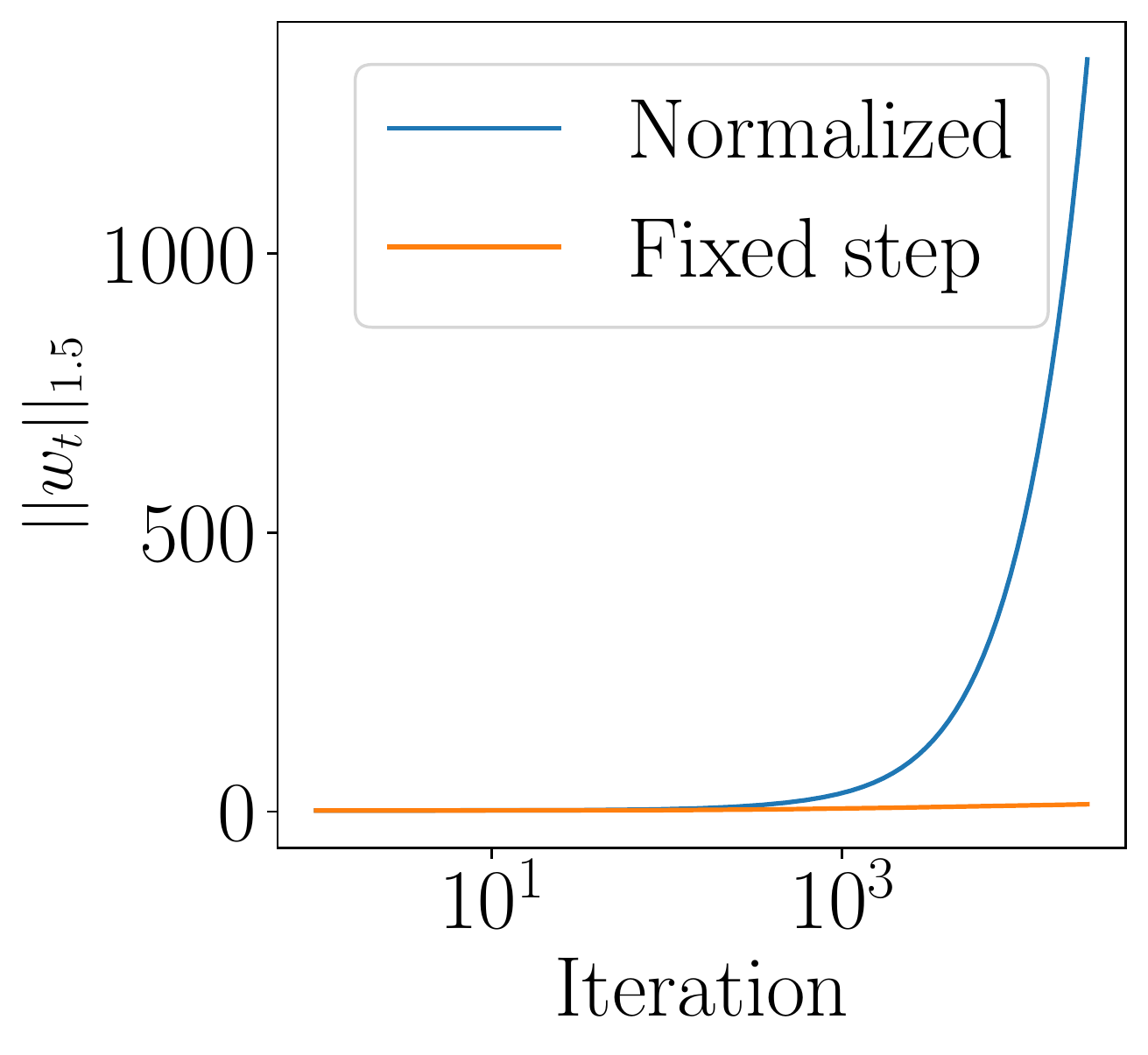}
    \end{subfigure}
    \caption{An example of \algname (MD with potential $\psi(\cdot) = \frac{1}{p} \norm{\cdot}_p^p$) and normalized \algname on randomly generated data with exponential loss and $p = 1.5$. 
    \textbf{(1)} The left plot is the empirical loss.
    \textbf{(2)} The middle plot shows the rate which the quantity $\brg{\reg{p}}{w_t / \norm{w_t}_t}$ converges to 0.
    \textbf{(3)} The right plot shows how fast the $p$-norm of $w_t$ grows.
    }
    \label{fig:synthetic-data-normalized}
    \vspace{-0.5em}
\end{figure}

We now demonstrate a faster rate of convergence with the normalized mirror descent update \eqref{equ:normalized-md} from Section~\ref{sec:normalized-asymp}.
Because the $\sqrt{t+1}$ term in the update rule \eqref{equ:normalized-md} would dominate at small $t$, we rescale the denominator by a factor of $\lambda > 0$ so that
\begin{equation}
    \label{equ:normalized-md-rescaled}
    \nabla\psi(w_{t+1}) = \nabla\psi(w_t) - \frac{\eta_0}{\sqrt{1 + \lambda t}} \frac{\nabla L(w_t)}{L(w_t)},
\end{equation}
Here, we present experimental results for \algname (along with its normalized counterpart) with $p = 1.5$.
Additional results for $p = 2$ (which is equivalent to gradient descent) and $p = 2.5$ are deferred to Appendix~\ref{sec:add-experiment-normalized}.

\paragraph{Linear classification.}
For a clean visualization of the rate of convergence, we again use the 2-dimensional synthetic dataset from Section~\ref{sec:linear-classifier}.
We reuse the same choice of hyper-parameters for mirror descent with a fixed step size.
As for normalized mirror descent update \eqref{equ:normalized-md-rescaled}, we use a base step size $\eta_0 = 10^{-3}$ and scale $\lambda = 10^{-3}$.
Since normalized MD converges much more quickly, we only run for 25000 iterations.

As shown in Figure~\ref{fig:synthetic-data-normalized}, the normalized \algname algorithm converges to the generalized maximum margin much faster than \algname with fixed step sizes, which is consistent with the predictions made by Corollary~\ref{thm:final-convg-rate} and Theorem~\ref{thm:normalized-md-rate}.
Also, we see that the empirical loss decreases very rapidly for normalized \algname, which means it is able to find classifiers with very large margins in just a few iterations.

\begin{figure}
    \centering
    \begin{subfigure}[b]{0.45\textwidth}
        \centering
        \includegraphics[width=\textwidth]{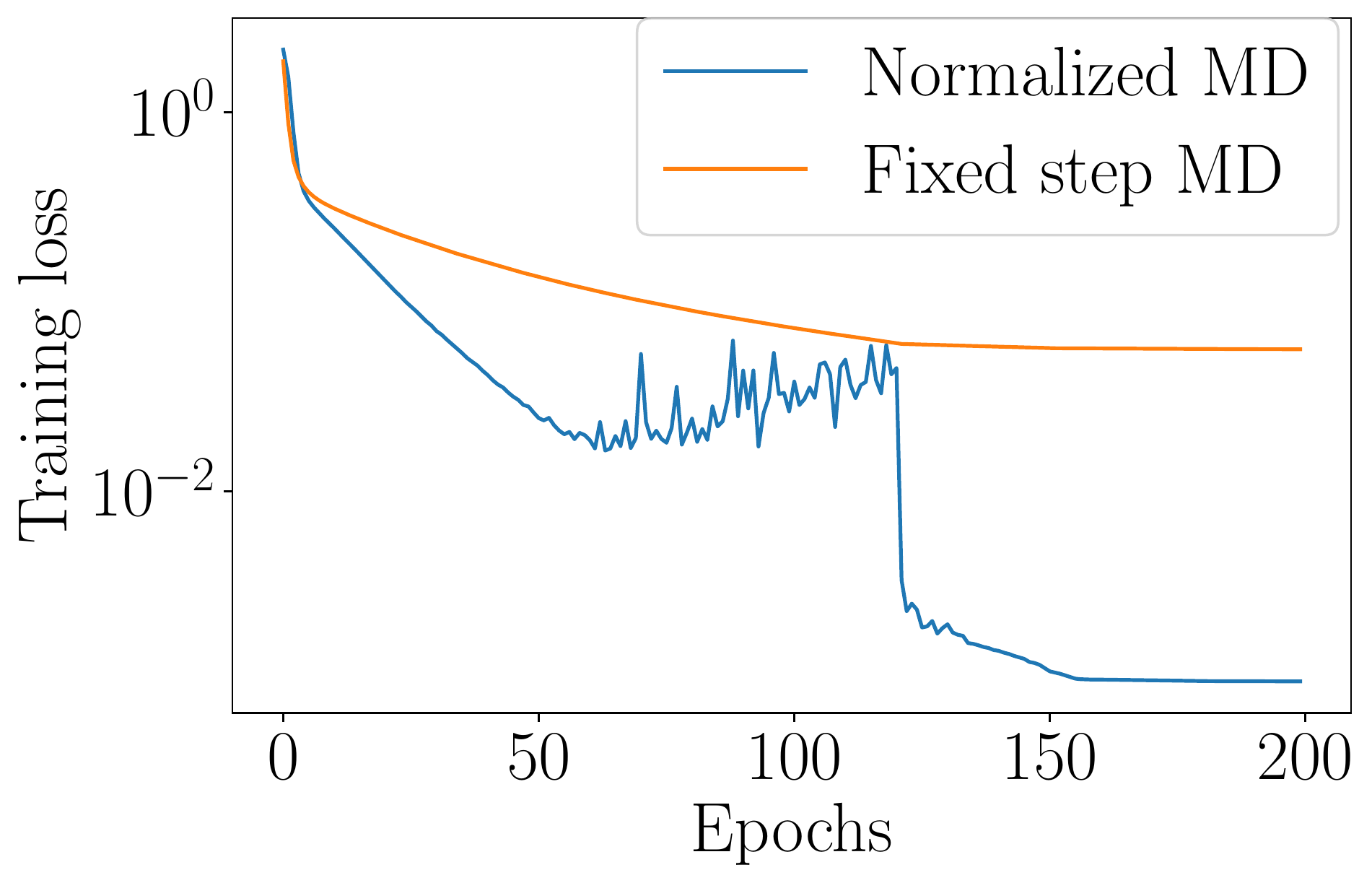}
    \end{subfigure}
    \begin{subfigure}[b]{0.45\textwidth}
        \includegraphics[width=\textwidth]{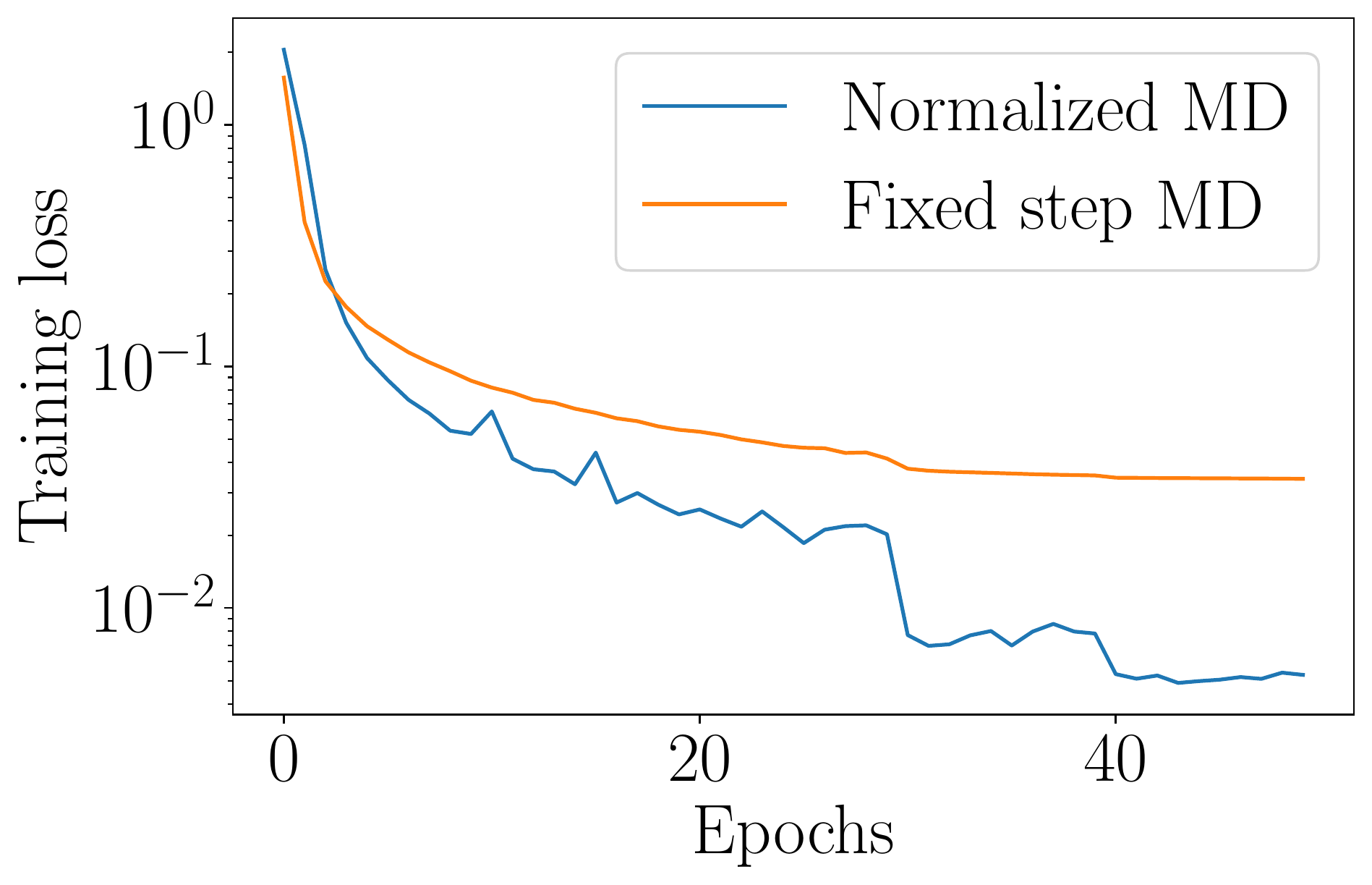}
    \end{subfigure}
    \caption{Training loss of \algname and normalized \algname on the MNIST dataset and $p = 1.5$. \\
    \textbf{(1)} The left plot involves a fully connected network.
    \textbf{(2)} The right plot involves a conv-net.
    }
    \label{fig:mnist-data}
    \vspace{-0.5em}
\end{figure}

\paragraph{Image classification on MNIST.}
For a more involved example, we apply \algname to the MNIST dataset~\citep{lecun1998gradient}.
For this task, we use two different architectures: 1) a 2-layer fully connected network with 300 hidden neurons and ReLU activation, and 2) a convolutional network with two convolution layers and batch-norm.
We train the fully connected network for 200 epochs and the convolution network for 50 epochs.
The detailed specification of this experiment can be found in Appendix~\ref{sec:experiment-detail}.

As shown in Figure~\ref{fig:mnist-data}, normalized \algname again achieves faster convergence.
Finally, we note that the lower loss achieved by normalized \algname translates to better generalization performance.
For the fully connected network, normalized \algname has a test accuracy of 98.37\%, whereas standard \algname has a test accuracy of 97.65\%.
And for the convolutional network, normalized \algname has a test accuracy of 99.19\%, whereas standard \algname has a test accuracy of 98.68\%.
These experiments demonstrate that normalized MD enjoys a faster convergence rate and its practical utility in performing better at test time.
 
\subsection{Deep neural networks} 
\label{sec:cifar}
Going beyond linear models, we now investigate \algname{} in deep-learning settings in its impact on the structure of the learned model and potential implications on the generalization performance.
As we had discussed in Section \ref{sec:main-result}, {\em the implementation of \algname{} is straightforward}; to illustrate the simplicity of implementation, we provide code snippets in
Appendix~\ref{sec:practicality}. 
Thus, we are able to effectively experiment with the behaviors of \algname in neural network training.
Specifically, we perform a set of experiments on the CIFAR-10 dataset~\citep{krizhevsky2009learning}.
We use the \textit{stochastic}\footnote{Instead of applying \algname update with the whole dataset, we use a randomly drawn mini-batch at each iteration.} version of \algname with different values of $p$.
We choose a variety of networks: \textsc{VGG}~\citep{simonyan2014very}, \textsc{ResNet}~\citep{he2016deep}, \textsc{MobileNet}~\citep{sandler2018mobilenetv2} and \textsc{RegNet}~\citep{radosavovic2020designing}.

\begin{figure}[!t]
    \centering
    \begin{subfigure}[b]{0.48\textwidth}
        \centering
        \includegraphics[width=\textwidth]{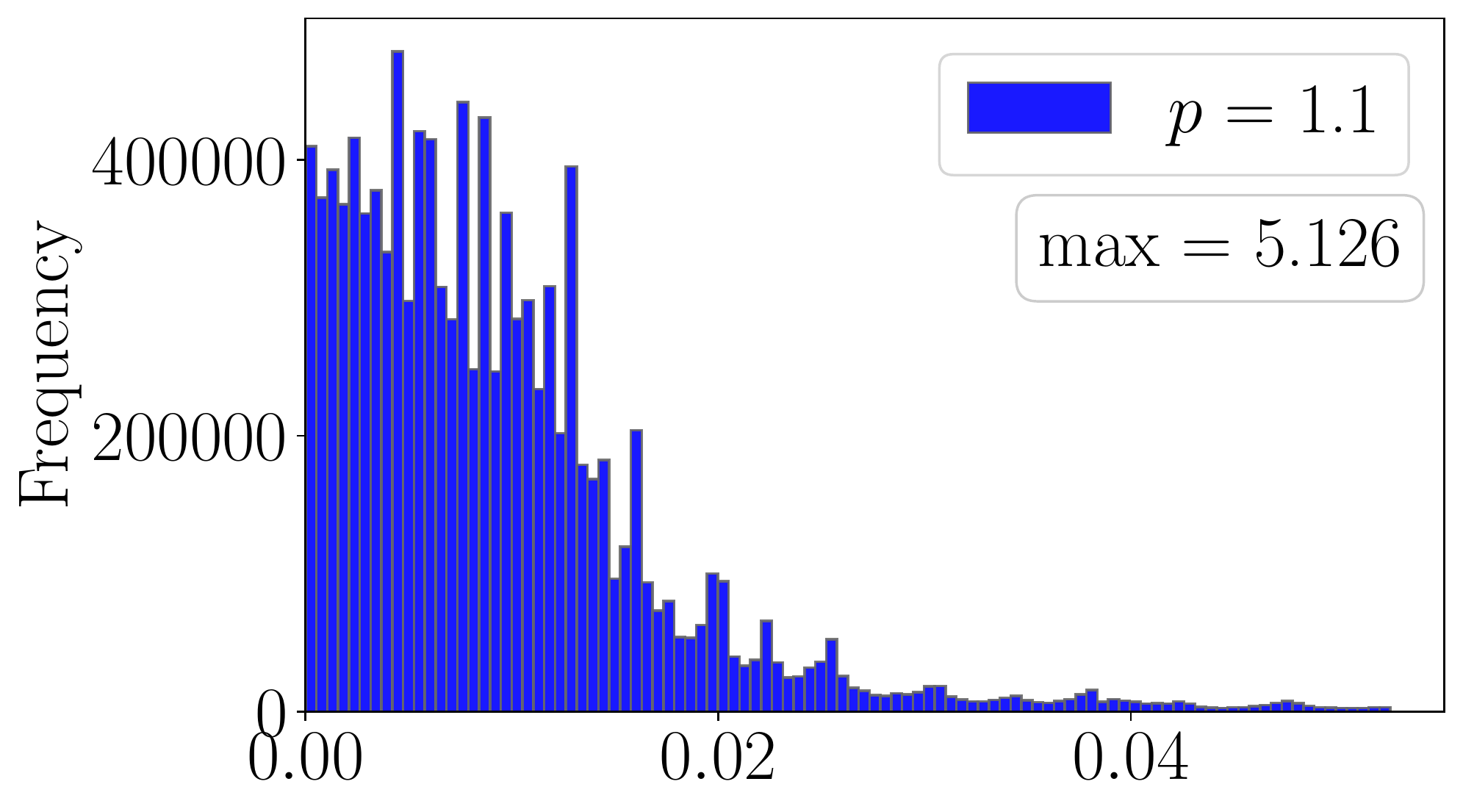}
    \end{subfigure}
    ~
    \begin{subfigure}[b]{0.45\textwidth}
        \includegraphics[width=\textwidth]{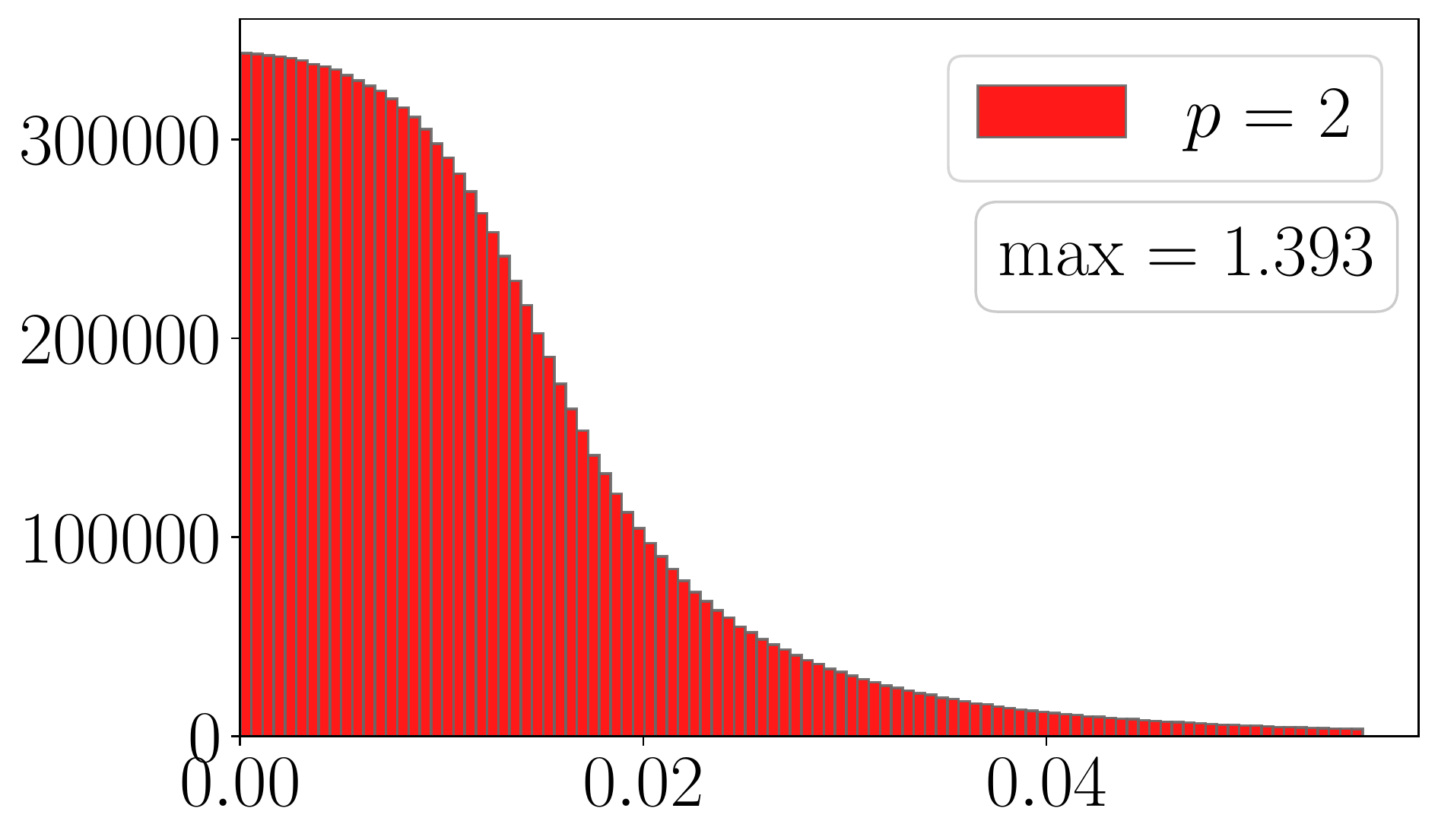}
    \end{subfigure}
    \begin{subfigure}[b]{0.48\textwidth}
        \includegraphics[width=\textwidth]{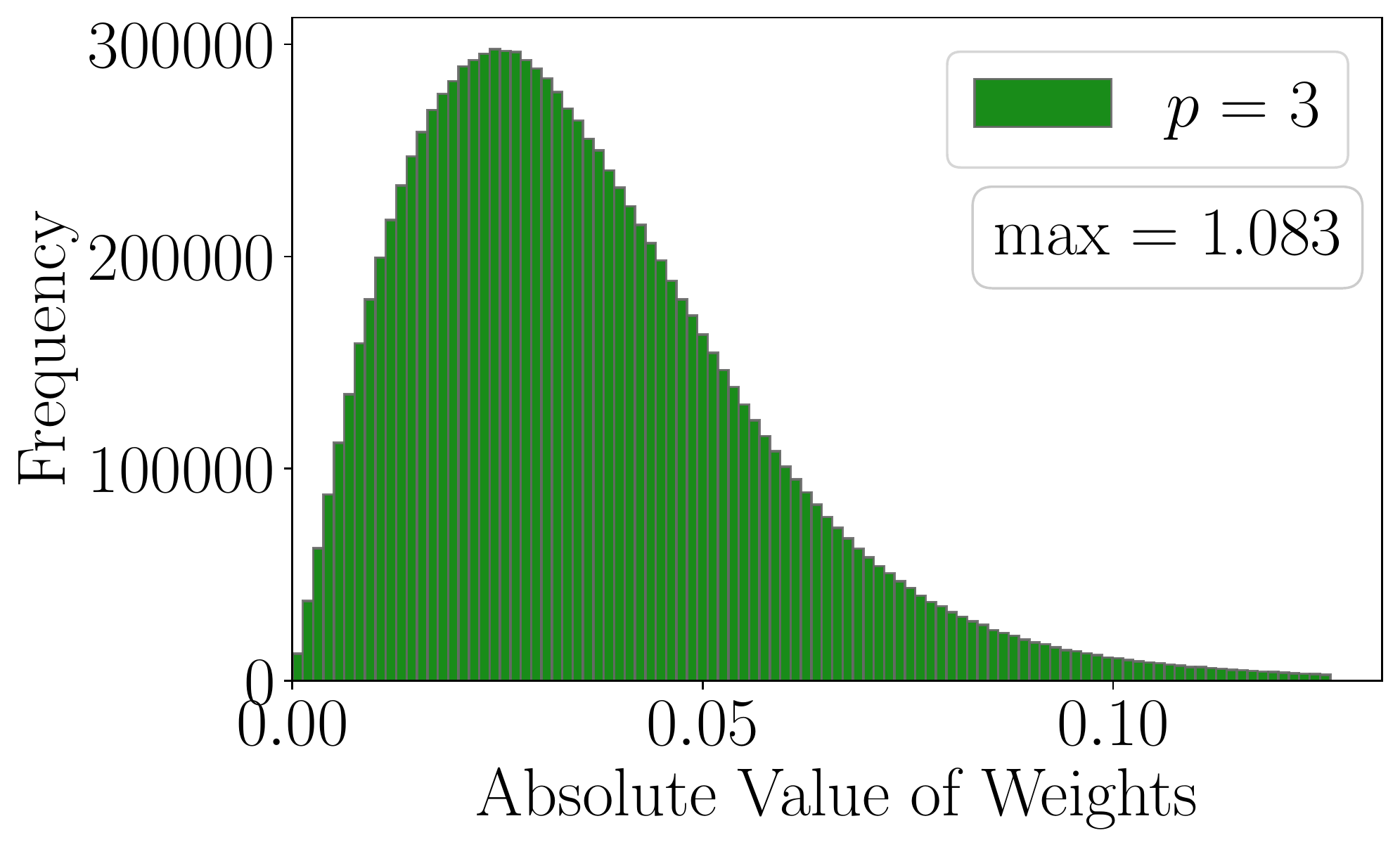}
    \end{subfigure}
    ~
    \begin{subfigure}[b]{0.45\textwidth}
        \includegraphics[width=\textwidth]{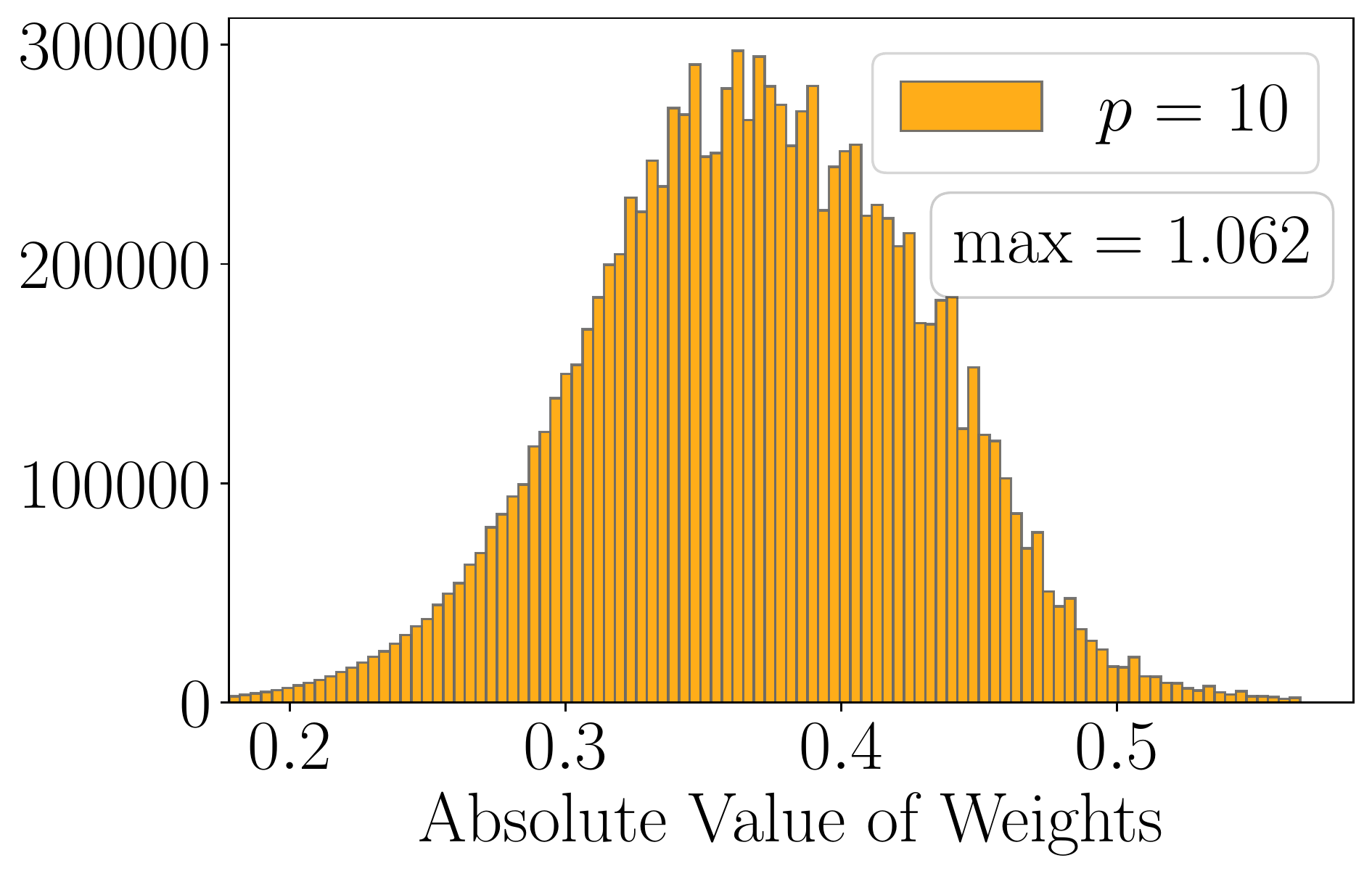}
    \end{subfigure}
    \caption{The histogram of weights in \textsc{ResNet-18} models trained with \algname (MD with potential $\psi(\cdot) = \frac{1}{p} \norm{\cdot}_p^p$) for the CIFAR-10 dataset. 
    For clarity, we cropped out the tails and each plot has 100 bins after cropping.
    The trends of these histograms reflect the implicit biases of \algname: the distribution of $p = 1.1$ has the most number of weights around zero, and the maximum weight is smallest when $p = 10$.
    }
    \label{fig:cifar10-hist}
\end{figure}

\paragraph{Implicit bias of \algname in deep neural networks.}
Since the notion of margin is not well-defined in this highly nonlinear setting, we instead visualize the impacts of \algname's implicit regularization on the histogram of weights (in absolute value) in the trained model.

In Figure~\ref{fig:cifar10-hist}, we report the weight histograms of \textsc{ResNet-18} models trained under \algname with $p = 1.1, 2, 3$ and $10$.
Depending on $p$, we observe interesting differences between the histograms.
Note that the deep network is most sparse when $p=1.1$ as most weights clustered around $0$.
Moreover, comparing the maximum weights, one can see that the case of $p = 10$ achieves the smallest value. 
Another observation is that the network becomes denser as $p$ increases; for instance, there are more weights away from zero for the cases $p = 3, 10$. 
These overall tendencies are also observed for other deep neural networks; see Appendix~\ref{sec:add-experiment-cifar-bias}.

\begin{table}[]
    \centering
    \setlength{\tabcolsep}{5pt}
    \begin{tabular}{l| c|c|c|c}
         \hline
         &  \hspace{1.25em} \textsc{VGG-11} \hspace{1.25em} & \hspace{0.75em} \textsc{ResNet-18} \hspace{0.75em} & \textsc{MobileNet-v2} & \textsc{RegNetX-200mf}  \\
         \hline \hline
         $p = 1.1$ & \pmval{88.19}{.17} & \pmval{92.63}{.12} & \pmval{91.16}{.09}& \pmval{91.21}{.18}  \\
         $p = 2$ (SGD) & \pmval{90.15}{.16} & \bpmval{93.90}{.14} & \pmval{91.97}{.10}& \pmval{92.75}{.13} \\
         $p = 3$ & \bpmval{90.85}{.15} & \bpmval{94.01}{.13} & \bpmval{93.23}{.26}& \bpmval{94.07}{.12} \\
         $p = 10$ & \pmval{88.78}{.37} & \pmval{93.55}{.21} & \pmval{92.60}{.22}& \pmval{92.97}{.16} \\
         \hline
    \end{tabular}
    \caption{CIFAR-10 test accuracy (\%) of \algname on various deep neural networks. For each deep network and value of $p$, the average $\pm$ \textcolor{gray}{std. dev.} over 5 trials are reported. And the best-performing value(s) of $p$ for each individual deep network is highlighted in \textbf{boldface}.}
    \label{tab:generalization-cifar10}
\end{table}

\paragraph{Generalization performance.}
We next investigate the generalization performance of networks trained with different $p$'s.  
To this end, we adopt a fixed selection of hyper-parameters and then train four deep neural network models to 100\% training accuracy with  \algname with different $p$'s.
As Table~\ref{tab:generalization-cifar10} shows, interestingly the networks trained by \algname with $p = 3$ consistently outperform other choices of $p$'s; notably, for \textsc{MobileNet} and \textsc{RegNet}, the case of $p=3$ outperforms the others by more than 1\%.
Somewhat counter-intuitively, the sparser network trained by \algname with $p = 1.1$ does not exhibit better generalization performance but rather shows worse generalization than other values of $p$.
Although these observations are not directly predicted by our theoretical results, we believe that understanding them would establish an important step toward understanding the generalization of over-parameterized models.
For additional experimental results, we refer the readers to Appendix~\ref{sec:add-experiment-cifar-generalization}.

\paragraph{\textsc{ImageNet} experiments.} We also perform a similar set of experiments on the {\sc ImageNet} dataset~\citep{russakovsky2015imagenet}, and these results can be found in Appendix \ref{sec:add-experiment-imagenet}.
 
\section{Conclusion and Future Work}
\label{sec:conclusion}
In this paper, we provided a unifying view of controlling implicit regularization using mirror descent. More specifically, we analyzed the implicit regularization of the mirror descent algorithm for linear classification problems with strictly monotone losses (e.g. logistics and exponential losses) and showed that for the general class of homogeneous potential functions, mirror descent converges in direction to the generalized regularized/max-margin direction.
This result, along with prior literature \cite{gunasekar2018characterizing} and \cite{azizan2018stochastic}, shows that mirror descent can induce implicit regularization with respect to a general geometry for both regression and classification problems.
Besides gradient descent, no other algorithm is known to exhibit implicit regularization in both settings.
Hence, this work completes the analysis of the first optimization algorithm that can control the implicit regularization for both general geometry and different classes of loss functions.
Finally, we ran several experiments to corroborate our theoretical findings and to illustrate the practical applications of mirror descent.
The experiments are conducted in various settings: (i) linear models in both low and high dimensions, (ii) real-world data with highly over-parameterized nonlinear models.

We conclude this paper with several important future directions:
\begin{list}{{\tiny $\blacksquare$}}{\leftmargin=1.5em}
\setlength{\itemsep}{-1pt}
    \item As discussed in Section \ref{sec:cifar}, different choices of $p$ for the \algname algorithm result in different generalization performance. It would be interesting to develop a theory that explains under what conditions can we show that certain MD potentials lead to better generalization performance.
    \item Another interesting line of work is to extend this analysis to more sophisticated optimization algorithms such as Adam or RMSprop. While we proved the convergence for a very simple adaptive step size strategy in Section~\ref{sec:normalized-asymp}, it would be interesting to see if such analysis can be strengthened to cover the algorithms most commonly used in practice.
\end{list}

\newpage

\acks{The authors thank Christos Thrampoulidis for insightful discussions, his valuable feedback, and his involvement in an earlier version of this work.
The authors thank former MIT UROP students Tiffany Huang and Haimoshri Das for contributing to the experiments in Section~\ref{sec:cifar}.
The authors acknowledge the MIT SuperCloud and Lincoln Laboratory Supercomputing Center for providing computing resources that have contributed to the results reported within this paper.
This work was supported in part by MathWorks, the MIT-IBM Watson AI Lab, and Amazon. }

\vskip 0.2in
\bibliography{reference}

\begin{thebibliography}{57}
\providecommand{\natexlab}[1]{#1}
\providecommand{\url}[1]{\texttt{#1}}
\expandafter\ifx\csname urlstyle\endcsname\relax
  \providecommand{\doi}[1]{doi: #1}\else
  \providecommand{\doi}{doi: \begingroup \urlstyle{rm}\Url}\fi

\bibitem[Allen-Zhu et~al.(2019)Allen-Zhu, Li, and Song]{allen2019convergence}
Zeyuan Allen-Zhu, Yuanzhi Li, and Zhao Song.
\newblock A convergence theory for deep learning via over-parameterization.
\newblock In \emph{International Conference on Machine Learning}, pages
  242--252. PMLR, 2019.

\bibitem[Azizan and Hassibi(2019{\natexlab{a}})]{azizan2018stochastic}
Navid Azizan and Babak Hassibi.
\newblock Stochastic gradient/mirror descent: Minimax optimality and implicit
  regularization.
\newblock In \emph{International Conference on Learning Representations},
  2019{\natexlab{a}}.

\bibitem[Azizan and Hassibi(2019{\natexlab{b}})]{azizan2019stochastic}
Navid Azizan and Babak Hassibi.
\newblock A stochastic interpretation of stochastic mirror descent:
  Risk-sensitive optimality.
\newblock In \emph{2019 IEEE 58th Conference on Decision and Control}, pages
  3960--3965. IEEE, 2019{\natexlab{b}}.

\bibitem[Azizan et~al.(2021{\natexlab{a}})Azizan, Lale, and
  Hassibi]{azizan2021explicit}
Navid Azizan, Sahin Lale, and Babak Hassibi.
\newblock Explicit regularization via regularizer mirror descent.
\newblock In \emph{Over-parameterization Workshop at the International
  Conference on Machine Learning}, 2021{\natexlab{a}}.

\bibitem[Azizan et~al.(2021{\natexlab{b}})Azizan, Lale, and
  Hassibi]{azizan2021stochastic}
Navid Azizan, Sahin Lale, and Babak Hassibi.
\newblock Stochastic mirror descent on overparameterized nonlinear models.
\newblock \emph{IEEE Transactions on Neural Networks and Learning Systems},
  2021{\natexlab{b}}.

\bibitem[Bartlett et~al.(2017)Bartlett, Foster, and
  Telgarsky]{bartlett2017spectrally}
Peter~L Bartlett, Dylan~J Foster, and Matus~J Telgarsky.
\newblock Spectrally-normalized margin bounds for neural networks.
\newblock \emph{Advances in neural information processing systems}, 30, 2017.

\bibitem[Bauschke et~al.(2017)Bauschke, Bolte, and
  Teboulle]{bauschke2017descent}
Heinz~H Bauschke, J{\'e}r{\^o}me Bolte, and Marc Teboulle.
\newblock A descent lemma beyond lipschitz gradient continuity: first-order
  methods revisited and applications.
\newblock \emph{Mathematics of Operations Research}, 42\penalty0 (2):\penalty0
  330--348, 2017.

\bibitem[Belkin(2021)]{belkin2021fit}
Mikhail Belkin.
\newblock Fit without fear: remarkable mathematical phenomena of deep learning
  through the prism of interpolation.
\newblock \emph{Acta Numerica}, 30:\penalty0 203--248, 2021.

\bibitem[Belkin et~al.(2019)Belkin, Hsu, Ma, and Mandal]{belkin2019reconciling}
Mikhail Belkin, Daniel Hsu, Siyuan Ma, and Soumik Mandal.
\newblock Reconciling modern machine-learning practice and the classical
  bias--variance trade-off.
\newblock \emph{Proceedings of the National Academy of Sciences}, 116\penalty0
  (32):\penalty0 15849--15854, 2019.

\bibitem[Bregman(1967)]{bregman1967relaxation}
Lev~M. Bregman.
\newblock The relaxation method of finding the common point of convex sets and
  its application to the solution of problems in convex programming.
\newblock \emph{USSR Computational Mathematics and Mathematical Physics},
  7\penalty0 (3):\penalty0 200--217, 1967.

\bibitem[Brown et~al.(2020)Brown, Mann, Ryder, Subbiah, Kaplan, Dhariwal,
  Neelakantan, Shyam, Sastry, Askell, et~al.]{brown2020language}
Tom Brown, Benjamin Mann, Nick Ryder, Melanie Subbiah, Jared~D Kaplan, Prafulla
  Dhariwal, Arvind Neelakantan, Pranav Shyam, Girish Sastry, Amanda Askell,
  et~al.
\newblock Language models are few-shot learners.
\newblock \emph{Advances in neural information processing systems},
  33:\penalty0 1877--1901, 2020.

\bibitem[Bubeck(2015)]{bubeck2015convex}
Sébastien Bubeck.
\newblock Convex optimization: algorithms and complexity.
\newblock \emph{Foundations and Trends® in Machine Learning}, 8\penalty0
  (3-4):\penalty0 231--357, 2015.

\bibitem[Donhauser et~al.(2022)Donhauser, Ruggeri, Stojanovic, and
  Yang]{donhauser2022fast}
Konstantin Donhauser, Nicolo Ruggeri, Stefan Stojanovic, and Fanny Yang.
\newblock Fast rates for noisy interpolation require rethinking the effects of
  inductive bias.
\newblock \emph{arXiv preprint arXiv:2203.03597}, 2022.

\bibitem[Dosovitskiy et~al.(2020)Dosovitskiy, Beyer, Kolesnikov, Weissenborn,
  Zhai, Unterthiner, Dehghani, Minderer, Heigold, Gelly,
  et~al.]{dosovitskiy2020image}
Alexey Dosovitskiy, Lucas Beyer, Alexander Kolesnikov, Dirk Weissenborn,
  Xiaohua Zhai, Thomas Unterthiner, Mostafa Dehghani, Matthias Minderer, Georg
  Heigold, Sylvain Gelly, et~al.
\newblock An image is worth 16x16 words: Transformers for image recognition at
  scale.
\newblock \emph{arXiv preprint arXiv:2010.11929}, 2020.

\bibitem[Du et~al.(2018)Du, Zhai, Poczos, and Singh]{du2018gradient}
Simon~S Du, Xiyu Zhai, Barnabas Poczos, and Aarti Singh.
\newblock Gradient descent provably optimizes over-parameterized neural
  networks.
\newblock \emph{arXiv preprint arXiv:1810.02054}, 2018.

\bibitem[Engl et~al.(1996)Engl, Hanke, and Neubauer]{engl1996regularization}
Heinz~Werner Engl, Martin Hanke, and Andreas Neubauer.
\newblock \emph{Regularization of inverse problems}, volume 375.
\newblock Springer Science \& Business Media, 1996.

\bibitem[Gentile(2003)]{gentile2003robustness}
Claudio Gentile.
\newblock The robustness of the p-norm algorithms.
\newblock \emph{Machine Learning}, 53\penalty0 (3):\penalty0 265--299, 2003.

\bibitem[Grove et~al.(2001)Grove, Littlestone, and
  Schuurmans]{grove2001general}
Adam~J Grove, Nick Littlestone, and Dale Schuurmans.
\newblock General convergence results for linear discriminant updates.
\newblock \emph{Machine Learning}, 43\penalty0 (3):\penalty0 173--210, 2001.

\bibitem[Gunasekar et~al.(2018)Gunasekar, Lee, Soudry, and
  Srebro]{gunasekar2018characterizing}
Suriya Gunasekar, Jason Lee, Daniel Soudry, and Nathan Srebro.
\newblock Characterizing implicit bias in terms of optimization geometry.
\newblock In \emph{International Conference on Machine Learning}, pages
  1832--1841. PMLR, 2018.

\bibitem[He et~al.(2016)He, Zhang, Ren, and Sun]{he2016deep}
Kaiming He, Xiangyu Zhang, Shaoqing Ren, and Jian Sun.
\newblock Deep residual learning for image recognition.
\newblock In \emph{Proceedings of the IEEE conference on computer vision and
  pattern recognition}, pages 770--778, 2016.

\bibitem[Hui and Belkin(2020)]{hui2020evaluation}
Like Hui and Mikhail Belkin.
\newblock Evaluation of neural architectures trained with square loss vs
  cross-entropy in classification tasks.
\newblock \emph{arXiv preprint arXiv:2006.07322}, 2020.

\bibitem[Jacot et~al.(2018)Jacot, Gabriel, and Hongler]{jacot2018neural}
Arthur Jacot, Franck Gabriel, and Cl{\'e}ment Hongler.
\newblock Neural tangent kernel: Convergence and generalization in neural
  networks.
\newblock \emph{Advances in neural information processing systems}, 31, 2018.

\bibitem[Ji and Telgarsky(2019{\natexlab{a}})]{ji2019implicit}
Ziwei Ji and Matus Telgarsky.
\newblock The implicit bias of gradient descent on nonseparable data.
\newblock In \emph{Conference on Learning Theory}, pages 1772--1798. PMLR,
  2019{\natexlab{a}}.

\bibitem[Ji and Telgarsky(2019{\natexlab{b}})]{ji2019polylogarithmic}
Ziwei Ji and Matus Telgarsky.
\newblock Polylogarithmic width suffices for gradient descent to achieve
  arbitrarily small test error with shallow relu networks.
\newblock \emph{arXiv preprint arXiv:1909.12292}, 2019{\natexlab{b}}.

\bibitem[Ji and Telgarsky(2021)]{ji2021characterizing}
Ziwei Ji and Matus Telgarsky.
\newblock Characterizing the implicit bias via a primal-dual analysis.
\newblock In \emph{Algorithmic Learning Theory}, pages 772--804. PMLR, 2021.

\bibitem[Ji et~al.(2020)Ji, Dud{\'\i}k, Schapire, and
  Telgarsky]{ji2020gradient}
Ziwei Ji, Miroslav Dud{\'\i}k, Robert~E Schapire, and Matus Telgarsky.
\newblock Gradient descent follows the regularization path for general losses.
\newblock In \emph{Conference on Learning Theory}, pages 2109--2136. PMLR,
  2020.

\bibitem[Ji et~al.(2021)Ji, Srebro, and Telgarsky]{ji2021fast}
Ziwei Ji, Nathan Srebro, and Matus Telgarsky.
\newblock Fast margin maximization via dual acceleration.
\newblock In \emph{International Conference on Machine Learning}, pages
  4860--4869. PMLR, 2021.

\bibitem[Juditsky et~al.(2011)Juditsky, Nemirovski, et~al.]{juditsky2011first}
Anatoli Juditsky, Arkadi Nemirovski, et~al.
\newblock First order methods for nonsmooth convex large-scale optimization, i:
  general purpose methods.
\newblock \emph{Optimization for Machine Learning}, 30\penalty0 (9):\penalty0
  121--148, 2011.

\bibitem[Krizhevsky et~al.(2009)Krizhevsky, Hinton,
  et~al.]{krizhevsky2009learning}
Alex Krizhevsky, Geoffrey Hinton, et~al.
\newblock Learning multiple layers of features from tiny images.
\newblock 2009.

\bibitem[LeCun et~al.(1998)LeCun, Bottou, Bengio, and
  Haffner]{lecun1998gradient}
Yann LeCun, L{\'e}on Bottou, Yoshua Bengio, and Patrick Haffner.
\newblock Gradient-based learning applied to document recognition.
\newblock \emph{Proceedings of the IEEE}, 86\penalty0 (11):\penalty0
  2278--2324, 1998.

\bibitem[Li et~al.(2021)Li, Ju, Fang, and Zhao]{li2021implicit}
Yan Li, Caleb Ju, Ethan~X Fang, and Tuo Zhao.
\newblock Implicit regularization of bregman proximal point algorithm and
  mirror descent on separable data.
\newblock \emph{arXiv preprint arXiv:2108.06808}, 2021.

\bibitem[Liang and Rakhlin(2020)]{liang2020just}
Tengyuan Liang and Alexander Rakhlin.
\newblock Just interpolate: Kernel “ridgeless” regression can generalize.
\newblock 2020.

\bibitem[Lu et~al.(2018)Lu, Freund, and Nesterov]{lu2018relatively}
Haihao Lu, Robert~M Freund, and Yurii Nesterov.
\newblock Relatively smooth convex optimization by first-order methods, and
  applications.
\newblock \emph{SIAM Journal on Optimization}, 28\penalty0 (1):\penalty0
  333--354, 2018.

\bibitem[Lyu and Li(2019)]{lyu2019gradient}
Kaifeng Lyu and Jian Li.
\newblock Gradient descent maximizes the margin of homogeneous neural networks.
\newblock In \emph{International Conference on Learning Representations}, 2019.

\bibitem[Mianjy et~al.(2018)Mianjy, Arora, and Vidal]{mianjy2018implicit}
Poorya Mianjy, Raman Arora, and Rene Vidal.
\newblock On the implicit bias of dropout.
\newblock In \emph{International conference on machine learning}, pages
  3540--3548. PMLR, 2018.

\bibitem[Muthukumar et~al.(2021)Muthukumar, Narang, Subramanian, Belkin, Hsu,
  and Sahai]{muthukumar2021classification}
Vidya Muthukumar, Adhyyan Narang, Vignesh Subramanian, Mikhail Belkin, Daniel
  Hsu, and Anant Sahai.
\newblock Classification vs regression in overparameterized regimes: Does the
  loss function matter?
\newblock \emph{The Journal of Machine Learning Research}, 22\penalty0
  (1):\penalty0 10104--10172, 2021.

\bibitem[Nacson et~al.(2019)Nacson, Lee, Gunasekar, Savarese, Srebro, and
  Soudry]{nacson2019convergence}
Mor~Shpigel Nacson, Jason Lee, Suriya Gunasekar, Pedro Henrique~Pamplona
  Savarese, Nathan Srebro, and Daniel Soudry.
\newblock Convergence of gradient descent on separable data.
\newblock In \emph{The 22nd International Conference on Artificial Intelligence
  and Statistics}, pages 3420--3428. PMLR, 2019.

\bibitem[Nakkiran et~al.(2021)Nakkiran, Kaplun, Bansal, Yang, Barak, and
  Sutskever]{nakkiran2021deep}
Preetum Nakkiran, Gal Kaplun, Yamini Bansal, Tristan Yang, Boaz Barak, and Ilya
  Sutskever.
\newblock Deep double descent: Where bigger models and more data hurt.
\newblock \emph{Journal of Statistical Mechanics: Theory and Experiment},
  2021\penalty0 (12):\penalty0 124003, 2021.

\bibitem[Nemirovskij and Yudin(1983)]{nemirovskij1983problem}
Arkadij~Semenovi{\v{c}} Nemirovskij and David~Borisovich Yudin.
\newblock Problem complexity and method efficiency in optimization.
\newblock 1983.

\bibitem[Neyshabur et~al.(2014)Neyshabur, Tomioka, and
  Srebro]{neyshabur2014search}
Behnam Neyshabur, Ryota Tomioka, and Nathan Srebro.
\newblock In search of the real inductive bias: On the role of implicit
  regularization in deep learning.
\newblock \emph{arXiv preprint arXiv:1412.6614}, 2014.

\bibitem[Radosavovic et~al.(2020)Radosavovic, Kosaraju, Girshick, He, and
  Doll{\'a}r]{radosavovic2020designing}
Ilija Radosavovic, Raj~Prateek Kosaraju, Ross Girshick, Kaiming He, and Piotr
  Doll{\'a}r.
\newblock Designing network design spaces.
\newblock In \emph{Proceedings of the IEEE/CVF Conference on Computer Vision
  and Pattern Recognition}, pages 10428--10436, 2020.

\bibitem[Ramesh et~al.(2021)Ramesh, Pavlov, Goh, Gray, Voss, Radford, Chen, and
  Sutskever]{ramesh2021zero}
Aditya Ramesh, Mikhail Pavlov, Gabriel Goh, Scott Gray, Chelsea Voss, Alec
  Radford, Mark Chen, and Ilya Sutskever.
\newblock Zero-shot text-to-image generation.
\newblock In \emph{International Conference on Machine Learning}, pages
  8821--8831. PMLR, 2021.

\bibitem[Rockafellar(1970)]{rockafellar1970convex}
R~Tyrrell Rockafellar.
\newblock \emph{Convex analysis}.
\newblock Number~28. Princeton University Press, 1970.

\bibitem[Rosset et~al.(2004)Rosset, Zhu, and Hastie]{rosset2004boosting}
Saharon Rosset, Ji~Zhu, and Trevor Hastie.
\newblock Boosting as a regularized path to a maximum margin classifier.
\newblock \emph{The Journal of Machine Learning Research}, 5:\penalty0
  941--973, 2004.

\bibitem[Russakovsky et~al.(2015)Russakovsky, Deng, Su, Krause, Satheesh, Ma,
  Huang, Karpathy, Khosla, Bernstein, et~al.]{russakovsky2015imagenet}
Olga Russakovsky, Jia Deng, Hao Su, Jonathan Krause, Sanjeev Satheesh, Sean Ma,
  Zhiheng Huang, Andrej Karpathy, Aditya Khosla, Michael Bernstein, et~al.
\newblock Imagenet large scale visual recognition challenge.
\newblock \emph{International journal of computer vision}, 115\penalty0
  (3):\penalty0 211--252, 2015.

\bibitem[Sandler et~al.(2018)Sandler, Howard, Zhu, Zhmoginov, and
  Chen]{sandler2018mobilenetv2}
Mark Sandler, Andrew Howard, Menglong Zhu, Andrey Zhmoginov, and Liang-Chieh
  Chen.
\newblock Mobilenetv2: Inverted residuals and linear bottlenecks.
\newblock In \emph{Proceedings of the IEEE conference on computer vision and
  pattern recognition}, pages 4510--4520, 2018.

\bibitem[Schrittwieser et~al.(2020)Schrittwieser, Antonoglou, Hubert, Simonyan,
  Sifre, Schmitt, Guez, Lockhart, Hassabis, Graepel,
  et~al.]{schrittwieser2020mastering}
Julian Schrittwieser, Ioannis Antonoglou, Thomas Hubert, Karen Simonyan,
  Laurent Sifre, Simon Schmitt, Arthur Guez, Edward Lockhart, Demis Hassabis,
  Thore Graepel, et~al.
\newblock Mastering atari, go, chess and shogi by planning with a learned
  model.
\newblock \emph{Nature}, 588\penalty0 (7839):\penalty0 604--609, 2020.

\bibitem[Simonyan and Zisserman(2014)]{simonyan2014very}
Karen Simonyan and Andrew Zisserman.
\newblock Very deep convolutional networks for large-scale image recognition.
\newblock \emph{arXiv preprint arXiv:1409.1556}, 2014.

\bibitem[Soudry et~al.(2018)Soudry, Hoffer, Nacson, Gunasekar, and
  Srebro]{soudry2018implicit}
Daniel Soudry, Elad Hoffer, Mor~Shpigel Nacson, Suriya Gunasekar, and Nathan
  Srebro.
\newblock The implicit bias of gradient descent on separable data.
\newblock \emph{The Journal of Machine Learning Research}, 19\penalty0
  (1):\penalty0 2822--2878, 2018.

\bibitem[Sun et~al.(2022)Sun, Ahn, Thrampoulidis, and Azizan]{sun2022mirror}
Haoyuan Sun, Kwangjun Ahn, Christos Thrampoulidis, and Navid Azizan.
\newblock Mirror descent maximizes generalized margin and can be implemented
  efficiently.
\newblock In \emph{Advances in Neural Information Processing Systems},
  volume~35, pages 31089--31101, 2022.

\bibitem[Telgarsky(2013)]{telgarsky2013margins}
Matus Telgarsky.
\newblock Margins, shrinkage, and boosting.
\newblock In \emph{International Conference on Machine Learning}, pages
  307--315. PMLR, 2013.

\bibitem[Vardi et~al.(2022)Vardi, Shamir, and Srebro]{vardi2022margin}
Gal Vardi, Ohad Shamir, and Nati Srebro.
\newblock On margin maximization in linear and relu networks.
\newblock \emph{Advances in Neural Information Processing Systems},
  35:\penalty0 37024--37036, 2022.

\bibitem[Wang et~al.(2021)Wang, Meng, Chen, and Liu]{wang2021implicit}
Bohan Wang, Qi~Meng, Wei Chen, and Tie-Yan Liu.
\newblock The implicit bias for adaptive optimization algorithms on homogeneous
  neural networks.
\newblock In \emph{International Conference on Machine Learning}, pages
  10849--10858. PMLR, 2021.

\bibitem[Wei et~al.(2020)Wei, Kakade, and Ma]{wei2020implicit}
Colin Wei, Sham Kakade, and Tengyu Ma.
\newblock The implicit and explicit regularization effects of dropout.
\newblock In \emph{International conference on machine learning}, pages
  10181--10192. PMLR, 2020.

\bibitem[Wilson et~al.(2017)Wilson, Roelofs, Stern, Srebro, and
  Recht]{wilson2017marginal}
Ashia~C Wilson, Rebecca Roelofs, Mitchell Stern, Nati Srebro, and Benjamin
  Recht.
\newblock The marginal value of adaptive gradient methods in machine learning.
\newblock \emph{Advances in neural information processing systems}, 30, 2017.

\bibitem[Zhang et~al.(2021)Zhang, Bengio, Hardt, Recht, and
  Vinyals]{zhang2021understanding}
Chiyuan Zhang, Samy Bengio, Moritz Hardt, Benjamin Recht, and Oriol Vinyals.
\newblock Understanding deep learning (still) requires rethinking
  generalization.
\newblock \emph{Communications of the ACM}, 64\penalty0 (3):\penalty0 107--115,
  2021.

\bibitem[Zou et~al.(2020)Zou, Cao, Zhou, and Gu]{zou2020gradient}
Difan Zou, Yuan Cao, Dongruo Zhou, and Quanquan Gu.
\newblock Gradient descent optimizes over-parameterized deep relu networks.
\newblock \emph{Machine learning}, 109:\penalty0 467--492, 2020.

\end{thebibliography}

\newpage

\appendix

\section{Extended Version of Table~\ref{table:main}}
\label{sec:table-full}
\begin{table}[h]
\centering
\renewcommand{\arraystretch}{1.25}
\setlength\tabcolsep{8pt}

\begin{tabular}{ |c |c|c| }
\hline  & Regression (e.g. square loss) & Classification (e.g. logistic loss) \\
\hline\hline  
\multirow{4}{*}{\begin{tabular}{c}Gradient Descent\\(i.e. $\mir(\cdot) = \frac{1}{2}\norm{\cdot}_2^2$)\end{tabular}}  & $\argmin_w \norm{w-w_0}_2$ & $\argmin_w \frac{1}{2}\norm{w}_2^2$  \\
 & $\st~~w \text{ fits all data} $ & $\st~~w \text{ classifies all data} $    \\
 & 
\multirow{2}{*}{\citep[Thm 6.1]{engl1996regularization}} & \cite{soudry2018implicit}   \\
 & & \cite{ji2019implicit} \\
   \hline
 \multirow{4}{*}{\begin{tabular}{c}Mirror Descent\\ \end{tabular}}  & $\argmin_w D_\psi(w, w_0)$ & $\argmin_w \psi(w)$  \\
 & $\st~~w \text{ fits all data} $ & $\st~~w \text{ classifies all data} $    \\
 & 
\cite{gunasekar2018characterizing} & \multirow{2}{*}{ \large \high{This work}}  \\
& \cite{azizan2018stochastic} & \\
  \hline 
\end{tabular}

\caption{{\bf Conceptual summary of our results.} In the case of well-specified linear regression, there is a complete theory of implicit regularization with respect to a general geometry; it is shown that mirror descent converges to the interpolating solution that is closest to the initialization. 
However, such characterization in the separable linear classification setting is missing in the literature.
In this paper, we prove the implicit regularization of mirror descent with a class of homogeneous potentials and extend the result of gradient descent beyond the $\ell_2$-norm.
Compare to Table~\ref{table:main}, here we consider an arbitrary initialization $w_0$ in the regression setting.}
\label{table:main-full}
\end{table} 
\section{Proofs for Section~\ref{sec:priliminaries}}
\label{sec:proof-basic-lemmas}
\subsection{Proof of Lemma~\ref{thm:key-iden}}
The overall proof follows \citep{azizan2021stochastic}.
We make several modifications to make it better applicable to the classification setting.
Note that in the classification setting, there is no $w \in \RR^d$ that satisfies $L(w) = 0$.

\begin{proof}
We start with the definition of Bregman divergence:
\[\brg{w}{w_{t+1}}  = \psi(w) - \psi(w_{t+1}) - \inp{\nabla\psi(w_{t+1})}{w-w_{t+1}}.\]
Now, we plugin the \ref{equ:md} update rule $\nabla\psi(w_{t+1}) = \nabla\psi(w_t) - \eta\nabla L(w_t)$:
\[\brg{w}{w_{t+1}} = \psi(w) - \psi(w_{t+1}) - \inp{\nabla\psi(w_{t})}{w - w_{t+1}} + \eta \inp{\nabla L(w_t)}{w - w_{t+1}}. \]
We again invoke the definition of Bregman divergence so that:
\begin{align*}
    \brg{w}{w_{t+1}}  &= \psi(w) - \psi(w_{t+1}) - \inp{\nabla\psi(w_{t+1})}{w-w_{t+1}}, \\
    \brg{w_{t+1}}{w_t}  &= \psi(w_{t+1}) - \psi(w_t) - \inp{\nabla\psi(w_t)}{w_{t+1}-w_t}.
\end{align*}

It follows that
\begin{equation}
\label{equ:proof-key-iden-1}
    \begin{aligned}
        \brg{w}{w_{t+1}} 
        &= \psi(w) - \psi(w_{t}) - \inp{\nabla\psi(w_{t})}{w - w_{t}} \\
        &\hspace{7.25em}+ \inp{\nabla\psi(w_{t})}{w_{t+1} - w_t} - \psi(w_{t+1}) + \psi(w_t) \\
        &\hspace{7.25em}+ \eta \inp{\nabla L(w_t)}{w - w_{t+1}} \\
        &= \brg{w}{w_t} - \brg{w_{t+1}}{w_t} + \eta \inp{\nabla L(w_t)}{w - w_{t+1}}
    \end{aligned}
\end{equation}

Next, we consider the term $\inp{\nabla L(w_t)}{w - w_{t-1}}$:
\begin{equation}
    \label{equ:proof-key-iden-2}
    \begin{aligned}
        \inp{\nabla L(w_t)}{w - w_{t-1}}
        &= \inp{\nabla L(w_t)}{w - w_t} - \inp{\nabla L(w_t)}{w_{t+1} - w_t} \\
        &\hspace{8em}+ L(w_{t+1}) - L(w_t) - L(w_{t+1}) + L(w_t) \\
        &= \inp{\nabla L(w_t)}{w - w_t} + D_L(w_{t+1}, w_t) - L(w_{t+1}) + L(w_t),
    \end{aligned}
\end{equation}
where the last step holds because $L$ is convex.

Combining \eqref{equ:proof-key-iden-1} and \eqref{equ:proof-key-iden-2} yields:
\begin{align*}
    &\brg{w}{w_t} \\
    ={}& \brg{w}{w_{t+1}} + \brg{w_{t+1}}{w_t} -\eta\big( \inp{\nabla L(w_t)}{w - w_t} + D_L(w_{t+1}, w_t) - L(w_{t+1}) + L(w_t)\big)\\
    ={}& \brg{w}{w_{t+1}} + \breg{\psi-\eta L}{w_{t+1}}{w_t} - \eta  \inp{\nabla L(w_t)}{w - w_t} + \eta L(w_{t+1}) - \eta L(w_t),
\end{align*}
where in the last step, we note that Bregman divergence is additive in its potential.
This gives us \eqref{equ:key-iden-2}.
And for \eqref{equ:key-iden-1}, we use the definition of Bregman divergence again, i.e. $D_L(w, w_t) = L(w) - L(w_t) - \inp{\nabla L (w_t)}{w - w_t}$:
\begin{align*}
    \brg{w}{w_t}
    &= \brg{w}{w_{t+1}} + \breg{\psi-\eta L}{w_{t+1}}{w_t} - \eta  \inp{\nabla L(w_t)}{w - w_t} \\
    &\hspace{8em}  + \eta L(w) - \eta L(w_t)  + \eta L(w_{t+1}) - \eta L(w) \\
    &= \brg{w}{w_{t+1}} + \breg{\psi-\eta L}{w_{t+1}}{w_t} + \eta  D_L(w, w_t) - \eta L(w)  + \eta L(w_{t+1})\,.
\end{align*}
This completes the proof of Lemma~\ref{thm:key-iden}.
\end{proof}

\subsection{Proof of Lemma~\ref{thm:decreasing-lose}}
\begin{proof}
This is an application of Lemma~\ref{thm:key-iden} with $w = w_t$:
\begin{align*}
    0 &= D_\psi(w_t, w_{t+1}) + D_{\psi - \eta L}(w_{t+1}, w_t) - \eta L(w_t) + \eta L(w_{t+1}) \\
    \implies \eta L(w_t) &= D_\psi(w_t, w_{t+1}) + D_{\psi - \eta L}(w_{t+1}, w_t) + \eta L(w_{t+1}) \ge \eta L(w_{t+1})
\end{align*}
where we used the fact that Bregman divergence with a convex potential function is non-negative.
\end{proof}

\subsection{Proof of Lemma~\ref{thm:to-infinity}}
\begin{proof}
By Lemma~\ref{thm:decreasing-lose}, $L(w_t)$ is decreasing with respect to $t$, therefore the limit exists.
Suppose the contrary that $\lim_{t\to\infty} L(w_t) = \varepsilon > 0$.
Since the data is separable, we can pick $w$ so that $L(w) \le \varepsilon / 2$.
Applying Lemma~\ref{thm:key-iden}, the following holds for all $t$:
\begin{align*}
    D_\psi(w, w_{t+1}) 
    &= D_\psi(w, w_{t}) - D_{\psi - \eta L}(w_{t+1}, w_t) - \eta D_{L}(w, w_t) + \eta L(w) - \eta L(w_{t+1}) \\
    &\le D_\psi(w, w_{t}) + \eta\varepsilon/2 - \eta\varepsilon = D_\psi(w, w_{t}) - \eta\varepsilon/2
\end{align*}
Hence, $D_\psi(w, w_{t}) \le D_\psi(w, w_0) - t\eta\varepsilon / 2$.
This implies that $\limsup_{t\to\infty} D_\psi(w, w_{t}) = -\infty$, contradiction.
\end{proof} 

\section{Properties of Potential Functions under Assumption~\ref{thm:assump-potential}}
\label{sec:potential-norm}

In this section, we shall establish several useful properties for potential functions satisfying Assumption~\ref{thm:assump-potential}.
First, we will show that $\norm{\cdot}_\psi$, as defined in \eqref{equ:potential-norm}, is a valid norm induced by the potential $\psi$.
\begin{list}{{\tiny $\blacksquare$}}{\leftmargin=1.5em}
\setlength{\itemsep}{-1pt}
    \item Since $\psi$ is positive definite, $\norm{w}_\psi = 0$ only when $w = 0$.
    \item By applying homogeneity and the definition of $\norm{\cdot}_\psi$, we have 
    \begin{align*}
        \norm{s w}_\psi 
        &= \inf \{c > 0: \psi(sw/c) \le 1\} \\
        &= \inf \left\{c > 0: \left| \frac{s}{|s|} \right|^\beta \psi\left(\frac{w}{c/|s|}\right) \le 1\right\} \\
        &= \inf \left\{c > 0: \psi\left(\frac{w}{c/|s|}\right) \le 1\right\} = |s| \cdot \norm{w}_\psi
    \end{align*}
    \item To show the triangle inequality, we consider any vectors $w_1, w_2 \in \RR^d$ and let $a = \norm{w_1}_\psi$ and $b = \norm{w_2}_\psi$.
    Because $\psi$ is convex, we have
    \[\psi\left(\frac{w_1+w_2}{a+b}\right) = \psi\left(\frac{a}{a+b} \cdot \frac{w_1}{a} + \frac{b}{a+b} \cdot \frac{w_2}{b}\right) \le \frac{a}{a+b} \cdot \psi\left(\frac{w_1}{a}\right) + \frac{b}{a+b} \cdot \psi\left(\frac{w_2}{b}\right) = 1.\]
    Therefore, $\norm{w_1 + w_2}_\psi \le a + b$, as desired.
\end{list}
Therefore, $\norm{\cdot}_\psi$ is a norm.
Note that, due to continuity, we have $\psi(w / \norm{w}_\psi) = 1$.
It follows that, by homogeneity, we can write $\psi(\cdot) = \norm{\cdot}_\psi^\beta$.

Next, we show that convexity and $\beta$-absolute homogeneity imply strict convexity.
So, we can in fact relax the conditions in Assumption~\ref{thm:assump-potential} where $\psi$ is only required to be convex.
Another consequence of this fact is that the potential $\psi(\cdot) = \norm{\cdot}^\beta$ for any norm $\norm{\cdot}$ satisfies the conditions in Assumption~\ref{thm:assump-potential}.
For any $\lambda \in (0, 1)$, we have
\[\norm{\lambda x + (1-\lambda)y}_\psi^\beta \le (\lambda \norm{x}_\psi + (1-\lambda) \norm{y}_\psi)^\beta < \lambda \norm{x}_\psi^\beta + (1-\lambda) \norm{y}_\psi^\beta\]
The first inequality is due to the triangle inequality and the second inequality holds because the map $z \mapsto |z|^\beta$ is strictly convex whenever $\beta > 1$.
Therefore, $\psi$ is strictly convex.

By appealing to the limit definition of gradient, we can show \eqref{equ:breg-homo}, so that the Bregman divergence is also homogeneous.
\begin{align*}
    \inp{\nabla \psi (w)}{w} 
    &= \lim_{h \to 0} \frac{\psi(w + h w) - \psi(w)}{h} \\
    &= \lim_{h \to 0} \frac{(1+h)^\beta - 1}{h} \psi(w) = \beta \cdot \psi(w) \\
    \brg{c w}{c w'} 
    &= \psi(cw) - \psi(cw') - \inp{\nabla \psi(cw')}{c (w - w')} \\
    &= |c|^\beta \psi(w) - |c|^\beta \psi(w') - \lim_{h \to 0} \frac{\psi(c w' + h c(w-w')) - \psi(cw')}{h} \\
    &= |c|^\beta \psi(w) - |c|^\beta \psi(w') - \lim_{h \to 0} |c|^\beta \frac{\psi(w' + h (w-w')) - \psi(w')}{h} \\
    &= |c|^\beta \psi(w) - |c|^\beta \psi(w') - |c|^\beta \inp{\nabla \psi(w')}{w - w'} \\
    &= |c|^\beta \brg{w}{w'} \quad \forall c\in \R.
\end{align*}

Finally, through the triangle inequality, we can show that $\nabla\psi(w)$ and $w$ are ``parallel'' in the sense that
\begin{equation}\label{equ:mirror-parallel}
    |\inp{\nabla\psi(w)}{w}| \ge |\inp{\nabla\psi(w)}{v}|, \forall \, v \text{ s.t.} \norm{v}_\psi = \norm{w}_\psi.
\end{equation}
\begin{proof}
From the limit definition, we have
\[\inp{\nabla\psi(w)}{v} = \lim_{h \to 0} \frac{\psi(w + h v) - \psi(w)}{h}.\]
For any $h > 0$, we have
\begin{align*}
    \psi(w + hv) = \norm{w + hv}_\psi^\beta \le  (\norm{w}_\psi + h \norm{w}_\psi)^\beta = \psi((1+h)w),
\end{align*}
and similarly,
\[\psi(w + hv) = \norm{w + hv}_\psi^\beta \ge  (\norm{w}_\psi - h \norm{w}_\psi)^\beta = \psi((1-h)w).\]
Therefore, we have that
\[\lim_{h \to 0} \frac{\psi(w - h w) - \psi(w)}{h} \le \lim_{h \to 0} \frac{\psi(w + h v) - \psi(w)}{h} \le \lim_{h \to 0} \frac{\psi(w + h w) - \psi(w)}{h},\]
and hence
\[\inp{\nabla\psi(w)}{-w} \le \inp{\nabla\psi(w)}{v} \le \inp{\nabla\psi(w)}{w} \implies |\inp{\nabla\psi(w)}{v}| \le |\inp{\nabla\psi(w)}{w}|.\]
\end{proof} 
\section{Discussion of Step Sizes}
\label{sec:step-size}
In this section, we provide the details for Remark~\ref{rem:step-size}. In particular, 
we discuss the existence of small step size $\eta$ such that $\psi - \eta L$ is convex.
We break down this discussion into two parts; first, we analyze the case of standard mirror descent update~\eqref{equ:md} with fixed step size, and secondly, we consider the case of normalized mirror descent~\eqref{equ:normalized-md} with time-varying step sizes.
For concreteness, we focus on the exponential loss $\ell(z) = \exp(-z)$.

\subsection{MD with fixed step size}
 
We first use Assumption~\ref{thm:assump-boundedness} to directly bound the Hessian $\nabla^2 L$:
\[\norm{\nabla^2 L(w)}_2 = \norm{\frac{1}{n} \sum_{i=1}^n \exp(-y_iw^\top x_i) x_i x_i^\top}_2 \le \sum_{i=1}^n \exp(-y_iw^\top x_i) C^2 = C^2 L(w)\]
It follows that if $\psi$ is $\mu$-strongly convex, then $\eta < \frac{\mu}{C^2 L(w_0)}$ ensures that $\psi - \eta L$ is convex at $w_0$.
By applying Lemma~\ref{thm:decreasing-lose} and induction, we can conclude that $\psi - \eta L$ is convex at all iterates $\{w_t\}_{t=0}^\infty$.
For instance, gradient descent falls under this case because it has a 1-strongly convex potential $\psi(\cdot) = \frac{1}{2} \norm{\cdot}_2^2$.

However, for any $\beta \neq 2$, $\psi$ is not strongly convex, and we shall consider the cases where $\beta \in (1, 2)$ and $\beta > 2$ separately.
When $\beta \in (1, 2)$, we invoke the following fact:
\begin{lemma}
\label{thm:hessian-scaling}
    Consider a function $\psi : \RR^d \to \RR$ satisfying Assumption~\ref{thm:assump-potential},
    \begin{list}{{\tiny $\blacksquare$}}{\leftmargin=1.5em}
    \setlength{\itemsep}{-1pt}
        \item If there exists $m > 0$ so that $\inf_{\norm{w}_2=1}\norm{\nabla^2\psi(w)}_2 \ge m$, then $\norm{\nabla^2\psi(w)}_2 \ge m \norm{w}_2^{\beta - 2}$.
        \item If there exists $M > 0$ so that $\sup_{\norm{w}_2=1}\norm{\nabla^2\psi(w)}_2 \le M$, then $\norm{\nabla^2\psi(w)}_2 \le M \norm{w}_2^{\beta - 2}$.
    \end{list}
    The matrix norm used here is the operator norm induced by the $\ell_2$ vector norm.
\end{lemma}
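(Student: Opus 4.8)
The plan is to reduce both inequalities to a single scaling identity for the Hessian, namely that $\nabla^2\psi$ is $(\beta-2)$-homogeneous, and then to evaluate this identity on the $\ell_2$-unit sphere, where the hypotheses directly apply. Throughout I would restrict to $w \neq 0$, which is all the lemma is used for.

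First I would derive the homogeneity degrees of the derivatives from that of $\psi$ itself. Since the relevant scaling factor will be $\norm{w}_2 > 0$, it suffices to work with positive scalars $c > 0$, for which absolute homogeneity reads $\psi(cw) = c^\beta \psi(w)$. Differentiating both sides in $w$ (the chain rule on the left, justified by the twice-differentiability in Assumption~\ref{thm:assump-potential}) gives $c\,\nabla\psi(cw) = c^\beta \nabla\psi(w)$, i.e. $\nabla\psi(cw) = c^{\beta-1}\nabla\psi(w)$; differentiating once more yields $c\,\nabla^2\psi(cw) = c^{\beta-1}\nabla^2\psi(w)$, so that
\[ \nabla^2\psi(cw) = c^{\beta-2}\,\nabla^2\psi(w), \qquad \forall\, c > 0. \]
(Equivalently, one could differentiate the Euler-type identity $\inp{\nabla\psi(w)}{w} = \beta\psi(w)$ recorded in \eqref{equ:breg-homo}.)

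Second, I would specialize this with $c = \norm{w}_2$ and the unit vector $u = w/\norm{w}_2$, so that $w = c\,u$ and
\[ \nabla^2\psi(w) = \norm{w}_2^{\beta-2}\,\nabla^2\psi\!\left(\frac{w}{\norm{w}_2}\right). \]
Taking the $\ell_2$ operator norm of both sides and using $\norm{w/\norm{w}_2}_2 = 1$, the first claim follows from the lower bound $\norm{\nabla^2\psi(w/\norm{w}_2)}_2 \ge m$ and the second from the upper bound $\norm{\nabla^2\psi(w/\norm{w}_2)}_2 \le M$.

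There is no genuine obstacle here: the argument is a short homogeneity computation. The only points meriting minor care are that the scaling factor $c = \norm{w}_2$ is strictly positive, so that absolute values never intervene when passing from absolute homogeneity to the derivative identities, and that differentiating the homogeneity relation twice is legitimate, which is exactly what the twice-differentiability in Assumption~\ref{thm:assump-potential} provides (away from the origin, where $\nabla^2\psi$ is the object of interest).
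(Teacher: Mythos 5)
Your proof is correct and follows essentially the same route as the paper: both arguments establish that $\nabla^2\psi$ is $(\beta-2)$-homogeneous and then rescale to the $\ell_2$-unit sphere, the only cosmetic difference being that the paper derives the scaling identity by writing out the limit difference quotients for $\nabla\psi$ and $\nabla^2\psi$ directly, while you obtain it by differentiating the homogeneity relation $\psi(cw)=c^\beta\psi(w)$ twice.
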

It follows that $\norm{\nabla^2\psi(w)}_2 \in \Omega(\norm{w}_2^{\beta-2})$ as long as it is uniformly positive on the unit circle.
Then, since $\norm{\nabla^2 L(w)}_2$ decays exponentially with respect to $\norm{w}_2$, there exists $\eta > 0$ so that $\psi - \eta L$ is convex for all iterates $\{w_t\}_{t=0}^\infty$ and this $\eta$ can be computed from a finite horizon over the iterates.
For instance, the \algname potential $\psi(\cdot) = \frac{1}{p} \norm{\cdot}_p^p$ satisfies the desired condition on $\nabla^2\psi$ whenever $p \in (1, 2)$.

For $\beta > 2$, we alternatively consider the potential $\psi'(\cdot) = \frac{\varepsilon}{2} \norm{\cdot}_2^2 + \psi(\cdot)$ for some small $\varepsilon > 0$.
Intuitively, this modified potential should induce the same implicit bias as the original potential since their asymptotic tails are the same.
Formally, because Bregman divergence is additive, we can plug $\psi'$ into the proof of Theorem~\ref{thm:primal-bias} and show that mirror descent with potential $\psi'$ converges to $\reg{\psi}$.
Since $\psi'$ is strongly convex, the existence of step size $\eta$ is assured.
In practice, we find that directly applying mirror descent with potential $\psi$ already works well.

\subsection{Normalized MD with variable step sizes}
We claim that there exists sufficiently small $\eta_0 > 0$ so that $\eta_t = \frac{\eta_0}{\sqrt{t+1}} L(w_t)^{-1}$ satisfies $\psi - \eta_t L$ being locally convex at all iterates $\{w_t\}_{t=0}^\infty$.
Recall that $\norm{\nabla^2 L(w)}_2 \le C^2 L(w)$.
If $\psi$ is $\mu$-strongly convex, then $\eta_0 < \frac{\mu}{C^2}$ guarantees that $\psi - \eta_t L$ is convex at the iterates $\{w_t\}_{t=0}^\infty$.
Therefore, our analysis from the previous section still applies for $\beta \ge 2$.

And for $\beta \in (1, 2)$, we apply a more careful analysis.
We first leverage the upper bound on $\norm{w}_\psi$ from Lemma~\ref{thm:normalized-norm-upper}.
From Assumption~\ref{thm:assump-potential}, we can show that $\inf_{\norm{w}_2 = 1} \psi(w)$ is positive.
Therefore,
\[ \norm{w_t}_2 \le \left(\inf_{\norm{w}_2 = 1} \psi(w)\right)^{-1/\beta} \norm{w_t}_\psi \in O\left(\left(\inf_{\norm{w}_2 = 1} \psi(w)\right)^{-1/\beta} \sqrt{t}\right) = O(\sqrt{t}). \]
Combining with Lemma~\ref{thm:hessian-scaling}, we have that $\norm{\nabla^2 \psi(w_t)}_2 \in \Omega(\sqrt{t}^{\beta-2}) \subseteq \omega(1/\sqrt{t})$ for $\beta \in (0, 1)$.
Since we have 
\[ \eta_t \norm{\nabla^2 L(w_t)}_2 \le \frac{\eta_0}{\sqrt{t+1}} C^2 \in O(1/\sqrt{t}),\]
there exists sufficiently small $\eta_0 > 0$ where $\psi - \eta_t L$ is convex at respective iterates $\{w_t\}_{t=0}^\infty$.

\subsection{Proof of Lemma~\ref{thm:hessian-scaling}}
\begin{proof}
    We directly compute the derivatives through their limit definition.
    For any vector $v$,
    \begin{align*}
        \nabla \psi(w)^\top v
        &= \lim_{h\to 0} \frac{\psi(w + hv_1) - \psi(w)}{h} \\
        &= \norm{w}_2^\beta \lim_{h \to 0} \frac{\psi(w/\norm{w}_2 + h v/\norm{w}_2) - \psi(w/\norm{w}_2)}{h} \\
        &= \norm{w}_2^\beta \nabla \psi(w/\norm{w}_2)^\top (v/\norm{w}_2) \\
        &= \norm{w}_2^{\beta-1} \nabla \psi(w/\norm{w}_2)^\top v
    \end{align*}
    Therefore, $\nabla \psi(w) = \norm{w}_2^{\beta-1}\nabla \psi(w/\norm{w}_2)$.

    As for the second derivative, for any vectors $v_1, v_2$, we have
    \begin{align*}
        v_2^\top \nabla^2 \psi(w) v_1
        &= \lim_{h\to 0} \frac{\nabla \psi(w + hv_2)^\top v_1 - \nabla \psi(w)^\top v_1}{h} \\
        &= \norm{w}_2^{\beta-1} \lim_{h \to 0} \frac{\nabla\psi(w/\norm{w}_2 + h v_2/\norm{w}_2)^\top v_1 - \nabla\psi(w/\norm{w}_2)^\top v_1}{h} \\
        &= \norm{w}_2^{\beta-1} (v_2 / \norm{w}_2)^\top \nabla^2 \psi(w/\norm{w}_2) v_1 \\
        &= \norm{w}_2^{\beta-2} v_2^\top \nabla^2 \psi(w/\norm{w}_2) v_1
    \end{align*}
    Therefore, $\nabla^2 \psi(w) = \norm{w}_2^{\beta-2}\nabla^2 \psi(w/\norm{w}_2)$. Now, this Lemma immediately follows from the computation above.
\end{proof} 
\section{Proofs for Section~\ref{sec:primal-bias-result}}

\subsection{Proof of Lemma~\ref{thm:approx-reg-dir-loss}}

\begin{proof}
Let $\bar{\gamma}$ be the margin of $\reg{\psi}$.
Under separability, we know $\bar{\gamma} > 0$.
Recall the definition of the regularization path.
There exists sufficiently large $r_\alpha$ so that 
\[ \norm{\frac{\bar{w}(\norm{w}_\psi)}{\norm{w}_\psi} - \reg{\psi}}_\psi \le \frac{\alpha \bar{\gamma}}{C} \]
whenever $\norm{w}_\psi \ge r_\alpha$.
Recall that, from Assumption~\ref{thm:assump-boundedness}, $C \ge \max_{i = 1, \dots, n} \norm{x_i}_{\psi, *}$.
Then, for all $i \in [n]$, we have
\begin{align*}
    y_i \inp{\bar{w}(\norm{w}_\psi)}{x_i}
    &= y_i \inp{\bar{w}(\norm{w}_\psi) - \norm{w}_\psi \reg{\psi}}{x_i} + y_i \inp{\norm{w}_\psi\reg{}}{x_i} \\
    &\le \alpha \bar{\gamma} \norm{w}_\psi \norm{x_i}_{\psi, *} / C + y_i \inp{\norm{w}_\psi\reg{\psi}}{x_i} \\
    &\le \alpha \bar{\gamma} \norm{w}_\psi + y_i \inp{\norm{w}_\psi\reg{\psi}}{x_i} \\
    &\le y_i \inp{(1+\alpha) \norm{w}_\psi \reg{\psi}}{x_i}
\end{align*}
Since the loss $L$ is decreasing, we have
\[L((1+\alpha)\norm{w}_\psi\reg{\psi}) \le L(\bar{w}(\norm{w}_\psi)) \le L(w)\,, \]
as desired.
\end{proof}

\subsection{Lower bounding the mirror descent updates}
\label{sec:cross-term-diff}
\begin{lemma}
\label{thm:cross-term-diff}
For any potential $\psi$ satisfying Assumption~\ref{thm:assump-potential}, the mirror descent update satisfies the following inequality:
\begin{equation}
    (\beta-1) \psi(w_{t+1}) - (\beta-1) \psi(w_t) + \eta L (w_{t+1}) - \eta L (w_{t}) \le \inp{-\eta\nabla L (w_{t})}{w_t}
\end{equation}
\end{lemma}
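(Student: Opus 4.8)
The plan is to reduce the claimed inequality to the loss-decrease identity already used for Lemma~\ref{thm:decreasing-lose}, after rewriting the cross term on the right-hand side via homogeneity. As noted in the third bullet of Section~\ref{sec:primal-bias-result}, the only genuine obstacle is that the cross term $\inp{\nabla\psi(w_{t+1})}{w_t}$ carries no direct geometric meaning; absolute homogeneity of $\psi$ is precisely the tool that converts it into plain values of $\psi$.

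First I would invoke the \ref{equ:md} update rule $\nabla\psi(w_{t+1}) = \nabla\psi(w_t) - \eta\nabla L(w_t)$ to write
\[\inp{-\eta\nabla L(w_t)}{w_t} = \inp{\nabla\psi(w_{t+1})}{w_t} - \inp{\nabla\psi(w_t)}{w_t}.\]
By the homogeneity identity $\inp{\nabla\psi(w)}{w} = \beta\psi(w)$ from \eqref{equ:breg-homo}, the second term equals $\beta\psi(w_t)$. For the first term I would expand the Bregman divergence $D_\psi(w_t, w_{t+1}) = \psi(w_t) - \psi(w_{t+1}) - \inp{\nabla\psi(w_{t+1})}{w_t - w_{t+1}}$ and again use $\inp{\nabla\psi(w_{t+1})}{w_{t+1}} = \beta\psi(w_{t+1})$, obtaining
\[\inp{\nabla\psi(w_{t+1})}{w_t} = (\beta-1)\psi(w_{t+1}) + \psi(w_t) - D_\psi(w_t, w_{t+1}).\]
Combining these yields the exact identity
\[\inp{-\eta\nabla L(w_t)}{w_t} = (\beta-1)\psi(w_{t+1}) - (\beta-1)\psi(w_t) - D_\psi(w_t, w_{t+1}).\]

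Substituting this into the target inequality and cancelling the matching $(\beta-1)\psi(w_{t+1}) - (\beta-1)\psi(w_t)$ terms, the statement reduces to showing $D_\psi(w_t, w_{t+1}) \le \eta L(w_t) - \eta L(w_{t+1})$. This is exactly the computation behind Lemma~\ref{thm:decreasing-lose}: applying Lemma~\ref{thm:key-iden} with the reference vector $w = w_t$ gives
\[\eta L(w_t) - \eta L(w_{t+1}) = D_\psi(w_t, w_{t+1}) + D_{\psi - \eta L}(w_{t+1}, w_t).\]
Since the step size is small enough that $\psi - \eta L$ is convex (Remark~\ref{rem:step-size}), the term $D_{\psi-\eta L}(w_{t+1}, w_t)$ is non-negative, so $D_\psi(w_t, w_{t+1}) \le \eta L(w_t) - \eta L(w_{t+1})$, which is what was needed.

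I expect the only real subtlety to be bookkeeping in the second paragraph: one must apply the homogeneity relation at both $w_t$ and $w_{t+1}$ and track the sign of the Bregman term carefully. Everything else is a direct consequence of the \ref{equ:md} identity and the homogeneity properties already in hand, so no new estimates are required.
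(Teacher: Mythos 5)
Your proof is correct and uses exactly the same ingredients as the paper's: Lemma~\ref{thm:key-iden}, the homogeneity identity $\inp{\nabla\psi(w)}{w}=\beta\,\psi(w)$, and discarding the non-negative term $D_{\psi-\eta L}(w_{t+1},w_t)$ under the small-step-size condition. The paper reaches the conclusion in one step by instantiating Lemma~\ref{thm:key-iden} at $w=0$ and using $D_\psi(0,w)=(\beta-1)\psi(w)$, whereas you instantiate it at $w=w_t$ and recover the $(\beta-1)\psi$ terms through your exact identity for the cross term; the two computations are algebraically equivalent.
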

\begin{proof}
This result follows from Lemma~\ref{thm:key-iden} with $w = 0$:
\begin{align*}
    \brg{0}{w_t}
    &= \brg{0}{w_{t+1}} + \breg{\psi-\eta L}{w_{t+1}}{w_t} + \eta \brgl{0}{w_t}+ \eta L(w_{t+1}) - \eta L(0) \\
    &\ge \brg{0}{w_{t+1}} + \eta \brgl{0}{w_t}+ \eta L(w_{t+1}) - \eta L(0) \\
    &= \brg{0}{w_{t+1}} + \eta(L(0) - L(w_t) - \inp{\nabla L(w_t)}{-w_t}) + \eta L(w_{t+1}) - \eta L(0) \\
    &= \brg{0}{w_{t+1}} + \eta \inp{\nabla L(w_t)}{w_t} + \eta L(w_{t+1}) - \eta L(w_t)
\end{align*}
Rearranging the terms yields
\[ \brg{0}{w_{t+1}} - \brg{0}{w_{t}} + \eta L (w_{t+1}) - \eta L (w_{t}) \le \inp{-\eta\nabla L (w_{t})}{w_t} \]
We conclude the proof by noting that for any $w \in \RR^d$, 
\[\brg{0}{w}  = \psi(0) - \psi(w) - \inp{\nabla\psi(w)}{-w} = \inp{\nabla\psi(w)}{w} - \psi(w) = (\beta-1) \psi(w)\,, \]
where the last equality follows from the homogeneity of Bregman divergence \eqref{equ:breg-homo}.
\end{proof}

\subsection{Proof of Theorem~\ref{thm:primal-bias}}
\label{sec:proof-primal-bias}
\begin{proof}
Consider arbitrary $\alpha \in (0, 1)$ and define $r_\alpha$ according to Lemma~\ref{thm:approx-reg-dir-loss}.
Since $\lim_{t\to\infty}\norm{w_t}_\psi = \infty$, we can find $t_0$ so that $\norm{w_t}_\psi > \max(1, r_\alpha)$ for all $t \ge t_0$.
Let $c_t = (1+\alpha)\norm{w_t}_\psi$.

Substitute $w = c_t \reg{\psi}$ into Lemma~\ref{thm:key-iden}, we get
\[\brg{c_t\reg{\psi}}{w_{t+1}} \le \brg{c_t\reg{\psi}}{w_t} + \eta\inp{\nabla L(w_t)}{c_t \reg{\psi} - w_t} - \eta L(w_{t+1}) + \eta L(w_t).\]
By Corollary~\ref{thm:cross-term}, we have $\inp{\nabla L(w_t)}{c_t \reg{\psi} - w_t} \le 0$.
Therefore,
\[\brg{c_t\reg{\psi}}{w_{t+1}} \le \brg{c_t\reg{\psi}}{w_t} - \eta L(w_{t+1}) + \eta L(w_t).\]

It follows that
\begin{align*}
    &\brg{c_{t+1}\reg{\psi}}{w_{t+1}} \\
    \le{}& \brg{c_t\reg{\psi}}{w_t} - \eta L(w_{t+1}) + \eta L(w_t) + \brg{c_{t+1}\reg{\psi}}{w_{t+1}} - \brg{c_t\reg{\psi}}{w_{t+1}} \\
    ={}& \brg{c_t\reg{\psi}}{w_t} - \eta L(w_{t+1}) + \eta L(w_t) + \psi(c_{t+1} \reg{\psi}) - \psi(c_t \reg{\psi}) - \inp{\nabla\psi(w_{t+1})}{(c_{t+1} - c_t) \reg{\psi}}
\end{align*}
Summing over $t = t_0, \dots, T-1$ gives us
\begin{align}
    \brg{c_T\reg{\psi}}{w_T}
    &\le \brg{c_{t_0}\reg{\psi}}{w_{t_0}} - \eta L(w_T) + \eta L(w_{t_0}) + \psi(c_{T} \reg{\psi}) - \psi(c_{t_0} \reg{\psi}) \nonumber\\
    &\quad\quad- \sum_{t=t_0}^{T-1}\inp{\nabla\psi(w_{t+1})}{(c_{t+1} - c_t) \reg{\psi}} \label{equ:succ-cross-term}
\end{align}

Now we want to establish a lower bound on the last term of \eqref{equ:succ-cross-term}.
To do so, we inspect the change in $\nabla\psi(w_t)$ from each successive mirror descent update:
\begin{subequations}
\begin{align}
    &\inp{\nabla\psi(w_{t+1}) - \nabla\psi(w_t)}{\reg{\psi}} \\
    ={}& \inp{-\eta\nabla L(w_{t})}{\reg{\psi}} \\
    \ge{}& \frac{1}{(1+\alpha)\norm{w_t}_\psi} \inp{-\eta\nabla L(w_t)}{w_t} \label{equ:cross-term-expansion-L1}\\
    \ge{}& \frac{1}{(1+\alpha)\norm{w_t}_\psi} \left((\beta-1) \norm{w_{t+1}}_\psi^\beta - (\beta-1) \norm{w_{t}}_\psi^\beta + \eta L(w_{t+1}) - \eta L(w_{t})\right) \label{equ:cross-term-expansion-L2} \\
    \ge{}& \frac{1}{(1+\alpha)\norm{w_t}_\psi} \left((\beta-1) \norm{w_{t+1}}_\psi^\beta - (\beta-1) \norm{w_{t}}_\psi^\beta\right) + \eta L(w_{t+1}) - \eta L(w_{t}) \label{equ:cross-term-expansion-L3}
\end{align}
\end{subequations}
where we applied Corollary~\ref{thm:cross-term} on \eqref{equ:cross-term-expansion-L1} and Lemma~\ref{thm:cross-term-diff} on \eqref{equ:cross-term-expansion-L2}.

Now we bound \eqref{equ:cross-term-expansion-L3}.
We claim the following identity and defer its derivation to Section~\ref{sec:main-thm-aux-pow}.
\begin{equation}
    \label{equ:p-norm-diff}
    (\beta-1)(\norm{w_{t+1}}_\psi^\beta - \norm{w_t}_\psi^\beta) \ge \beta (\norm{w_{t+1}}_\psi^{\beta-1} - \norm{w_{t}}_\psi^{\beta-1}) \norm{w_t}_\psi.
\end{equation}

We are left with
\begin{equation*} \inp{\nabla\psi(w_{t+1}) - \nabla\psi(w_t)}{\reg{\psi}} \ge \beta \cdot \frac{\norm{w_{t+1}}_\psi^{\beta-1} - \norm{w_{t}}_\psi^{\beta-1}}{1+\alpha} + \eta L(w_{t+1}) - \eta L(w_{t}).
\end{equation*}

Summing over $t = t_0, \dots, T-1$ gives us
\begin{equation}\label{equ:mirror-cross-term}
\inp{\nabla\psi(w_{T}) - \nabla\psi(w_{t_0})}{\reg{\psi}} \ge \beta \cdot \frac{\norm{w_T}_\psi^{\beta-1} - \norm{w_0}_\psi^{\beta-1}}{1+\alpha} + \eta L(w_{T}) - \eta L(w_{t_0}).
\end{equation}

With \eqref{equ:mirror-cross-term}, we can bound the last term of \eqref{equ:succ-cross-term} as follows:
\begin{align}
    \sum_{t=t_0}^{T-1}\inp{\nabla\psi(w_{t+1})}{(c_{t+1} - c_t) \reg{\psi}} \nonumber
    &\ge \sum_{t=t_0+1}^{T} \frac{\beta \norm{w_t}_\psi^{\beta-1} + O(1)}{1+\alpha}(c_t - c_{t-1}) \nonumber\\
    &= \sum_{t=t_0+1}^{T} \beta (\norm{w_t}_\psi^{\beta-1} + O(1))(\norm{w_t}_\psi - \norm{w_{t-1}}_\psi) \nonumber\\
    &\ge \sum_{t=t_0+1}^{T} (\norm{w_t}_\psi^\beta - \norm{w_{t-1}}_\psi^\beta) + O(1) \cdot (\norm{w_T}_\psi - \norm{w_{t_0}}_\psi)\nonumber\\
    &= \norm{w_T}_\psi^\beta + O(\norm{w_T}_\psi) \label{equ:cross-telescoping}
\end{align}
where we defer the computation on the last inequality to Section~\ref{sec:main-thm-aux-pow}.

We now apply the inequality in \eqref{equ:cross-telescoping} to \eqref{equ:succ-cross-term}.
Note that $\psi(c_T \reg{}) = (1+\alpha)^\beta\norm{w_T}_\psi^\beta$.
We now have the following:
\begin{equation}
\label{equ:breg-upper}
    \brg{(1+\alpha)\norm{w_T}_\psi \reg{\psi}}{w_T} \le \norm{w_T}_\psi^\beta ((1+\alpha)^\beta - 1) + O(\norm{w_T}_\psi).
\end{equation}
After applying homogeneity of Bregman divergence, and define $\varepsilon \in (0, 1)$ so that $\alpha = \frac{\varepsilon}{1-\varepsilon}$, we have
\begin{equation}
\label{equ:breg-norm}
    \brg{\reg{\psi}}{(1-\varepsilon)\frac{w_T}{\norm{w_T}}_\psi} \le \frac{\norm{w_T}_\psi^\beta (1 - (1-\varepsilon)^\beta)}{\norm{w_T}_\psi^\beta} + o(1).
\end{equation}
Let $\tilde{w}_T = \frac{w_T}{\norm{w_T}_\psi}$.
We note that Bregman divergence in fact satisfies the Law of Cosines (see, e.g., \cite{azizan2019stochastic}):
\begin{lemma}[Law of Cosines]
\label{thm:breg-loc}
\begin{equation*}
    \brg{w}{w'} = \brg{w}{w''} + \brg{w''}{w'} - \inp{\nabla\psi(w') - \nabla\psi(w'')}{w - w''}
\end{equation*}
\end{lemma}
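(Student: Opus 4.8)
The plan is to prove this identity by directly expanding all three Bregman divergences from their definition and verifying that everything cancels; notably, no convexity or homogeneity of $\psi$ is needed, so the identity holds for any differentiable $\psi$. Throughout I would use only the definition $\brg{x}{y} = \psi(x) - \psi(y) - \inp{\nabla\psi(y)}{x-y}$ together with bilinearity of the inner product.

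First I would write out the right-hand side term by term. Expanding $\brg{w}{w''}$ and $\brg{w''}{w'}$ and summing gives
\[
\brg{w}{w''} + \brg{w''}{w'} = \psi(w) - \psi(w') - \inp{\nabla\psi(w'')}{w - w''} - \inp{\nabla\psi(w')}{w'' - w'},
\]
where the two copies of $\psi(w'')$ have already cancelled. This isolates the two surviving gradient contributions, one evaluated at $w''$ and one at $w'$.

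Next I would incorporate the correction term $-\inp{\nabla\psi(w') - \nabla\psi(w'')}{w - w''}$. The piece $+\inp{\nabla\psi(w'')}{w - w''}$ cancels the $-\inp{\nabla\psi(w'')}{w - w''}$ above, eliminating all remaining dependence on $\nabla\psi(w'')$, while the two inner products against $\nabla\psi(w')$ combine by linearity:
\[
-\inp{\nabla\psi(w')}{w'' - w'} - \inp{\nabla\psi(w')}{w - w''} = -\inp{\nabla\psi(w')}{w - w'}.
\]
Substituting back leaves exactly $\psi(w) - \psi(w') - \inp{\nabla\psi(w')}{w - w'}$, which is $\brg{w}{w'}$, the left-hand side.

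There is no substantive obstacle here: the result is a purely algebraic bookkeeping identity, and the only conceptual point is that all dependence on the intermediate point $w''$ — both the scalar $\psi(w'')$ and the inner product against $\nabla\psi(w'')$ — is engineered to vanish precisely once the correction term is included. The linear collapse of the two $\nabla\psi(w')$ terms into $-\inp{\nabla\psi(w')}{w - w'}$ is the single step that makes the telescoping work, and the direct expansion above is the cleanest route to it.
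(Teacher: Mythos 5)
Your proof is correct: expanding all three Bregman divergences from the definition and cancelling does verify the identity, and you are right that no convexity or homogeneity of $\psi$ is needed. The paper does not supply its own proof of Lemma~\ref{thm:breg-loc} (it simply cites \citet{azizan2019stochastic}), and the direct algebraic expansion you give is exactly the standard argument behind that reference, so there is nothing further to add.
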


Therefore,

\begin{equation}
    \label{equ:final-limit}
    \begin{aligned}
    \brg{\reg{\psi}}{\tilde{w}_T} 
    &\le \frac{\norm{w_T}_\psi^\beta (1 - (1-\varepsilon)^{\beta})}{\norm{w_T}_\psi^\beta} + \brg{(1-\varepsilon)\tilde{w}_T}{\tilde{w}_T} \\
    &\hspace{6em} - \inp{\nabla\psi(\tilde{w}_T) - \nabla\psi((1-\varepsilon)\tilde{w}_T)}{\reg{\psi} - (1-\varepsilon)\tilde{w}_T} + o(1) \\
    &\le (1 - (1-\varepsilon)^{\beta}) + ((1-\varepsilon)^\beta - 1) + \beta\varepsilon + 2 \beta (1 - (1-\varepsilon)^{\beta-1}) + o(1)
    \end{aligned}
\end{equation}
And we defer the computation for the last inequality to Section~\ref{sec:main-thm-aux-pow}.
Taking the limit as $T \to \infty$, we have that
\begin{equation}
    \begin{aligned}
    \limsup_{T\to\infty} \brg{\reg{\psi}}{\frac{w_T}{\norm{w_T}_\psi}}
    &\le \beta\varepsilon + 2 \beta (1 - (1-\varepsilon)^{\beta-1})
    \end{aligned}
\end{equation}
Note that the RHS vanishes in the limit as $\varepsilon \to 0$.
Since the choice of $\varepsilon$ is arbitrary, we have $w_T/\norm{w_T}_\psi \to \reg{\psi}$ as $T \to\infty$.

\end{proof}

\begin{remark}
Because Bregman divergence is additive, we can extend this theorem to the case where the potential can be written in the form of $\psi = \psi_1 + \psi_2$, where $\psi_1$ and $\psi_2$ satisfy the conditions of Assumption~\ref{thm:assump-potential} with homogeneity constants $\beta_1$ and $\beta_2$, respectively, and $\beta_1 < \beta_2$.
In the end, all terms associated with $\psi_1$ would vanish because they are of lower order and we will be left with $D_{\psi_2}(\reg{\psi_2}, \frac{w_T}{\norm{w_T}_{\psi_2}}) \to 0$.
\end{remark}

\subsection{Auxiliary Computation for Section~\ref{sec:proof-primal-bias}}
\label{sec:main-thm-aux-pow}

To show \eqref{equ:p-norm-diff}, we claim that for $\delta \ge -1$ and $\beta > 1$, we have 
\[ \frac{\beta-1}{\beta} ((1+\delta)^\beta - 1) \ge (1+\delta)^{\beta-1} - 1. \]
Note that we have equality when $\delta=0$, and now we consider the first derivative:
\[\frac{d}{d\delta} \left\{\frac{\beta-1}{\beta} ((1+\delta)^\beta - 1) - (1+\delta)^{\beta-1} + 1 \right\} = (\beta-1)\delta(1+\delta)^{\beta-2},\]
which is negative when $\delta \in [-1, 0)$ and positive when $\delta > 0$, so this identity holds.
Now, \eqref{equ:p-norm-diff} follows from setting $\delta = (\norm{w_{t+1}}_\psi - \norm{w_t}_\psi)/\norm{w_{t}}_\psi$ and then multiplying by $\beta \cdot \norm{w_t}_\psi^\beta$ on both sides.

To finish showing \eqref{equ:cross-telescoping}, we claim that for $\delta \ge -1$ and $\beta > 1$, we have 
\[ \frac{1}{\beta}((1+\delta)^\beta - 1) \le \delta (1+\delta)^{\beta-1}. \]
Note that we have equality when $\delta=0$, and now we consider the first derivative:
\[\frac{d}{d\delta} \left\{\frac{1}{\beta}((1+\delta)^\beta - 1) - \delta (1+\delta)^{\beta-1}\right\} = -(\beta-1)\delta(1+\delta)^{\beta-2},\]
which is positive when $\delta \in [-1, 0)$ and negative when $\delta > 0$, so this identity holds.
Now, the last inequality of \eqref{equ:cross-telescoping} follows by setting $\delta = (\norm{w_{t}}_\psi - \norm{w_{t-1}}_\psi)/\norm{w_{t-1}}_\psi$ and then multiply by $\beta \cdot \norm{w_t}_\psi^\beta$ on both sides.

Finally, we simplify the RHS of \eqref{equ:final-limit} by taking advantage of the fact that $\tilde{w}_T$ is normalized:
\begin{align*}
    \brg{(1-\varepsilon)\tilde{w}_T}{\tilde{w}_T} 
    &= (1-\varepsilon)^\beta \psi(\tilde{w}_T) - \psi(\tilde{w}_T) + \inp{\nabla\psi(\tilde{w}_T)}{\varepsilon \tilde{w}_T} \\
    &= ((1-\varepsilon)^\beta - 1) + \beta\varepsilon.
\end{align*}
And note that for any vector $v$ and $\varepsilon \in (0, 1)$, we have
\begin{align*}
    \inp{\nabla\psi((1-\varepsilon)\tilde{w}_T)}{v}
    &= \frac{1}{1-\varepsilon} \inp{\nabla\psi((1-\varepsilon)\tilde{w}_T)}{(1-\varepsilon)v} \\
    &= \frac{1}{1-\varepsilon} \lim_{h\to 0} \frac{\psi((1-\varepsilon)\tilde{w}_T + h(1-\varepsilon)v) - \psi((1-\varepsilon)\tilde{w}_T)}{h} \\
    &= (1-\varepsilon)^{\beta-1} \lim_{h\to 0} \frac{\psi(\tilde{w}_T + hv) - \psi((1-\varepsilon)\tilde{w}_T)}{h} \\
    &= (1-\varepsilon)^{\beta-1} \inp{\nabla\psi(\tilde{w}_T)}{v}.
\end{align*}
It follows that
\begin{align*}
    &\left|\inp{\nabla\psi(\tilde{w}_T) - \nabla\psi((1-\varepsilon)\tilde{w}_T)}{\reg{\psi} - (1-\varepsilon)\tilde{w}_T}\right|\\
    ={}& (1-(1-\varepsilon)^{p-1}) \left|\inp{\nabla\psi(\tilde{w}_T)}{\reg{\psi} - (1-\varepsilon)\tilde{w}_T}\right|\\
    \le{}& (1 - (1-\varepsilon)^{\beta-1}) \left( |\inp{\nabla\psi(\tilde{w}_T)}{\reg{\psi}}| + |\inp{\nabla\psi(\tilde{w}_T)}{(1-\varepsilon)\tilde{w}_T}| \right)\\
    \le{}& (1 - (1-\varepsilon)^{\beta-1}) \left( |\inp{\nabla\psi(\tilde{w}_T)}{\tilde{w}_T}| + |\inp{\nabla\psi(\tilde{w}_T)}{(1-\varepsilon)\tilde{w}_T}| \right) \\
    \le{}& (1 - (1-\varepsilon)^{\beta-1}) \beta (2-\varepsilon) \le 2 \beta (1 - (1-\varepsilon)^{\beta-1})
\end{align*}
where the second to last line follows from \eqref{equ:mirror-parallel}.

\subsection{Proof of Proposition~\ref{thm:reg-max-dir}}
\label{sec:proof-reg-max-dir}
\begin{proof}
We first show that $\mmd{\psi}$ is unique.
Suppose the contrary that there are two distinct unit $\norm{\cdot}_\psi$-norm vectors $u_1 \neq u_2$ both achieving the maximum-margin $\mar{\psi}$.
Then $u_3 = (u_1 + u_2)/2$ satisfies
\[ \forall i, \; y_i \inp{u_3}{x_i} = \frac{1}{2} y_i \inp{u_1}{x_i} + \frac{1}{2} y_i \inp{u_2}{x_i} \ge \mar{\psi} \]
Therefore, $u_3$ has margin of at least $\mar{\psi}$.
Since $\psi$ is strictly convex, we must have $\norm{u_3}_\psi < 1$.
Therefore, the margin of $u_3 / \norm{u_3}_\psi$ is strictly greater than $\mar{\psi}$, contradiction.

Define $B' > 0$ so that $\ell(z) e^{az} \in [b/2, 2b]$ for any $z = B\mar{\psi}$ where $B > B'$.
Note that
\[L(B\mmd{\psi}) = \sum_{i=1}^n \ell(y_i\inp{B\mmd{\psi}}{x_i}) \le n \cdot \ell(B\mar{\psi}) \le 2 b n \cdot \exp(-aB\mar{\psi})\]

Suppose the contrary that the regularized direction does not converge to $\mmd{\psi}$, then there must exist $\varepsilon \in (0, \mar{\psi}/2)$ so that there are arbitrarily large values of $B$ satisfying
\[\min_{i=1, \dots, n} y_i \inp{\frac{\bar{w}(B)}{B}}{x_i} \le \mar{\psi} - \varepsilon. \]
And this implies
\[ L(\bar{w}(B)) \ge \ell(B(\mar{\psi} - \varepsilon)) \ge \frac{b}{2} \exp(-a B\mar{\psi}) \exp(a B\varepsilon)\]

Then, for sufficiently large $B > B'$, we have $\exp(aB\varepsilon) > 4 n \Rightarrow L(\bar{w}(B)) > L(B\mmd{\psi})$, contradiction.
Therefore, the regularized direction exists and $\reg{\psi} = \mmd{\psi}$.
\end{proof}
 
\section{Proofs for Section~\ref{sec:asymp-result}}
\label{sec:proof-asymp-result}
\subsection{Proof of Corollary~\ref{thm:convg-rate}}
\begin{proof}
    This is an immediate consequence of \eqref{equ:breg-upper} and \eqref{equ:breg-norm}.
\end{proof}

\subsection{Proof of Lemma~\ref{thm:norm-rate}}
For the following proof, we assume without loss of generality that $y_i = 1$ by replacing every instance of $(x_i, -1)$ with $(-x_i, 1)$.

\begin{proof}
    For the upper bound, we consider a reference vector $w^\star = \mar{\psi}^{-1} \mmd{\psi}$.
    By the definition of the max-margin direction, the margin of $w^\star$ is 1 and $\norm{w^\star}_\psi = \mar{\psi}^{-1}$.
    From Lemma~\ref{thm:key-iden}, we have
    \begin{align*} D_\psi(w^\star \log T, w_t) = D_\psi(w^\star \log T, w_{t+1}) + D_{\psi - \eta L}(w_{t+1}, w_{t}) &- \inp{\nabla L(w_t)}{w^\star \log T - w_t} \\
    &- \eta L(w_{t}) + \eta L(w_{t+1}). 
    \end{align*}
    We first bound the quantity $\inp{\nabla L(w_t)}{w^\star \log T - w_t}$ by expanding the definition of exponential loss:
    \begin{align*}
        &\inp{\nabla L(w_t)}{w^\star \log T - w_t} \\
        ={}& \sum_{i=1}^n \inp{\nabla_w \exp(-\inp{w}{x_i}) \bigg|_{w=w_t}}{w^\star \log T - w_t} \\
        ={}& \sum_{i=1}^n \inp{\exp(-\inp{w_t}{x_i})x_i}{w_t - w^\star \log T} \\
        ={}& \sum_{i=1}^n \exp(-\inp{w^\star \log T}{x_i}) \exp(-\inp{w_t - w^\star \log T}{x_i})  \inp{x_i}{w_t - w^\star \log T} \\
        \le{}& \sum_{i=1}^n \frac{1}{T} \cdot \frac{1}{e} = \frac{n}{eT}
    \end{align*}
    where the last line follows from the definition of $w^\star$ and the fact that for any $x \in \RR$, we have $e^{-x} x \le 1/e$.
    It follows that
    \[ D_\psi(w^\star \log T, w_t) \ge D_\psi(w^\star \log T, w_{t+1}) - \frac{n}{eT} - \eta L(w_{t}) + \eta L(w_{t+1}). \]
    
    Summing over $t = 0, \dots, T-1$ gives us
    \[ D_\psi(w^\star \log T, w_0) \ge D_\psi(w^\star \log T, w_T) - \frac{n}{e} - \eta L(w_0) + \eta L(w_T). \]
    Due to the homogeneity of Bregman divergence with respect to the $\beta$-absolutely homogeneous potential $\psi$, we can divide by a factor of $\log^\beta T$ on both sides:
    \begin{equation}
    \label{equ:lim-reference-scale}
        D_\psi\left(w^\star, \frac{w_0}{\log T}\right) \ge D_\psi\left(w^\star, \frac{w_T}{\log T}\right) - o(1).
    \end{equation}
    As $T\to\infty$, the left-hand side converges to $\brg{w^\star}{0}  = \psi(w^\star) = \mar{\psi}^{-\beta}$.
    Let $\tilde{w} = w_T / \log T$, we expand the right-hand side as
    \begin{align*}
        \brg{w^\star}{\tilde{w}}
        &= \psi(w^\star) - \psi(\tilde{w}) - \inp{\nabla\psi(\tilde{w})}{w^\star - \tilde{w}} \\
        &= \mar{\psi}^{-\beta} + (\beta-1) \norm{\tilde{w}}_\psi^\beta - \inp{\nabla\psi(\tilde{w})}{w^\star} \\
        &\ge \mar{\psi}^{-\beta} + (\beta-1) \norm{\tilde{w}}_\psi^\beta - \frac{\mar{\psi}^{-1}}{\norm{\tilde{w}}_\psi}|\inp{\nabla\psi(\tilde{w})}{\tilde{w}}| \\
        &= \mar{\psi}^{-\beta} + (\beta-1) \norm{\tilde{w}}_\psi^\beta - \beta \mar{\psi}^{-1}\norm{\tilde{w}}_\psi^{\beta-1}
    \end{align*}
    where the inequality follows from \eqref{equ:mirror-parallel}.
    
    If $\norm{w_T / \log T}_\psi > \mar{\psi}^{-1} \cdot \frac{\beta}{\beta-1}$ for arbitrarily large $T$, then $\brg{w^\star}{w_T / \log T} > \mar{\psi}^{-\beta}$ for those $T$.
    This in turn contradicts inequality \eqref{equ:lim-reference-scale}.
    Therefore, we must have 
    \[ \limsup_{T\to\infty} \frac{\norm{w_T}_\psi}{\log T} \le \mar{\psi}^{-1} \frac{\beta}{\beta-1}. \]
    
    Now we can turn our attention to the lower bound.
    Let $m_t = \gamma(w_t)$ be the margin of the mirror descent iterates.
    Then, 
    \[ L(w_t) = \frac{1}{n} \sum_{i=1}^n \exp(-\inp{w_t}{x_i}) \ge \frac{1}{n} \exp(-m_t).\]
    Due to Lemma~\ref{thm:to-infinity}, we also know that $m_t \xrightarrow{t\to\infty} \infty$.
    
    By the definition of the max-margin direction, we know that $\gamma(\norm{w_t}_\psi \mmd{\psi}) \ge m_t$.
    Then by linearity of margin, there exists $w^\star$ so that $\gamma(w^\star) = (1+\frac{\log(2n)}{m_t}) m_t$ and $\norm{w^\star}_\psi \le (1+\frac{\log(2n)}{m_t}) \norm{w_t}_\psi$.
    It follows that
    \[ L(w^\star) = \frac{1}{n} \sum_{i=1}^n \exp(-\inp{w^\star}{x_i}) \le \exp(-\gamma(w^\star)) = \frac{1}{2n} \exp(-m_t).\]
    
    Under the assumption that the step size $\eta$ is sufficiently small so that $\psi - \eta L$ is convex on the iterates, we can apply the standard convergence rate of mirror descent ~\citep[Theorem 3.1]{lu2018relatively}:
    \[ L(w_t) - L(w^\star) \le \frac{1}{\eta t} \brg{w^\star}{w_0} \]
    From our choice of $w^\star$, we have
    \begin{align*}
        \frac{1}{2n} \exp(-m_t)
        &\le \frac{1}{\eta t} \brg{w^\star}{w_0} \\
        &= \frac{1}{\eta t} (\psi(w^\star) - \psi(w_0) - \inp{\nabla\psi(w_0)}{w^\star - w_0})
    \end{align*}
    After dropping the lower order terms and recalling the upper bounds on $\norm{w^\star}_\psi$ and $\norm{w_t}_\psi$, we have
    \[ \frac{1}{2n} \exp(-m_t) \le O(1) \cdot \frac{1}{\eta t} \cdot \left(1 + \frac{\log (2n)}{m_t}\right)^\beta \left(\mar{\psi}^{-1} \frac{\beta}{\beta-1} \log t\right)^\beta\]
    Since $m_t$ is unbounded, the quantity $1 + \frac{\log (2n)}{m_t}$ is upper bounded by a constant.
    Taking the logarithm on both sides yields
    \[ m_t \ge \log t - \beta \log\log t + O(1)\]
    
    Finally, we use the definition of margin to conclude that $ m_t \le \inp{w_t}{x_i} \le C \cdot \norm{w_t}_\psi$.
    Therefore, 
    \[ \norm{w_t}_\psi \ge \frac{1}{C} (\log t - \beta \log\log t) + O(1).\]
\end{proof} 
\section{Proofs for Section~\ref{sec:normalized-asymp}}
\label{sec:proof-normalized-asymp}
\subsection{Proof of Lemma~\ref{thm:normalized-norm-upper}}
This proof follows the same technique as Lemma~\ref{thm:norm-rate}.
For simplicity, we assume without loss of generality that $y_i = 1$ for all $i$ by replacing every data point of the form $(x_i, -1)$ with $(-x_i, 1)$.
\begin{proof}
    We consider a reference vector $w^\star = \mar{\psi}^{-1} \mmd{\psi}$.
    By the definition of the max-margin direction, the margin of $w^\star$ is 1 and $\norm{w^\star}_\psi = \mar{\psi}^{-1}$.   
    From Lemma~\ref{thm:key-iden}, we have
    \begin{align*} D_\psi(w^\star \sqrt{T}, w_t) = D_\psi(w^\star \sqrt{T}, w_{t+1}) + D_{\psi - \eta_t L}(w_{t+1}, w_{t}) &- \inp{\nabla L(w_t)}{w^\star \sqrt{T} - w_t} \\
    &- \eta_t L(w_{t}) + \eta_t L(w_{t+1}). 
    \end{align*}
    We first bound the quantity $\inp{\nabla L(w_t)}{w^\star \sqrt{T} - w_t}$ by expanding the definition of exponential loss:
    \begin{align*}
        &\inp{\nabla L(w_t)}{w^\star \sqrt{T} - w_t} \\
        ={}& \sum_{i=1}^n \inp{\nabla_w \exp(-\inp{w}{x_i}) \bigg|_{w=w_t}}{w^\star \sqrt{T} - w_t} \\
        ={}& \sum_{i=1}^n \inp{\exp(-\inp{w_t}{x_i})x_i}{w_t - w^\star \sqrt{T}} \\
        ={}& \sum_{i=1}^n \exp\left(-\inp{w^\star \sqrt{T}}{x_i}\right) \exp\left(-\inp{w_t - w^\star \sqrt{T}}{x_i}\right)  \inp{x_i}{w_t - w^\star \sqrt{T}} \\
        \le{}& \sum_{i=1}^n \exp(-\sqrt{T}) \cdot \frac{1}{e} \in o\left(\frac{1}{T}\right)\,,
    \end{align*}
    where the last line follows from the definition of $w^\star$ and the fact that for any $x \in \RR$, we have $e^{-x} x \le 1/e$.
    It follows that
    \begin{align*} 
        D_\psi(w^\star \sqrt{T}, w_t) 
        &\ge D_\psi(w^\star \sqrt{T}, w_{t+1}) - o(1/T) - \eta_t L(w_{t}) + \eta_t L(w_{t+1}) \\
        &\ge D_\psi(w^\star \sqrt{T}, w_{t+1}) - o(1/T) - \frac{\eta_0}{\sqrt{t+1}}\,.
    \end{align*}
    
    Summing over $t = 0, \dots, T-1$ gives us
    \[ D_\psi(w^\star \sqrt{T}, w_0) \ge D_\psi(w^\star \sqrt{T}, w_T) - O(\sqrt{T}). \]
    Due to the homogeneity of Bregman divergence with respect to the $\beta$-absolutely homogeneous potential $\psi$, we can divide by a factor of $T^{\beta/2}$ on both sides:
    \begin{equation}
    \label{equ:normalized-lim-reference-scale}
        D_\psi\left(w^\star, \frac{w_0}{\sqrt{T}}\right) \ge D_\psi\left(w^\star, \frac{w_T}{\sqrt{T}}\right) - o(1).
    \end{equation}
    As $T\to\infty$, the left-hand side converges to $\brg{w^\star}{0}  = \psi(w^\star) = \mar{\psi}^{-\beta}$.
    Let $\tilde{w} = w_T / \sqrt{T}$, we expand the right-hand side as
    \begin{align*}
        \brg{w^\star}{\tilde{w}}
        &= \psi(w^\star) - \psi(\tilde{w}) - \inp{\nabla\psi(\tilde{w})}{w^\star - \tilde{w}} \\
        &= \mar{\psi}^{-\beta} + (\beta-1) \norm{\tilde{w}}_\psi^\beta - \inp{\nabla\psi(\tilde{w})}{w^\star} \\
        &\ge \mar{\psi}^{-\beta} + (\beta-1) \norm{\tilde{w}}_\psi^\beta - \frac{\mar{\psi}^{-1}}{\norm{\tilde{w}}_\psi}|\inp{\nabla\psi(\tilde{w})}{\tilde{w}}| \\
        &= \mar{\psi}^{-\beta} + (\beta-1) \norm{\tilde{w}}_\psi^\beta - \beta \mar{\psi}^{-1}\norm{\tilde{w}}_\psi^{\beta-1}\,,
    \end{align*}
    where the inequality follows from \eqref{equ:mirror-parallel}.
    
    If $\norm{w_T / \sqrt{T}}_\psi > \mar{\psi}^{-1} \cdot \frac{\beta}{\beta-1}$ for arbitrarily large $T$, then $\brg{w^\star}{w_T / \sqrt{T}} > \mar{\psi}^{-\beta}$ for those $T$.
    This in turn contradicts inequality \eqref{equ:normalized-lim-reference-scale}.
    Therefore, we must have 
    \[ \limsup_{T\to\infty} \frac{\norm{w_T}_\psi}{\sqrt{T}} \le \mar{\psi}^{-1} \frac{\beta}{\beta-1}\,, \]
    as desired.
\end{proof}

\subsection{Proof of Lemma~\ref{thm:normalized-norm-rate}}
\begin{proof}
    Given sufficiently small $\eta_0$, we have that $\psi - \eta_t L$ is convex.
    From relatively smoothness~\citep[Proposition 1.1]{lu2018relatively}, we have
    \[L(w_{t+1}) \le L(w_t) + \inp{\nabla L(w_t)}{w_{t+1}-w_t} + \frac{1}{\eta_t} D_\psi(w_{t+1}, w_t).\]
    By applying the mirror descent update rule~\eqref{equ:md} to the RHS, the following holds for any $w$:
    \begin{equation} 
    \label{equ:rel-smooth-descent}
    L(w_{t+1}) \le L(w_t) + \inp{\nabla L(w_t)}{w-w_t} + \frac{1}{\eta_t} D_\psi(w, w_t).
    \end{equation}
    Let $\Delta w$ be the vector satisfying 
    \[ -\inp{\nabla L(w_t)}{\Delta w} = \norm{\nabla L(w_t)}_{\psi, *} \norm{\Delta w}_\psi = \norm{\nabla L(w_t)}_{\psi, *}^2 = \norm{\Delta w}_\psi^2.\]
    Then, we choose the vector $w$ so that $w - w_t = \eta_t (t+1)^{-c/2} \Delta w$ for some constant $c > 0$ which we will determine later.

    Next, we bound the Bregman divergence with Lagrange's remainder:
    \[\psi(w) \le \psi(w_t) + \inp{\nabla\psi(w_t)}{w-w_t} + \underbrace{\frac{1}{2} \sup_{\lambda \in (0, 1)} (w - w_t)^\top \nabla^2\psi(w_t + \lambda(w-w_t)) (w-w_t)}_R\,.\]
    Note that by construction, $\norm{\Delta w_t}_\psi = \norm{\nabla L(w_t)}_{\psi, *} \le C \cdot L(w_t)$.
    Hence, it holds that
    \[\norm{w - w_t}_\psi = \eta_t (t+1)^{-c/2} \norm{\Delta w}_\psi \le C \cdot \eta_0\,.\]
Next, when $L(w_0) \le \frac{1}{2n}$, we have
    \begin{align*}
        & L(w_t)  = \frac{1}{n} \sum_{i=1}^n \exp(-w_t^\top x_i) \ge \frac{1}{n} \exp(-\min_i w_t^\top x_i), \\
        \implies & -\log L(w_t) \le \min_i w_t^\top x_i + \log n \le C \cdot \norm{w_t}_\psi + \log n, \\
        \implies & \norm{w_t}_\psi \ge \log 2 / C.
    \end{align*}
    Also, from Assumption~\ref{thm:assump-potential}, we can show that for any vector $w$, the norms $\norm{w}_2$ and $\norm{w}_\psi$ differ by up to a constant factor:
    \begin{equation}
        \label{equ:gen-norm-l2}
        \left(\inf_{\norm{w}_2 = 1} \psi(w)\right)^{-1/\beta} \norm{w}_\psi \le \norm{w}_2 \le \left(\sup_{\norm{w}_2 = 1} \psi(w)\right)^{1/\beta} \norm{w}_\psi\,.
    \end{equation}
    Therefore, for sufficiently small $\eta_0$, we have $\norm{w - w_t}_2 \le 2 \norm{w_t}_2$.
    Applying Lemma~\ref{thm:hessian-scaling} yields
    \begin{align*}
        R \le \frac{1}{2} \norm{w - w_t}_2^2 (2\norm{w_t}_2)^{\beta-2} \sup_{\norm{v}_2 = 1} \norm{\nabla^2 \psi(v)}_2.
    \end{align*}
    Invoking \eqref{equ:gen-norm-l2} again, there exists a constant $B > 0$ so that
    \begin{align*}
        R \le \frac{B}{2} \norm{w - w_t}_\psi^2 \norm{w_t}_\psi^{\beta-2} = \frac{B}{2} \eta_t^2 (t+1)^{-c}\norm{\Delta w}_\psi^2 \norm{w_t}_\psi^{\beta-2}\,.
    \end{align*}
    Because $D_\psi(w, w_t) \le R$, we return to \eqref{equ:rel-smooth-descent} and conclude that
    \begin{align*}
        L(w_{t+1})
        & \le L(w_t) - \eta_t (t+1)^{-c/2} \norm{\nabla L(w_t)}_{\psi, *}^2 + \frac{B}{2} \eta_t (t+1)^{-c}\norm{\Delta w}_\psi^2 \norm{w_t}_\psi^{\beta-2} \\
        & = L(w_t) - \eta_t (t+1)^{-c/2} \norm{\nabla L(w_t)}_{\psi, *}^2 + \frac{B}{2} \eta_t (t+1)^{-c}\norm{\nabla L(w_t)}_{\psi, *}^2 \norm{w_t}_\psi^{\beta-2} \\
        & = L(w_t) \left(1 - \eta_0 (t+1)^{-(c+1)/2} \delta_t + \frac{B}{2} \eta_0 (t+1)^{-(2c+1)/2} \delta_t \norm{w_t}_\psi^{\beta-2} \right) \\
        & \le L(w_t) \exp\left(- \eta_0 (t+1)^{-(c+1)/2} \delta_t + \frac{B}{2} \eta_0 (t+1)^{-(2c+1)/2} \delta_t \norm{w_t}_\psi^{\beta-2} \right)\,,
    \end{align*}
    where we let $\delta_t = \left(\frac{\norm{\nabla L(w_t)}_{\psi, *}}{L(w_t)}\right)^2$.
    Since $C \cdot \norm{w}_\psi \ge \min_i w^\top x_i \ge -\log L(w)$, we propagate the previous inequality through $t = 0, \dots, T-1$ to get
    \begin{align*}
        \norm{w_T}_\psi
        &\ge -\frac{1}{C} \log L(w_T) \\
        &\ge \frac{1}{C} \left(\sum_{t=0}^{T-1} \left(\eta_0 (t+1)^{-(c+1)/2} \delta_t - \frac{B}{2} \eta_0 (t+1)^{-(2c+1)/2} \delta_t \norm{w_t}^{\beta-2} \right) - \log L(w_0)\right) \\
        &\ge \Theta\left(\sum_{t=0}^{T-1} t^{-(c+1)/2} - t^{-(2c+1)/2} \norm{w_t}_\psi^{\beta-2} \right),
    \end{align*}
    where the second inequality holds as $\mar{\psi} L(w) \le \norm{\nabla L(w)}_{\psi, *} \le C \cdot L(w) \implies \delta_t$ bounded.

    Recall that Lemma~\ref{thm:normalized-norm-upper} gives us $\norm{w_t}_\psi \in O(t^{-1/2})$.
    Picking $c = \max(\beta-2, 0) + 2\zeta$ for arbitrarily small $\zeta > 0$ gives us
    \[\norm{w_t}_\psi \in \Omega\left(\sum_{t=0}^{T-1} t^{(1-\beta)/2 - \zeta} -  t^{(1-\beta)/2 - 2\zeta}\right) \subseteq \Omega\left(t^{(3-\beta)/2-\zeta}\right),\]
    as desired.
\end{proof}

\subsection{Proof of Theorem~\ref{thm:normalized-md-rate}}
This proof mostly follows the same steps as that of Theorem~\ref{thm:primal-bias}.
\begin{proof}
Consider arbitrary $\alpha \in (0, 1)$ and define $r_\alpha$ according to Lemma~\ref{thm:approx-reg-dir-loss}.
Since $\lim_{t\to\infty}\norm{w_t}_\psi = \infty$, we can find $t_0$ so that $\norm{w_t}_\psi > \max(1, r_\alpha)$ for all $t \ge t_0$.
Let $c_t = (1+\alpha)\norm{w_t}_\psi$.

Substitute $w = c_t \reg{\psi}$ into Lemma~\ref{thm:key-iden}, we get
\[\brg{c_t\reg{\psi}}{w_{t+1}} \le \brg{c_t\reg{\psi}}{w_t} + \eta_t \inp{\nabla L(w_t)}{c_t \reg{\psi} - w_t} - \eta L(w_{t+1}) + \eta L(w_t).\]
By Corollary~\ref{thm:cross-term}, we have $\inp{\nabla L(w_t)}{c_t \reg{\psi} - w_t} \le 0$.
Therefore,
\begin{align*}
    \brg{c_t\reg{\psi}}{w_{t+1}} 
    &\le \brg{c_t\reg{\psi}}{w_t} - \eta_t L(w_{t+1}) + \eta_t L(w_t) \\
    &\le \brg{c_t\reg{\psi}}{w_t} + \eta_0 (t+1)^{-1/2}.
\end{align*}

It follows that
\begin{align*}
    &\brg{c_{t+1}\reg{\psi}}{w_{t+1}} \\
    \le{}& \brg{c_t\reg{\psi}}{w_t} + \eta_0 (t+1)^{-1/2} + \brg{c_{t+1}\reg{\psi}}{w_{t+1}} - \brg{c_t\reg{\psi}}{w_{t+1}} \\
    ={}& \brg{c_t\reg{\psi}}{w_t} + \eta_0 (t+1)^{-1/2} + \psi(c_{t+1} \reg{\psi}) - \psi(c_t \reg{\psi}) - \inp{\nabla\psi(w_{t+1})}{(c_{t+1} - c_t) \reg{\psi}}
\end{align*}
Summing over $t = t_0, \dots, T-1$ gives us
\begin{align}
    \brg{c_T\reg{\psi}}{w_T}
    &\le \brg{c_{t_0}\reg{\psi}}{w_{t_0}} + \sum_{t=t_0}^{T-1} \eta_0 (t+1)^{-1/2} + \psi(c_{T} \reg{\psi}) - \psi(c_{t_0} \reg{\psi}) \nonumber\\
    &\quad\quad\quad- \sum_{t=t_0}^{T-1}\inp{\nabla\psi(w_{t+1})}{(c_{t+1} - c_t) \reg{\psi}} \nonumber\\
    &= \brg{c_{t_0}\reg{\psi}}{w_{t_0}} + O(\sqrt{T}) + \psi(c_{T} \reg{\psi}) - \psi(c_{t_0} \reg{\psi}) \nonumber\\
    &\quad\quad\quad- \sum_{t=t_0}^{T-1}\inp{\nabla\psi(w_{t+1})}{(c_{t+1} - c_t) \reg{\psi}}
    \label{equ:normalized-succ-cross-term}
\end{align}

Now we want to establish a lower bound on the last term of \eqref{equ:normalized-succ-cross-term}.
To do so, we inspect the change in $\nabla\psi(w_t)$ from each successive mirror descent update:
\begin{subequations}
\begin{align}
    &\inp{\nabla\psi(w_{t+1}) - \nabla\psi(w_t)}{\reg{\psi}} \\
    ={}& \inp{-\eta_t\nabla L(w_{t})}{\reg{\psi}} \\
    \ge{}& \frac{1}{(1+\alpha)\norm{w_t}_\psi} \inp{-\eta_t\nabla L(w_t)}{w_t} \label{equ:normalized-cross-term-expansion-L1}\\
    \ge{}& \frac{1}{(1+\alpha)\norm{w_t}_\psi} \left((\beta-1) \norm{w_{t+1}}_\psi^\beta - (\beta-1) \norm{w_{t}}_\psi^\beta + \eta_t L(w_{t+1}) - \eta_t L(w_{t})\right) \label{equ:normalized-cross-term-expansion-L2} \\
    \ge{}& \frac{1}{(1+\alpha)\norm{w_t}_\psi} \left((\beta-1) \norm{w_{t+1}}_\psi^\beta - (\beta-1) \norm{w_{t}}_\psi^\beta\right) - \frac{\eta_t L(w_{t})}{\norm{w_t}_\psi} \label{equ:normalized-cross-term-expansion-L3}
\end{align}
\end{subequations}
where we applied Corollary~\ref{thm:cross-term} on \eqref{equ:normalized-cross-term-expansion-L1} and Lemma~\ref{thm:cross-term-diff} on \eqref{equ:normalized-cross-term-expansion-L2}.

Now we bound \eqref{equ:normalized-cross-term-expansion-L3}.
We claim the following identity and defer its derivation to Section~\ref{sec:main-thm-aux-pow}.
\begin{equation}
    (\beta-1)(\norm{w_{t+1}}_\psi^\beta - \norm{w_t}_\psi^\beta) \ge \beta (\norm{w_{t+1}}_\psi^{\beta-1} - \norm{w_{t}}_\psi^{\beta-1}) \norm{w_t}_\psi.
\end{equation}

From Lemma~\ref{thm:normalized-norm-rate}, we have that for any $\zeta > 0$, $\norm{w_t}_\psi \in \Omega(t^{(3-\beta)/2 - \zeta})$.
Therefore,
\[\frac{\eta_t L(w_t)}{\norm{w_t}_\psi} = \frac{\eta_0 (t+1)^{-1/2}}{\norm{w_t}_\psi} \in O(t^{-2 + \beta/2 + \zeta}).\]
We are left with
\begin{equation*} \inp{\nabla\psi(w_{t+1}) - \nabla\psi(w_t)}{\reg{\psi}} \ge \beta \cdot \frac{\norm{w_{t+1}}_\psi^{\beta-1} - \norm{w_{t}}_\psi^{\beta-1}}{1+\alpha} - O(t^{-2 + \beta/2 + \zeta}).
\end{equation*}

Summing over $t = t_0, \dots, T-1$ gives us
\begin{equation}\label{equ:normalized-mirror-cross-term}
\inp{\nabla\psi(w_{T}) - \nabla\psi(w_{t_0})}{\reg{\psi}} \ge \beta \cdot \frac{\norm{w_T}_\psi^{\beta-1} - \norm{w_0}_\psi^{\beta-1}}{1+\alpha} - O(T^{-1 + \beta/2 + \zeta}).
\end{equation}

With \eqref{equ:normalized-mirror-cross-term}, we can bound the last term of \eqref{equ:succ-cross-term} as follows:
\begin{align}
    & \sum_{t=t_0}^{T-1}\inp{\nabla\psi(w_{t+1})}{(c_{t+1} - c_t) \reg{\psi}} \nonumber \\
    \ge{}& \sum_{t=t_0+1}^{T} \frac{\beta \norm{w_t}_\psi^{\beta-1} - O(t^{-1 + \beta/2 + \zeta})}{1+\alpha}(c_t - c_{t-1}) \nonumber\\
    ={}& \sum_{t=t_0+1}^{T} \beta (\norm{w_t}_\psi^{\beta-1} - O(t^{-1 + \beta/2 + \zeta}))(\norm{w_t}_\psi - \norm{w_{t-1}}_\psi) \nonumber\\
    \ge{}& \sum_{t=t_0+1}^{T} (\norm{w_t}_\psi^\beta - \norm{w_{t-1}}_\psi^\beta) - O(T^{-1 + \beta/2 + \zeta}) \cdot (\norm{w_T}_\psi - \norm{w_{t_0}}_\psi)\nonumber\\
    ={}& \norm{w_T}_\psi^\beta - O(T^{-1 + \beta/2 + \zeta} \norm{w_T}_\psi) \label{equ:normalized-cross-telescoping}
\end{align}
where we defer the computation on the last inequality to Section~\ref{sec:main-thm-aux-pow}.

We now apply the inequality in \eqref{equ:normalized-cross-telescoping} to \eqref{equ:normalized-succ-cross-term}.
Note that $\psi(c_T \reg{}) = (1+\alpha)^\beta\norm{w_T}_\psi^\beta$.
We now have the following:
\[ \brg{(1+\alpha)\norm{w_T}_\psi \reg{\psi}}{w_T} \le \norm{w_T}_\psi^\beta ((1+\alpha)^\beta - 1) + O(\sqrt{T} + T^{-1 + \beta/2 + \zeta} \norm{w_T}_\psi).\]
After applying homogeneity of Bregman divergence, and define $\varepsilon \in (0, 1)$ so that $\alpha = \frac{\varepsilon}{1-\varepsilon}$, we have
\[ \brg{\reg{\psi}}{(1-\varepsilon)\frac{w_T}{\norm{w_T}}_\psi} \le \frac{\norm{w_T}_\psi^\beta (1 - (1-\varepsilon)^\beta)}{\norm{w_T}_\psi^\beta} + O\left(\frac{\sqrt{T} + T^{-1 + \beta/2 + \zeta} \norm{w_T}_\psi}{\norm{w_T}_\psi^\beta}\right).\]
For the remainder term to vanish, we apply Lemma~\ref{thm:normalized-norm-rate} and want that for sufficiently small $\zeta> 0$,
\[
\begin{cases}
    \beta ((3-\beta)/2 - \zeta) &> 1/2, \\
    (\beta-1) ((3-\beta)/2 - \zeta) &> -1 + \beta/ 2 + \zeta.
\end{cases}
\]
Solving the system gives that we need $1 < \beta < \frac{1}{2} (3 + \sqrt{5})$.
Therefore, for $1 < \beta < \frac{1}{2} (3 + \sqrt{5})$, we have
\[ \brg{\reg{\psi}}{(1-\varepsilon)\frac{w_T}{\norm{w_T}}_\psi} \le \frac{\norm{w_T}_\psi^\beta (1 - (1-\varepsilon)^\beta)}{\norm{w_T}_\psi^\beta} + o(1).\]
The lower-order term can be more precisely written as 
\[O\left(T^{-(3\beta - 1 - \beta^2)/2 + \beta\zeta}\right),\]
which gives us the rate at which the error term vanishes.
Now, to finish showing convergence, the rest of the proof follows exactly as that of Theorem~\ref{thm:primal-bias}.
\end{proof} 
\clearpage

\section{Practicality of \algname}
\label{sec:practicality}
To illustrate that \algname can be easily implemented, we show a proof-of-concept implementation in PyTorch.
This implementation can directly replace existing optimizers and thus require only minor changes to any existing training code. 

We also note that while the \algname update step requires more arithmetic operations than a standard gradient descent update, this does not significantly impact the total runtime because differentiation is the most computationally intense step.
We observed from our experiments that training with \algname is approximately 10\% slower than with PyTorch's \texttt{optim.SGD} (in the same number of epochs),\footnote{This measurement may not be very accurate because we were using shared computing resources.} and we believe that this gap can be closed with optimization.

\begin{lstlisting}[caption={Sample PyTorch implementation of \algname}]
import torch
from torch.optim import Optimizer

class pnormSGD(Optimizer):
    def __init__(self, params, lr=0.01, pnorm=2.0):
        # p-norm must be strictly greater than 1
        if not 1.01 <= pnorm:
            raise ValueError("Invalid p-norm value: {}".format(pnorm))

        defaults = dict(lr=lr, pnorm=pnorm)
        super(pnormSGD, self).__init__(params, defaults)

    def __setstate__(self, state):
        super(pnormSGD, self).__setstate__(state)

    def step(self, closure=None):
        loss = None
        if closure is not None:
            with torch.enable_grad():
                loss = closure()

        for group in self.param_groups:
            lr = group["lr"]
            pnorm = group["pnorm"]

            for param in group["params"]:
                if param.grad is None:
                    continue

                x, dx = param.data, param.grad.data

                # \ell_p^p potential function
                update = torch.pow(torch.abs(x), pnorm-1) * \
                                torch.sign(x) -  lr * dx           
                param.data =  torch.sign(update) * \
                            torch.pow(torch.abs(update), 1/(pnorm-1))

        return loss
\end{lstlisting} 

\section{Experimental Details}
\label{sec:experiment-detail}
All of the following experiments were performed on compute nodes equipped with an Intel Skylake CPU + one Nvidia V100 GPU.
The experiments involving linear models were CPU only and experiments with convolutional network models took advantage of the GPU's acceleration.

\subsection{Linear classification}
Here, we describe the details behind our experiments from Section~\ref{sec:linear-classifier}.
First, we note that we can absorb the labels $y_i$ by replacing $(x_i, y_i)$ with $(y_ix_i, 1)$.
This way, we can choose points with the same $+1$ label.

For the $\RR^2$ experiment, we first select three points $(\frac{1}{6}, \frac{1}{2}), (\frac{1}{2}, \frac{1}{6})$ and $(\frac{1}{3}, \frac{1}{3})$ so that the maximum margin direction is approximately $\frac{1}{\sqrt{2}}(1, 1)$.
Then we sample 12 additional points from $\mathcal{N}((\frac{1}{2}, \frac{1}{2}), 0.15 I_2)$.
The initial weight $w_0$ is selected from $\mathcal{N}(0, I_2)$.
We ran \algname with fixed step size $10^{-3}$ for 1 million steps.
As for the scatter plot of the data, we randomly re-assign a label and plot out $(x_i, 1)$ or $(-x_i, -1)$ uniformly at random.

For the $\RR^{100}$ experiment, we select 15 sparse vectors that each have up to 10 nonzero entries.
Each nonzero entry is i.i.d. sampled from $\mathcal{U}(-2, 4)$.
Because we are in the over-parameterized case, these vectors are linearly separable with high probability.
The initial weight $w_0$ is selected from $\mathcal{N}(0, 0.1 I_{100})$.
We ran \algname with step size $10^{-4}$ for 1 million steps.

\subsection{Normalized MD experiements}
For the first experiment with the synthetic dataset in $\RR^2$, we use the same choices of hyper-parameters as above.
The only difference is that we only run for 25000 iterations due to faster convergence of normalized MD.

For the MNIST experiments, we used two different models, the first one is a two-layer fully connected neural network with 300 neurons in the hidden layer and ReLU activation and the second one is a convolutional network with two convolution layers (see exact specification next page).
For both models, we applied cross-entropy loss and batch size of 512.
To avoid numerical issues in the normalized MD update \eqref{equ:normalized-md-rescaled}, we divide by $\max(L(w_t), 10^{-5})$ instead of the empirical loss directly.

For the fully connected network, we train for 200 epochs in total and use a learning rate schedule that starts with $\eta = 0.1$ and decays by a factor of 5 at the 120th, 150th, and 180th epochs.
In the normalized MD update \eqref{equ:normalized-md-rescaled}, the base step size $\eta_0$ follows the same schedule and has scale factor $\lambda = 0.1$.
These parameters were chosen to closely match the setup in~\cite[Section 4.2]{nacson2019convergence} when $p = 2$, but our experiments used mini-batch instead to better reflect a practical training scenario.

For the convolutional network, we train for 50 epochs in total and use a learning rate schedule that starts with $\eta = 0.02$ and decays by a factor of 5 at the 30th and 40th epochs.
In the normalized MD update \eqref{equ:normalized-md-rescaled}, the base step size $\eta_0$ follows the same schedule and has scale factor $\lambda = 1.0$.

\clearpage
\begin{verbatim}
===================================================================
Layer                                    Output Shape              
===================================================================                 
|-Conv2d: 3x3 kernel                     [512, 32, 26, 26]         
|-BatchNorm2d:                           [512, 32, 26, 26]         
|-ReLU:                                  [512, 32, 26, 26]         
|-MaxPool2d: 2x2 kernel                  [512, 32, 13, 13]         
|-Conv2d: 3x3 kernel                     [512, 32, 11, 11]         
|-BatchNorm2d:                           [512, 32, 11, 11]         
|-ReLU:                                  [512, 32, 11, 11]         
|-MaxPool2d: 2x2 kernel                  [512, 32, 5, 5]           
|-Flatten:                               [512, 800]                
|-Linear:                                [512, 64]                 
|-ReLU:                                  [512, 64]                 
|-Linear:                                [512, 10]                 
===================================================================
\end{verbatim}

\subsection{CIFAR-10 experiments}
For the experiments with the CIFAR-10 dataset, we adopted the example implementation from the \texttt{FFCV} library.\footnote{\url{https://github.com/libffcv/ffcv/tree/main/examples/cifar}}
For consistency, we ran \algname with the same hyper-parameters for all neural networks and values of $p$.
We used a cyclic learning rate schedule with a maximum learning rate of 0.1 and ran for 400 epochs so the training loss is almost equal to 0.\footnote{This differs from the setup from~\cite{azizan2021stochastic}, where they used a fixed small learning rate and much larger number of epochs.}

\subsection{ImageNet experiments}
For the experiments with the ImageNet dataset, we used the example implementation from the \texttt{FFCV} library.\footnote{\url{https://github.com/libffcv/ffcv-imagenet/}}
For consistency, we ran \algname with the same hyper-parameters for all neural networks and values of $p$.
We used a cyclic learning rate schedule with a maximum learning rate of 0.5 and ran for 120 epochs.
Note that, to more accurately measure the effect of \algname on generalization, we turned off any parameters that may affect regularization, e.g. with momentum set to 0, weight decay set to 0, and label smoothing set to 0, etc.

\vspace{-0.75em} 
\section{Additional Experimental Results}
\label{sec:add-experiments}
\subsection{Linear classification}
\label{sec:add-experiment-synthetic}
We present a more complete result for the setting of  Section \ref{sec:linear-classifier} with more values of $p$.
Note that Table~\ref{tab:linear-bias} is a subset of Table~\ref{tab:linear-bias-full-1} and Table~\ref{tab:linear-bias-md} is a subset of Table~\ref{tab:linear-bias-md-full-1}, as shown below.
Within each trial, we use the same dataset generated from a fixed random seed.

We first observe the results of \algname in Tables~\ref{tab:linear-bias-full-1} and~\ref{tab:linear-bias-full-2}.
Except for $p = 1.1$, \algname produces the smallest linear classifier under the corresponding $\ell_p$-norm and thus consistent with the prediction of Theorem~\ref{thm:primal-bias}.
When $p = 1.1$, Corollary~\ref{thm:final-convg-rate} predicts a much slower convergence rate.
So, for the number of iterations we have, \algname with $p = 1.1$ in fact cannot compete against \algname with $p = 1.5$, which has a much faster convergence rate but similar implicit bias.
The second trial shows a rare case where \algname with $p = 1.1$ could not even match \algname with $p = 2$ under the $\ell_{1.1}$-norm.

Next, we fix the value of $\beta$ and look at the results of MD with potential $\psi(\cdot) = \frac{1}{p} \norm{\cdot}_p^2$ in Tables~\ref{tab:linear-bias-md-full-1} and~\ref{tab:linear-bias-md-full-2}.
We are able to observe the same general trend as we did for \algname.
However, because we fixed the value of the exponent $\beta$, the various linear classifiers generated by MD would have similar rates of convergence.
We note that in both trials, MD with potential $\psi(\cdot) = \frac{1}{p} \norm{\cdot}_{1.1}^2$ led to the smallest classifier in $\ell_{1.1}$-norm, which is consistent with Theorem~\ref{thm:primal-bias}.

\begin{table}[!htb]
    \centering
    \setlength{\tabcolsep}{4.5pt}
    \begin{tabular}{l| c|c|c|c|c|c|c|c}
    \hline
    & $\ell_1$ norm & $\ell_{1.1}$ norm & $\ell_{1.5}$ norm & $\ell_{2}$ norm & $\ell_{3}$ norm & $\ell_{6}$ norm & $\ell_{10}$ norm & $\ell_{\infty}$ norm \\
    \hline\hline
    $p=1.1$ & \textbf{7.398} & 5.477 & 2.592 & 1.637 & 1.093 & 0.780 & 0.696 & 0.629 \\
    $p=1.5$ & 7.544 & \textbf{5.348} & \textbf{2.237} & 1.296 & 0.803 & 0.558 & 0.514 & 0.505 \\
    $p=2$ & 8.985 & 6.161 & 2.315 & \textbf{1.224} & 0.684 & 0.429 & 0.382 & 0.366 \\
    $p=3$ & 10.820 & 7.299 & 2.592 & 1.296 & \textbf{0.667} & 0.369 & 0.309 & 0.278 \\
    $p=6$ & 12.714 & 8.523 & 2.957 & 1.441 & 0.711 & \textbf{0.360} & 0.281 & 0.229 \\
    $p=10$ & 13.484 & 9.032 & 3.123 & 1.515 & 0.740 & 0.367 & \textbf{0.280} & \textbf{0.213} \\
    \hline
    \end{tabular}
    \caption{Size of the linear classifiers generated by \algname (after rescaling) in $\ell_1, \ell_{1.1}, \ell_{1.5}, \ell_2, \ell_3, \ell_6$ and $\ell_{10}$ norms.
    For each norm, we highlight the value of $p$ for which \algname generates the smallest classifier under that norm. (Trial 1)}
    \label{tab:linear-bias-full-1}
\end{table}

\begin{table}[!htb]
    \centering
    \setlength{\tabcolsep}{4.5pt}
    \begin{tabular}{l| c|c|c|c|c|c|c|c}
    \hline
    & $\ell_1$ norm & $\ell_{1.1}$ norm & $\ell_{1.5}$ norm & $\ell_{2}$ norm & $\ell_{3}$ norm & $\ell_{6}$ norm & $\ell_{10}$ norm & $\ell_{\infty}$ norm \\
    \hline\hline
    $p=1.1$ & 10.278 & 7.731 & 3.776 & 2.408 & 1.610 & 1.162 & 1.058 & 0.973 \\
    $p=1.5$ & \textbf{8.835} & \textbf{6.222} & \textbf{2.549} & 1.450 & 0.873 & 0.577 & 0.512 & 0.463 \\
    $p=2$ & 10.161 & 6.962 & 2.609 & \textbf{1.375} & 0.760 & 0.464 & 0.406 & 0.387 \\
    $p=3$ & 11.681 & 7.926 & 2.863 & 1.449 & \textbf{0.754} & 0.419 & 0.348 & 0.316 \\
    $p=6$ & 13.454 & 9.083 & 3.217 & 1.592 & 0.797 & \textbf{0.410} & 0.321 & 0.261 \\
    $p=10$ & 14.290 & 9.630 & 3.392 & 1.669 & 0.828 & 0.417 & \textbf{0.321} & \textbf{0.244} \\
    \hline
    \end{tabular}
    \caption{Size of the linear classifiers generated by \algname (after rescaling) in $\ell_1, \ell_{1.1}, \ell_{1.5}, \ell_2, \ell_3, \ell_6$ and $\ell_{10}$ norms.
    For each norm, we highlight the value of $p$ for which \algname generates the smallest classifier under that norm. (Trial 2)}
    \label{tab:linear-bias-full-2}
\end{table}

\clearpage

\begin{table}[!htb]
    \centering
    \setlength{\tabcolsep}{4.5pt}
    \begin{tabular}{l| c|c|c|c|c|c|c|c}
    \hline
    & $\ell_1$ norm & $\ell_{1.1}$ norm & $\ell_{1.5}$ norm & $\ell_{2}$ norm & $\ell_{3}$ norm & $\ell_{6}$ norm & $\ell_{10}$ norm & $\ell_{\infty}$ norm \\
    \hline\hline
    $p=1.1$ & \textbf{6.526} & \textbf{5.136} & 2.780 & 1.864 & 1.276 & 0.900 & 0.795 & 0.700 \\
    $p=1.5$ & 7.338 & 5.231 & \textbf{2.215} & 1.292 & 0.803 & 0.552 & 0.498 & 0.473 \\
    $p=2$ & 8.985 & 6.161 & 2.315 & \textbf{1.224} & 0.684 & 0.429 & 0.382 & 0.366 \\
    $p=3$ & 10.871 & 7.305 & 2.567 & 1.275 & \textbf{0.652} & 0.360 & 0.301 & 0.276 \\
    $p=6$ & 12.836 & 8.553 & 2.919 & 1.406 & 0.687 & \textbf{0.346} & 0.269 & 0.220 \\
    $p=10$ & 13.738 & 9.132 & 3.091 & 1.477 & 0.712 & 0.349 & \textbf{0.266} & \textbf{0.201} \\
    \hline
    \end{tabular}
    \caption{Size of the linear classifiers generated by MD with potential $\psi(\cdot) = \frac{1}{p} \norm{\cdot}_p^2$ (after rescaling) in $\ell_1, \ell_{1.1}, \ell_{1.5}, \ell_2, \ell_3, \ell_6$ and $\ell_{10}$ norms.
    For each norm, we highlight the value of $p$ for which MD with potential $\psi(\cdot) = \frac{1}{p} \norm{\cdot}_p^2$ generates the smallest classifier under that norm. (Trial 1)}
    \label{tab:linear-bias-md-full-1}
\end{table}

\begin{table}[!htb]
    \centering
    \setlength{\tabcolsep}{4.5pt}
    \begin{tabular}{l| c|c|c|c|c|c|c|c}
    \hline
    & $\ell_1$ norm & $\ell_{1.1}$ norm & $\ell_{1.5}$ norm & $\ell_{2}$ norm & $\ell_{3}$ norm & $\ell_{6}$ norm & $\ell_{10}$ norm & $\ell_{\infty}$ norm \\
    \hline\hline
    $p=1.1$ & \textbf{7.277} & \textbf{5.646} & 2.996 & 2.032 & 1.453 & 1.118 & 1.039 & 0.980 \\
    $p=1.5$ & 8.671 & 6.133 & \textbf{2.529} & 1.438 & 0.867 & 0.582 & 0.527 & 0.509 \\
    $p=2$ & 10.161 & 6.962 & 2.609 & \textbf{1.375} & 0.760 & 0.464 & 0.406 & 0.387 \\
    $p=3$ & 11.783 & 7.948 & 2.828 & 1.420 & \textbf{0.735} & 0.409 & 0.340 & 0.306 \\
    $p=6$ & 13.877 & 9.272 & 3.194 & 1.553 & 0.767 & \textbf{0.392} & 0.309 & 0.250 \\
    $p=10$ & 14.685 & 9.800 & 3.360 & 1.625 & 0.795 & 0.397 & \textbf{0.306} & \textbf{0.237} \\
    \hline
    \end{tabular}
    \caption{Size of the linear classifiers generated by MD with potential $\psi(\cdot) = \frac{1}{p} \norm{\cdot}_p^2$ (after rescaling) in $\ell_1, \ell_{1.1}, \ell_{1.5}, \ell_2, \ell_3, \ell_6$ and $\ell_{10}$ norms.
    For each norm, we highlight the value of $p$ for which MD with potential $\psi(\cdot) = \frac{1}{p} \norm{\cdot}_p^2$ generates the smallest classifier under that norm. (Trial 2)}
    \label{tab:linear-bias-md-full-2}
\end{table}

\subsection{Experiments with normalized MD}
\label{sec:add-experiment-normalized}
We present additional experiments on normalized MD that expand upon what we presented in Section~\ref{sec:normalized-exp}.
We compare normalized \algname against the standard \algname for $p = 1.5, 2$ and $2.5$.

For $p = 1.5$ and $2$, normalized \algname enjoys much faster convergence and we can see that its training loss in both synthetic dataset and MNIST is significantly lower than that of standard \algname.
The picture for $p = 2.5$ is less clear, where normalized \algname enjoys a similarly sizable advantage in rate of convergence for synthetic dataset while struggling somewhat on the MNIST dataset before learning rate decay kicks in.
But in all cases, the final loss achieved by normalized \algname is much lower and we see that the lower training loss translates to better test performance on the MINST dataset, where normalized \algname is better than standard \algname by about 0.2--0.7 percent.
These observations are consistent with our analysis on more general normalized mirror descent, as reflected by the statement of Theorem~\ref{thm:normalized-md-rate}.

\begin{figure}[!ht]
    \centering
    \begin{subfigure}[b]{\textwidth}
        \centering
        \includegraphics[width=0.3\textwidth]{figure/synthetic/loss_n15.pdf}
        ~
        \includegraphics[width=0.3\textwidth]{figure/synthetic/angle_n15.pdf}
        ~
        \includegraphics[width=0.3\textwidth]{figure/synthetic/norm_n15.pdf}
        \caption{$p = 1.5$}
    \end{subfigure}
    \begin{subfigure}[b]{\textwidth}
        \centering
        \includegraphics[width=0.3\textwidth]{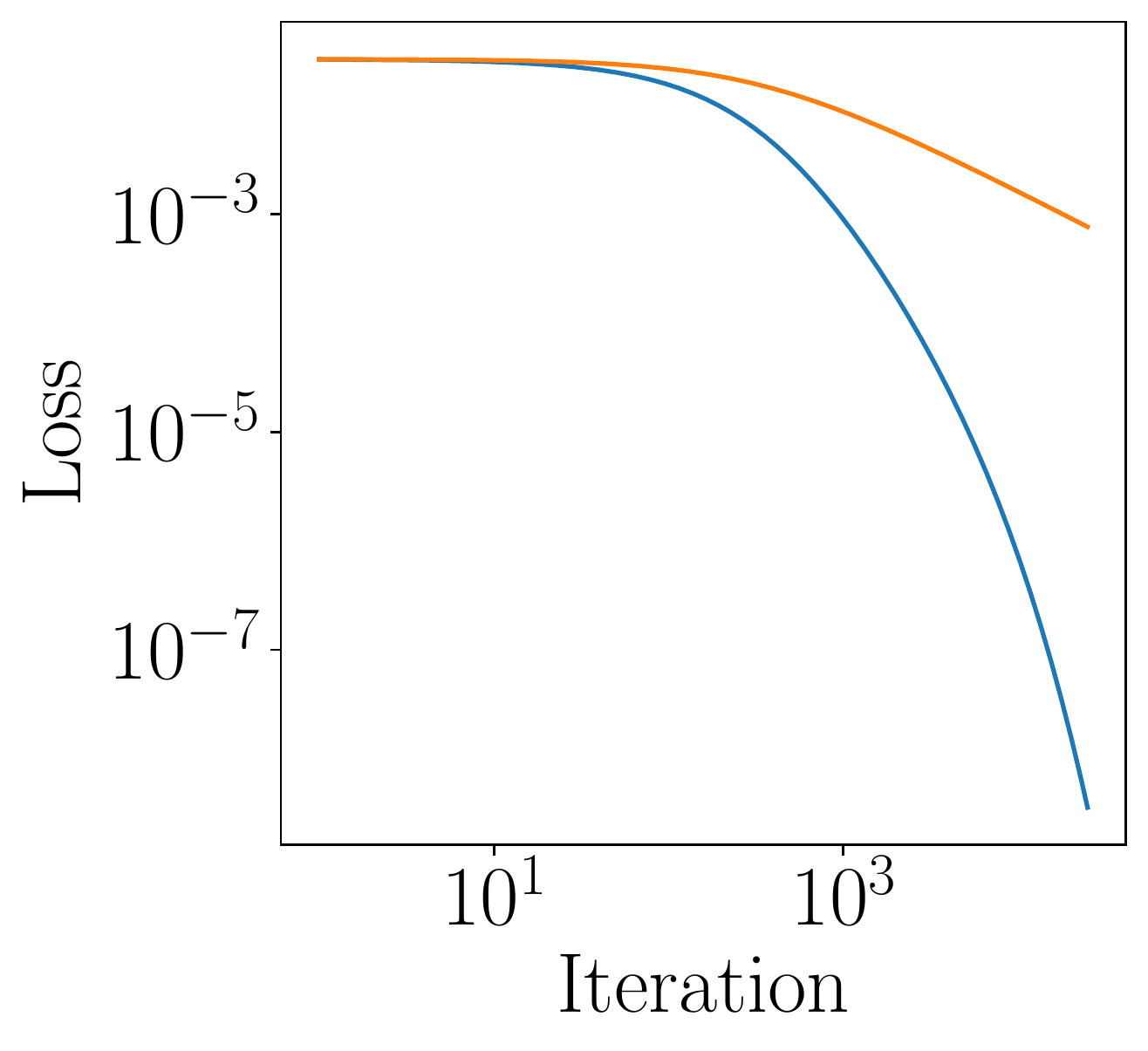}
        ~
        \includegraphics[width=0.3\textwidth]{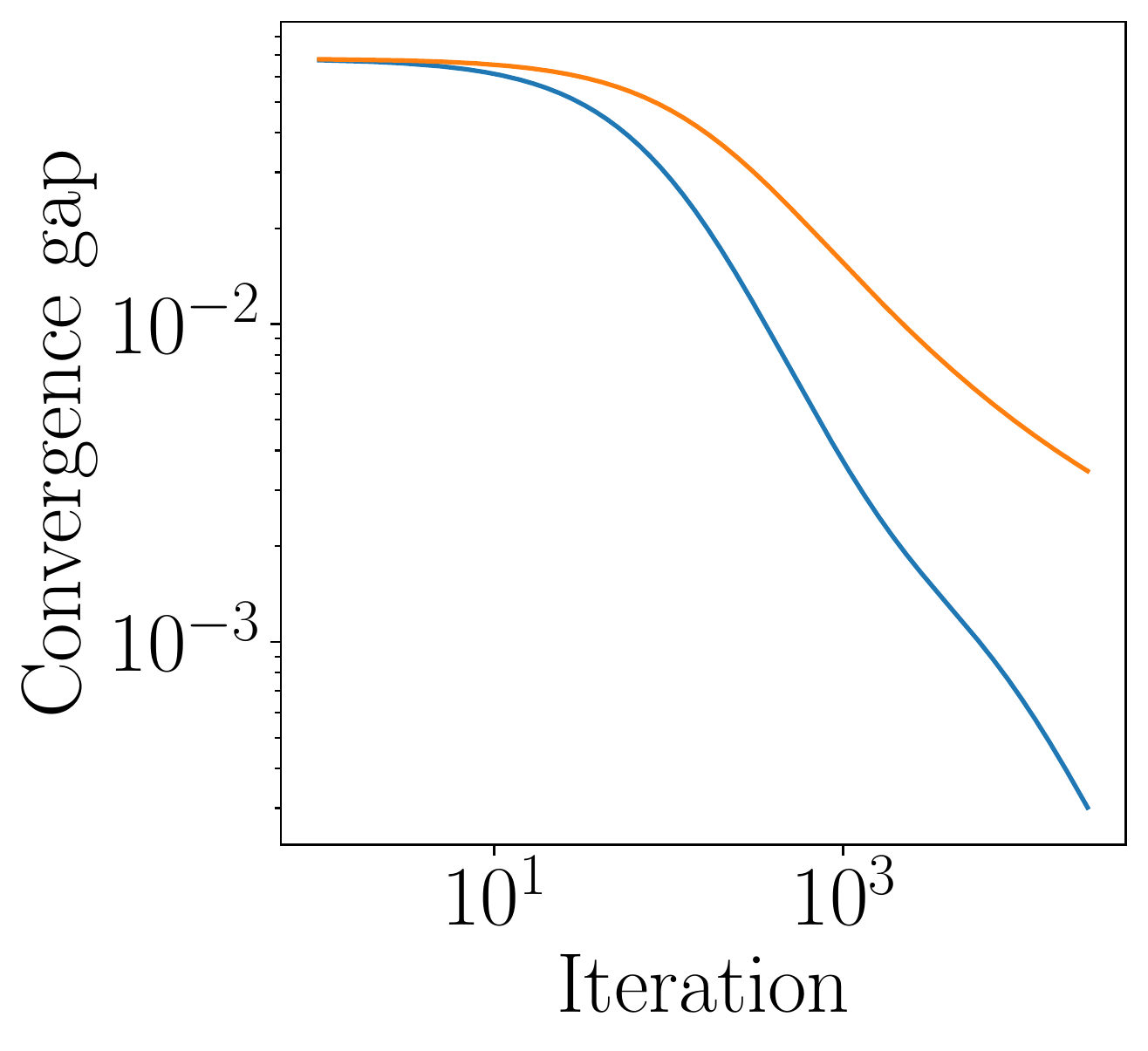}
        ~
        \includegraphics[width=0.3\textwidth]{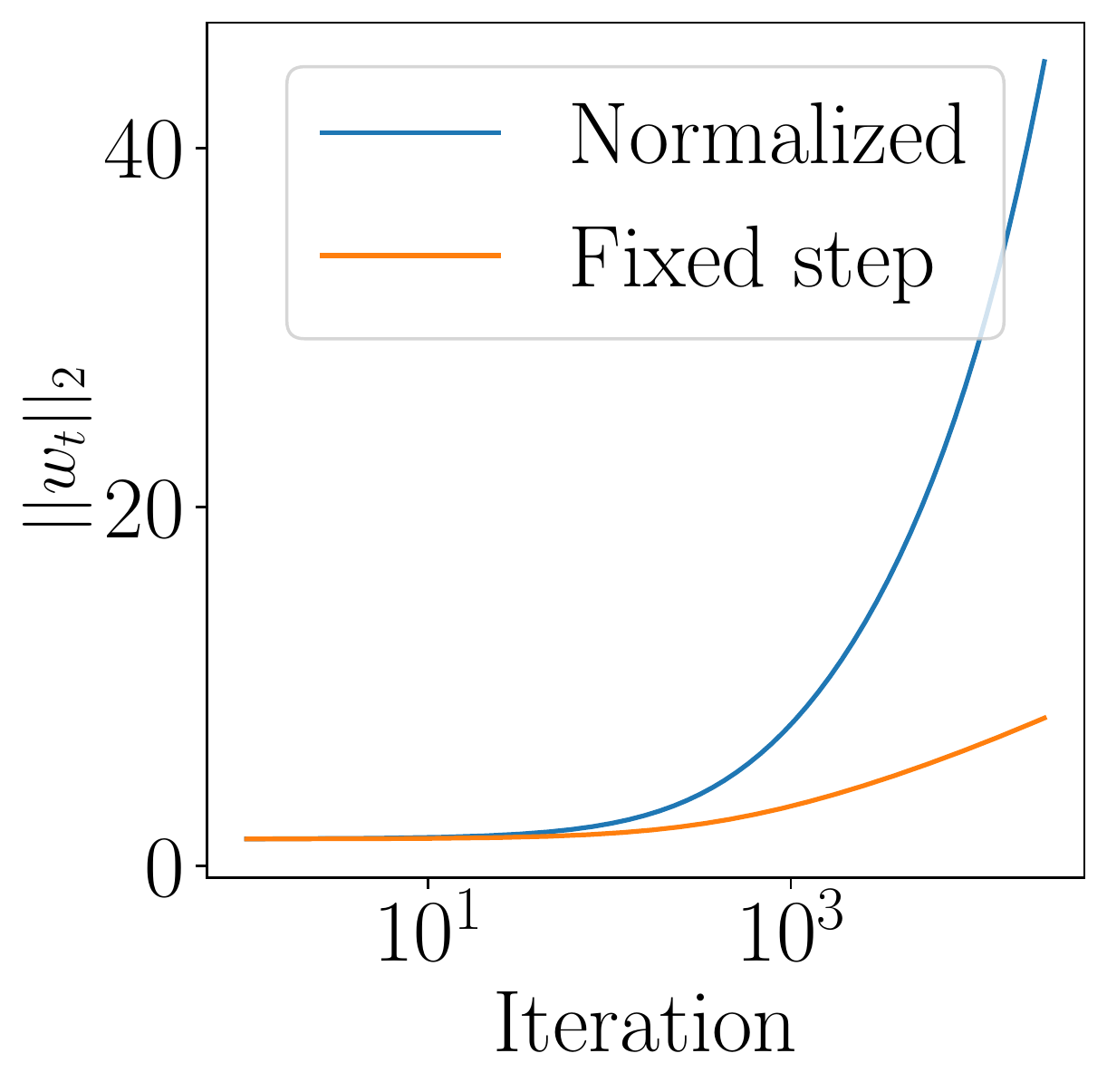}
        \caption{$p = 2$}
    \end{subfigure}
        \begin{subfigure}[b]{\textwidth}
        \centering
        \includegraphics[width=0.3\textwidth]{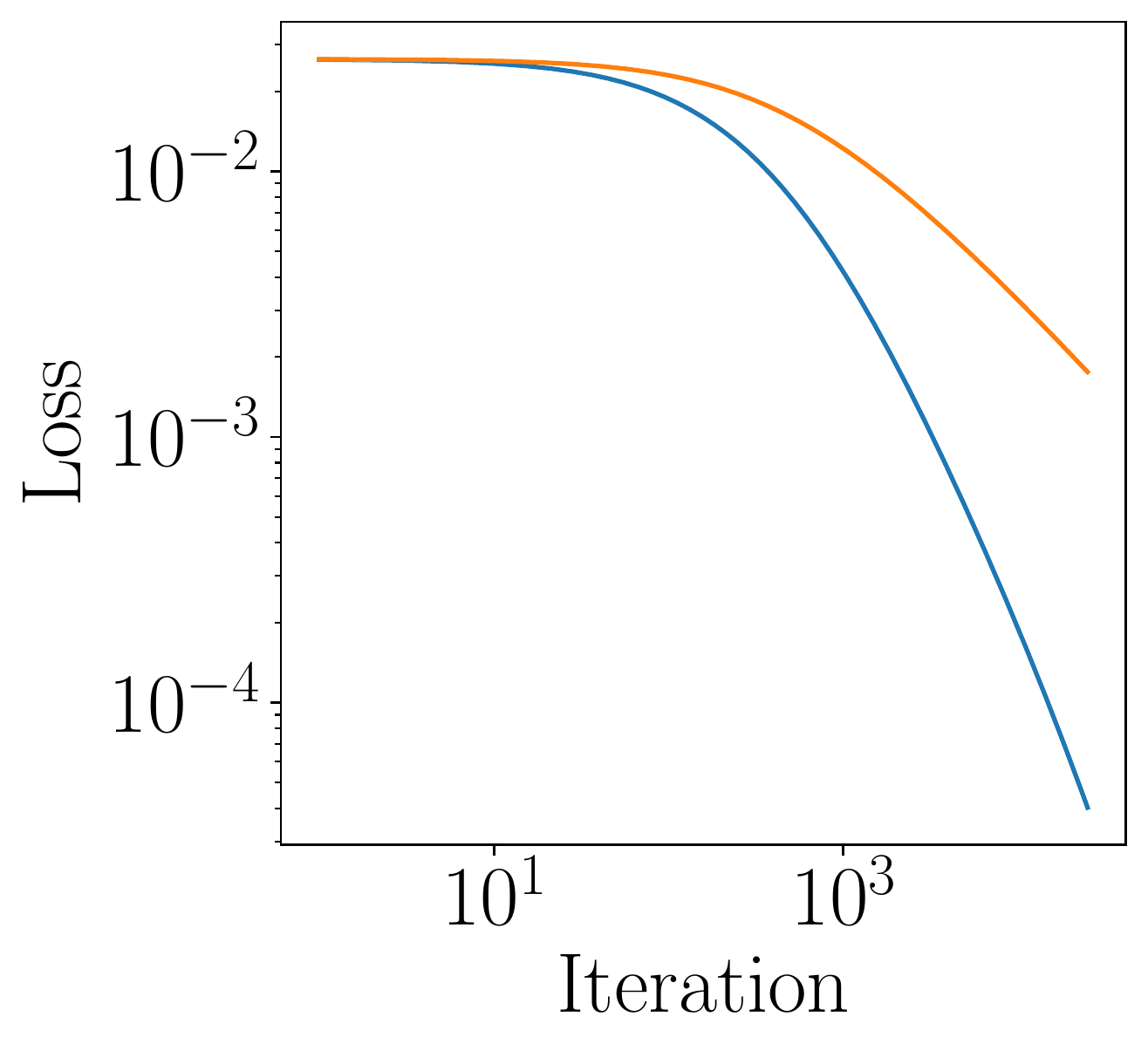}
        ~
        \includegraphics[width=0.3\textwidth]{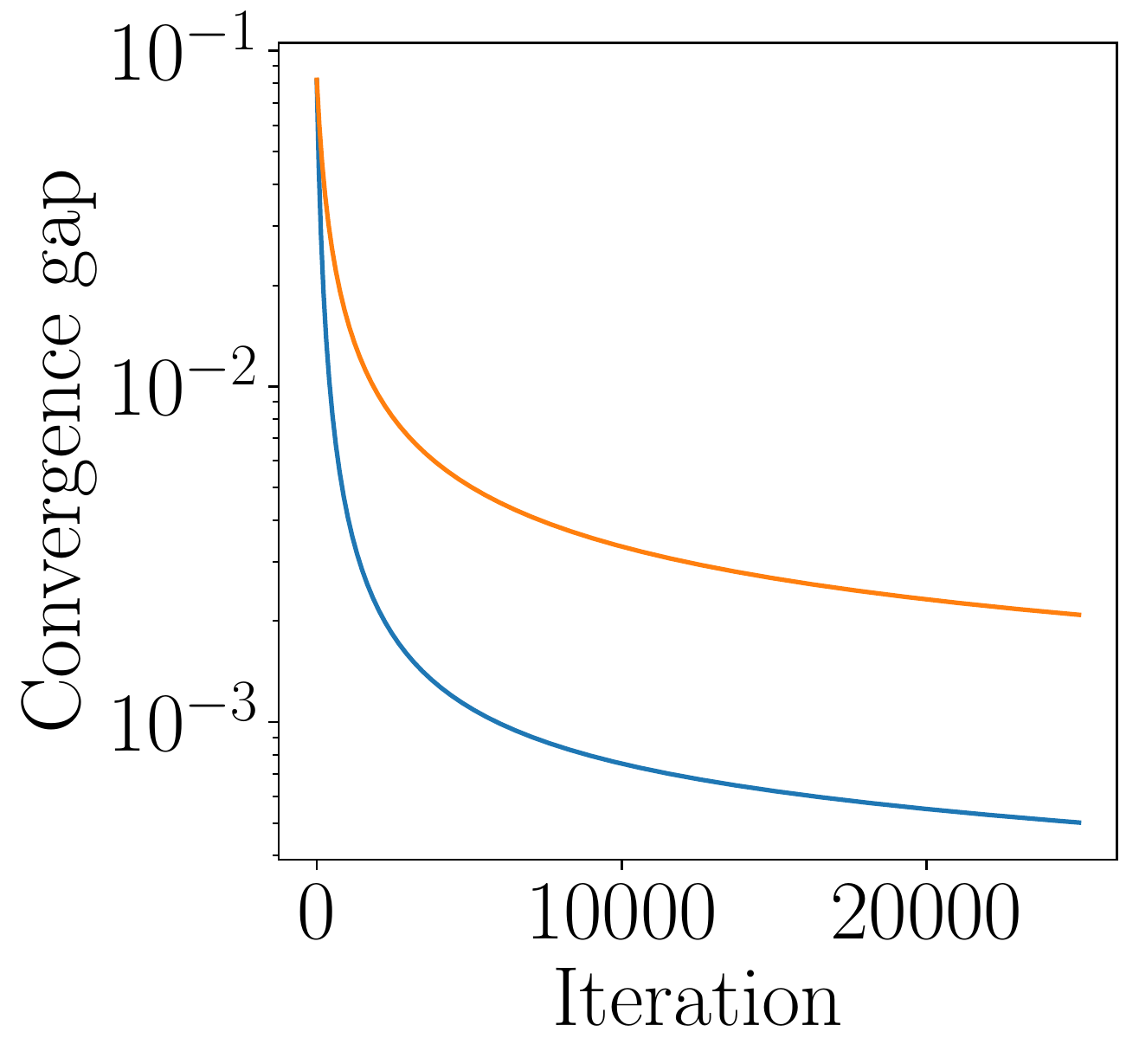}
        ~
        \includegraphics[width=0.3\textwidth]{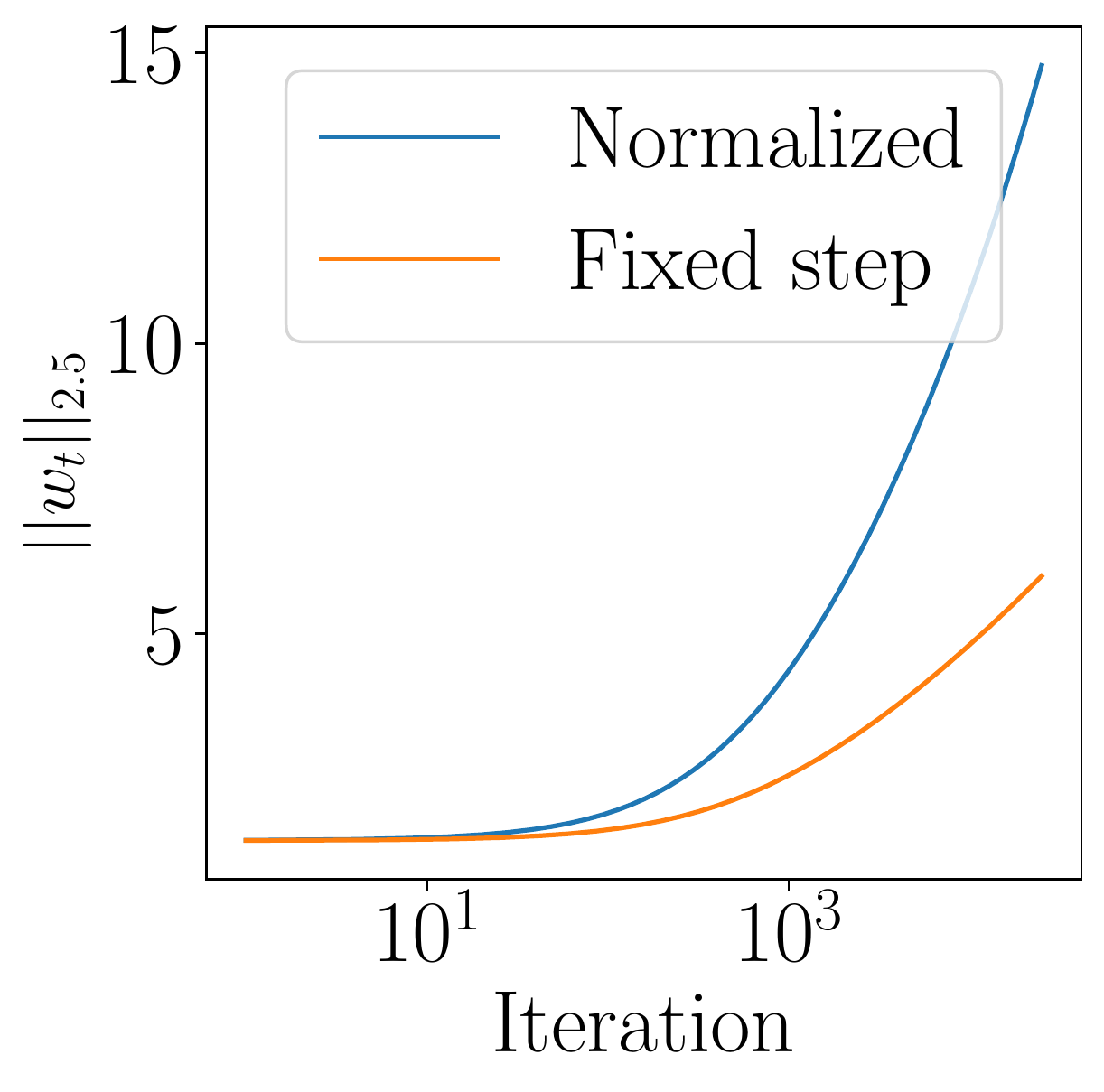}
        \caption{$p = 2.5$}
    \end{subfigure}
    \caption{Examples of \algname and normalized \algname on randomly generated data with exponential loss and $p = 1.5, 2, 2.5$. 
    \textbf{(1)} The left plot is the empirical loss.
    \textbf{(2)} The middle plot shows the rate which the quantity $\brg{\reg{p}}{w_t / \norm{w_t}_t}$ converges to 0.
    \textbf{(3)} The right plot shows how fast the $p$-norm of $w_t$ growths.
    }
\end{figure}

\clearpage

\begin{figure}[!ht]
    \centering
    \begin{subfigure}[b]{\textwidth}
        \centering
        \includegraphics[width=0.45\textwidth]{figure/mnist/mnist_loss_15.pdf}
        ~
        \includegraphics[width=0.45\textwidth]{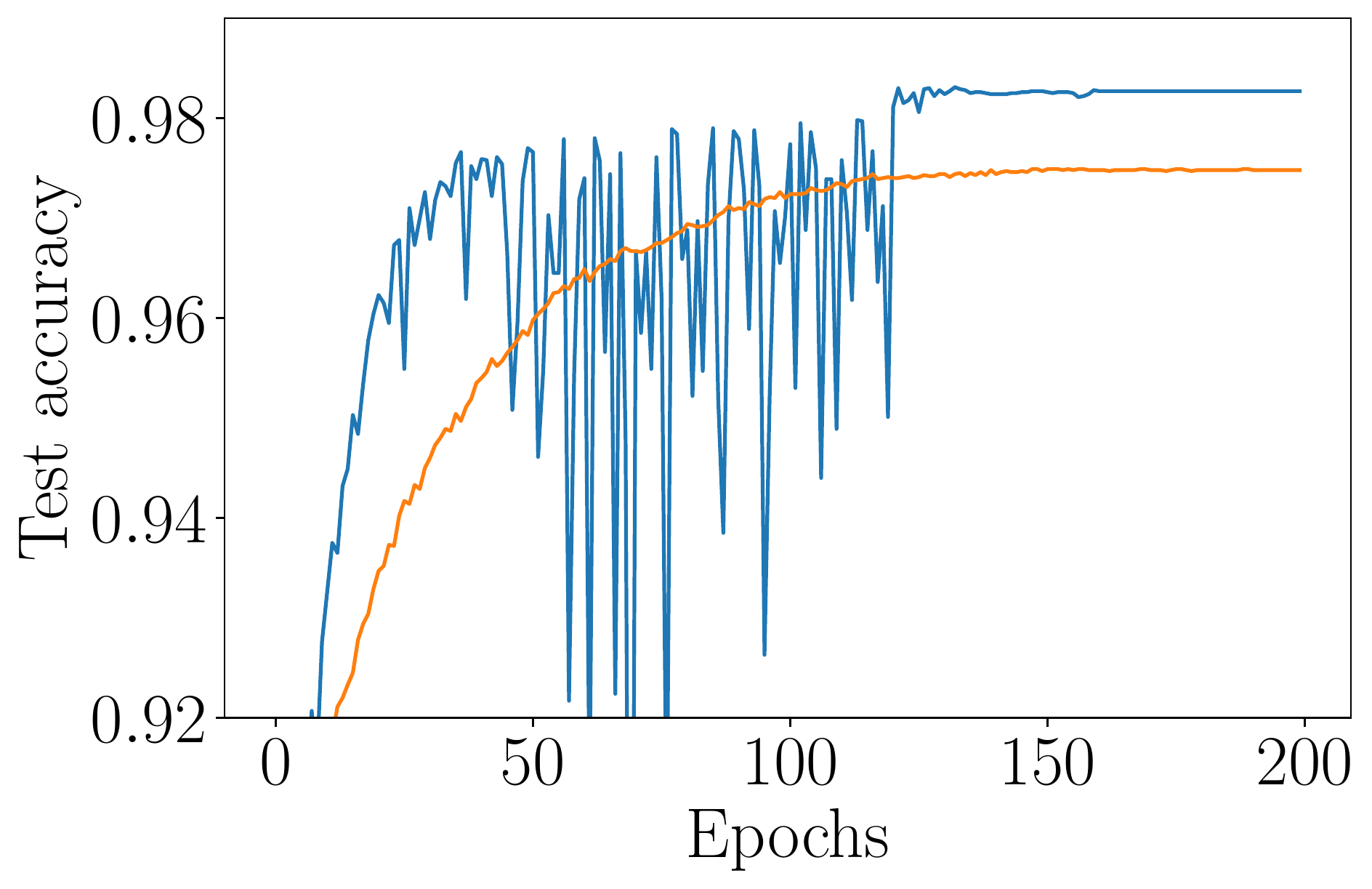}
        \caption{$p = 1.5$}
    \end{subfigure}
    \begin{subfigure}[b]{\textwidth}
        \centering
        \includegraphics[width=0.45\textwidth]{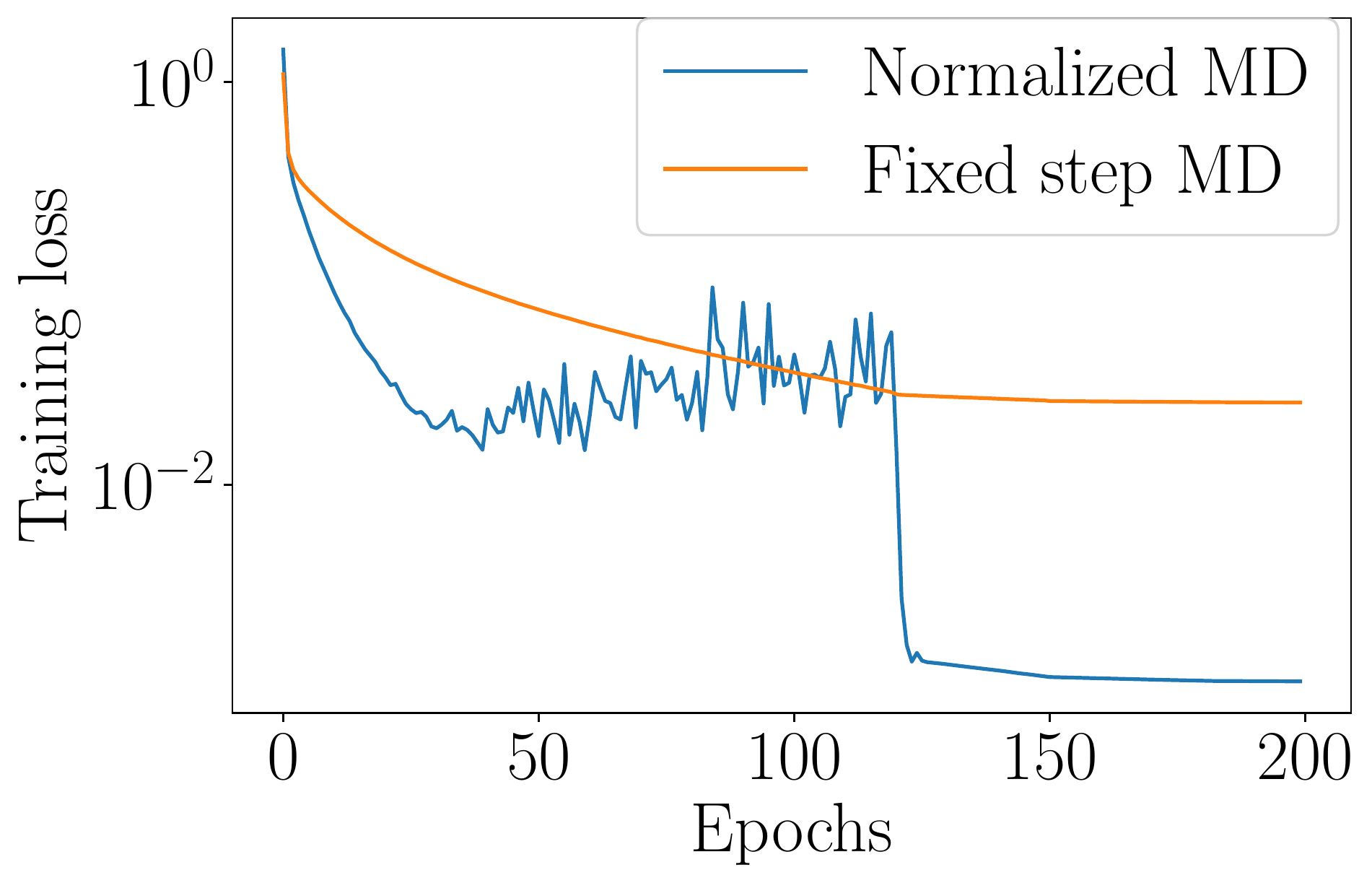}
        ~
        \includegraphics[width=0.45\textwidth]{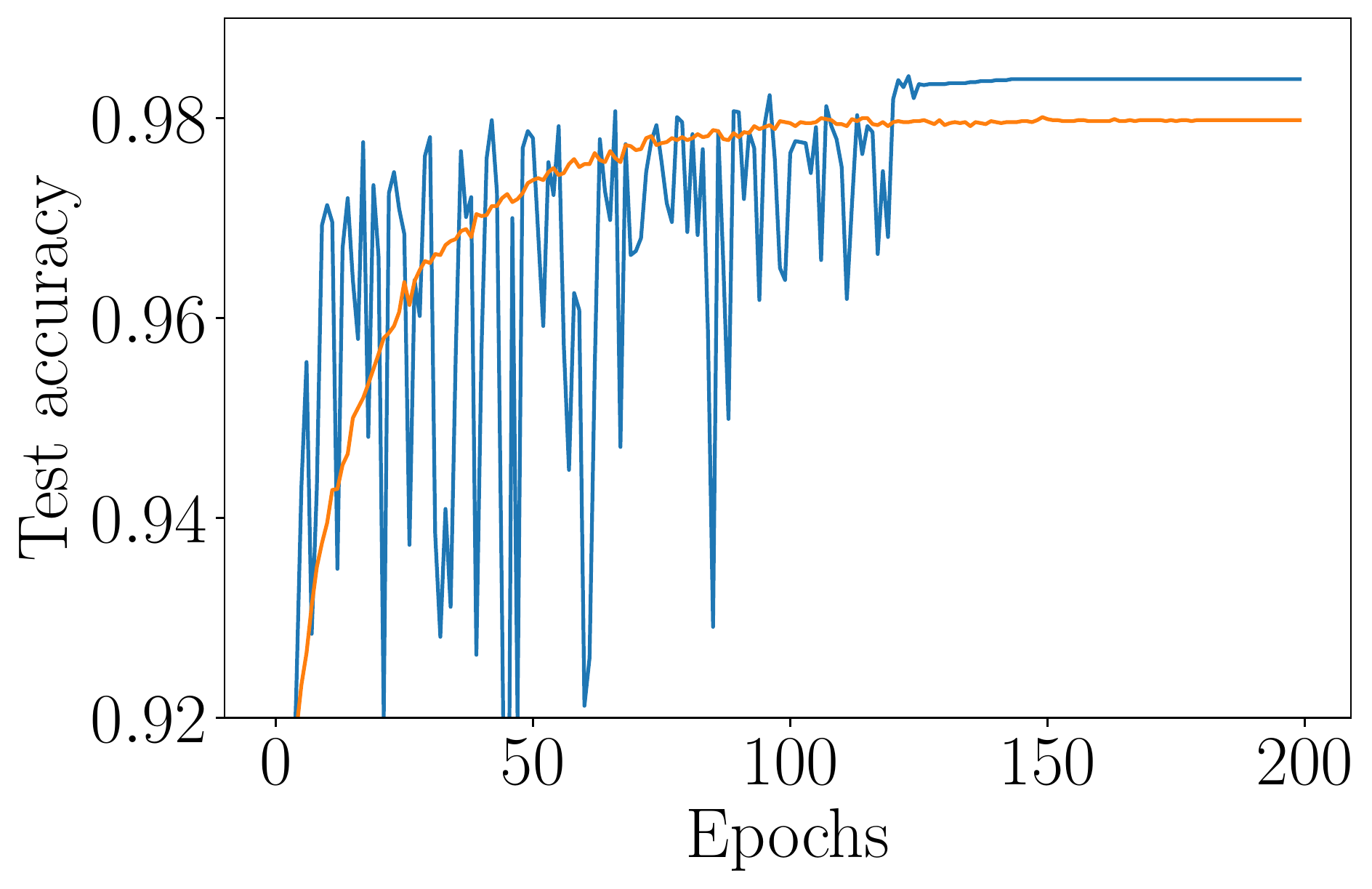}
        \caption{$p = 2$}
    \end{subfigure}
    \begin{subfigure}[b]{\textwidth}
        \centering
        \includegraphics[width=0.45\textwidth]{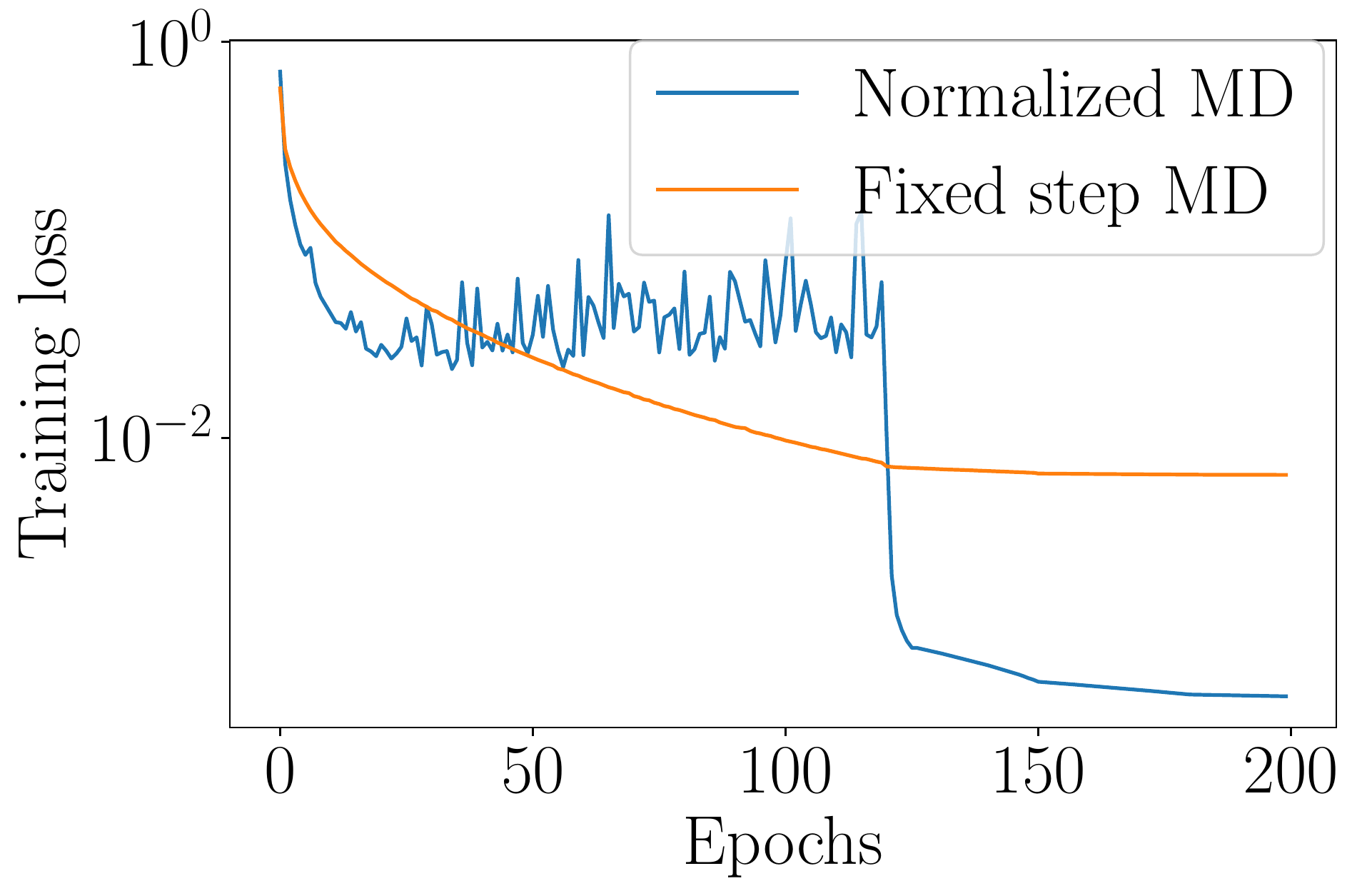}
        ~
        \includegraphics[width=0.45\textwidth]{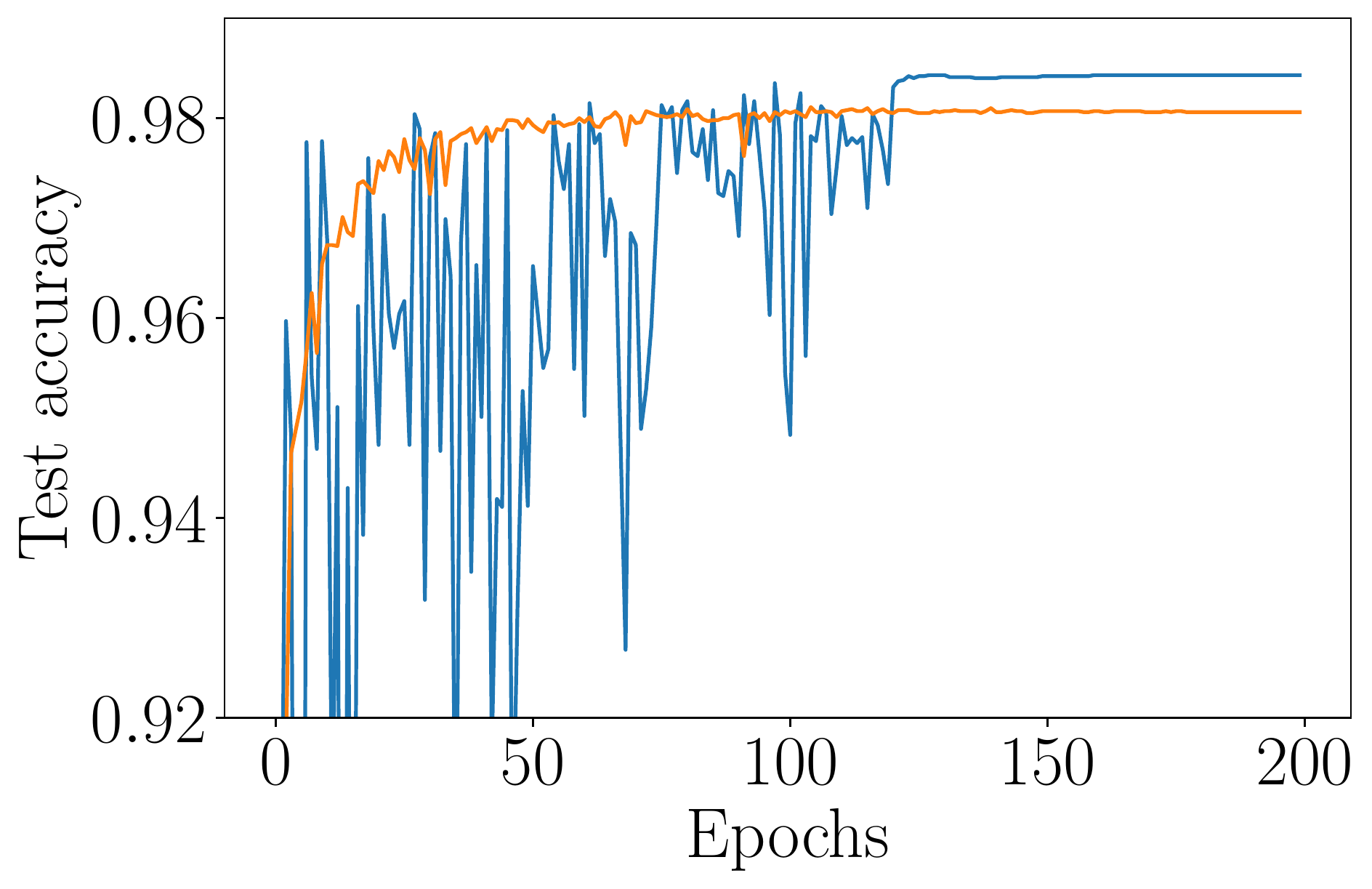}
        \caption{$p = 2.5$}
    \end{subfigure}
    \caption{Example of \algname and normalized \algname on the MNIST dataset and $p = 1.5, 2, 2.5$. \\
    \textbf{(1)} The left plot is the empirical loss at training time.
    \textbf{(2)} The right plot is the test accuracy.
    }
\end{figure}

\begin{table}[!ht]
    \centering
    \begin{tabular}{l| c | c }         
        \hline
        & Mirror descent (\algname) & Normalized \algname \\
        \hline\hline
        $p=1.5$ & 97.65 & 98.37 \\
        $p=2$ (SGD) & 97.95 &  98.34 \\
        $p=2.5$ & 98.29 & 98.48 \\
         \hline
    \end{tabular}
    \caption{MNIST accuracy (\%) of \algname versus normalized \algname for a fully connected network. Note that the normalized version has better generalization for all values of $p$.}
\end{table}

\begin{figure}[!ht]
    \centering
    \begin{subfigure}[b]{\textwidth}
        \centering
        \includegraphics[width=0.45\textwidth]{figure/mnist/mnist_loss_conv_15.pdf}
        ~
        \includegraphics[width=0.45\textwidth]{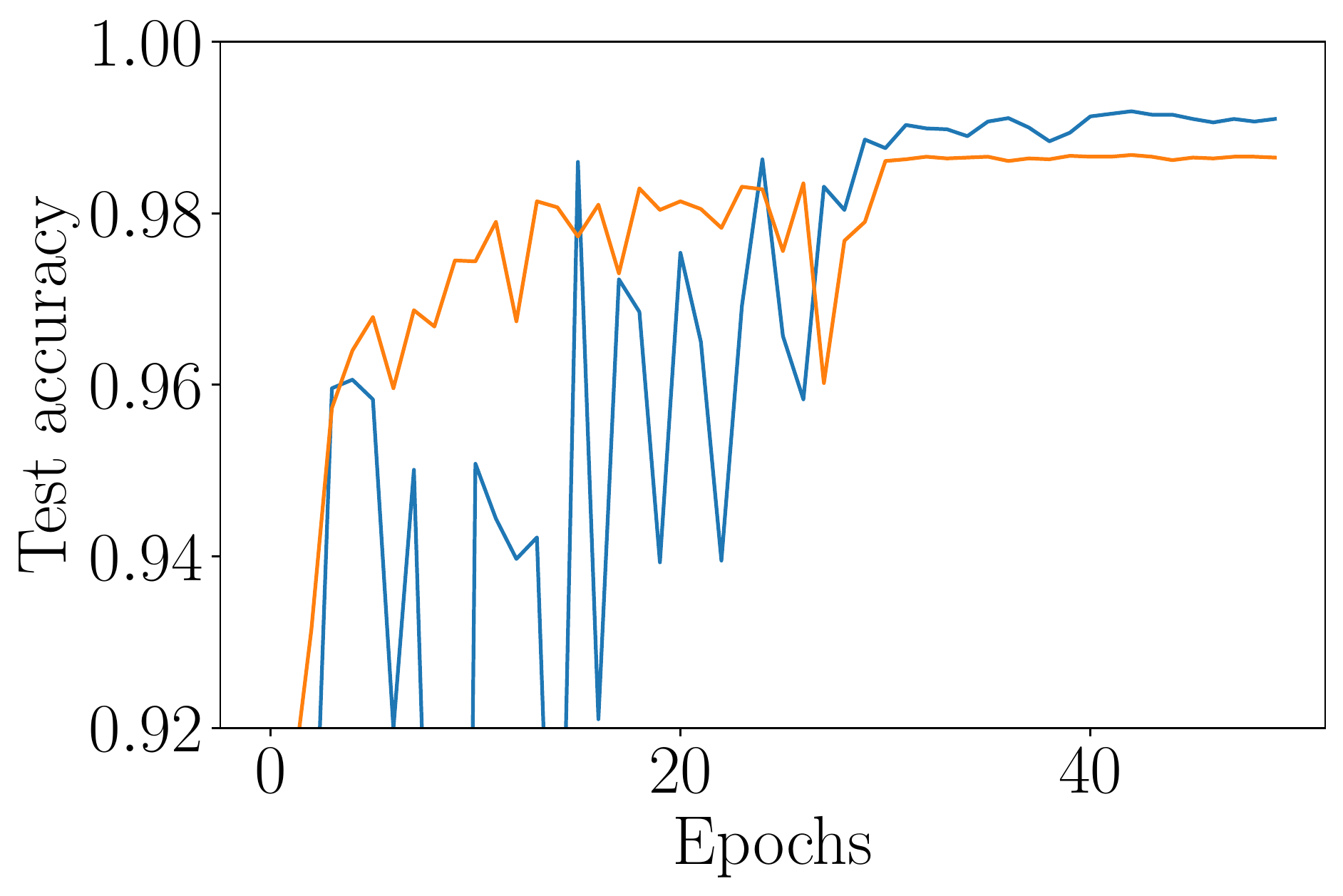}
        \caption{$p = 1.5$}
    \end{subfigure}
    \begin{subfigure}[b]{\textwidth}
        \centering
        \includegraphics[width=0.45\textwidth]{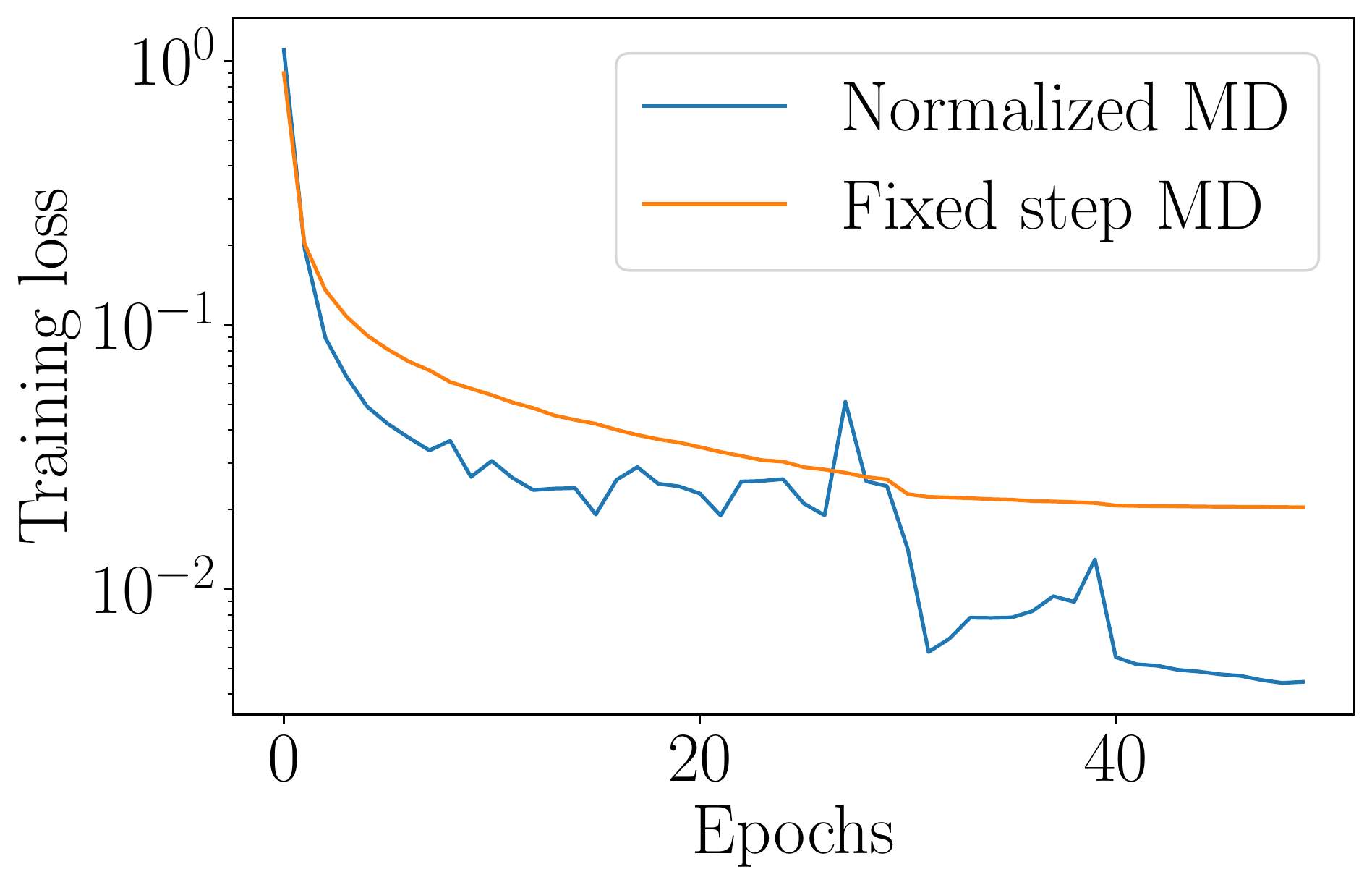}
        ~
        \includegraphics[width=0.45\textwidth]{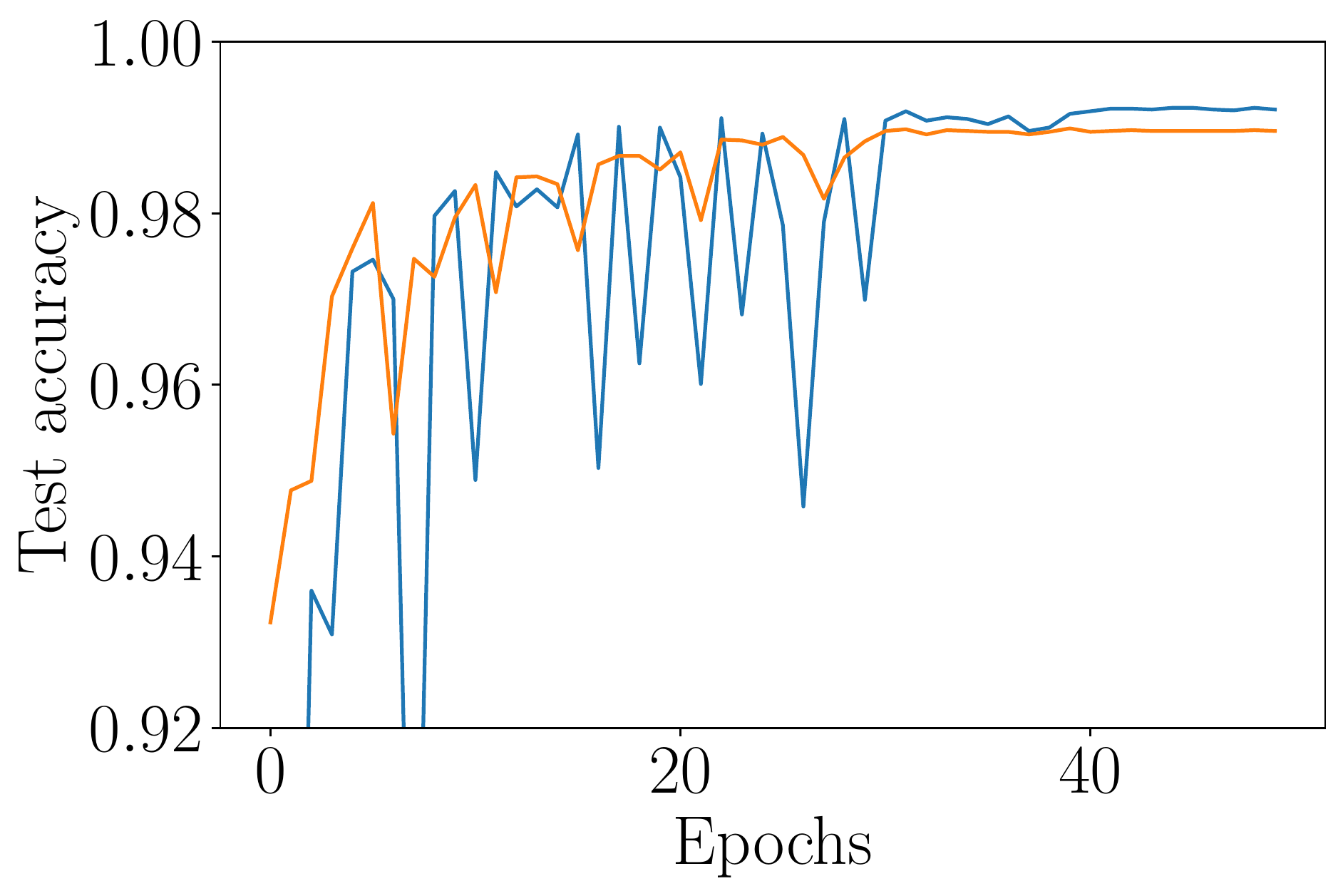}
        \caption{$p = 2$}
    \end{subfigure}
    \begin{subfigure}[b]{\textwidth}
        \centering
        \includegraphics[width=0.45\textwidth]{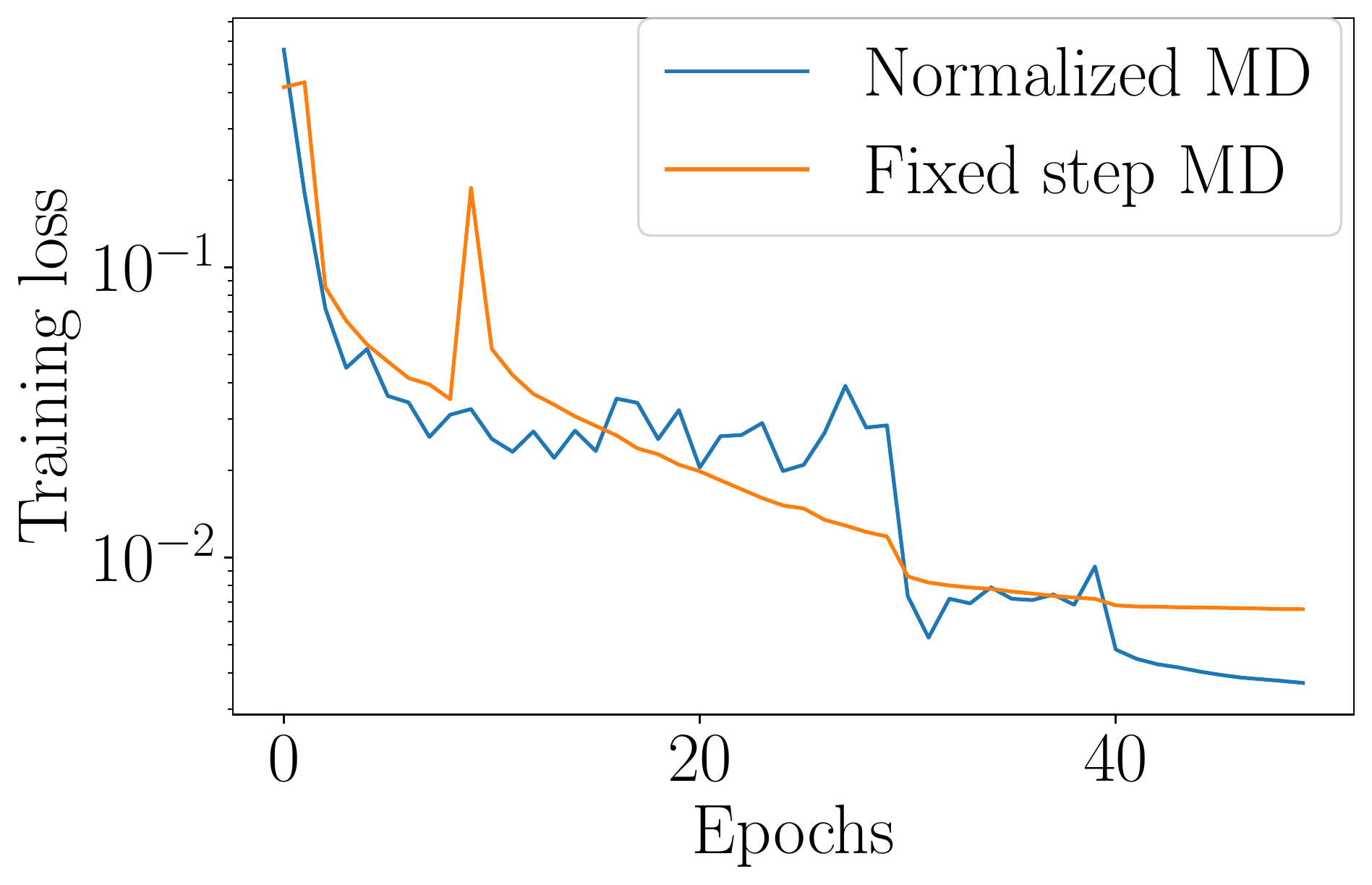}
        ~
        \includegraphics[width=0.45\textwidth]{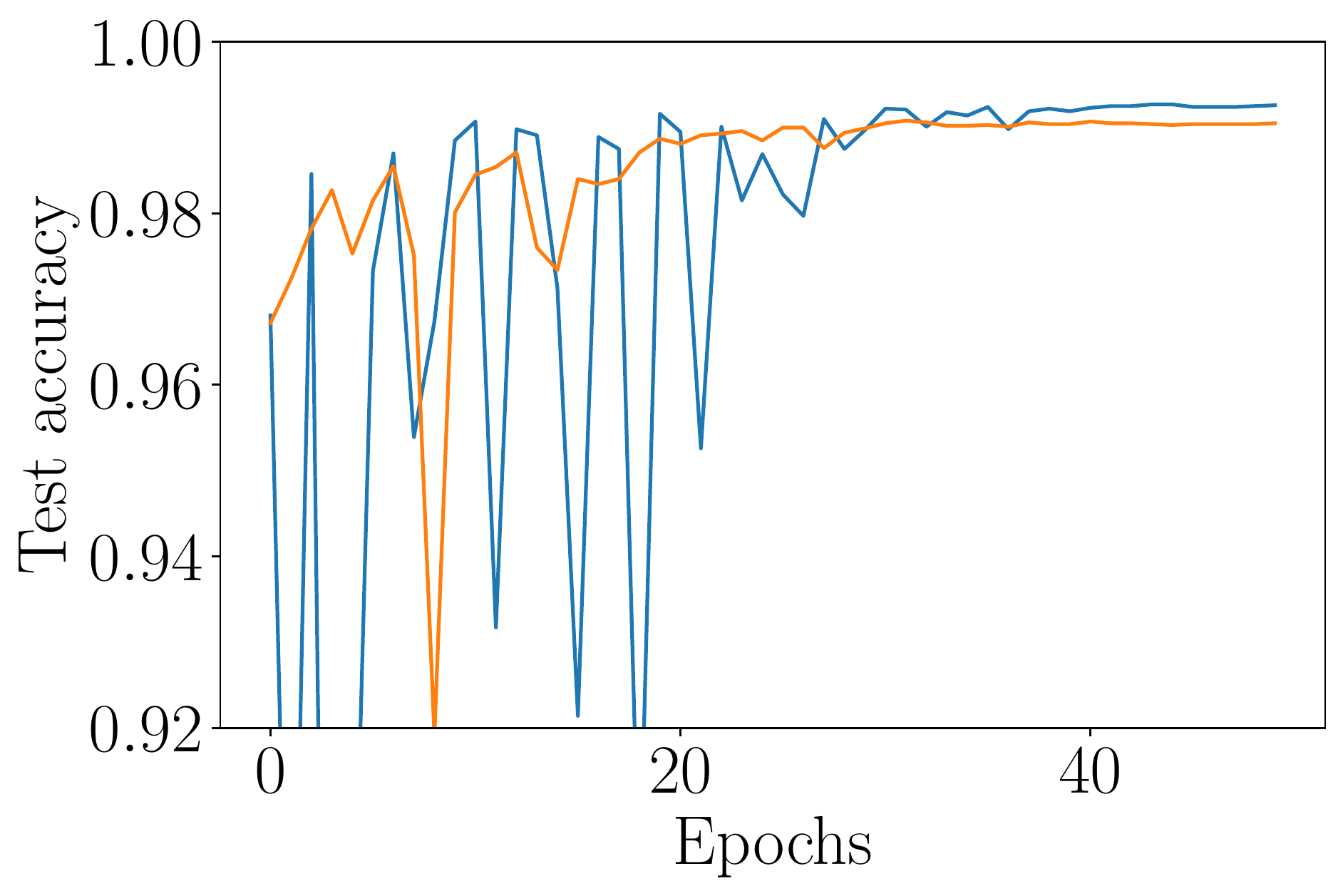}
        \caption{$p = 2.5$}
    \end{subfigure}
    \caption{Example of \algname and normalized \algname on the MNIST dataset and $p = 1.5, 2, 2.5$. \\
    \textbf{(1)} The left plot is the empirical loss at training time.
    \textbf{(2)} The right plot is the test accuracy.
    }
\end{figure}

\begin{table}[!ht]
    \centering
    \begin{tabular}{l| c | c }
        \hline
        & Mirror descent (\algname) & Normalized \algname \\
        \hline\hline
        $p=1.5$ & 98.68 & 99.19 \\
        $p=2$ (SGD) & 98.99 &  99.23 \\
        $p=2.5$ & 99.08 & 99.27 \\
         \hline
    \end{tabular}
    \caption{MNIST accuracy (\%) of \algname versus normalized \algname for a convolutional network. Note that the normalized version has better generalization for all values of $p$.}
\end{table}

\clearpage

\subsection{CIFAR-10 experiments: implicit bias}
We present more complete illustrations of the implicit bias trends of trained models in CIFAR-10.
Compared to Figure~\ref{fig:cifar10-hist}, the plots below include data from additional values for additional values of $p$ and more deep neural network architectures.

We see that the trends we observed in Section~\ref{sec:cifar} continue to hold under architectures other than \textsc{ResNet}.
In particular, for smaller $p$'s, the weight distributions of models trained with \algname have higher peaks around zero, and higher $p$'s result in smaller maximum weights. 

\label{sec:add-experiment-cifar-bias}
\begin{figure}[!h]
    \centering
    \begin{subfigure}[b]{0.48\textwidth}
        \centering
        \includegraphics[width=\textwidth]{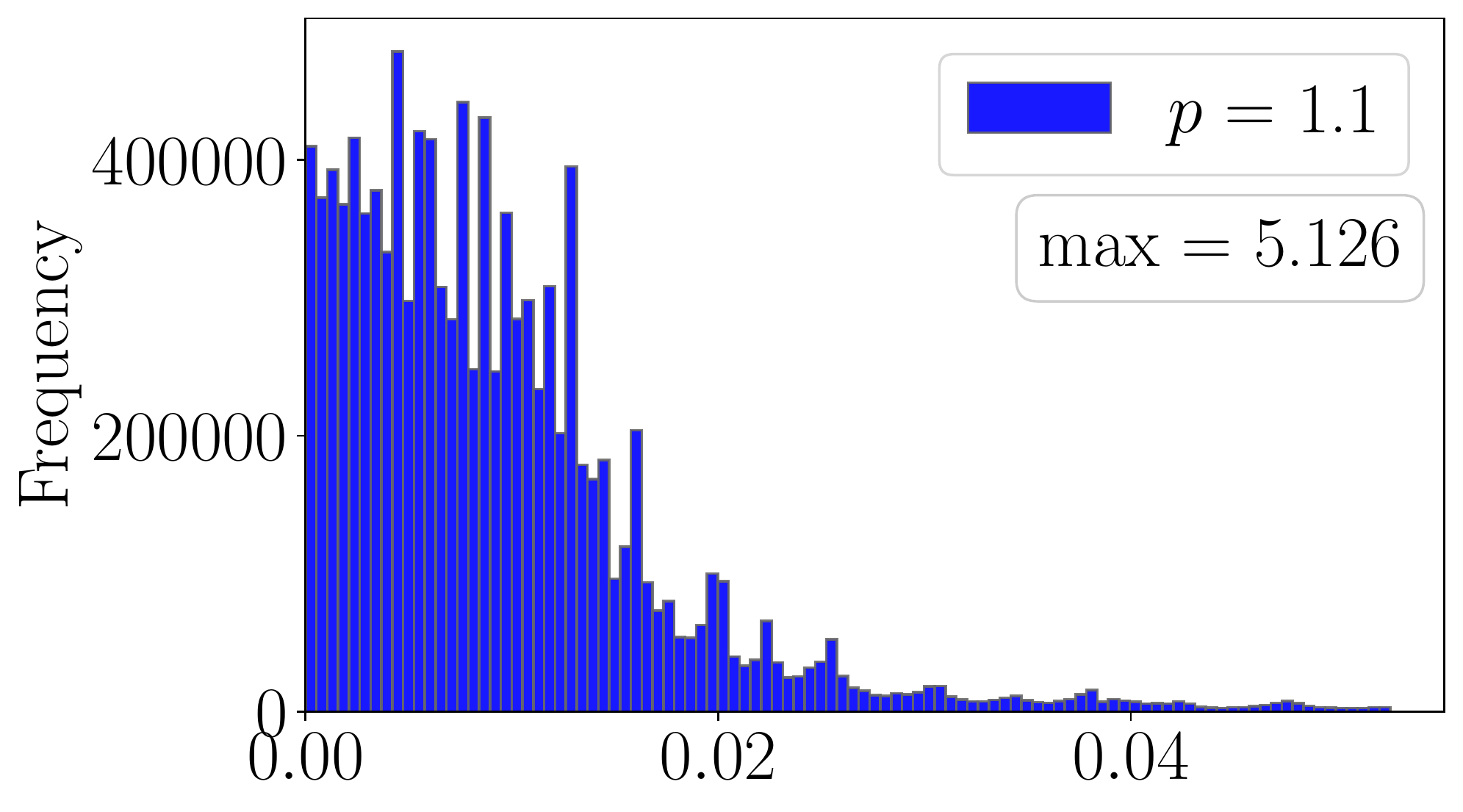}
    \end{subfigure}
    ~
    \begin{subfigure}[b]{0.45\textwidth}
        \includegraphics[width=\textwidth]{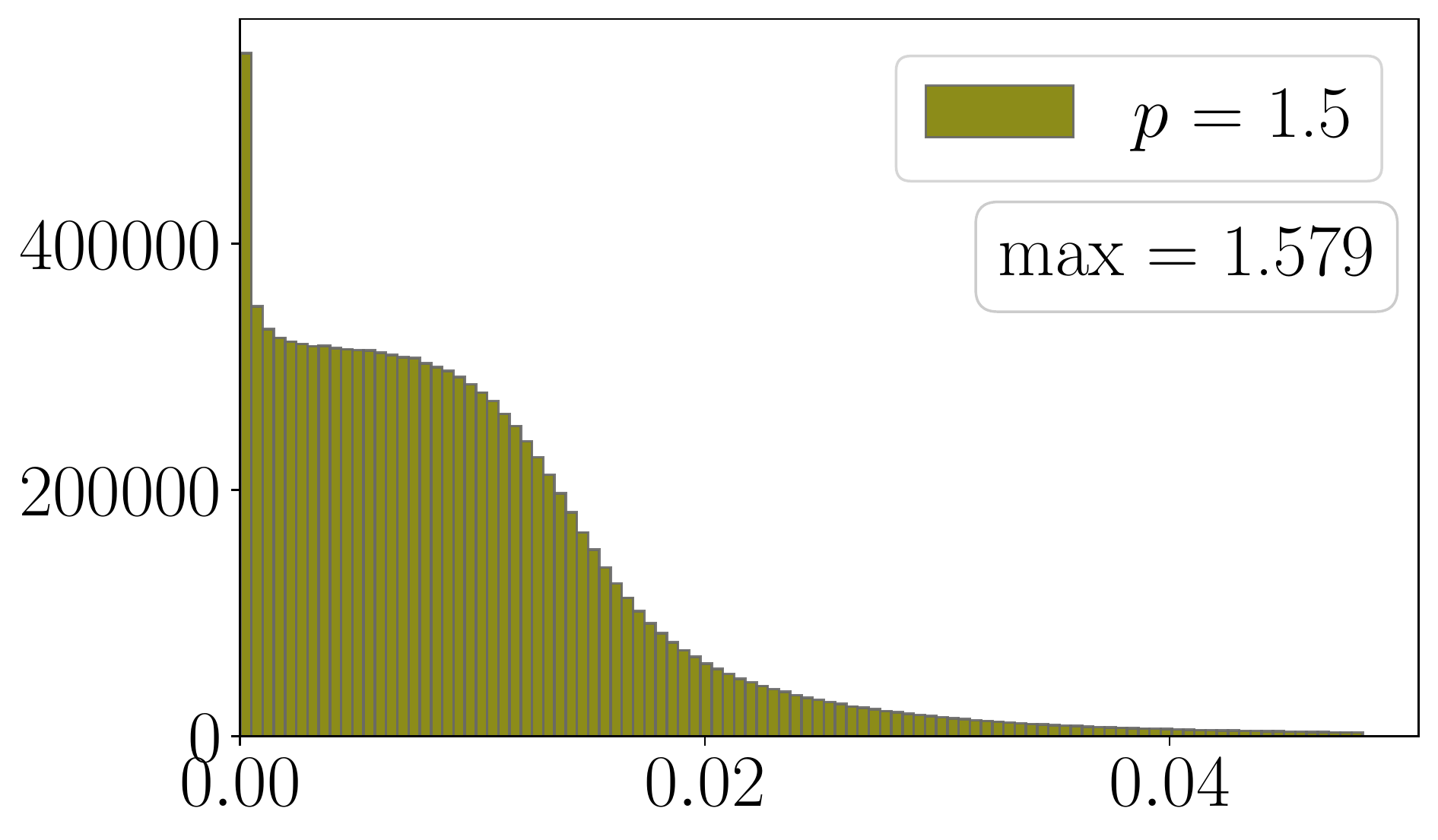}
    \end{subfigure}
    \begin{subfigure}[b]{0.48\textwidth}
        \includegraphics[width=\textwidth]{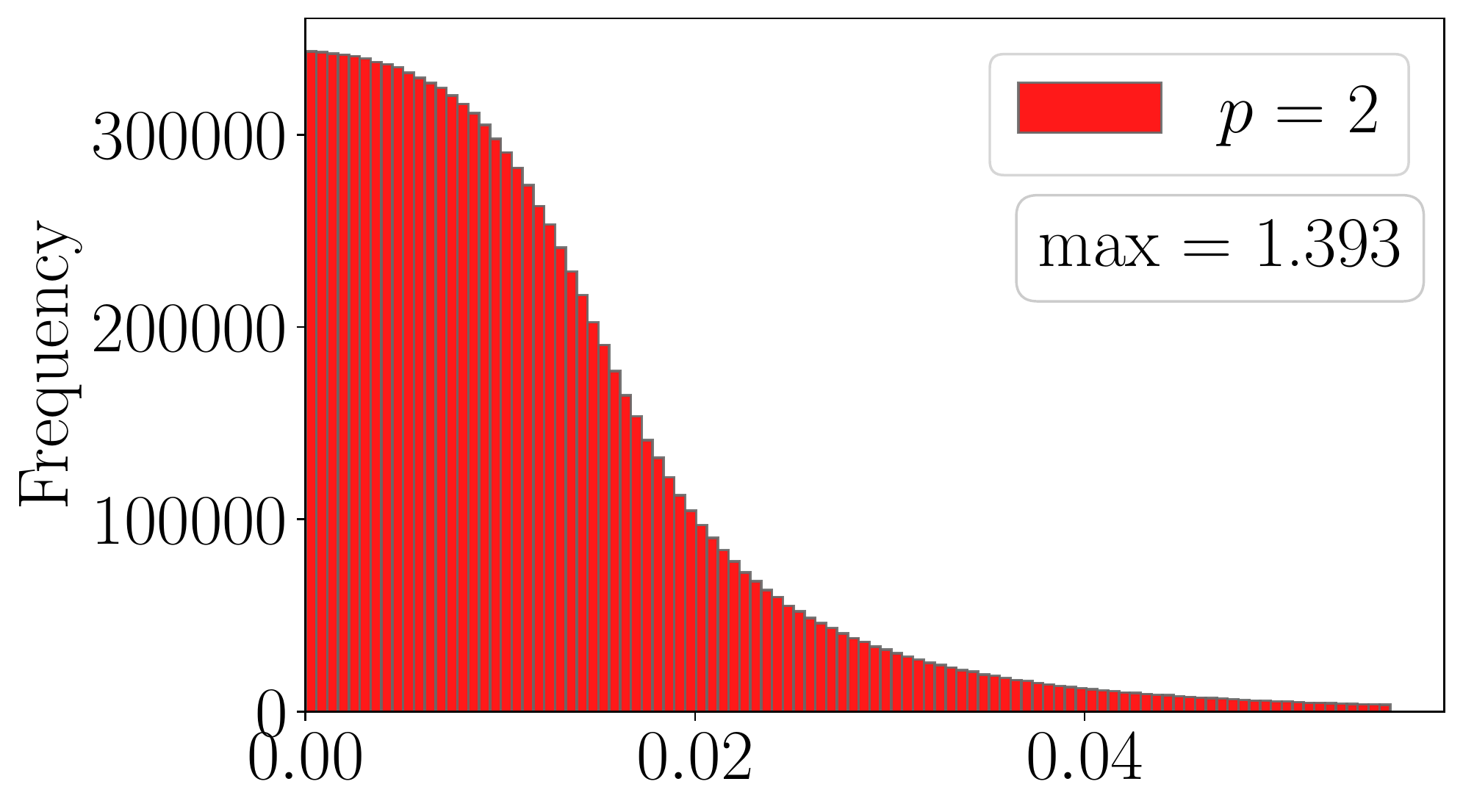}
    \end{subfigure}
    ~
    \begin{subfigure}[b]{0.45\textwidth}
        \includegraphics[width=\textwidth]{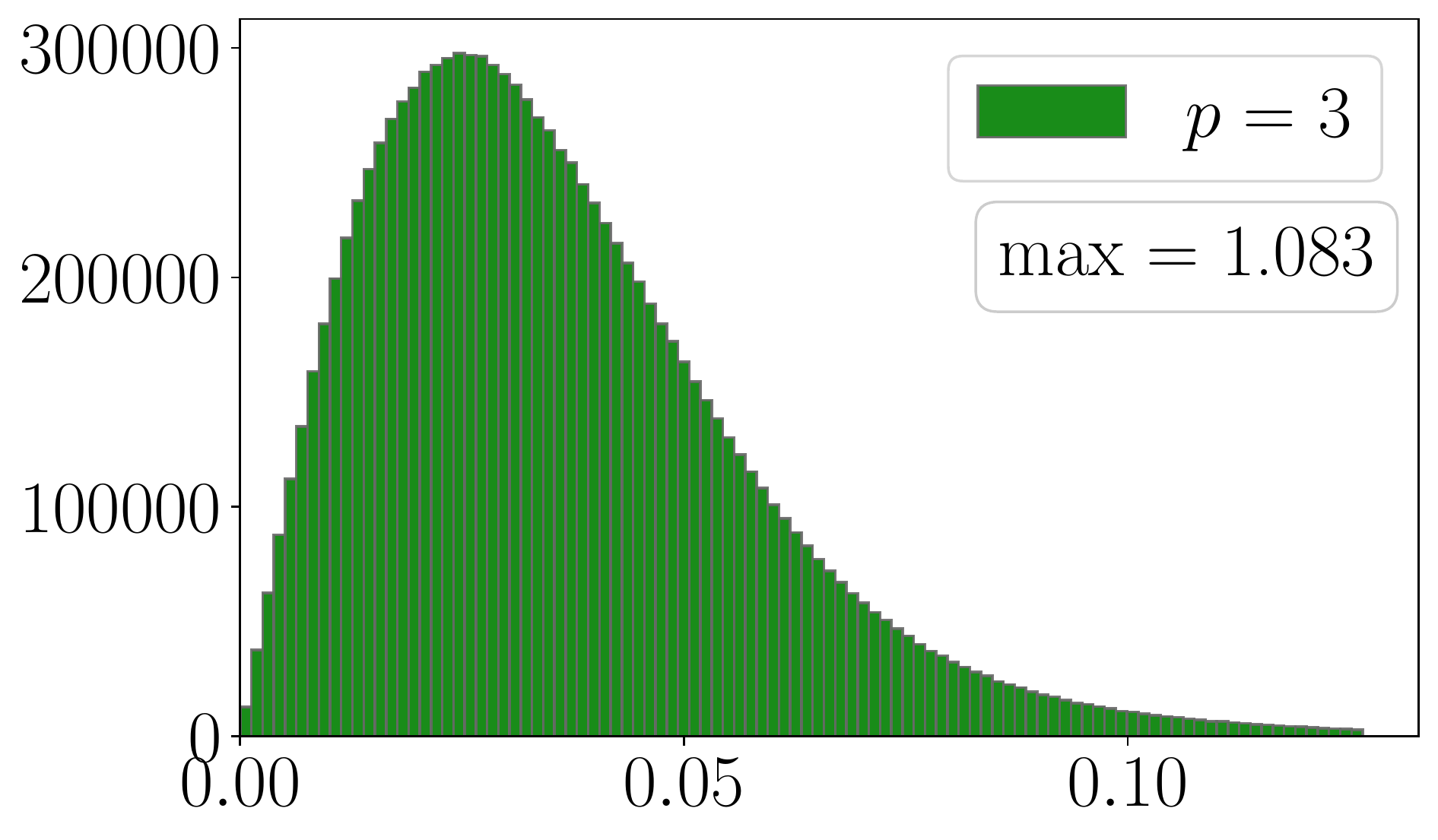}
    \end{subfigure}
    \begin{subfigure}[b]{0.48\textwidth}
        \includegraphics[width=\textwidth]{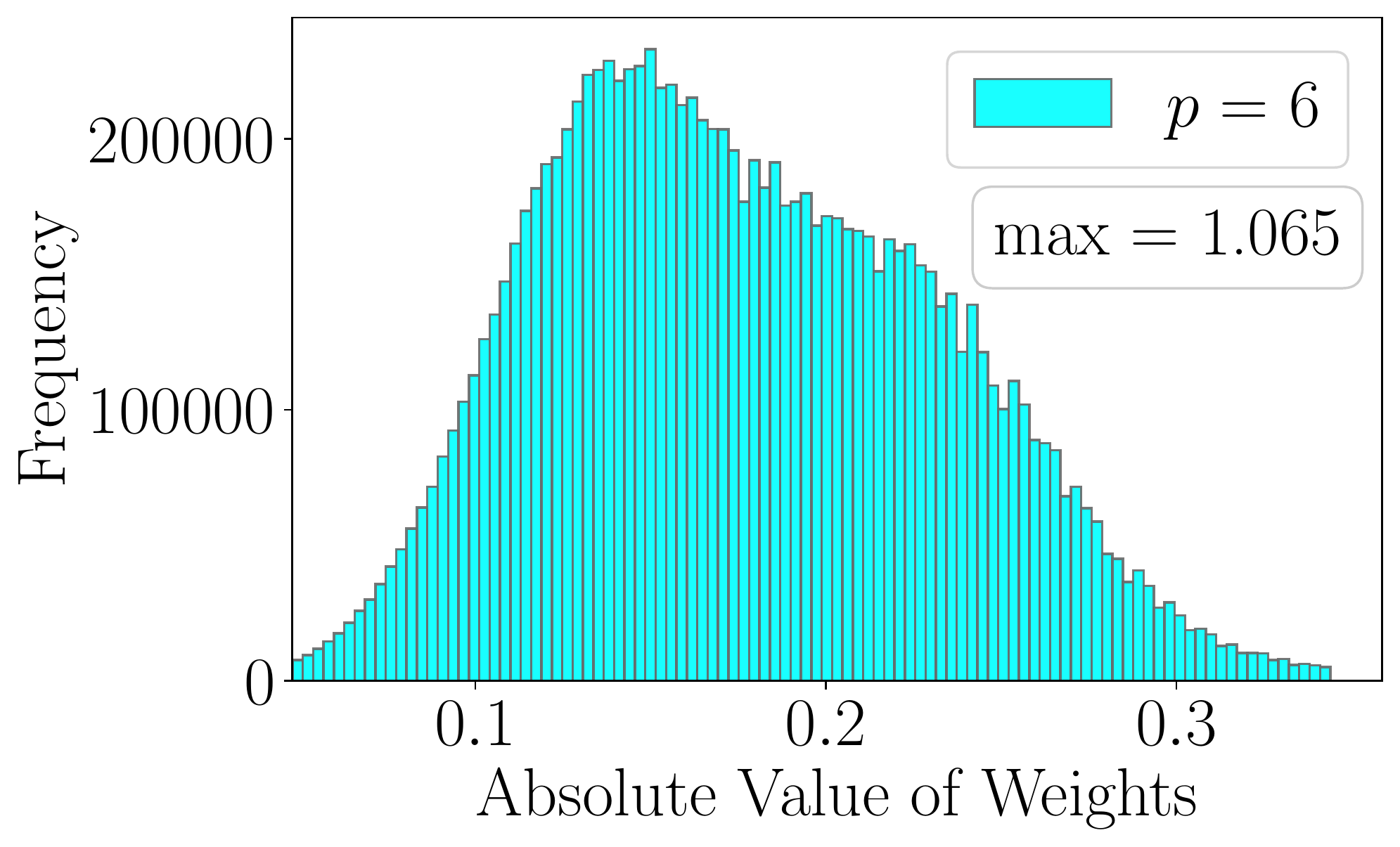}
    \end{subfigure}
    ~
    \begin{subfigure}[b]{0.45\textwidth}
        \includegraphics[width=\textwidth]{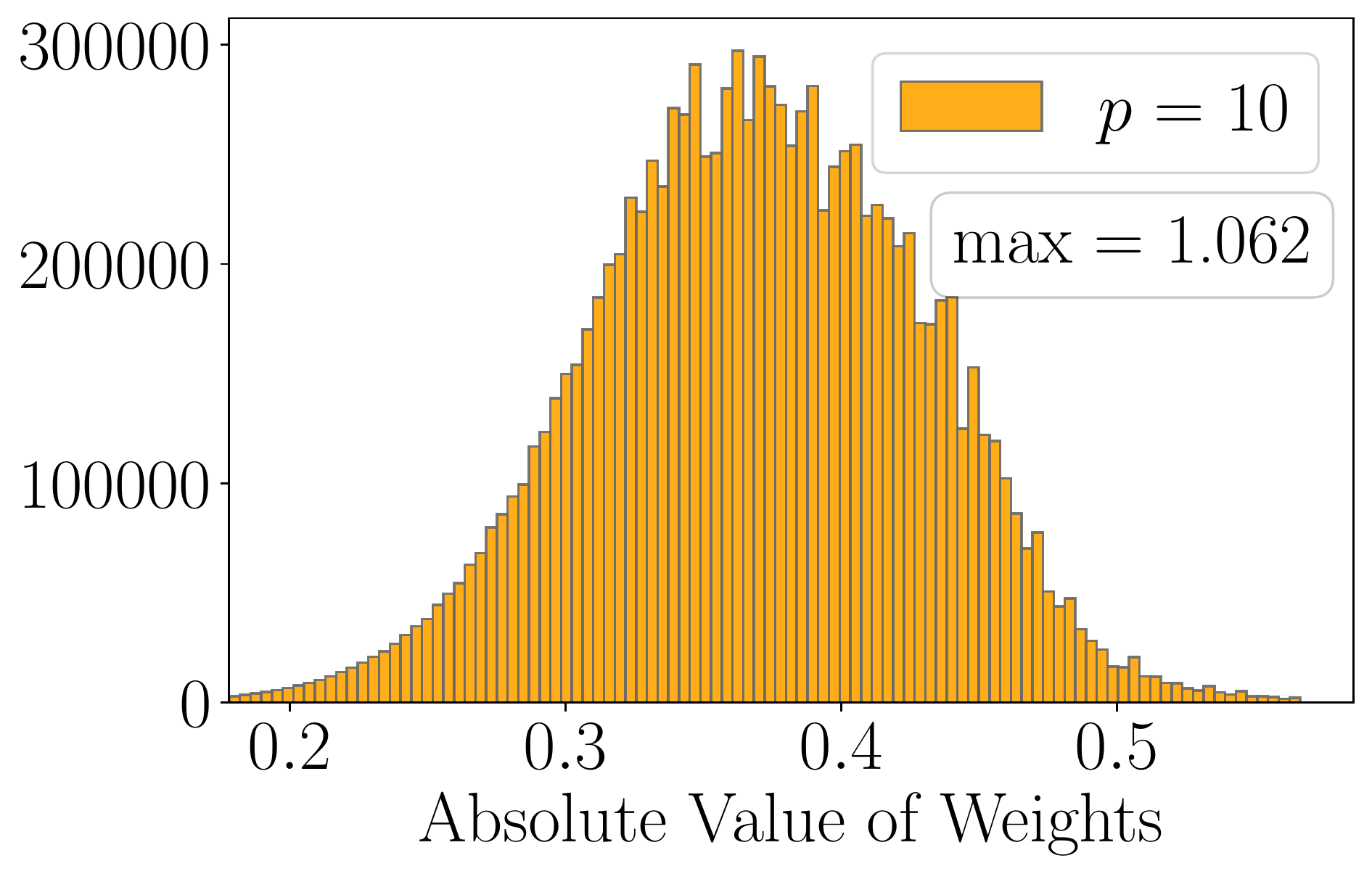}
    \end{subfigure}
    \caption{The histogram of weights in \textsc{ResNet-18} models trained with \algname for the CIFAR-10 dataset. 
    For clarity, we cropped out the tails and each plot has 100 bins after cropping.
    Note that the scale on the $y$-axis differs per graph.
    }
    \label{fig:cifar10-hist-resnet-full}
\end{figure}

\begin{figure}
    \centering
    \begin{subfigure}[b]{0.48\textwidth}
        \centering
        \includegraphics[width=\textwidth]{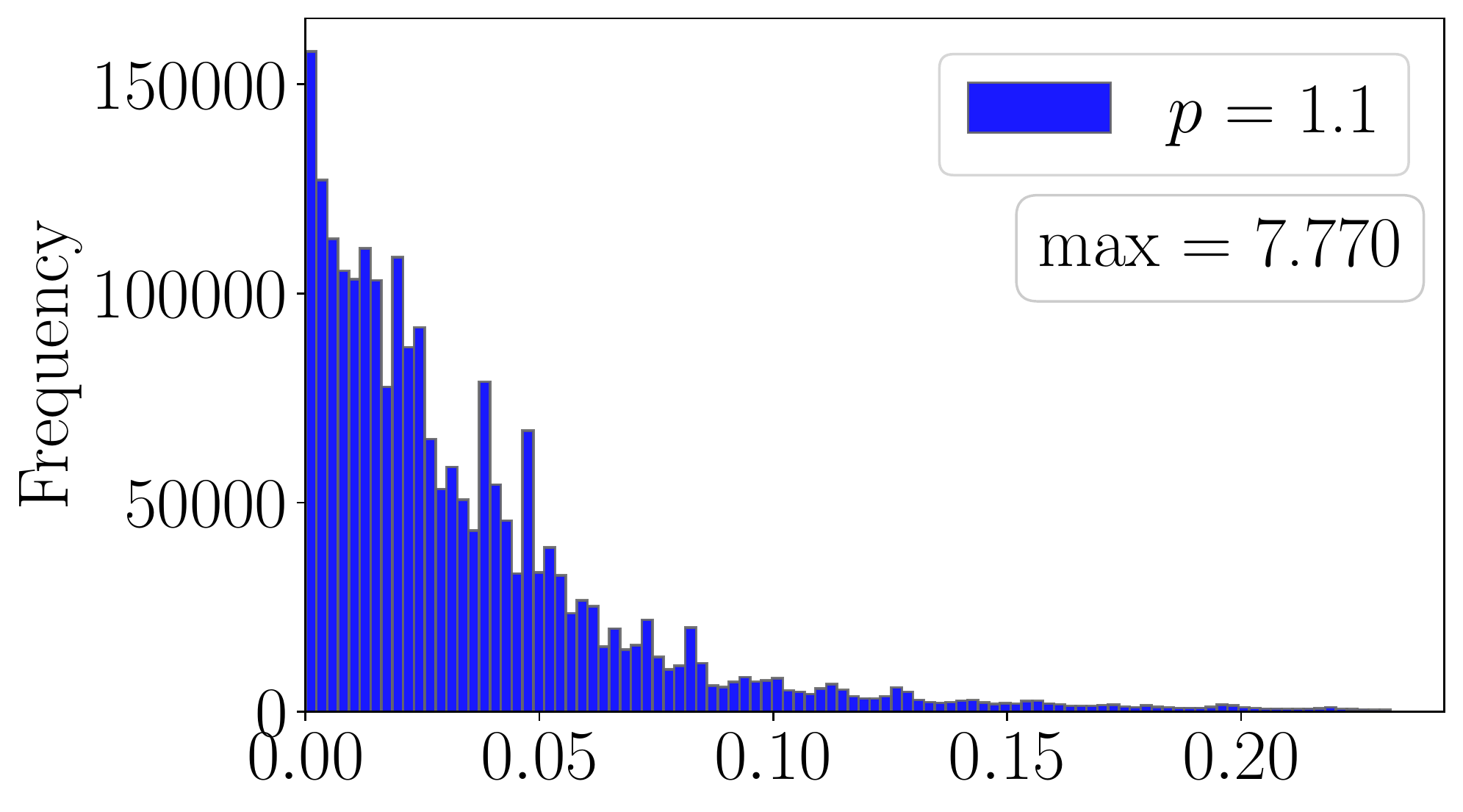}
    \end{subfigure}
    ~
    \begin{subfigure}[b]{0.45\textwidth}
        \includegraphics[width=\textwidth]{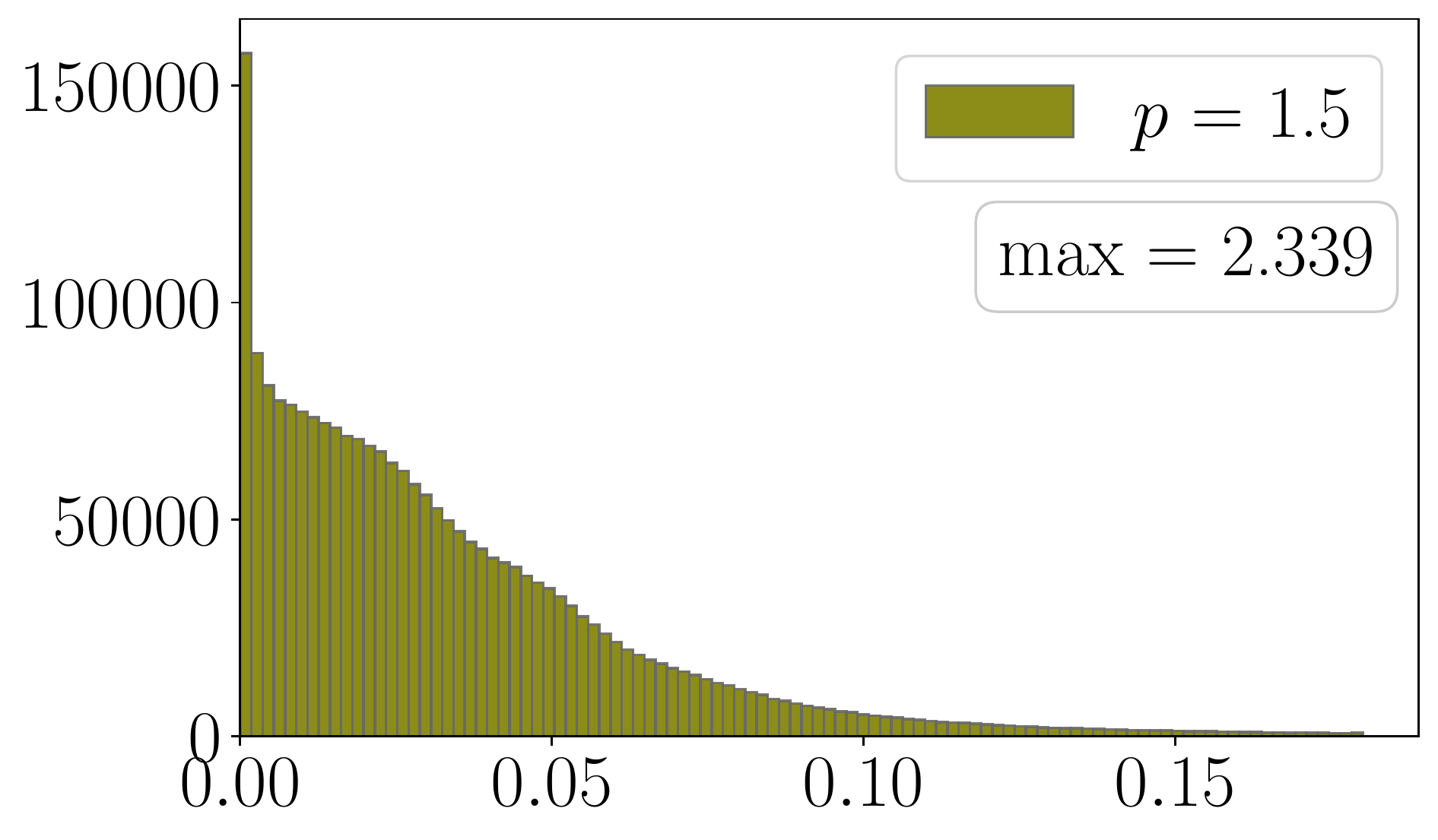}
    \end{subfigure}
    \begin{subfigure}[b]{0.48\textwidth}
        \includegraphics[width=\textwidth]{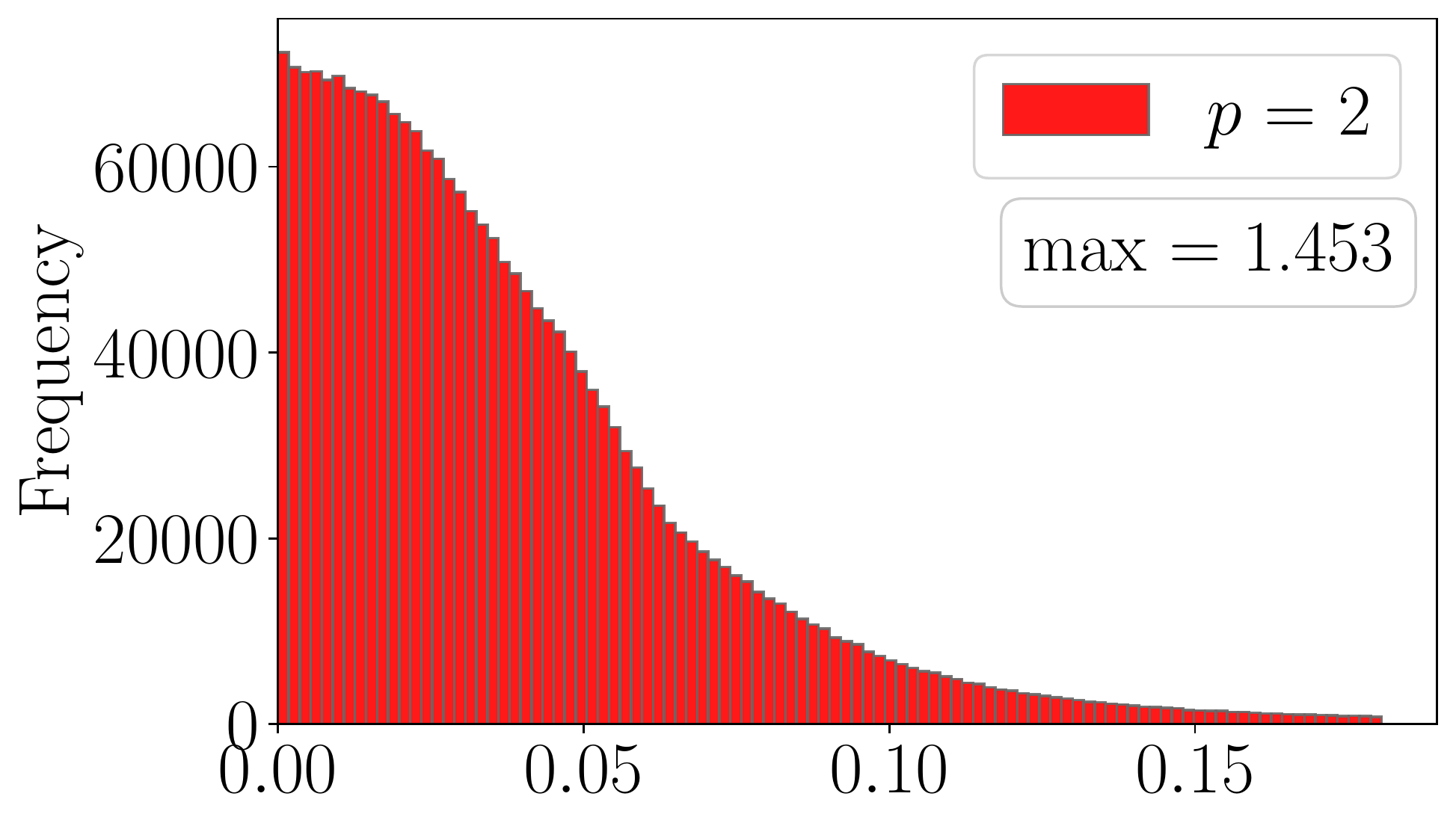}
    \end{subfigure}
    ~
    \begin{subfigure}[b]{0.45\textwidth}
        \includegraphics[width=\textwidth]{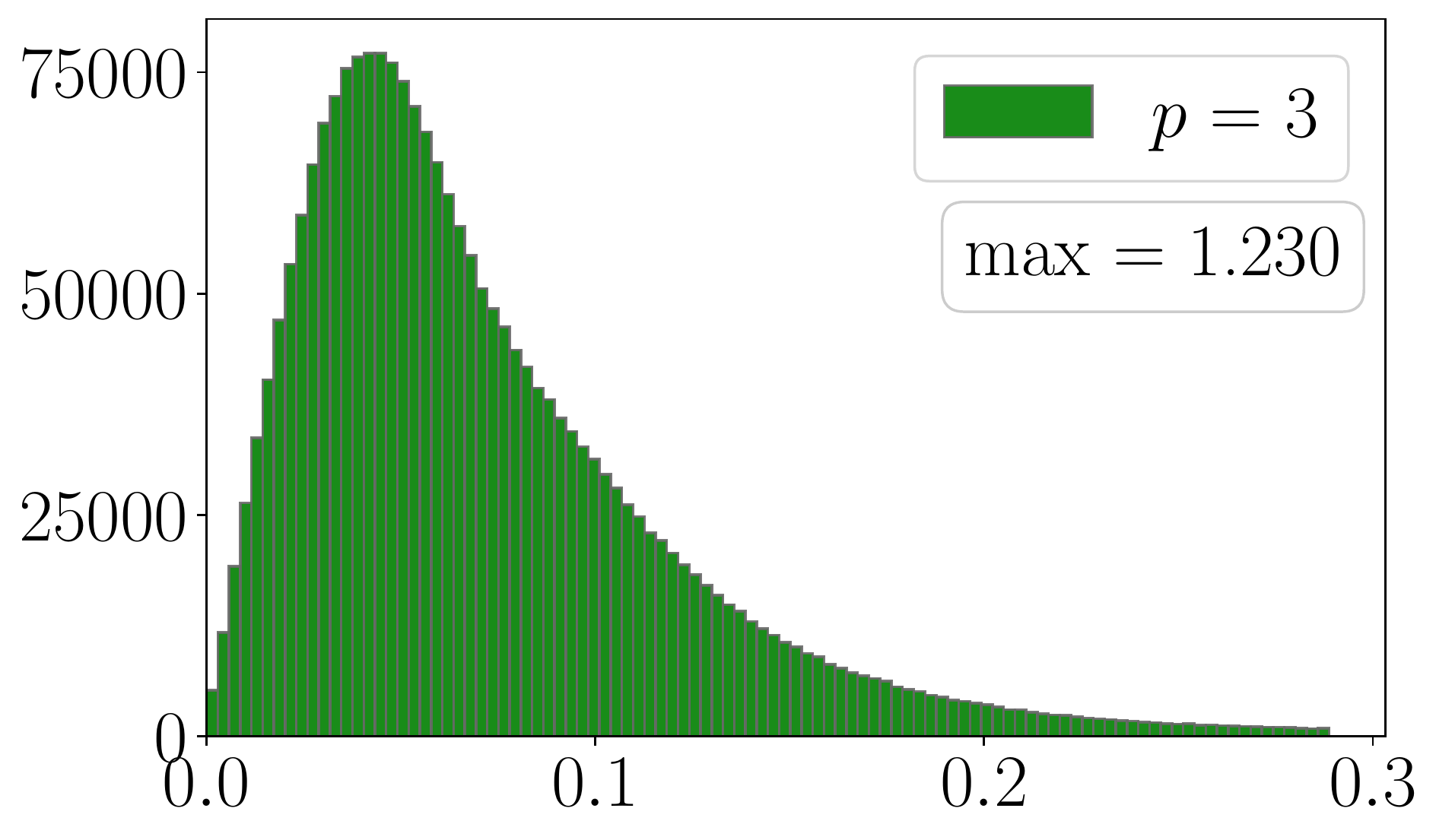}
    \end{subfigure}
    \begin{subfigure}[b]{0.48\textwidth}
        \includegraphics[width=\textwidth]{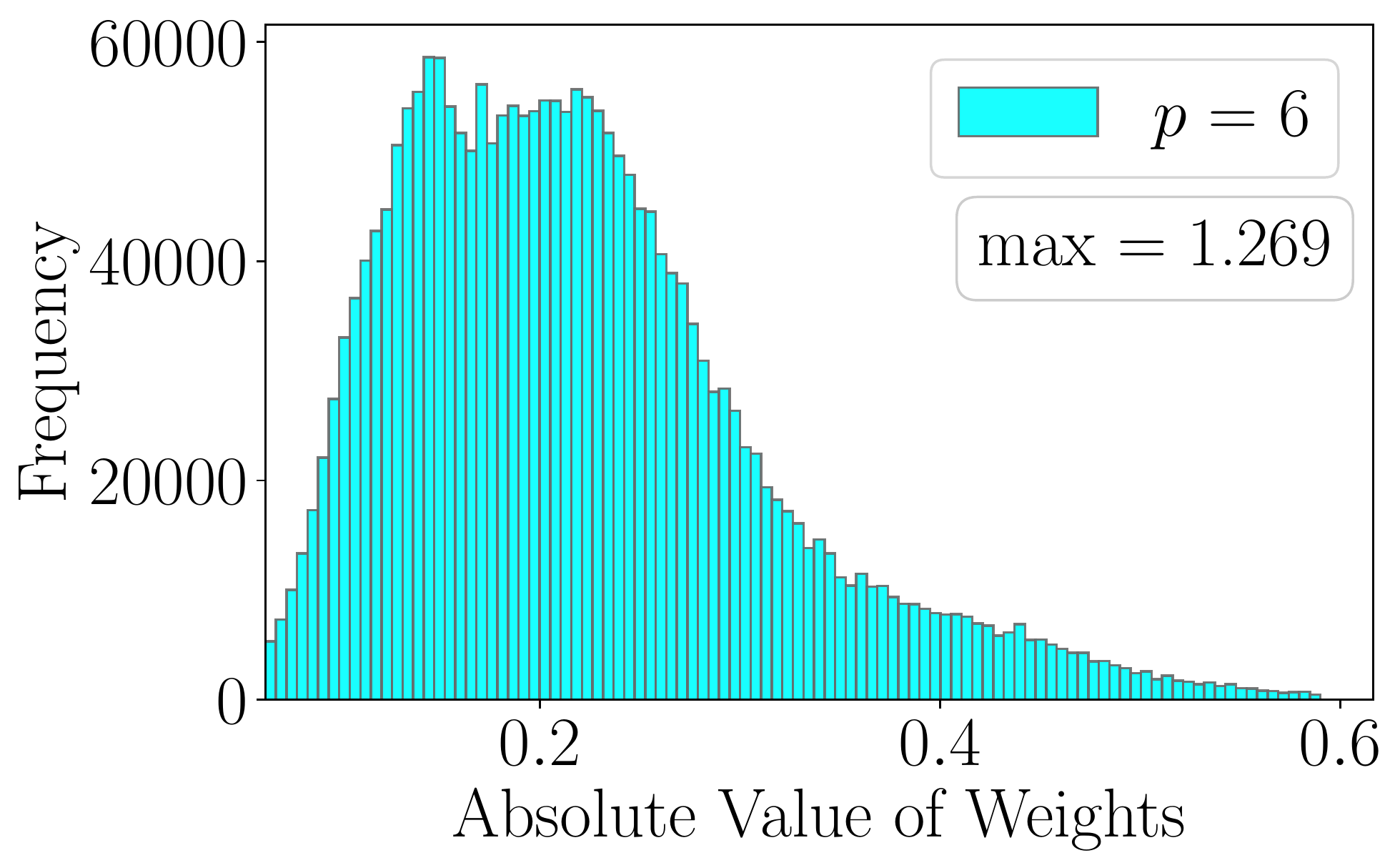}
    \end{subfigure}
    ~
    \begin{subfigure}[b]{0.45\textwidth}
        \includegraphics[width=\textwidth]{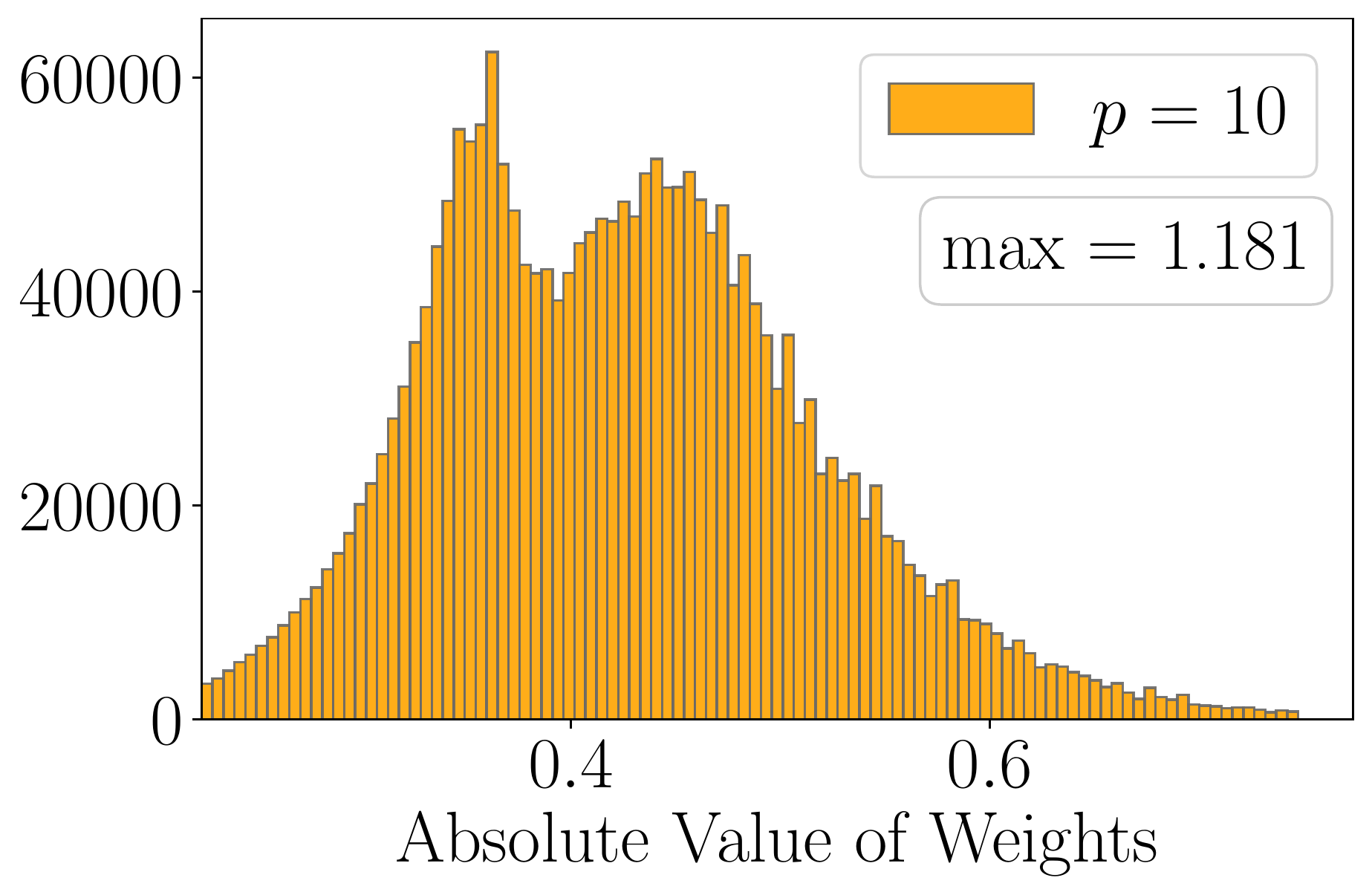}
    \end{subfigure}
    \caption{The histogram of weights in \textsc{MobileNet-v2} models trained with \algname for the CIFAR-10 dataset. 
    For clarity, we cropped out the tails and each plot has 100 bins after cropping.
    }
    \label{fig:cifar10-hist-mobilenet-full}
\end{figure}

\begin{figure}
    \centering
    \begin{subfigure}[b]{0.48\textwidth}
        \centering
        \includegraphics[width=\textwidth]{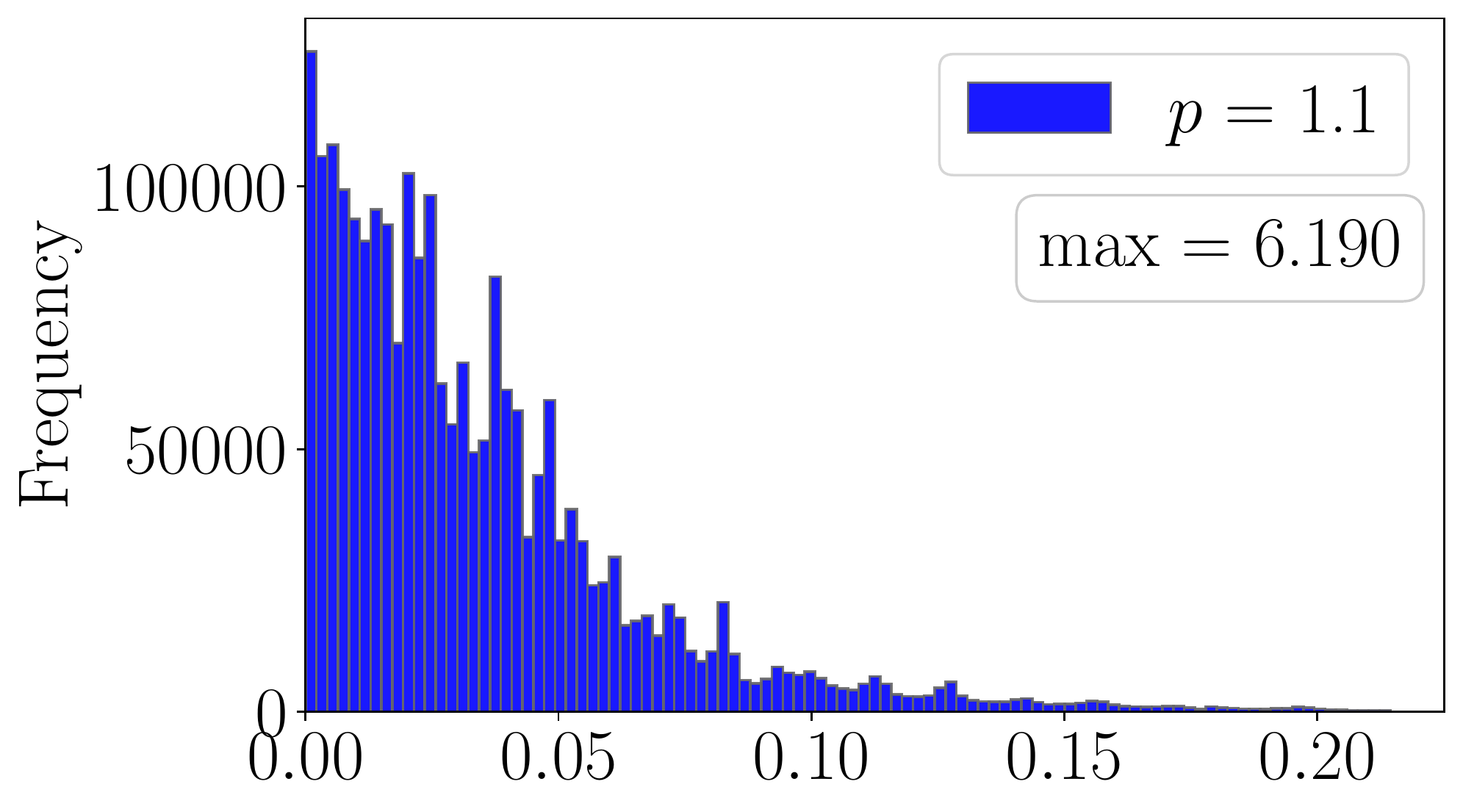}
    \end{subfigure}
    ~
    \begin{subfigure}[b]{0.45\textwidth}
        \includegraphics[width=\textwidth]{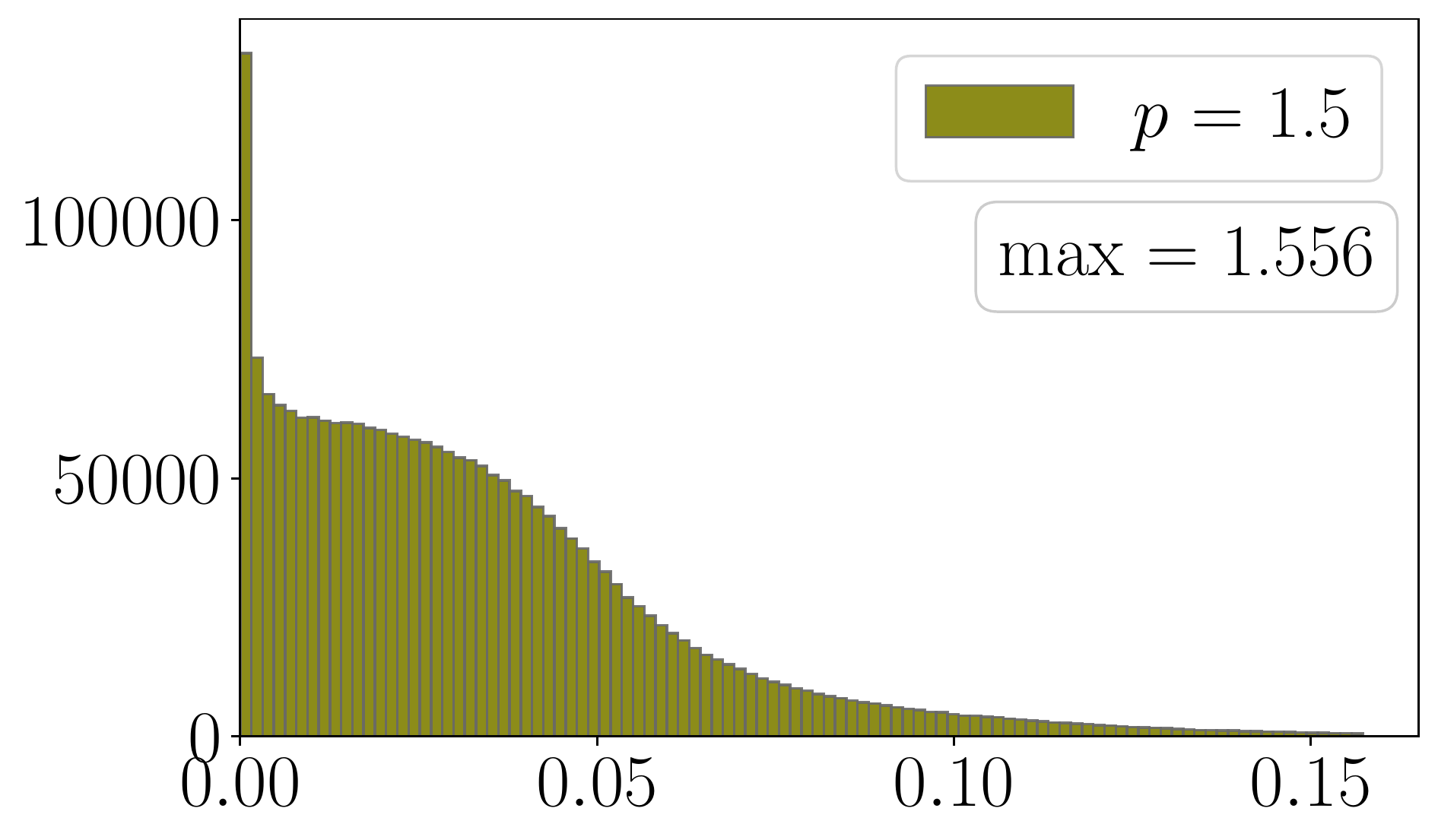}
    \end{subfigure}
    \begin{subfigure}[b]{0.48\textwidth}
        \includegraphics[width=\textwidth]{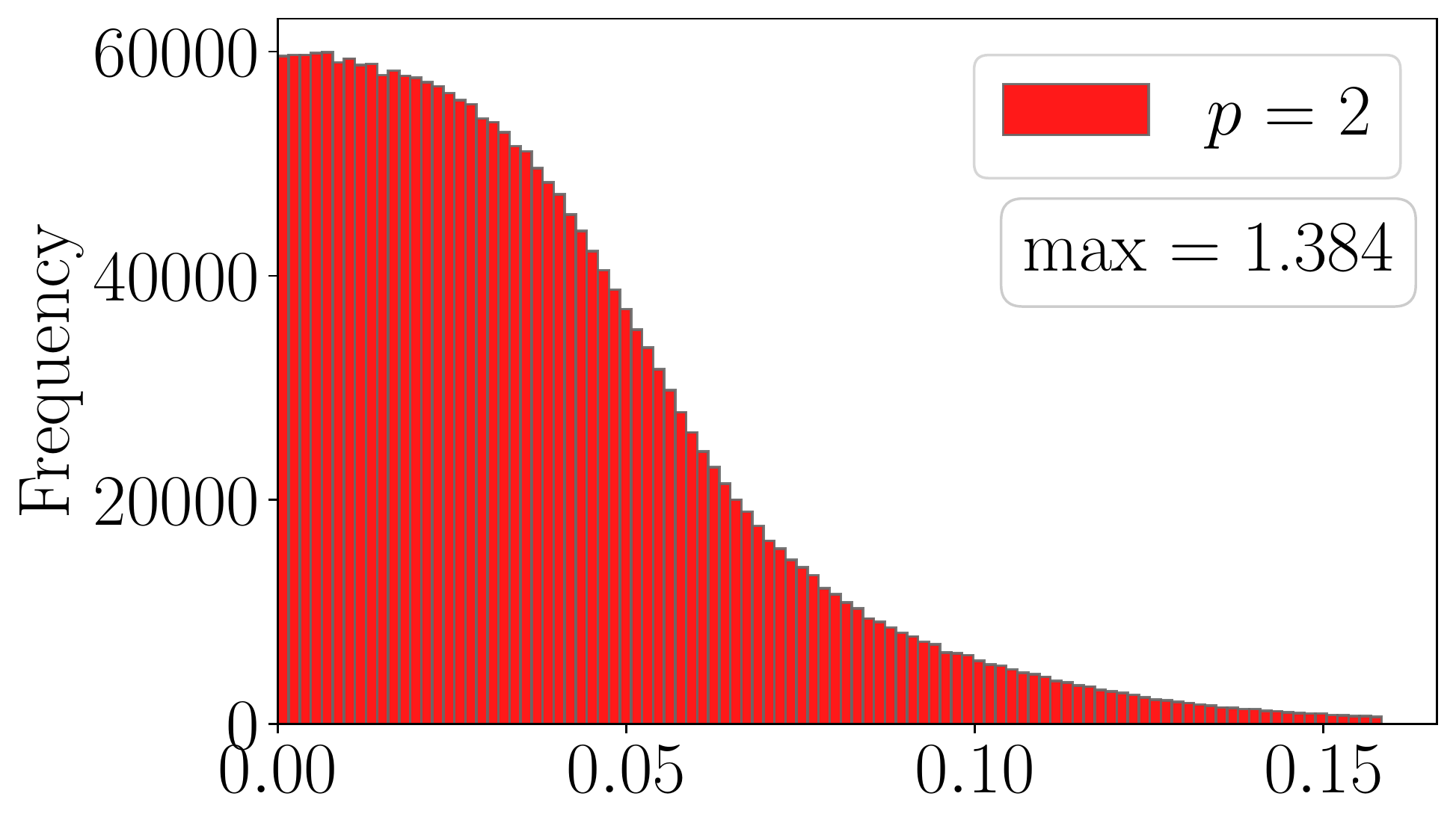}
    \end{subfigure}
    ~
    \begin{subfigure}[b]{0.45\textwidth}
        \includegraphics[width=\textwidth]{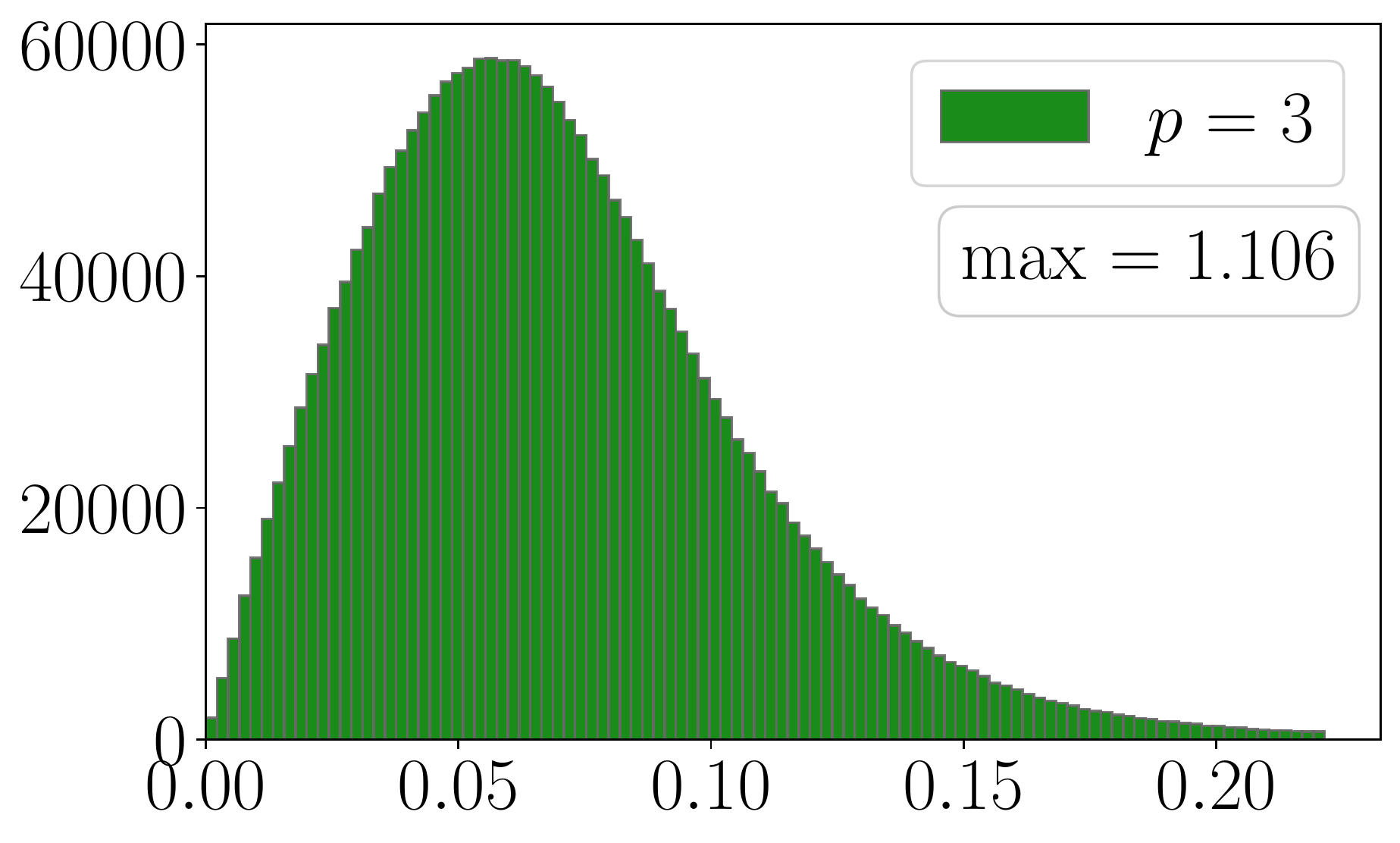}
    \end{subfigure}
    \begin{subfigure}[b]{0.48\textwidth}
        \includegraphics[width=\textwidth]{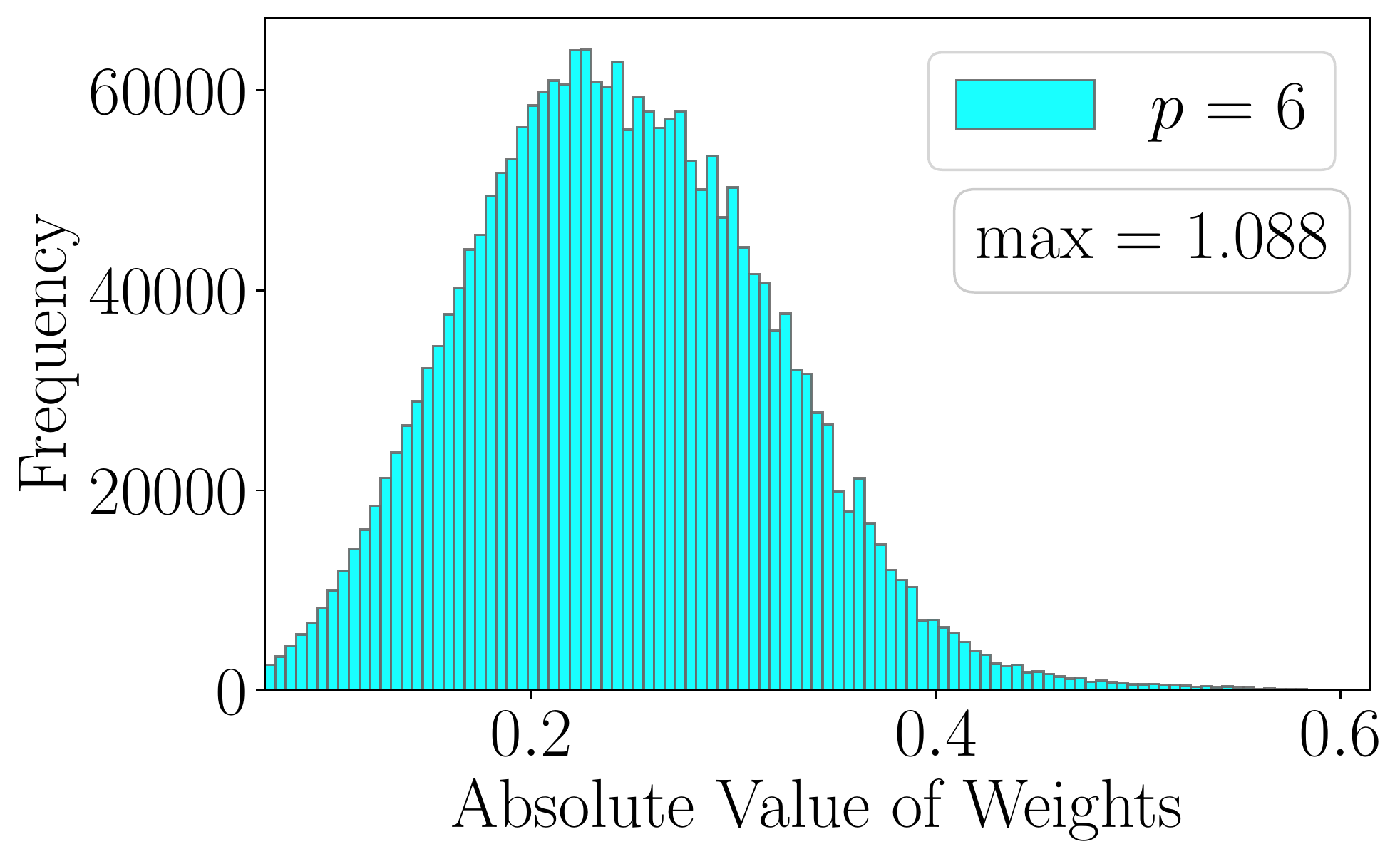}
    \end{subfigure}
    ~
    \begin{subfigure}[b]{0.45\textwidth}
        \includegraphics[width=\textwidth]{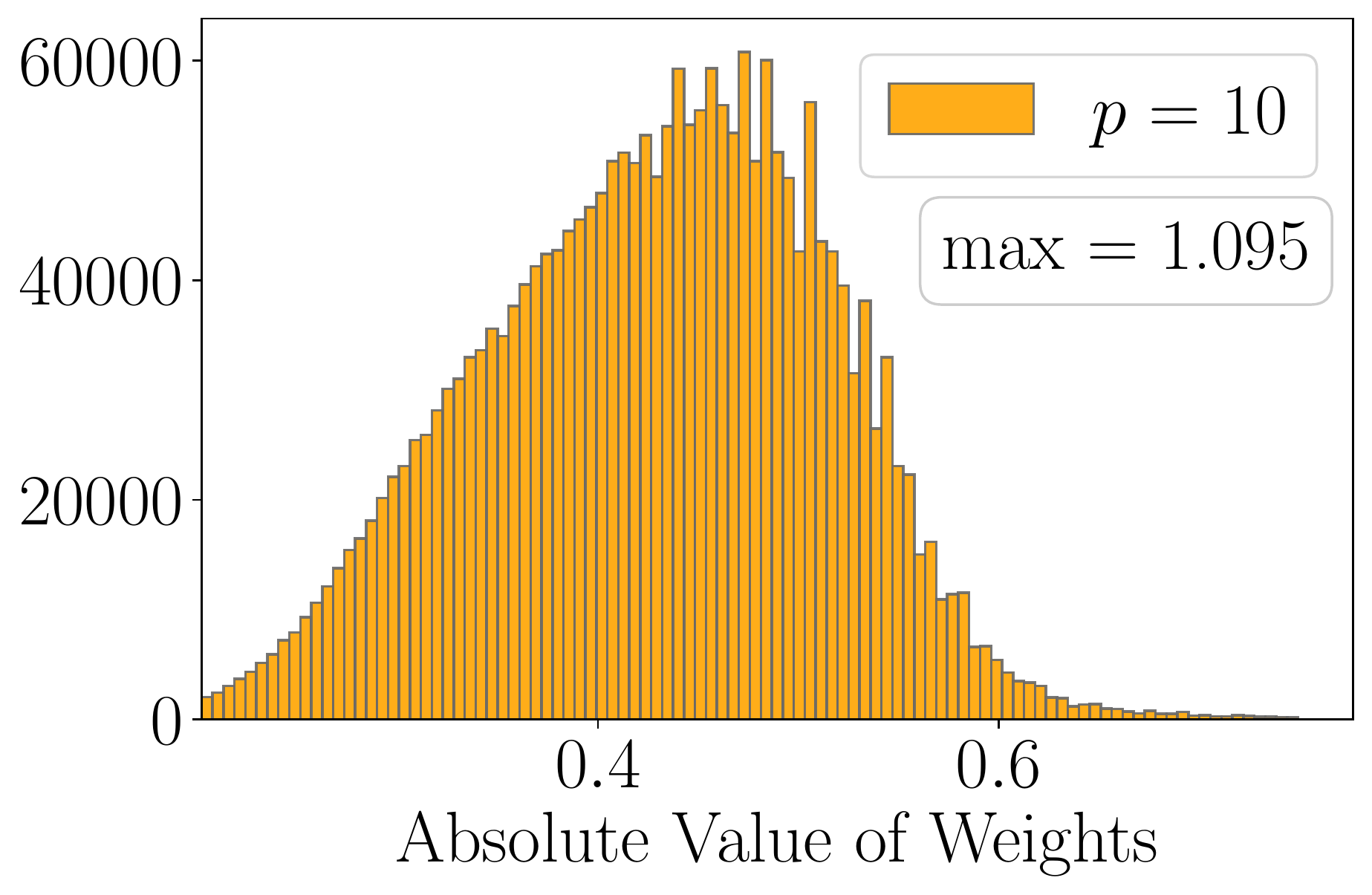}
    \end{subfigure}
    \caption{The histogram of weights in \textsc{RegNetX-200mf} models trained with \algname for the CIFAR-10 dataset. 
    For clarity, we cropped out the tails and each plot has 100 bins after cropping.
    }
    \label{fig:cifar10-hist-regnet-full}
\end{figure}

\begin{figure}
    \centering
    \begin{subfigure}[b]{0.48\textwidth}
        \centering
        \includegraphics[width=\textwidth]{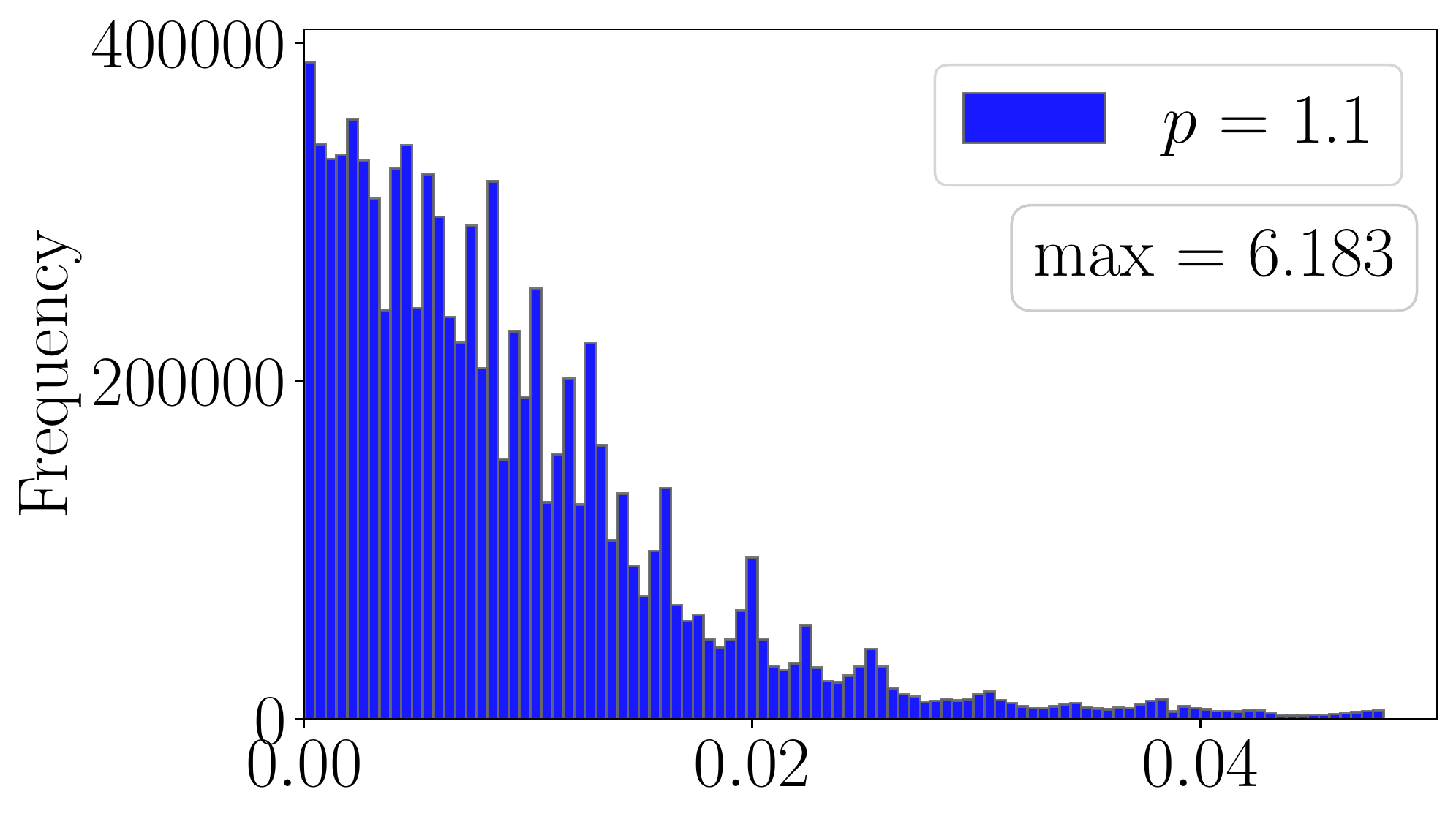}
    \end{subfigure}
    ~
    \begin{subfigure}[b]{0.45\textwidth}
        \includegraphics[width=\textwidth]{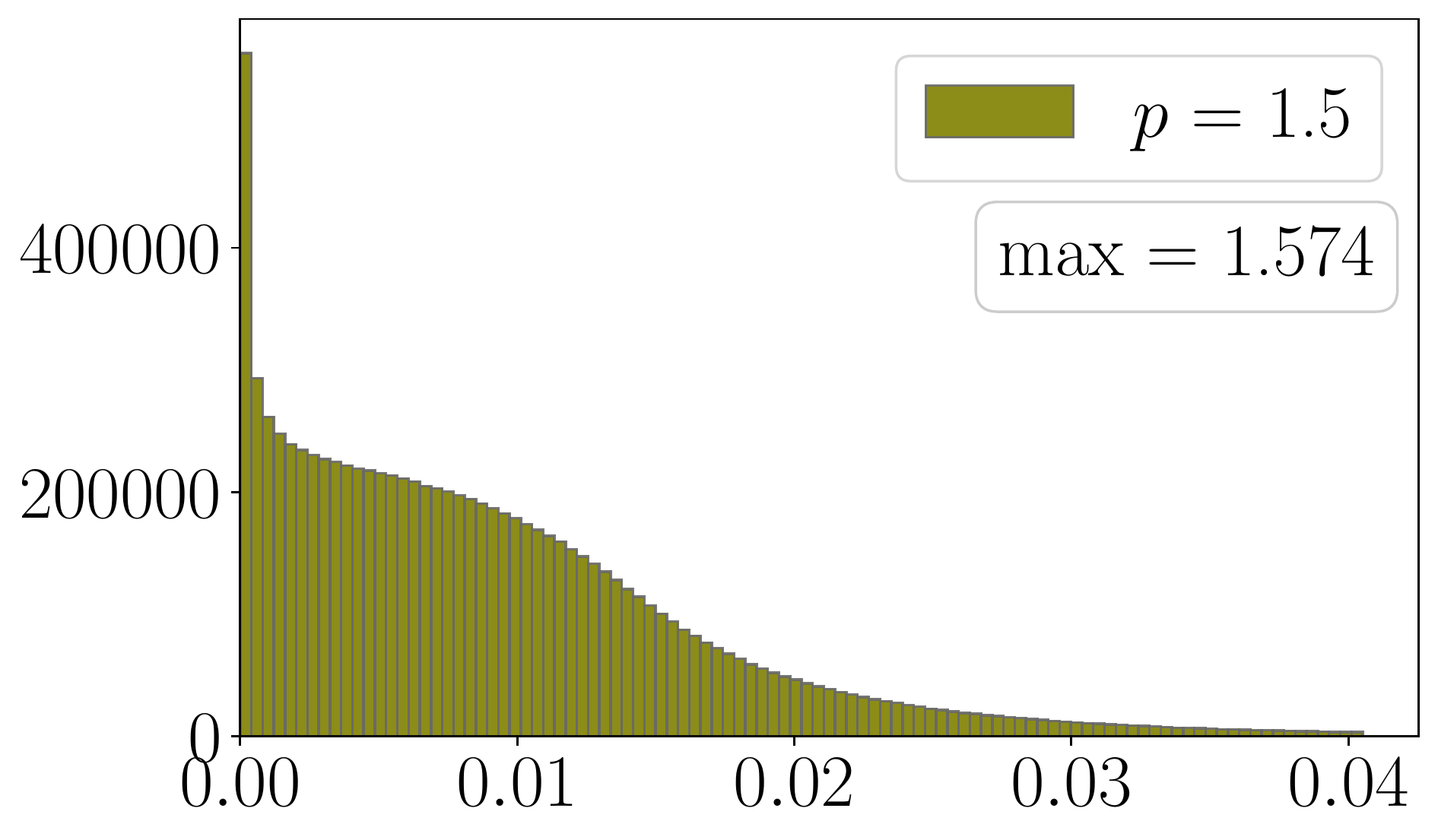}
    \end{subfigure}
    \begin{subfigure}[b]{0.48\textwidth}
        \includegraphics[width=\textwidth]{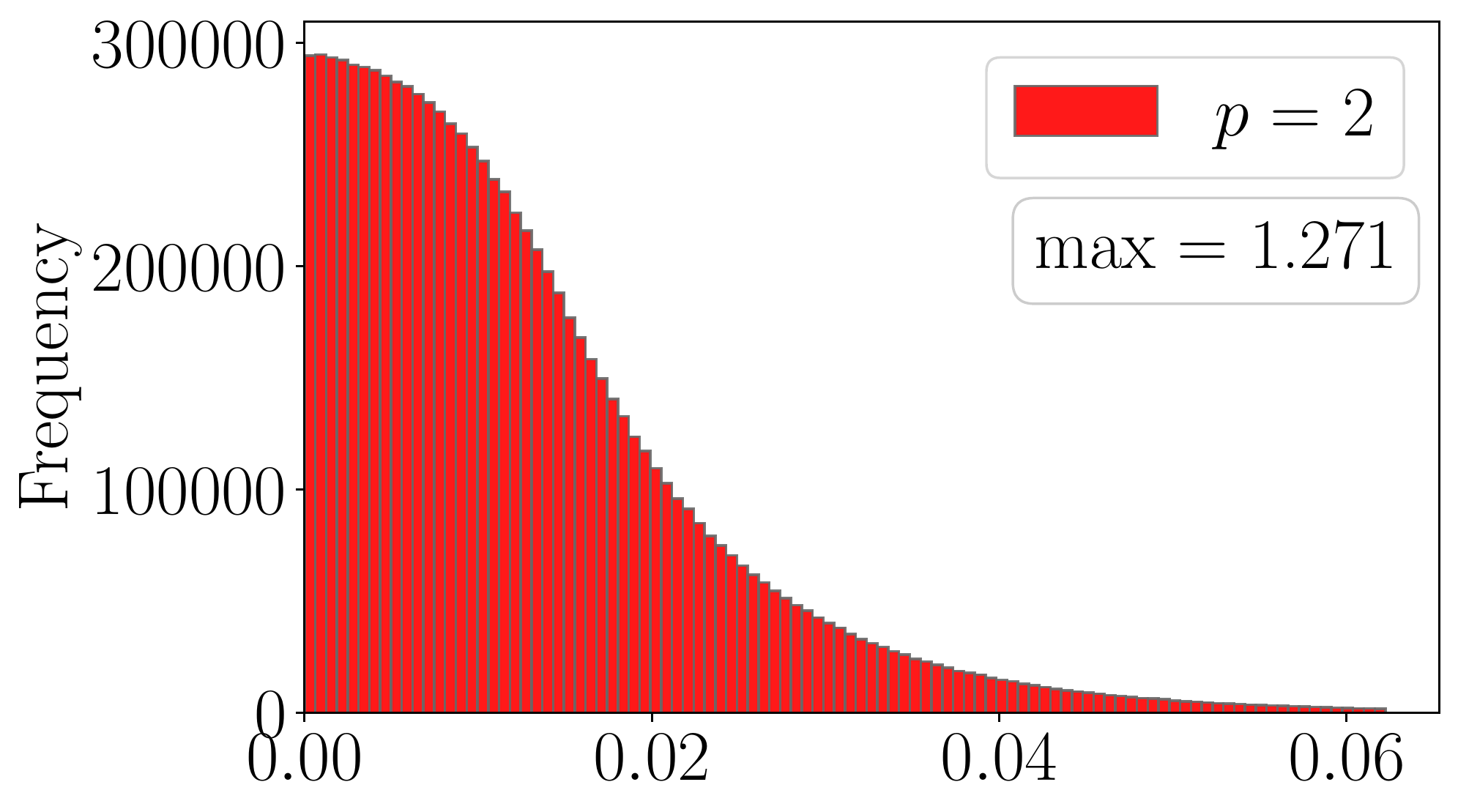}
    \end{subfigure}
    ~
    \begin{subfigure}[b]{0.45\textwidth}
        \includegraphics[width=\textwidth]{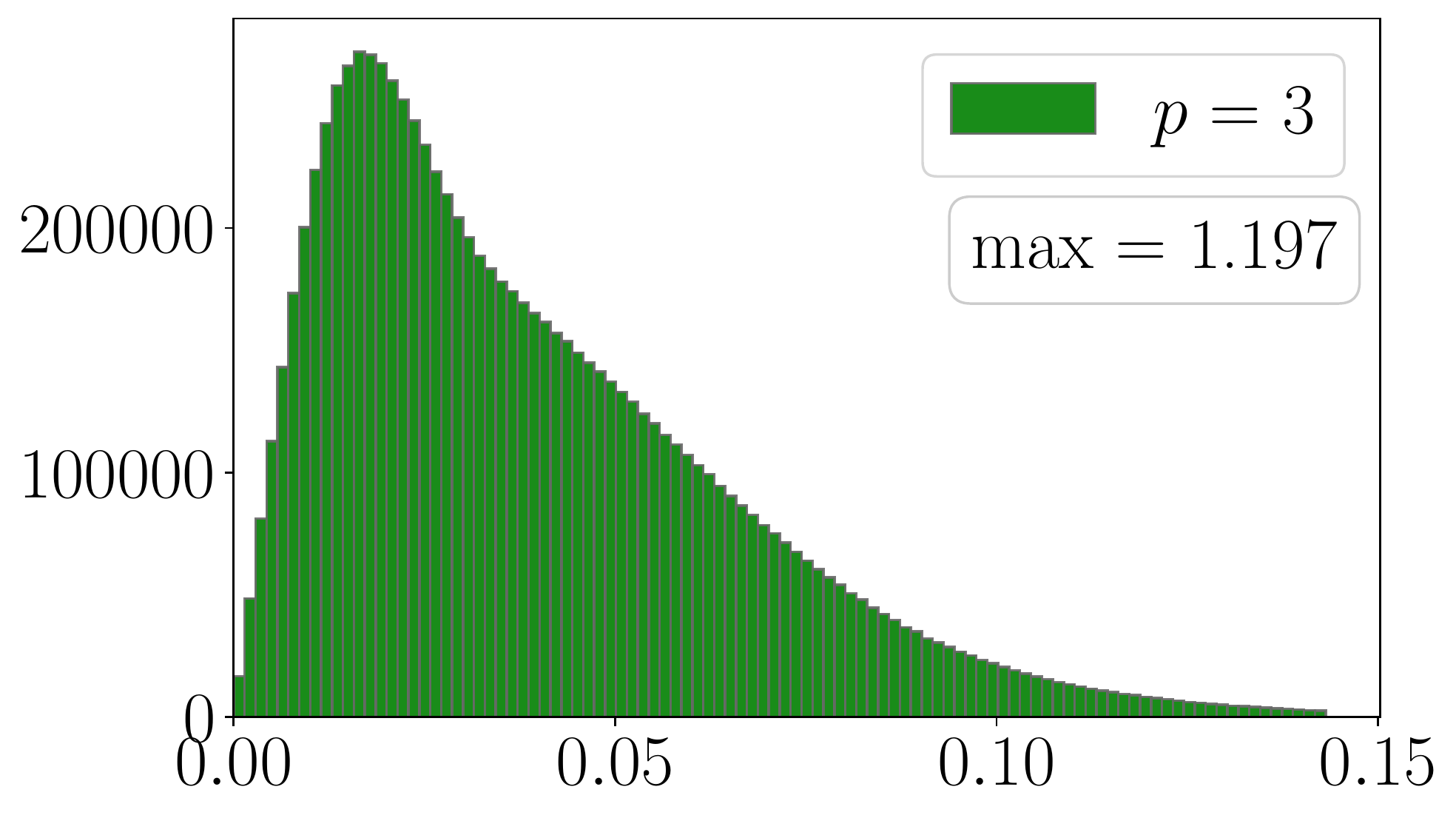}
    \end{subfigure}
    \begin{subfigure}[b]{0.48\textwidth}
        \includegraphics[width=\textwidth]{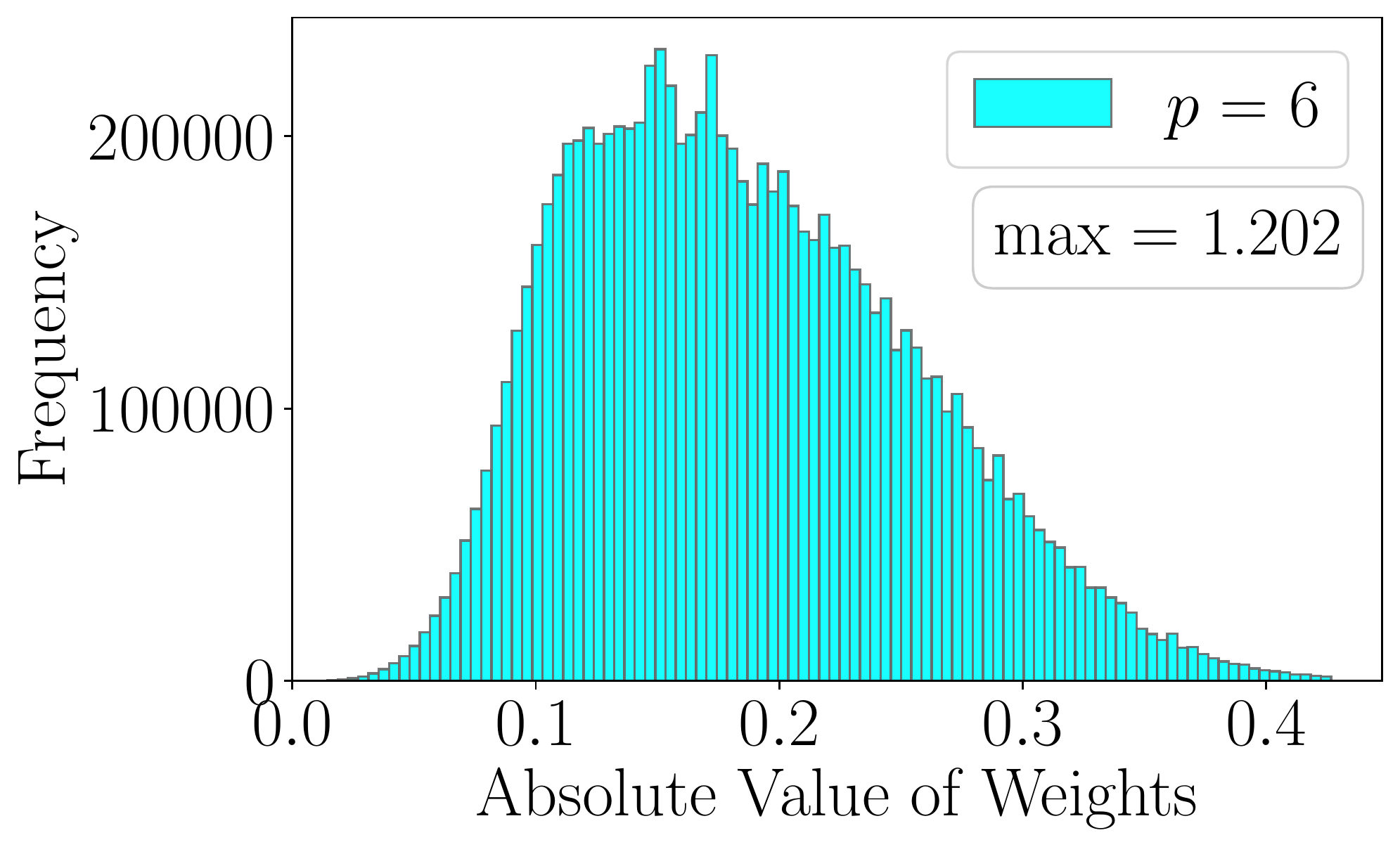}
    \end{subfigure}
    ~
    \begin{subfigure}[b]{0.45\textwidth}
        \includegraphics[width=\textwidth]{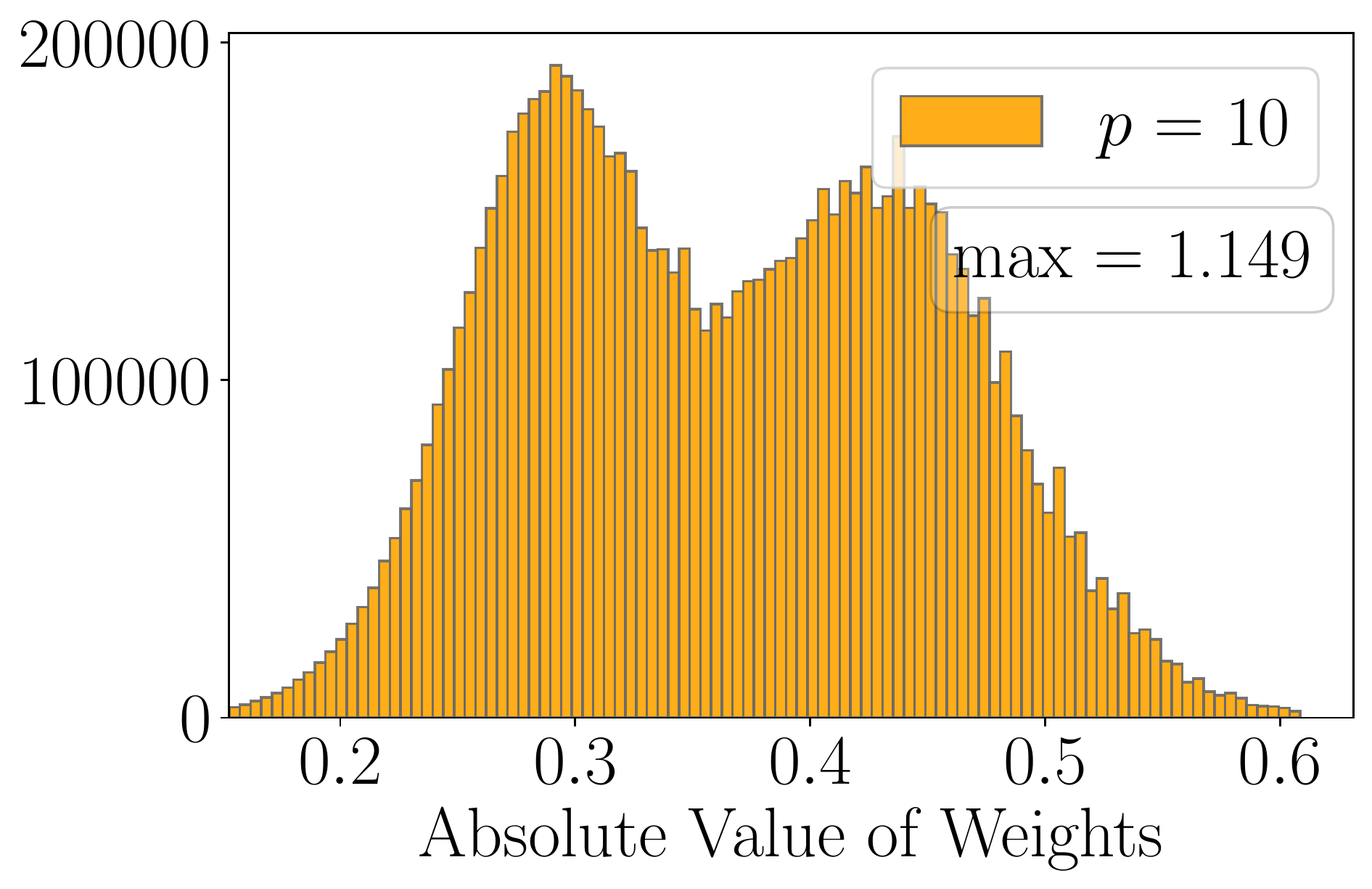}
    \end{subfigure}
    \caption{The histogram of weights in \textsc{VGG-11} models trained with \algname for the CIFAR-10 dataset. 
    For clarity, we cropped out the tails and each plot has 100 bins after cropping.
    }
    \label{fig:cifar10-hist-vgg-full}
\end{figure}

\clearpage

\subsection{CIFAR-10 experiments: generalization}
We present a more complete result for the CIFAR-10 generalization experiment in Section~\ref{sec:cifar} with additional values of $p$.

In the following table, we see that \algname with $p = 3$ continues to have the highest generalization performance for all deep neural networks.

\label{sec:add-experiment-cifar-generalization}
\begin{table}[!h]
    \centering
    \setlength{\tabcolsep}{5.5pt}
    \begin{tabular}{l| c|c|c|c}
         \hline
         &  \hspace{1.25em} \textsc{VGG-11} \hspace{1.25em} & \hspace{0.75em} \textsc{ResNet-18} \hspace{0.75em} & \textsc{MobileNet-v2} & \textsc{RegNetX-200mf}  \\
         \hline \hline
         $p = 1.1$ & \pmval{88.19}{.17} & \pmval{92.63}{.12} & \pmval{91.16}{.09}& \pmval{91.21}{.18}  \\
         $p = 1.5$ & \pmval{88.45}{.29} & \pmval{92.73}{.11} & \pmval{90.81}{.19}& \pmval{90.91}{.12} \\
         $p = 2$ (SGD) & \pmval{90.15}{.16} & \bpmval{93.90}{.14} & \pmval{91.97}{.10}& \pmval{92.75}{.13} \\
         $p = 3$ & \bpmval{90.85}{.15} & \bpmval{94.01}{.13} & \bpmval{93.23}{.26}& \bpmval{94.07}{.12} \\
         $p = 6$ & \pmval{89.47}{.14} & \bpmval{93.87}{.13} & \pmval{92.84}{.15}& \pmval{93.03}{.17} \\
         $p = 10$ & \pmval{88.78}{.37} & \pmval{93.55}{.21} & \pmval{92.60}{.22}& \pmval{92.97}{.16} \\
         \hline
    \end{tabular}
    \caption{CIFAR-10 test accuracy (\%) of \algname on various deep neural networks. For each deep net and value of $p$, the average $\pm$ \textcolor{gray}{std. dev.} over 5 trials are reported. The best-performing value(s) of $p$ for each individual deep net is highlighted in \textbf{boldface}.}
    \label{tab:generalization-cifar10-full}
\end{table}

\subsection{ImageNet experiments}
\label{sec:add-experiment-imagenet}
To verify if our observations on the CIFAR-10 generalization performance hold up for other datasets, we also performed similar experiments for the much larger ImageNet dataset.
Due to computational constraints, we were only able to experiment with the \textsc{ResNet-18} and \textsc{MobileNet-v2} architectures and only for one trial.

It is worth noting that the neural networks we used cannot reach 100\% training accuracy on Imagenet.
The models we employed only achieved top-1 training accuracy in the mid-70s.
So, we are not in the so-called \textit{interpolation regime}, and there are many other factors that can significantly impact the generalization performance of the trained models.
In particular, we find that not having weight decay costs us around 3\% in validation accuracy in the $p = 2$ case and this explains why our reported numbers are lower than PyTorch's baseline for each corresponding architecture.
Despite this, we find that \algname with $p = 3$ has the best generalization performance on the ImageNet dataset, matching our observation from the CIFAR-10 dataset.

\begin{table}[!h]
    \centering
    \begin{tabular}{l| c | c}
         \hline
        & \textsc{ResNet-18} & \textsc{MobileNet-v2} \\
        \hline\hline
        $p=1.1$ & 64.08 & 63.41 \\
        $p=1.5$ & 65.14 & 65.75 \\
        $p=2$ (SGD) & 66.76 &  67.91 \\
        $p=3$ & \textbf{67.67} & \textbf{69.74} \\
        $p=6$ & 66.69 & 67.05 \\
        $p=10$ & 65.10 & 62.32 \\
         \hline
    \end{tabular}
    \caption{ImageNet top-1 validation accuracy (\%) of \algname on various deep neural networks. The best-performing value(s) of $p$ for each individual deep network is highlighted in \textbf{boldface}.}
    \label{tab:imagenet}
\end{table}

\end{document}